\newcommand{\citet}[1]{\cite{#1}}
\newcommand{\citep}[1]{\cite{#1}}
\theoremstyle{plain}
\newtheorem{defi}{Definition}
\newtheorem{theorem}{Theorem}
\newtheorem{lemma}{Lemma}
\newtheorem{coro}{Corollary}
\newtheorem{prop}{Proposition}
\newtheorem{obs}{Observation}
\theoremstyle{remark}
\newtheorem*{remark}{Remark}
\providecommand{\customgenericname}{}
\newcommand{\newcustomtheorem}[2]{%
  \newenvironment{#1}[1]
  {%
   \renewcommand\customgenericname{#2}%
   \renewcommand\theinnercustomgeneric{##1}%
   \innercustomgeneric
  }
  {\endinnercustomgeneric}
}
\newenvironment{sketch}{\paragraph{Proof sketch}}{\hfill$\square$}
\newcommand{\joan}[1]{}
\newcommand{\red}[1]{{\color{black}#1}}
\newcommand\scalemath[2]{\scalebox{#1}{\mbox{\ensuremath{\displaystyle #2}}}}
\DeclareMathOperator{\proj}{proj}
\newcommand{\norm}[1]{\left\lVert#1\right\rVert}
\newcommand{\inprod}[2]{\langle #1,#2 \rangle}
\newcommand{\T}[1]{^{(#1)}}
\def\t{^{(t)}}
\def\tt{^{(t+1)}}
\def\O{\mathcal{O}}
\def\R{\mathbb{R}}
\def\bC{\mathbf{C}}
\def\bX{\mathbf{X}}
\def\bY{\mathbf{Y}}
\def\bZ{\mathbf{Z}}
\newcommand{\cc}[1]{\text{\ding{\number\numexpr#1 + 171\relax}}}
\newcommand{\cbb}[1]{\noindent{\color{blue}{#1}}}  
\newcommand{\crr}[1]{\noindent{\color{red}{#1}}}  
\newcommand{\eqtext}[1]{\ensuremath{\stackrel{\text{#1}}{=\joinrel=\joinrel=}}}
\title{Beyond the Edge of Stability via Two-step Gradient Updates}
\author[a]{Lei Chen\footnote{\texttt{lc3909@nyu.edu}}}
\author[a,b]{Joan Bruna\footnote{\texttt{bruna@cims.nyu.edu}}}
\affil[a]{Courant Institute of Mathematical Sciences, New York
  University, New York}
\affil[b]{Center for Data Science, New York University, New York}
\begin{document}

\maketitle

\begin{abstract}
Gradient Descent (GD) is a powerful workhorse of modern machine learning thanks to its scalability and efficiency in high-dimensional spaces. Its ability to find local minimisers is only guaranteed for losses with Lipschitz gradients, where it can be seen as a `bona-fide' discretisation of an underlying gradient flow. Yet, many ML setups involving overparametrised models do not fall into this problem class, which has motivated research beyond the so-called ``Edge of Stability'' (EoS), where the step-size crosses the admissibility threshold inversely proportional to the Lipschitz constant above. Perhaps surprisingly, GD has been empirically observed to still converge regardless of local instability and oscillatory behavior.

The incipient theoretical analysis of this phenomena has mainly focused in the overparametrised regime, where the effect of choosing a large learning rate may be associated to a `Sharpness-Minimisation' implicit regularisation within the manifold of minimisers, under appropriate asymptotic limits. In contrast, in this work we directly examine the conditions for such unstable convergence, focusing on simple, yet representative, learning problems, via analysis of two-step gradient updates. Specifically, we characterize a local condition involving third-order derivatives that guarantees existence and convergence to fixed points of the two-step updates, and leverage such property in a teacher-student setting, under population loss. Finally, starting from Matrix Factorization, 
we provide observations of period-2 orbit of GD in high-dimensional settings with intuition of its dynamics, along with exploration into more general settings.


\end{abstract}


\section{Introduction}

Given a differentiable objective function $f(\theta)$, where $\theta \in \mathbb{R}^d$ is a high-dimensional parameter vector, the most basic and widely used optimization method is gradient descent (GD), defined as 
\begin{align}
\label{eq:GD}
    \theta\tt = \theta\t - \eta \nabla_\theta f(\theta\t),
\end{align}
where $\eta$ is the learning rate. For all its widespread application across many different ML setups, a basic question remains: what are the convergence guarantees (even to a local minimiser) under typical objective functions, and how they depend on the (only) hyperaparameter $\eta$?
In the modern context of large-scale ML applications, an additional key question is not only to understand whether or not GD converges to minimisers, but to \emph{which} ones, since overparametrisation defines a whole manifold of global minimisers, all potentially enjoying drastically different generalisation performance. 

The sensible regime to start the analysis is $\eta\to 0$, where GD inherits the local convergence properties of the Gradient Flow ODE via standard arguments from numerical integration. However, in the early phase of training, a large learning rate has been observed to result in better generalization~\citep{lecun2012efficient,bjorck2018understanding,jiang2019fantastic,jastrzebski2021catastrophic}, 
where the extent of ``large'' is measured by comparing the learning rate $\eta$ and the curvature of the loss landscape, measured with $\lambda(\theta):=\lambda_{\text{max}}\left[\nabla_\theta^2 f(\theta)\right]$, the largest eigenvalue of the Hessian with respect to learnable parameters. Although one requires $\sup_\theta \lambda(\theta)< 2/\eta$ to guarantee the convergence of GD \citep{bottou2018optimization} to (local) minimisers \footnote{One can replace the uniform curvature bound by $\sup_{\theta; f(\theta) \leq f(\theta^{(0)})} \lambda(\theta)$.}, the work of \citep{cohen2020gradient} noticed a remarkable phenomena in the context of neural network training: even in problems where $\lambda(\theta)$ is unbounded (as in NNs), for a fixed $\eta$, the curvature $\lambda(\theta^{(t)})$ increases along the training trajectory (\ref{eq:GD}), bringing $\lambda(\theta^{(t)}) \ge 2/\eta$~\citep{cohen2020gradient}. After that, a surprising phenomena is that $\lambda(\theta^{(t)})$  \textit{stably} hovers above $2/\eta$ and the neural network 
still eventually achieves a decreasing training loss  --- the so-called ``Edge of Stability''. We would like to understand and analyse the conditions of such convergence with a large learning rate under a variety models that capture such observed empirical behavior. 

 Recently, some works have built connections between EoS and implicit bias~\citep{arora2022understanding, lyu2022understanding, damian2021label, damian2022self} in the context of large, overparametrised models such as neural networks. In this setting, GD is expected to converge to a manifold of minimisers, and the question is to what extent a large learning rate `favors' solutions with small curvature. In essence, these works show that under certain structural assumptions, GD is asymptotically tracking a continuous sharpness-reduction flow, in the limit of small learning rates. Compared with these, we study non-asymptotic properties of GD beyond EoS, by focusing on certain learning problems (\textit{e.g.}, single-neuron ReLU networks and matrix factorization). In particular, we characterize a range of learning rates $\eta$ \emph{above} the EoS such that GD dynamics hover around minimisers. Moreover, in the matrix factorization setup, where minimisers form a manifold with varying local curvature, our results give a non-asymptotic analogue of the `Sharpness-Minimisation' arguments from \cite{arora2022understanding, lyu2022understanding, damian2022self}.
 

The straightforward starting point for the local convergence analysis is via Taylor  approximations of the loss function. However, in a quadratic Taylor expansion, gradient descent diverges once $\lambda(\theta) > 2/\eta$~\citep{cohen2020gradient}, indicating that a higher order Taylor approximation is required. By considering a 1-D function with local minima $\theta^*$ of curvature $\lambda^* = \lambda(\theta^*)$, we show the existence of fixed points of two-step updates around the minima with $\eta$ slightly above the threshold $2/\lambda^*$, provided its high order derivative satisfies mild conditions as in Theorem~\ref{thm:1dlocal}, with generalization into matrix factorization in Theorem~\ref{thm:mf_1d_cond} and experiments of MLPs in Appendix~\ref{app:mnist}. A typical example of such functions is $f(x)=\frac{1}{4}(x^2-\mu)^2$ with $\mu>0$. Furthermore, we prove that it converges to an orbit of period 2 from a more global initialization rather than the analysis of high-order local approximation.

As it turns out, the analysis of such stable one-dimensional oscillations is sufficiently intrinsic to become useful in higher-dimensional problems. First,  
we leverage the analysis to a two-layer single-neuron ReLU network, where the task is to learn a teacher neuron with data on a uniform high-dimensional sphere. We show a convergence result under population loss with GD beyond EoS, where the direction of the teacher neuron can be learnt and the norms of two-layer weights stably oscillate, with empirical evidence of 16-neuron networks in Appendix~\ref{app:16-neuron}. We then focus on matrix factorization, a canonical non-convex problem whose geometry is characterized by a manifold of minimisers having different local curvature. 
We provide novel observations of its convergence to period-2 orbit with comprehensive theoretical intuition of the dynamics.
Finally, we extend previous works by proposing two models with observations in matrix factorization compatible for future analysis.
A further discussion is provided in Appendix~\ref{app:discuss}.

\section{Related Work}

\paragraph{Edge of stability.} \citet{cohen2020gradient} observes a two-stage process in gradient descent, where the first is loss curvature grows until the sharpness touches the bound $2/\eta$, and the second is the curvature hovers around the bound and training loss still decreases in a macro view regardless of local instability. \citet{gilmer2021loss} reports similar observations in stochastic gradient descent and conducts comprehensive experiments of loss sharpness on learning rates, architecture choices and initialization. \citet{lewkowycz2020large} argues that gradient descent would ``catapult'' into a flatter region if loss landscape around initialization is too sharp.

Some concurrent works~\citep{ahn2022understanding, ma2022multiscale, arora2022understanding, damian2022self} are also theoretically investigating the edge of stability. 
\citet{ahn2022understanding} suggests that unstable convergence happens when the loss landscape of neural networks forms a local forward-invariant set near the minima due to some ingredients, such as $\tanh$ as the nonlinear activation.
\citet{ma2022multiscale} empirically observes a multi-scale structure of loss landscape and, with it as an assumption, shows that gradient descent with different learning rates may stay in different levels.
\citet{arora2022understanding} shows the training provably enters the edge of stability with modified gradient descent or modified loss, and then its associated flow goes to flat regions. Under mild conditions, \citet{damian2022self} proves that GD beyond EoS follows an optimization trajectory subjected to a sharpness constraint so that a flatter region is found.
Note that our learning rate is strictly larger than that of \citet{damian2022self} so that their proposed manifold does not exists in our settings, as discussed in Section~\ref{sec:implication}.

\paragraph{Implicit regularization.} Due to its theoretical closeness to gradient descent with a small learning rate, gradient flow is a common setting to study the training behavior of neural networks. \citet{barrett2020implicit} suggests that gradient descent is closer to gradient flow with an additional term regularizing the norm of gradients. Through analysing the numerical error of Euler's method, \citet{elkabetz2021continuous} provides theoretical guarantees of a small gap depending on the convexity along the training trajectory. Neither of them fits in the case of our interest, because it is hard to track the parametric gap when $\eta>1/\lambda$. For instance, in a quadratic function, the trajectory jumps between the two sides once $\eta>1/\lambda$. \citet{damian2021label} shows that SGD with label noise is implicitly subjected to a regularizer penalizing sharp minimizers but the learning rate is constraint strictly below the edge of stability threshold. 

\paragraph{Balancing effect.} \citet{du2018algorithmic} proves that gradient flow automatically preserves the norms' differences between different layers of a deep homogeneous network. \citep{ye2021global} shows that gradient descent on matrix factorization with a constant small learning rate still enjoys the auto-balancing property. Also in matrix factorization, \citet{wang2021large} proves that gradient descent with a relatively large learning rate leads to a solution with a more balanced (perhaps not perfectly balanced) solution while the initialization can be in-balanced. In a similar spirit, we extend their finding to a larger learning rate, with which the perfect balance may be achieved in our setting. We estimate our learning rate is {strictly larger than} theirs~\citep{wang2021large}, where they show GD with large learning rates converges to a flat region in the interpolation manifold while the flat region w.r.t. our larger learing rate does not exists so GD is forced to wander around the flattest minima. Note that the implication of balancing effect is to get close to a flatter solution in the global minimum manifold, which may help improve generalization in some common arguments in the community.

\paragraph{Learning a single neuron.} \citet{yehudai2020learning} studies necessary conditions on both the distribution and activation functions to guarantee a one-layer single student neuron aligning with the teacher neuron under gradient descent, SGD and gradient flow. \citet{vardi2021learning} extends the investigation into a neuron with a bias term. \citet{vardi2021implicit} empirically studies the training dynamics of a two-layer single neuron, focusing on its implicit bias. In this work, we present a convergence analysis of a two-layer single-neuron ReLU network trained with population loss in a large learning rate beyond the edge of stability.




\section{Problem Setup}

We consider a differentiable objective function $f(\theta)$ with $\theta \in \mathbb{R}^d$, and the GD algorithm from (\ref{eq:GD}). 

\begin{defi}
A differentiable function $f$ is $L$-gradient Lipschitz if 
\begin{align}
    \norm{\nabla f(\theta_1)-\nabla f(\theta_2)}\le L\norm{\theta_1 - \theta_2}, ~~~~\forall~ \theta_1,\theta_2.
\end{align}
\end{defi}

The above definition is equivalent to saying that the spectral norm of the Hessian is bounded by $L$, or the \textit{local curvature} at each point is bounded by $L$.
Then $\eta$ needs to be bounded by $1/L$ in GD so that it is guaranteed to visit an approximate first-order stationary point~\citep{nesterov1998introductory}. The perturbed GD requires $\eta=1/L$ to visit an approximate second-order stationary point~\citep{jin2021nonconvex}, and stochastic variants share similar assumptions~\citep{ghadimi2013stochastic, jin2021nonconvex}.

However, in practice, such an assumption may be violated, or even impossible to satisfy when $\| \nabla^2 f \|$ is not uniformly bounded. \citet{cohen2020gradient} observes that, with learning rate $\eta$ fixed, the largest eigenvalue $\lambda_1$ of the loss Hessian of a neural network is below $2/\eta$ at initialization, but it grows above the threshold along training. Such a phenomena is more obvious when the network is deeper or narrower. This reveals the non-smooth nature of the loss landscape of neural networks.

Furthermore, another observation~from \citet{cohen2020gradient} is that once $\lambda_1\ge 2/\eta$, the training loss stops the monotone decreasing. This is not surprising because GD would diverge in a quadratic function with such a large curvature. However, despite of local instability, the training loss still decreases in a longer range of steps, during which the local curvature stays around $2/\eta$. A further phenomena is that, when GD is at the edge of stability, if the learning rate suddenly changes to a smaller value $\eta_s<\eta$, then the local curvature quickly grows to $2/\eta_s$ --- indicating the ability to `manipulate' the local curvature by adjusting the learning rate.

Besides the analysis of GD, the local curvature itself has also received a lot of attention. Due to the nature of over-parameterization in modern neural networks, the global minimizers of the objective $f$ form a manifold of solutions. There have been active directions to understand the \emph{implicit bias} of GD methods, namely where do they converge to in the manifold, and why some points in the manifold are more preferable than others. For the former question, it is believed that (stochastic) GD prefers flatter minima~\citep{barrett2020implicit, smith2021origin, damian2021label, ma2021sobolev}. For the latter, flatter minima brings better generalization~\citep{hochreiter1997flat,li2018visualizing,keskar2016large,ma2021sobolev,ding2022flat}. It would be meaningful if flatter minima could be obtained via GD with a large learning rate.

More specifically, it has been shown that the eigenvalues of the hessian of a deep homogeneous network could be manipulated to infinity via rescaling the weights of each layer~\citep{elkabetz2021continuous}. Fortunately, gradient flow preserves the difference of norms across layers along the training~\citep{du2018algorithmic}. As a result, a balanced initialization induces balanced convergence, while GD would break this balancing effect due to finite learning rate. However, recently it has been observed that GD with large learning rates enjoys a balancing effect ~\citep{wang2021large}, where it converges to a (not perfect) balanced result despite of imbalanced initialization.

Motivated by the connections of optimization, loss landscape and generalization, we would like to understand the training behavior of gradient descent with a large learning rate, from low-dimensional to representative models.






\section{Stable oscillation on 1-D functions: fixed point of two-step update}



In this section, we provide conditions of existence of fixed points of two-step GD on generic 1-D functions, which are on the third or higher derivatives at the local minima (Theorem~\ref{thm:1dlocal} and Lemma~\ref{lem:1dhigher}). More specifically, in the regression setting, these local conditions allow many differentiable non-linear activation functions to the base model (Prop~\ref{prop:l2loss}), and a composition rule is 
established to build complicated base models with simple base models (Prop~\ref{prop:comp}).

Within the framework of Theorem~\ref{thm:1dlocal}, we identify a specific 1-D function to investigate more:
we show the convergence to the fixed points (Theorem~\ref{thm:1dglobal}), along with its 2-D extension in Prop~\ref{prop:xy}, serving as the foundation of nonlinear (Section~\ref{sec:one_neuron}) and high-dimensional (Section \ref{sec:new_mf}) cases. Empirical verification of all theorems are provided in Appendix~\ref{app:add_exps}.

\subsection{Existence of fixed points}
\begin{defi}
    (Period-2 stable oscillation and fixed point of two-step update $F^2_\eta$.) Consider GD on a function $f$ in domain $\Omega$. Denote the update rule of GD as $F_\eta(x)$ for $x\in\Omega$ with learning rate $\eta$. A \textbf{period-2 stable oscillation} is $\exists~x\in\Omega$ such that $F_\eta(F_\eta(x))=x$ and $x$ is not a minima of $f$. Equivalently speaking, $\exists~x\in\Omega$ is a \textbf{fixed point} of the two-step update $F^2_\eta(\cdot)\triangleq F_\eta(F_\eta(\cdot))$.
\end{defi}

\begin{remark}
    It is obvious that fixed points of $F^2_\eta$ exist in pairs by the nature of period-2 oscillation.
\end{remark}

We initiate our analysis of existence of fixed point of $F^2_\eta$ in 1-D. Starting from a condition on general 1-D functions, we look into several specific 1-D functions to verify our arguments. 
Then, focusing on a function in the form of $f(x)=(x^2-\mu)^2$, we present the convergence analysis as a foundation for the following discussions. Furthermore, to shed light on the multi-layer setting, we propose a balancing effect on a 2-D function to make a connection to the 1-D analysis.

\label{sec:1d}

\paragraph{General 1-D functions.} Consider a 1-D function $f(x)$ with a learnable parameter $x\in\mathbb{R}$. The parameter updates following GD with the learning rate $\eta$ as
\begin{align}
    x\tt = F_\eta(x\t) \coloneqq x\t - \eta f'(x\t).
\end{align}
Assuming $f$ is differentiable and all derivatives are bounded, the function value in the next step can be approximated by
\begin{align*}
    f(x\tt) 
    = f(x\t) - \eta [f'(x\t)]^2\Big(1- \frac{\eta}{2}f''(x\t)\Big)
    + o((x\tt-x\t)^2).&
\end{align*}
If $\eta<2/f''(x\t)$, this approximation reveals that the function monotonically decreases for each step of GD, ignoring higher terms. Such an assumption would guarantee the convergence to a global minimum in a convex function. However, our interest is what happens if $\eta>2/f''(x)$. For instance, if $f$ is a quadratic function, the second-order derivative $f''$ is constant. As a result, once $\eta>2/f''$, GD diverges except when being initialized at the optimum. However, when trained with a large learning rate $\eta>2/f''(\Bar{x})$, there is still some hope for a function to stay around a local minima $\Bar{x}$, as stated in the following theorem.

\begin{theorem}
Consider any 1-D differentiable function $f(x)$ around a local minima $\Bar{x}$, satisfying (i) $f^{(3)}(\Bar{x})\neq 0$, and (ii) $3[f\T{3}]^2-f'' f\T{4}>0$ at $\Bar{x}$. Then, there exists $\epsilon$ with sufficiently small $|\epsilon|$ and $\epsilon\cdot f^{(3)}>0$ such that: for any point $x_0$ between $\Bar{x}$ and $\bar{x} - \epsilon$, there exists a learning rate $\eta$ such that $F^2_\eta(x_0)=x_0$, and
\[
\frac{2}{f''(\Bar{x})}<\eta<\frac{2}{f''(\Bar{x})-\epsilon\cdot f^{(3)}(\Bar{x})}.
\]
\label{thm:1dlocal}
\end{theorem}

\begin{customremark}{1}
    Here obviously we have $\eta>2/f''(\Bar{x})$ beyond EoS. If we take $f''(x_0)\approx f''(\Bar{x}) - \epsilon' f^{(3)}(\Bar{x})$ with $\epsilon'\approx \epsilon$, it holds $\eta<\frac{2}{f''(x_0)}$. Symmetrically, it holds $\frac{2}{f''(F_\eta(x_0))}<\frac{2}{f''(\Bar{x})}$. Hence, $\eta$ upper bounded by the EoS at one point in the period-2 orbit. 
\end{customremark}
\begin{customremark}{2}
    We prove the key condition, $3[f\T{3}]^2-f'' f\T{4}>0$, in the case of \textbf{matrix factorization} around any minima as Theorem~\ref{thm:mf_1d_cond} in Appendix~\ref{sec:app_mf_1d}. Meanwhile, we verify this condition in \textbf{multi-layer networks on MNIST}, as shown in Figure~\ref{fig:add_exp_mnist},~\ref{fig:add_exp_mnist_four_layer},~\ref{fig:add_exp_mnist_five_layer} in Appendix~\ref{app:mnist}.
\end{customremark}

The details of proof are presented in the Appendix~\ref{app:1dlocal}.
As stated in the Theorem~\ref{thm:1dlocal}, we provide a sufficient condition for existence of fixed point of $F^2_\eta$ around a local minima. But still we cannot tell whether or not some functions have it with $f\T{3}(\Bar{x})=0$. For instance, a quadratic function does not satisfy this condition since $f^{(3)} = f^{(4)} \equiv 0$ and it diverges when GD is beyond the edge of stability. But for $f(x)=\sin(x)$ around $\Bar{x}=-\frac{\pi}{2}$ where $f\T{3}(\Bar{x})=0$, it turns out the fixed point exists. Therefore, we extend the argument in Theorem~\ref{thm:1dlocal} to a higher order case in Lemma~\ref{lem:1dhigher}.
As a result, we verify that the sine function does allow stable oscillation as in Corollary~\ref{cor:sin}, because its lowest order of nonzero derivative (except $f''$) at the local minima is $f\T{4}(\Bar{x})<0$. 

\begin{lemma}
    Consider any 1-D differentiable function $f(x)$ around a local minima $\Bar{x}$, satisfying that the lowest order non-zero derivative (except the $f''$) at $\Bar{x}$ is $f\T{k}(\Bar{x})$ with $k\ge 4$. Then, there exists $\epsilon$ with sufficiently small $|\epsilon|$ such that: for any point $x_0$ between $\Bar{x}$ and $\Bar{x}-\epsilon$, and
    \begin{enumerate}
        \item if $k$ is odd and $\epsilon\cdot f\T{k}(\Bar{x})>0, f\T{k+1}(\Bar{x})<0$, 
        then there exists $\eta\in(\frac{2}{f''}, \frac{2}{f''-f\T{k}\epsilon^{k-2}})$,
        \item if $k$ is even and $f\T{k}(\Bar{x})<0$, then there exists $\eta\in(\frac{2}{f''}, \frac{2}{f''+f\T{k}\epsilon^{k-2}})$,
    \end{enumerate}
    such that $F^2_\eta(x_0)=x_0$.
    \label{lem:1dhigher}
    \end{lemma}

The details of proof are presented in the Appendix~\ref{app:1dhigher}.

\paragraph{$L_2$ loss on general 1-D functions.} 
However, we have to admit that the local conditions above are 1) too abstract to directly write down a meaningful function in this family, or 2) too complicated to compute the higher-order derivatives of a given non-trivial function.

Fortunately, both Theorem~\ref{thm:1dlocal} and Lemma~\ref{lem:1dhigher} provide a guarantee that squared-loss on any base function $g$ provably allows stable oscillation once $g$ satisfies some mild conditions, as stated in Prop~\ref{prop:l2loss}. Moreover, we provide a straightforward method to build a more complicated model from two simple base models, as stated in Prop~\ref{prop:comp}.
\begin{prop}
Consider a 1-D function $g(x)$ , and define the loss function $f$ as $f(x) = (g(x)-y)^2$. Assuming (i) $g'$ is not zero when $g(\Bar{x})=y$, (ii) $g'(\Bar{x})g\T{3}(\Bar{x}) < 6[g''(\Bar{x})]^2$, then it satisfies the condition in Theorem~\ref{thm:1dlocal} or Lemma~\ref{lem:1dhigher} have a fixed point of $F^2_\eta$ around $\Bar{x}$. 
\label{prop:l2loss}
\end{prop}
This setup covers a broad family of generic non-linear least squares problems, including the base model $g$ being \textbf{sine, tanh, high-order monomial, exponential, logarithm, sigmoid, softplus, gaussian}, etc. Many of these nonlinear (activation) functions are widely used in empirical or theoretical deep learning, together with the composition rule (Prop~\ref{prop:comp}), shedding light for future analysis of practical models with these as building blocks. 
\begin{prop}[Composition Rule]
    Consider two 1-D functions $p,q$. Assume both $p(x),q(y)$ at $x=\Bar{x}, y=p(\Bar{x})$ satisfies the conditions of $g$ in Prop~\ref{prop:l2loss}. Then $q(p(x))$ also satisfies the conditions to have a fixed point of $F^2_\eta$ around $x=\Bar{x}$.
    \label{prop:comp}
\end{prop}



In Appendix~\ref{app:1dhigher} and \ref{app:l2loss}, we provide the proof details of these settings of $g(x)$ as Corollaries \ref{cor:sin}-\ref{cor:sigmoid}, along with all lemmas and proposition.

After the above discussions on local conditions, a natural question rises up as 
\begin{center}
    \textbf{Q1}: with existence of a fixed point of $F^2_\eta$, can iterative runnings of $F^2_\eta$ converge to it?
\end{center}

With such a question, we are going to present a careful analysis on $g(x)=x^2$.

\subsection{Convergence to fixed points}\label{sec:convergence}

\paragraph{A special 1-D function.}

Consider $f(x)=\frac{1}{4}(x^2-\mu)^2$ with $\mu>0, f\T{3}(\sqrt{\mu})=6\sqrt{\mu}, f''(\sqrt{\mu})=2\mu$. Note that this function is more special to us because it can be viewed as a \textit{symmetric scalar factorization} problem subjected to the squared loss. Later we will leverage it to gain insights for asymmetric initialization, two-layer single-neuron networks and matrix factorization. Before that, we would like to show where it converges to when $\eta>\frac{2}{f''(\sqrt{\mu})}$ as follows.

\begin{theorem}
For $f(x)=\frac{1}{4}(x^2-\mu)^2$, consider GD with $\eta=K\cdot\frac{1}{\mu}$ where $1<K <\sqrt{4.5}-1\approx 1.121$, and initialized on any point $0<x_0<\sqrt{\mu}$. Then it converges to an orbit of period 2, except for a measure-zero initialization where it converges to $\sqrt{\mu}$. More precisely, the period-2 orbit are the solutions $x=\delta_1\in(0,\sqrt{\mu}), x=\delta_2\in(\sqrt{\mu},2\sqrt{\mu})$ of solving $\delta$ in
\begin{align}
    \eta  = \frac{1}{\delta^2\left( \sqrt{\frac{\mu}{\delta^2}-\frac{3}{4}} +\frac{1}{2}\right)}.
    \label{eq:1d_orbit}
\end{align}
\label{thm:1dglobal}
\end{theorem}


The details of proof are presented in the Appendix~\ref{app:1dglobal}. As shown above, Theorem~\ref{thm:1dlocal} and Theorem~\ref{thm:1dglobal} stand in two different levels: Theorem~\ref{thm:1dlocal} restricts the discussion in a local view because of Taylor approximation, while Theorem~\ref{thm:1dglobal} starts from local convergence and then generalizes it into a global view. However, Theorem~\ref{thm:1dlocal} builds a foundation for Theorem~\ref{thm:1dglobal} because the latter would degenerate to the former when $K$ is extremely close to 1.

\paragraph{A special 2-D function.} Similarly, consider a 2-D function $f(x,y)=\frac{1}{2}(xy-\mu)^2$ under different initialization for $x$ and $y$, which we would call ``in-balanced'' initialization. Note that all the global minima in 2-D case form a manifold $\{(x,y)|xy=\mu\}$ while the 1-D case only has two points of global minima. So we need to distinguish all points in the manifold by their sharpness. When $xy=\mu$, the leading eigenvalue of the loss Hessian is $\lambda_1 = (x-y)^2+2\mu$. Hence, in the global minima manifold, the local curvature of each point is larger if its two parameters are more imbalanced. Among all these points, the smallest curvature appears to be $\lambda_1 = 2\mu$ when $x=y=\sqrt{\mu}$. In other words, if the learning rate $\eta>2/2\mu$, all points in the manifold would be too sharp for GD to converge. We would like to investigate the behavior of GD in this case. It turns out the two parameters are driven to a perfect balance although they initialized differently, as follows.

\begin{theorem}
For $f(x,y)=\frac{1}{2}\left(xy-\mu\right)^2$, consider GD with learning rate $\eta=K\cdot\frac{1}{\mu}$. Assume both $x$ and $y$ are always positive during the whole process $\{x_i,y_i\}_{i\ge 0}$. In this process, denote a series of all points with $xy>\mu$ as $\mathcal{P}=\{(x_i,y_i)|x_i y_i>\mu\}_{i\ge0}$. Then $|x-y|$ \red{decays to 0} in $\mathcal{P}$, for any $1<K<1.5$.
\label{thm:xy_diff_decay}
\end{theorem}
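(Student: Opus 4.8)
The plan is to track the quantity $\Delta_t := x_t - y_t$ along the GD trajectory and show it contracts every time the iterate lands in the region $\mathcal{P} = \{xy > \mu\}$. First I would compute $\Delta_{t+1}$ in closed form. From the update rule,
\[
x_{t+1} - y_{t+1} = (x_t - y_t) - \eta (x_t y_t - \mu)(y_t - x_t) = (x_t - y_t)\bigl(1 + \eta(x_t y_t - \mu)\bigr),
\]
so $\Delta_{t+1} = \Delta_t \cdot \bigl(1 + \eta(x_t y_t - \mu)\bigr)$. Thus the sign of $\Delta_t$ is preserved as long as $1 + \eta(x_t y_t - \mu) > 0$ (WLOG take $\Delta_0 > 0$), and the multiplicative factor exceeds $1$ exactly when $x_t y_t > \mu$, i.e.\ when the iterate is in $\mathcal{P}$. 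So at first glance $|\Delta_t|$ \emph{grows} in $\mathcal{P}$ — this means the real content of the statement is that the product $x_t y_t$ cannot stay above $\mu$ for long, and more precisely that the \emph{cumulative} amount by which $x_t y_t - \mu$ is positive over $\mathcal{P}$ is controlled, and in fact $x_ty_t \to \mu$ along $\mathcal{P}$, forcing the factor to $1$; combined with a potential/energy argument this should give $|\Delta_t| \to 0$. Let me restructure: I would instead study $p_t := x_t y_t$ and show $p_t \to \mu$ (the trajectory approaches the minimum manifold), and simultaneously control a second quantity that forces the approach to happen along the balanced point.

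The key auxiliary computation is the update for $p_t = x_t y_t$. Writing $r_t = p_t - \mu$ and $s_t = x_t^2 + y_t^2$, one gets
\[
p_{t+1} = \bigl(x_t - \eta r_t y_t\bigr)\bigl(y_t - \eta r_t x_t\bigr) = p_t - \eta r_t s_t + \eta^2 r_t^2 p_t = p_t - \eta r_t (s_t - \eta r_t p_t).
\]
Since $s_t \ge 2p_t \ge 2\mu$ (AM–GM, using positivity) and $\eta = K/\mu$, when $r_t$ is small and positive the leading behavior is $r_{t+1} \approx r_t(1 - \eta s_t) \approx r_t(1 - 2K)$, which has magnitude $\le 2K - 1 < 2$ — i.e.\ $|r_t|$ may overshoot but the dynamics of $r_t$ near $0$ is exactly the 1-D scalar-factorization dynamics of $f(x) = \tfrac14(x^2 - \mu)^2$ after the change of variables matching $s_t \leftrightarrow 2x^2$, which by Theorem~\ref{thm:1dglobal} converges to a period-2 orbit straddling $\mu$. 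The crucial point is that in that period-2 orbit the two alternating values of $p$ satisfy $p^{(1)} < \mu < p^{(2)}$, and the geometric factor picked up by $\Delta$ over one period is $\bigl(1 + \eta(p^{(1)} - \mu)\bigr)\bigl(1 + \eta(p^{(2)} - \mu)\bigr)$. I would show this product is strictly less than $1$ for all $K \in (1, 1.5)$ — this is the heart of the contraction: even though $\Delta$ grows on the $\mathcal{P}$-step, it shrinks more on the complementary step, and the net effect over a full period is contraction. More carefully, since $|\Delta_t|$ only grows on $\mathcal{P}$-steps, it suffices to bound the product of growth factors $\prod_{t \in \mathcal{P}}(1 + \eta r_t)$ and show it is finite (or that, pairing each $\mathcal{P}$-step with the following non-$\mathcal{P}$-step, the paired product is $\le 1 - c$ for some $c>0$); combined with the fact that $\Delta$ strictly decreases on non-$\mathcal{P}$-steps, we get $\Delta_t \to 0$.

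The steps, in order: (1) derive the exact recursions for $\Delta_t$, $p_t$, $s_t$; (2) show positivity of $x_t, y_t$ is consistent with the hypothesis and that $s_t \ge 2\mu$ throughout; (3) analyze the $p_t$ (equivalently $r_t$) dynamics, reducing it to the 1-D analysis of Theorem~\ref{thm:1dglobal} (handling the coupling to $s_t$ via the bound $s_t \ge 2p_t$ and an upper bound on $s_t$ that follows from $\Delta_t$ being bounded — which needs care, since a priori $\Delta_t$ could be large); (4) identify the limiting period-2 behavior of $p_t$ and show the per-period product of $\Delta$-growth factors $(1+\eta(p^{(1)}-\mu))(1+\eta(p^{(2)}-\mu)) < 1$ for $1 < K < 1.5$, e.g.\ via the explicit characterization of $\delta_1, \delta_2$ in Theorem~\ref{thm:1dglobal}; (5) conclude $|\Delta_t| \to 0$ by multiplying the factors, being careful that early (pre-convergence) $\mathcal{P}$-steps contribute only a bounded multiplicative constant. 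The main obstacle I anticipate is step (3)–(4): the dynamics of $(p_t, s_t)$ is genuinely 2-dimensional and only \emph{approximately} reduces to the 1-D map, so I need an a priori bound on $s_t$ (hence on $\Delta_t$) to close the loop — this is a bootstrapping issue, since the bound on $\Delta_t$ is what I am ultimately trying to prove. The resolution is likely to first establish that $\Delta_t$ stays \emph{bounded} (using that growth only happens on $\mathcal{P}$ and that $r_t$ on $\mathcal{P}$ is summable because the 1-D map forces $r_t \to 0$ geometrically in a suitable averaged sense), and only then upgrade "bounded" to "$\to 0$" via the strict per-period contraction. Establishing that first qualitative boundedness — without circularity, and while $K$ ranges all the way up to $1.5$ where the overshoot is largest — is where the real work lies.
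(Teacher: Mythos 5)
There is a genuine gap, and it is twofold. Your plan hinges on reducing the product dynamics $p_t = x_t y_t$ to the 1-D analysis of Theorem~\ref{thm:1dglobal} and then reading off a period-2 orbit $\{p^{(1)}, p^{(2)}\}$ whose per-period factor $(1+\eta(p^{(1)}-\mu))(1+\eta(p^{(2)}-\mu))<1$ gives the contraction. But Theorem~\ref{thm:1dglobal} is only proved for $1<K\le 1.121$, while the statement here goes all the way to $K<1.5$; for $K\in(1.121,1.5)$ the 1-D map need not converge to a period-2 orbit at all (the paper itself shows period-4 behavior at $\eta=1.25$ in Figure~\ref{fig:2d}), so the object your argument anchors on does not exist in the relevant range. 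And as you yourself flag, even for small $K$ the $p_t$ recursion is coupled to $s_t = x_t^2+y_t^2$ (equivalently to $\Delta_t$) via the term $\eta(\mu-p_t)(x_t-y_t)^2$; treating this coupling as a perturbation of the 1-D map requires an a priori bound on $\Delta_t$, which is circular. You correctly identify this as "where the real work lies," but you do not close it, and closing it along this route up to $K=1.5$ is not clearly possible.

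The paper's proof sidesteps both problems by never taking a limit: it directly bounds the two-step ratio $\frac{y_{t+2}-x_{t+2}}{y_t-x_t}$ for an arbitrary step with $x_t y_t>\mu$. The decisive observation is the sign of the coupling term: from
\[
x_{t+1}y_{t+1} \;=\; x_t y_t\bigl(1+\eta(\mu-x_t y_t)\bigr)^2 \;+\; \eta(\mu-x_t y_t)(x_t-y_t)^2,
\]
when $x_t y_t>\mu$ the second term is \emph{negative}, so $x_{t+1}y_{t+1} \le x_t y_t(1+\eta(\mu-x_t y_t))^2$ — a one-variable upper bound in $b=x_t y_t$ alone, with $\Delta_t$ eliminated rather than bootstrapped. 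The two-step ratio is then bounded by the scalar quantity $\bigl(1-\eta(\mu-b)\bigr)\bigl(1-\eta(\mu - b(1+\eta(\mu-b))^2)\bigr)$, and a finite case analysis (on the size of $\eta(\mu - x_{t+1}y_{t+1})$ and whether $x_{t+1}y_{t+1}$ exceeds $x_s^2=\tfrac13(\mu+\eta^{-1})$) shows this lies in $(-1,1)$ for all $1<K<1.5$. This is why the paper's range extends past $1.121$: it needs no knowledge of the limiting orbit, only the pointwise one-variable estimate, and the $(x-y)^2$ term is an asset (it only helps the contraction) rather than an obstacle to be bounded away. If you want to salvage your strategy, the place to start is precisely this sign observation — it is what dissolves the circularity you flagged.
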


Theorem~\ref{thm:xy_diff_decay} shows an effect that the two parameters are squeezed to a single variable, which re-directs to our 1-D analysis in Theorem~\ref{thm:1dglobal}. Therefore, actually both cases converge to the same orbit when $1<K< 1.121$, as stated in Prop~\ref{prop:xy}.

\begin{prop}
    Follow the setting in Theorem~\ref{thm:xy_diff_decay}. Further assume $1<K<\sqrt{4.5}-1\approx 1.121$. Then GD converges to an orbit of period 2. The orbit is formally written as $\{(x=y=\delta_i)|i=1,2\}$, with $\delta_1\in(0,\sqrt{\mu}), \delta_2\in(\sqrt{\mu},2\sqrt{\mu})$  as the solutions of solving $\delta$ in
\begin{align*}
    \eta  = \frac{1}{\delta^2\left( \sqrt{\frac{\mu}{\delta^2}-\frac{3}{4}} +\frac{1}{2}\right)}.
\end{align*}
\label{prop:xy}
\end{prop}

A natural follow-up question is what implications Theorem~\ref{thm:1dglobal} and Prop~\ref{prop:xy} bring, because 1-D and 2-D is far from the practice of neural networks that contain multi-layer structures, nonlinearity and high dimensions. We precisely incorporate two layers and nonlinearity in Section~\ref{sec:one_neuron}, and high dimensions in Section~\ref{sec:new_mf}.

\section{On a two-layer single-neuron homogeneous network}\label{sec:one_neuron}

We denote a two-layer single-neuron network as $f(x;\theta)=v\cdot\sigma(w^\top x)$ where $v\in\R,w\in\R^d$, the set of trained parameters $\theta=(v,w^\top)\in\R^{d+1}$, and the nonlinearity $\sigma$ is ReLU. We will keep such an order in $\theta$ to view it as a vector. The input $x\in\R^d$ is drawn uniformly from a unit sphere $\mathcal{S}^{d-1}$. The parameters are trained by GD subjected to $L_2$ population loss, as 
\begin{align*}
    \theta_{t+1}=\theta_t - \eta\nabla_{\theta} L(\theta_t),~~~~
    L(\theta_t)=\mathbb{E}_{x\in\mathcal{S}^{d-1}} \big(f(x;\theta_t)-y\big)^2.
\end{align*}

We generate labels from a single teacher neuron function, as $y|x = \sigma(\tilde{w}^\top x)$. Hence $\tilde{w}$ is our target neuron to learn. We denote the angle between $w$ and $\tilde{w}$ as $\alpha \ge0$. Note that $\alpha$ is set as non-negative because the loss function is symmetric w.r.t. the angle. Moreover, the rotational symmetry of the population data distribution results in a loss landscape that only depends on $w$ through the angle $\alpha$ and the norm $\|w\|$. 
Indeed, from the definition, we have
\begin{align*}
    \scalemath{0.8}{
    \nabla_{\theta} L
    = \frac{1}{d}
        \begin{bmatrix}
        v\norm{w}_2^2-\frac{\norm{w}}{\pi}\big(\sin{\alpha}+(\pi-\alpha)\cos{\alpha}\big)\norm{\tilde{w}}\\
        v^2w-\frac{v}{\pi}(\pi-\alpha+\frac{1}{2}\sin{2\alpha})\cdot\tilde{w}-\frac{v}{\pi}(-\frac{1}{2}\cos{2\alpha}+\frac{1}{2})\norm{\tilde{w}}\tilde{w}_\perp
        \end{bmatrix}},
\end{align*}
where we denote $\tilde{w}_\perp$ as 
the normalized of $w-\proj_{\tilde{w}} w$.
Consider the Hessian
\begin{align}
    \scalemath{0.8}{
    H \triangleq 
    \begin{bmatrix}
    \partial_v^2 L &  \partial_w\partial_v L \\
    \partial_v\partial_w L &  \partial_w^2 L
    \end{bmatrix}
    \eqtext{if $vw = \tilde{w}$}
    \frac{1}{d}
    \begin{bmatrix}
    \norm{w}^2 & vw^\top \\
    vw & v^2\mathbb{I}
    \end{bmatrix} \in \R^{(d+1)\times(d+1)}}.
\end{align}
Hence, in the global minima manifold where $vw=\tilde{w}$, the eigenvalues of the Hessian are $\lambda_1 = \frac{\norm{w}^2 + v^2}{d}, \lambda_{2\dots d}=\frac{v^2}{d}, \lambda_{d+1}=0$. Therefore, the largest eigenvalue $\lambda_1$ measures the imbalance (\textit{i.e.}, $|\norm{w}-v|$)  between the two layers again as $\lambda_1=\frac{(\norm{w}-v)^2+2\norm{\tilde{w}}}{d}$ similar to the 2-D case in Section~\ref{sec:convergence}. So we would like to investigate where GD converges if $\eta>\frac{2}{2\norm{\tilde{w}}/d}=d/\norm{\tilde{w}}$ that is too large even for the flattest minima. Note that a key difference between the current case and the previous 2-D analysis is that the current one includes a neuron as a vector and a nonlinear ReLU unit.

From the second row of $\nabla_\theta L$, which is $\nabla_w L$, it is clear that updates of $w$ always stay in the plane spanned by $\tilde{w}$ and $w^{(0)}$. Hence, this problem can be simplified to three variables $(v, w_x, w_y)$ with the target neuron $\tilde{w}=[1,0]$. The three variables stand for
\begin{gather*}
    v^{(t)}\coloneqq v^{(t)},~~~ w_x^{(t)}\coloneqq \proj_{\tilde{w}} w^{(t)}, \\
    w_y^{(t)}\coloneqq \proj_{\tilde{w}_\perp} w^{(t)} = \sqrt{\norm{w^{(t)}}^2-(w_x^{(t)})^2}.
\end{gather*}


We keep $w_y$ as nonnegative because the loss $L$ is invariant to its sign and our previous notation $\alpha\ge0$ requires a non-negative $w_y$. 
Then we show that $w_y$ decays to 0 as follows.

\begin{theorem}
In the above setting, consider a teacher neuron $\tilde{w}=[1,0]$ and set the learning rate $\eta=Kd$ with $K\in(1,1.1]$. Initialize the student as $\norm{w^{(0)}}=v^{(0)}\triangleq \epsilon\in(0,0.10]$ and $\inprod{w^{(0)}}{ \tilde{w}}\ge0$. Then, for $t\ge T_1 + 4$, $w_y^{(t)}$ decays as 
\begin{gather*}
    \red{w_y^{(t)} < 0.1\cdot (1-0.030K)^{t-T_1-4},}
    ~~~~~
    T_1 \le \left\lceil\log_{2.56}\frac{1.35}{\pi \beta^2}\right\rceil,
    \\
    ~~~~~~
    \beta = \left(1+\frac{1.1}{\pi}\right)\epsilon.
\end{gather*}
\label{thm:one_neuron}
\end{theorem}
\vspace{-10pt}

The details of proof are presented in the Appendix~\ref{app:one_neuron}.

With the guarantee of $w_y$ decaying in the above theorem, the dynamics of the single-neuron ReLU network follow the convergence of the 2-D case in Section~\ref{sec:convergence}, with a convergence result as follows. 
\begin{prop}
    The single-neuron model in Theorem~\ref{thm:one_neuron} converges to a period-2 orbit where $w_y=0$ and $(v,w_x)\in\gamma_K$ with $\gamma_K=\left\{(\delta_1, \delta_1), (\delta_2, \delta_2)\right\}$. Here $\delta_1\in(0,1), \delta_2\in(1,2)$ are the solutions $\delta$ in
    \begin{align}
        K  = \frac{1}{\delta^2\left( \sqrt{\frac{1}{\delta^2}-\frac{3}{4}} +\frac{1}{2}\right)}.
    \end{align}
\label{prop:single_neuron}
\end{prop}
\begin{remark}
    Actually this convergence is close to the flattest minima because: if the learning rate decays to infinitesimal after sufficient oscillations, then the trajectory walks towards the flattest minima $(v=w_x=1,w_y=0)$. Note that we provide an experiment on \textbf{16-neuron networks} in Appendix~\ref{app:16-neuron}, where GD converges to the period-2 orbit near the flattest minima while being initialized near unbalanced (sharp) minima.
\end{remark}


To summarize, the single-neuron model goes through three phases of training dynamics, with an intialization of the angle $\measuredangle(w,\tilde{w})$ as $\frac{\pi}{2}$ at most. First, the angle decreases monotonically but, due to the growth of norms, the absolute deviation $w_y$ still increases. Meanwhile, the imbalance $v-w_x$ stays in a bounded level. Second, $w_y$ starts to decrease and the parameters fall into a basin within four steps. Third, in the basin, $w_y$ decreases exponentially and, after $w_y$ at a reasonable low level, the model approximately follows the dynamic of the 2-D case and the imbalance $v-w_x$ decreases as well, following Theorem~\ref{thm:xy_diff_decay}. The model converges to a period-2 orbit as in the 1-D case in Theorem~\ref{thm:1dglobal}.

\section{Matrix Factorization and beyond}
\label{sec:new_mf}

In the last two sections, we have presented theoretical results that GD beyond EoS converges to the fixed points of $F^2_\eta$ from initialization that is far away. In this section, we address these follow-up questions, by raising observations in Matrix Factorization and discuss whether existing models can explain our observations or not:

\begin{center}
    \textbf{Q2}: does such a period-2 orbit exist in more complicated settings?\\
    \textbf{Q3}: what does the appropriate model need to cover such oscillation in high-dim problems? \\
    \textbf{Q4}: what will happen if the learning rate grows more?
\end{center}


\subsection{Observations from Matrix Factorization}


Consider a matrix factorization problem, parameterized by learnable weights $\mathbf{Y}\in\R^{d\times d}$, $\mathbf{Z}\in\R^{d\times d}$, and the target matrix is $\mathbf{C}\in\R^{d\times d}$, which is symmetric and positive definite. The loss $L$ is defined as
\begin{align}
    L(\bY,\bZ)=\frac{1}{2}\norm{\bY\bZ^\top - \mathbf{C}}_F^2.
\end{align}
Obviously $\{(\bY,\bZ):\bY\bZ^\top=\mathbf{C}\}$ forms a minimum manifold. 
Although we prove that the necessary 1-D condition holds around minimum as Theorem~\ref{thm:mf_1d_cond} (in Appendix~\ref{app:add_res_mf}), which is analogous to Theorem~\ref{thm:1dlocal}, it is more attracting to investigate GD in high dimensions.
We propose our first observation that Matrix Factorization converges to a period-2 orbit, \textit{i.e.}, fixed points of $F^2_\eta$, as follows.


\begin{obs}[Matrix Factorization with period-2 orbit]
\label{obs:generic_mf}
Consider GD with learning rate $\eta$ satisfying $\eta\sigma_1^2\in(1,1.121)$ and $\eta\left(\sigma_1^2+\sigma_2^2\right)<2$ where $\sigma_1^2,\sigma_2^2$ are the first and second largest eigenvalues of $\bC$. Then, there exists non-measure-zero initialization, from which GD converges to a period-2 orbit in the form of ($i\in\{1,2\}$)
\begin{align*}
    \bY = \rho_i u v^\top + \sum_{j=2}^d \sigma_{y,j} u_{y,j} v_{y,j}^\top, \\
    \bZ = \rho_i u v^\top + \sum_{j=2}^d \sigma_{z,j} u_{z,j} v_{z,j}^\top, \\
    \bY\bZ^\top - \bC = (\rho_i^2-\sigma_1^2) u u^\top,
\end{align*}
where $u$ is the leading eigenvector of $\bC$, $v$ is arbitrary unit vector in $\R^d$, $\{\rho_i\}_{i=1,2}$ are the two positive roots of 
\begin{align}
    \eta\sigma_1^2 = \frac{1}{\rho^2 \left(\sqrt{\frac{1}{\rho^2}-\frac{3}{4}}+\frac{1}{2} \right)},
    \label{eq:rho}
\end{align}
and the decompositions of $\bY,\bZ$ are SVD.
\end{obs}

\begin{remark}
    At any minimizer $(\bX,\bY)$ satisfying $\bX\bY^\top=\bC$, the largest eigenvalue of loss Hessian w.r.t. parameters is $\sigma_{\max}(\bX)^2+\sigma_{\max}(\bY)^2$. Consequently, the flattest minima has sharpness as $2\sigma_1^2$, because $\sigma_1^2=\lambda_{\max}(\bC)\le \sigma_{\max}(\bX)\sigma_{\max}(\bY)\le 0.5\left(\sigma_{\max}(\bX)^2+\sigma_{\max}(\bY)^2\right)$.
\end{remark}


To our knowledge, this observation is beyond all previous results. \citet{damian2022self} tracks the trajectory's projection onto the manifold with sharpness $<\nicefrac{2}{\eta}$. \citet{wang2021large} proposes that GD in a sharper region (sharpness$>\nicefrac{2}{\eta}$) converges to flatter region (sharpness$<\nicefrac{2}{\eta}$) for matrix factorization problem. But such a manifold (or flatter region) containing any minimizer does not exist in our setting because $\eta\sigma_1^2>1$ makes the flattest minima sharper than $2/\eta$, which means the probability of converging to a stationary point is zero~\citep{ahn2022understanding}.

However, it is difficult to prove Observation~\ref{obs:generic_mf} rigorously. Meanwhile, general initialization cannot illustrate well the phenomena that GD walks to flatter minima from a sharper one. Therefore, we provide an observation of a limited version of matrix factorization, called \textit{quasi-symmetric}, along with sufficient intuition on its dynamics and careful discussion on what is remaining to prove it.

\begin{defi}[Quasi-symmetric Matrix Factorization]
Given a symmetric and positive definite target matrix $\bC\triangleq\bX_0\bX_0^\top$, where $\bX_0=\R^{d\times d}$. Quasi-symmetric MF is solving the factorization problem with initialization near an unbalanced minima, where the minima is $(\alpha \bX_0, \nicefrac{1}{\alpha} \bX_0)$ with $\alpha\neq 1$.
\end{defi}


\begin{obs}[Quasi-symmetric Matrix Factorization with period-2 orbit]
    \label{obs:quasi_mf}
    Consider the above quasi-symmetric matrix factorization with learning rate $\eta \in (\nicefrac{1}{\sigma_1^2}, \nicefrac{1.121}{\sigma_1^2})$. Consider a minima $(\bY_0=\alpha\bX_0, \bZ_0=\nicefrac{1}{\alpha}\mathbf{X}_0), \alpha>0$. The initialization is around the minimum, as $\mathbf{Y}_1 = \bY_0 +\Delta \mathbf{Y}_1, \mathbf{Z}_1 = \bZ_0 + \Delta \mathbf{Z}_1$. When
    \begin{align}
        \eta\cdot\max\left\{
        (\frac{\sigma_1^2}{\alpha^2}+\sigma_2^2\alpha^2, \frac{\sigma_2^2}{\alpha^2}+\sigma_1^2\alpha^2)
        \right\}
        \le 2
        \label{eq:quasi_sig2}
    \end{align}
    GD would converge to a period-2 orbit $\gamma_{\eta}$ approximately with error in $\O(\epsilon)$, formally written as, $(i=1,2)$ 
    \begin{gather*}
        (\mathbf{Y}_t, \mathbf{Z}_t) \rightarrow \gamma_{\eta} + (\Delta\mathbf{Y}, \Delta\mathbf{Z}), 
        ~~~~~~\norm{\Delta\mathbf{Y}}, \norm{\Delta\mathbf{Z}}=\O(\epsilon), \\\
        \gamma_{\eta} = \{\left(\mathbf{Y}_0 + \left(\rho_i-\alpha\right) \sigma_1 u_1 v_1^\top , \mathbf{Z}_0 + \left(\rho_i-\nicefrac{1}{\alpha}\right) \sigma_1 u_1 v_1^\top\right)\},
    \end{gather*}
    where $\rho_1\in (1,2),\rho_2\in(0,1)$ are the same as in Eq.(\ref{eq:rho})
\end{obs}

\begin{remark} The intuition on the dynamics in Observation~\ref{obs:quasi_mf} is provided in Appendix~\ref{sec:app_mf_quasisym}, along with a discussion on what is missing for rigorous proof for future development. Without loss of generality, assume $\bX_0=\text{diag}([\sigma_1,\sigma_2,\dots,\sigma_d])\in\R^{d\times d}$, where $(\bX_0)_{i,i}=\sigma_i$ and $0$ in all other entries. Intuitively, the dynamics of the system is following
    \begin{gather*}
        \scalemath{0.78}{
        \bY=
            \left[ \begin{array}{c|c}
               \alpha\sigma_1 & \mathbf{0} \\
               \midrule
               \mathbf{0} & \text{diag}([\alpha\sigma_i]_{i=2}^d) \\
            \end{array}
            \right]+ O(\epsilon)
            \rightarrow
            \left[ \begin{array}{c|c}
                \rho_i & \mathbf{0} \\
                \midrule
                \mathbf{0} & \text{diag}([\alpha\sigma_i]_{i=2}^d) \\
             \end{array}
             \right]+ O(\epsilon),}
              \\
        \scalemath{0.78}{
        \bZ=
            \left[ \begin{array}{c|c}
               \alpha\sigma_1 & \mathbf{0} \\
               \midrule
               \mathbf{0} & \text{diag}([\nicefrac{\sigma_i}{\alpha}]_{i=2}^d) \\
            \end{array}
            \right]+ O(\epsilon) 
            \rightarrow
            \left[ \begin{array}{c|c}
                \rho_i & \mathbf{0} \\
                \midrule
                \mathbf{0} & \text{diag}([\nicefrac{\sigma_i}{\alpha}]_{i=2}^d) \\
             \end{array}
             \right]+ O(\epsilon), }
             \\
        \scalemath{0.78}{
        \bY\bZ^\top =
            \left[ \begin{array}{c|c}
                \sigma_1^2 & \mathbf{0} \\
                \midrule
                \mathbf{0} & \text{diag}([\sigma_i^2]_{i=2}^d) \\
            \end{array}
            \right]+ O(\epsilon) 
            \rightarrow
            \left[ \begin{array}{c|c}
                \rho_i^2 & \mathbf{0} \\
                \midrule
                \mathbf{0} & \text{diag}([\sigma_i^2]_{i=2}^d) \\
             \end{array}
             \right].
        }
    \end{gather*}
Note that the top singular values of $\bY,\bZ$ are always the same in the orbit although it is unbalanced at initialization. A benefit of this is that, if $\eta$ decays below $\nicefrac{1}{\sigma_1^2}$ after reaching the orbit, it would converge to $\bY,\bZ$ with same top singular value $\sigma_1$, satisfying $\bY\bZ^\top=\bC$.
\end{remark}


\textbf{How tight are Observation~\ref{obs:generic_mf} and~\ref{obs:quasi_mf}?} There are two aspects we would like to address: $\eta\sigma_1^2$ and $\eta\sigma_2^2$. The former $\eta\sigma_1^2$ is a natural constraint because it is necessary to carefully set its upper bound in 1-D analysis to contain the oscillation in some finite level set. However, the second $\eta\sigma_2^2$ is novel (and tight) to our knowledge, which is respectively $\eta(\sigma_1^2+\sigma_2^2)<2$ in Observation~\ref{obs:generic_mf} and $\eta\cdot\left(\nicefrac{\sigma_1^2}{\alpha^2}+\sigma_2^2\alpha^2\right)<2$ in Observation~\ref{obs:quasi_mf}. The tightness of this bound is verified in Figure~\ref{fig:sig1_vs_sig2}, where it approximates the linearity of the empirical boundary between \crr{infinite} and \cbb{finite} well when $\eta\sigma_1^2>1$ slightly. Furthermore, although we do not prefer asserting too much beyond our theorems, the linear trend between $\eta\sigma_1^2$ and $\eta\sigma_2^2$ keeps well when $\eta\sigma_1^2$ goes beyond $1.121$ for a long range. Intuitively, We gain the insight of this bound from the analysis of Observation~\ref{obs:quasi_mf} in Appendix~\ref{sec:app_mf_quasisym}. More precisely, it appears in Eq.(\ref{eq:quasi_q_lambda}) to guarantee a transition matrix to be semi-convergent, whose largest absolute eigenvalue is no larger than 1.

\begin{figure}[ht]
    \centering
    \includegraphics[width=0.49\linewidth]{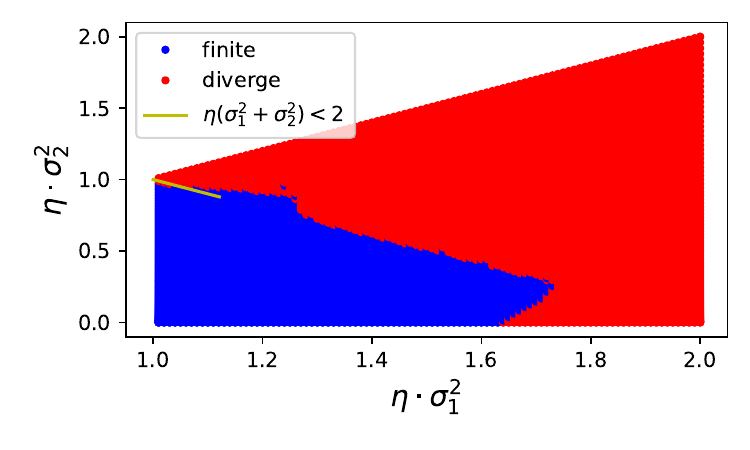}
    \includegraphics[width=0.49\linewidth]{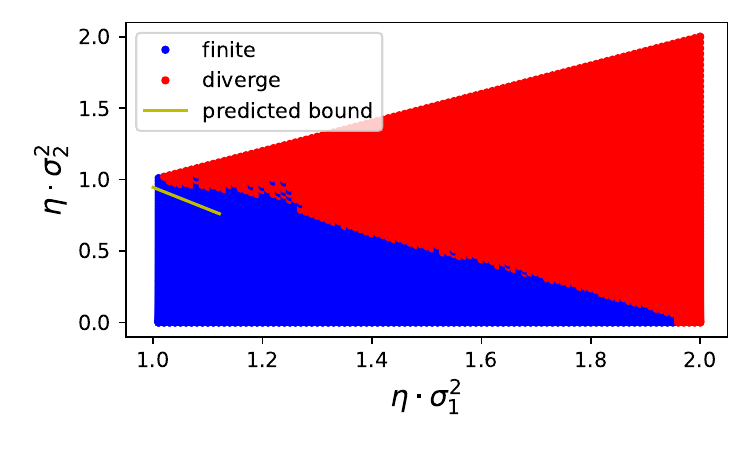}\\
    \hspace{80pt}
    (a) Generic init \quad\quad\hfill 
    (b) Quasi-sym init ($\alpha=0.9$)
    \hspace{40pt} \hfill
    \\
    \includegraphics[width=0.49\linewidth]{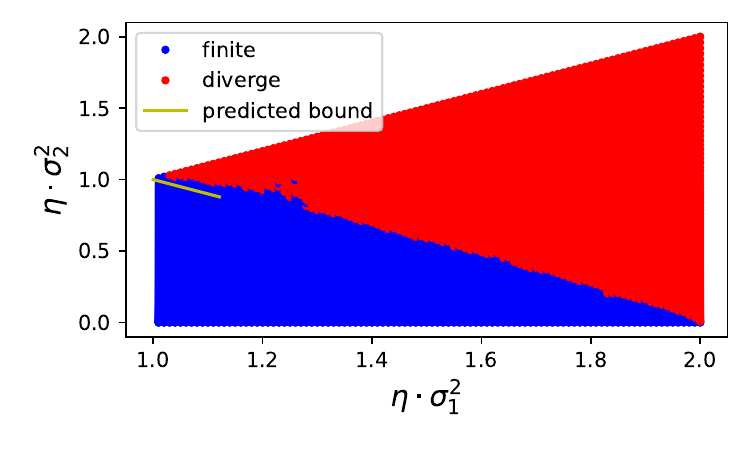}\\
    (c) Symmetric init ($\alpha=1$ in Quasi case)
    \caption{\textbf{Matrix Factorization}: grid search of $\eta\sigma_1^2$ \text{v.s.} $\eta\sigma_2^2$ on whether GD diverges or not. (a) Generic initialization: it verifies the condition $\eta\left(\sigma_1^2+\sigma_2^2\right)<2$. (b-c) Quasi-symmetric initialization: it verifies the predicted bound $\eta\cdot\left(\nicefrac{\sigma_1^2}{\alpha^2}+\sigma_2^2\alpha^2\right)<2$ in Eq.(\ref{eq:quasi_sig2}) as a sufficient condition.}
    \label{fig:sig1_vs_sig2}
\end{figure}





\textbf{Is there any other phenomena beyond period-2 orbit when $\eta$ grows larger?} 
The answer is yes.
We conduct experiments of matrix factorization with generic initialization with different $\eta$'s, as shown in Figure~\ref{fig:sigm_vs_eta}. It turns out when $\eta\sigma_1^2\in(1,1.23)$, it converges to period-2 orbit. When $\eta\sigma_1^2\in(1.23,1.28)$, it converges to a period-4 orbit, although the period-2 orbit still exists once $\eta\sigma_1^2<1.5$ as shown in Eq.(\ref{eq:eta_exist_bound}) (because the existence cannot guarantee convergence, and even local convergence does not hold). When $\eta\sigma_1^2>1.28$, it is rather chaotic. However, during most of these, the balancing effect holds, \textit{i.e.}, $\sigma_{\max}(\bY)=\sigma_{\max}(\bZ)$.

\begin{figure}[ht]
    \centering
    \includegraphics[width=0.5\linewidth]{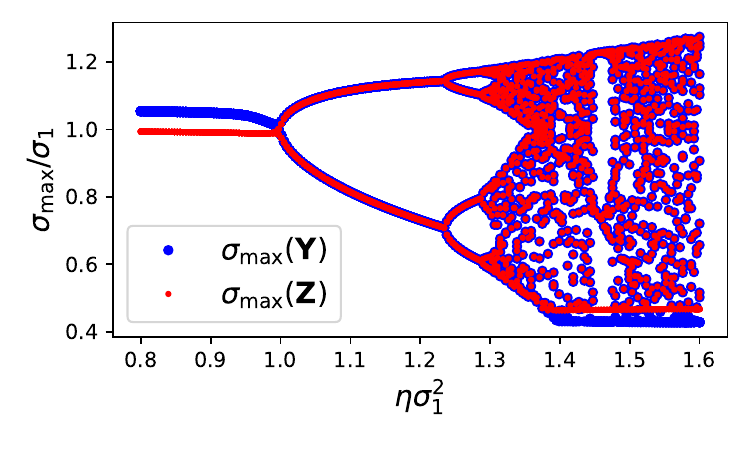}
    \caption{\textbf{Matrix Factorization}: $\sigma_{\max}(\bY), \sigma_{\max}(\bZ)$ for different $\eta$'s. For each $\eta$, the last 10 iterations are sampled for report, due to periodic and chaotic phenomenon. \textbf{Observations}: (1) when $\eta\sigma_1^2\in(1,1.38)$, all cases have $\sigma_{\max}(\bY)= \sigma_{\max}(\bZ)$; (2) when $\eta\sigma_1^2\in(1,1.23)$, it converges to a period-2 orbit; (3) when $\eta\sigma_1^2\in(1.23, 1.28)$, it converges to a period-4 orbit; (4) when $\eta\sigma_1^2>1.28$, it is rather chaotic; (5) when $\eta\sigma_1^2<1$, there is no oscillation.}
    \label{fig:sigm_vs_eta}
\end{figure}

\subsection{Implications for more complicated settings}
\label{sec:implication}

\paragraph{Existing models from \citet{ma2022multiscale} and \citet{damian2022self}.} \citet{ma2022multiscale} proposes a decomposition of high-dimensional functions into separable functions in eigendirections, in the form of 
\begin{align}
    \label{eq:ma_decomp}
    f(\theta)=f_1(p_1^\top \theta) + f_2(p_2^\top \theta) + \cdots + f_d(p_d^\top \theta),
\end{align}
where $\{p_i\in\R^{d}\}$ is an orthogonal basis of $\R^d$, $\theta\in\R^d$ is the parameter and each $f_i$ is a function that allows stable oscillation. Within such a framework, all $p_i^\top x$ can stably oscillation since the dynamics is separable in each eigendirection. However, this framework cannot explain the dynamics of matrix factorization, because our experiments in Figure~\ref{fig:sig1_vs_sig2} have shown that GD will blow up once $\eta\sigma_2>1$, which means the eigen-directions associated with $\sigma_1^2$ and $\sigma_2^2$ cannot be disentangled in this case.

\citet{damian2022self} proposes to track the trajectory's projection onto manifold $\mathcal{M}=\{\theta:\lambda(\theta)<\nicefrac{2}{\eta}, \nabla L(\theta)\cdot u(\theta)=0\}$, where $\lambda(\theta)$ and $u(\theta)$ are the leading eigenvalue and eigenvector of Hessian of loss $L$. However, such a manifold does not exist in the 2-D case we have studied in Section~\ref{sec:1d} because our setting is strictly beyond EoS. Furthermore, in high-order cases, such a manifold containing any minimizer does not exist (Proposition~\ref{prop:xn_prod}).

\begin{prop}
    For $L(x,y)=\nicefrac{1}{2}(xy-1)^2$ with $\eta>1$ on $\{x>0, y>0\}$, such a manifold $\mathcal{M}$ does not exist. 
\end{prop}

\begin{prop}
    For $L(x,y)=\nicefrac{1}{2}(xy-1)^2$ with $\eta<1$ on $\{x>0, y>0\}$, $\mathcal{M}=\{(x,y): xy=1, x+y<\sqrt{2+\nicefrac{2}{\eta}}\}$. 
    \label{prop:x2_le1}
\end{prop}

\begin{prop}
    For $L(\{x_i\}_{i=1}^n)=\frac{1}{n}(\prod_{i=1}^n x_i-1)^2$ with $\eta>1$ on $\{x_i>0,~\forall i\}$, such a manifold $\mathcal{M}$ containing any minimizer does not exist.
    \label{prop:xn_prod}
\end{prop}

Moreover, although $\mathcal{M}$ exists when $\eta<1$ (Proposition~\ref{prop:x2_le1}), the size of $\mathcal{M}$ is limited, which means the trajectory's projection onto it stays unchanged in the early steps, although the trajectory is moving efficiently from sharper region to flatter region, as shown in Figure~\ref{fig:xy_manifold}(b).

\begin{figure}[ht]
    \centering
    \includegraphics[width=0.4\linewidth]{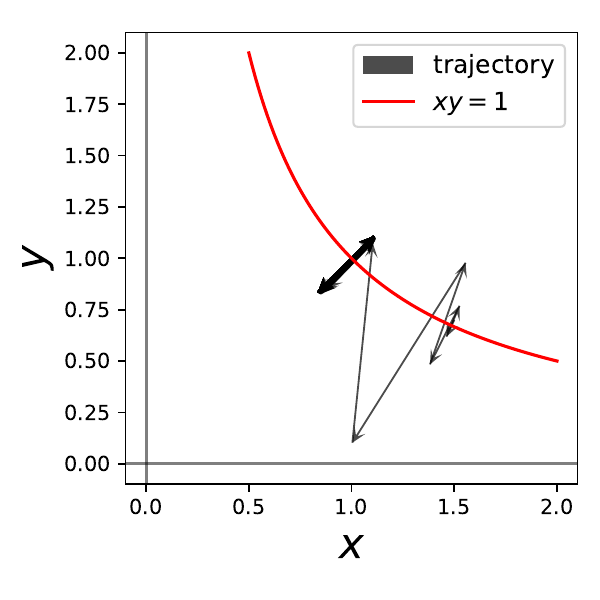}
    \includegraphics[width=0.4\linewidth]{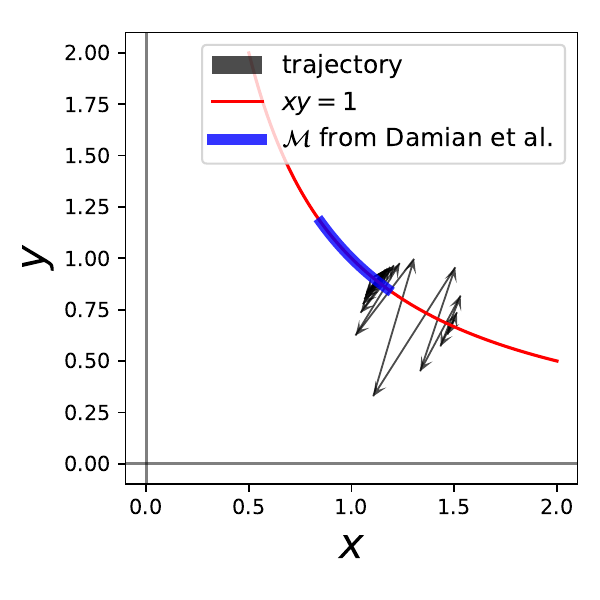}
    \\
    \hfill
    (a) $\eta=1.08$
    \hfill
    (b) $\eta=0.95$
    \hspace{80pt}
    \hfill
    \caption{Trajectories of minimizing $L(x,y)=\nicefrac{1}{2}(xy-1)^2$ with $\eta=1.08, 0.95$. For $\eta=1.08$, the manifold $\mathcal{M}$ proposed by \citet{damian2022self} does not exist. For $\eta=0.95$, the manifold $\mathcal{M}$ exists, but the projection onto it does not change for the first few steps.}
    \label{fig:xy_manifold}
\end{figure}

\textbf{Two candidate models.}
From the above discussion, we would like to raise two candidate models to contain the observations from matrix factorization, based on the models proposed in \citet{damian2022self} and \citet{ma2022multiscale}.

Following \citet{damian2022self}, we would like to propose

\begin{defi}[Projection onto manifold]
    $\mathcal{M}_c=\{\theta:\lambda_2(\theta)<\nicefrac{2}{\eta}, \nabla L(\theta)\cdot u(\theta)=0\}$, where $\lambda_2(\theta)$ and $u(\theta)$ are the second largest eigenvalue and the leading eigenvector of Hessian of loss $L$.
\end{defi}
The motivation of $\mathcal{M}_c$ is to contain points that have the leading eigenvalue greater than 1. For example, in the case of $\nicefrac{1}{2}(xy-1)^2$, it is $\mathcal{M}_c=\{(x,y):xy=1\}$ allowing to track the trajectory walking from sharper region to flatter region. Instead of constraining $\lambda<\nicefrac{2}{\eta}$, we set $\lambda_2<\nicefrac{2}{\eta}$ to make it compatible with our observations in matrix factorization.

The gap between \citet{ma2022multiscale} and observations from matrix factorization is that they assume the orthogonal decomposition of the loss function. However, even in the simplest setting of matrix factorization, this assumption does not hold.
Taking a symmetric matrix factorization as an example, we have
\begin{align}
        L(\bX) &=
        \frac{1}{4}
        \norm{
            \bX\bX^\top
            -
            \begin{bmatrix}
                1 & 0 \\
                0 & 1
            \end{bmatrix}
        }_F^2
        \nonumber
        \\
        &=
        \frac{1}{4}
        \bigg(
            \left(\norm{\bX_{0,:}}^2-1\right)^2
            +
            \left(\norm{\bX_{1,:}}^2-1\right)^2
            + 
            2\left(\langle\bX_{0,:}, \bX_{1,:}\rangle\right)^2
        \bigg),
\end{align}
where the first two terms in the last line are $f_i(p_i^\top x)$ in Eq.(\ref{eq:ma_decomp}) since the included are orthogonal to each other. However, the last term $\langle\bX_{0,:}, \bX_{1,:}\rangle$ breaks the separability in the decomposition. Meanwhile, $\left(\langle\bX_{0,:}, \bX_{1,:}\rangle\right)^2$ is implicitly $\left(\langle\bX_{0,:}, \bX_{1,:}\rangle - 0\right)^2$, because $\bX_{0,:}, \bX_{1,:}\in\R^2$ are expressive enough to form an orthogonal pair satisfying the constraints of norms.

In a similar spirit, we propose an extensive model of Eq.(\ref{eq:ma_decomp})~\citep{ma2022multiscale} as follows
\begin{align}
    \label{eq:new_decomp}
    f(\theta)=\sum_{i=1}^d g_i(p_i^\top \theta;a_i)  + \sum_{(i,j)\in\mathcal{E}}^d h_{ij}(p_i^\top \theta, p_j^\top \theta; b_{ij}),
\end{align}
where $g_i, h_{ij}$ are functions allowing stable oscillation parameterized by $a_i, b_{ij}$, $\{p_i\}_{i=1}^d$ are orthogonal basis of $\R^d$ and $\mathcal{E}\subset [d]\times[d]$ is a selected subset of tuples. A simple but effective example to imitate matrix factorization is $g_i(x;a)\triangleq (\norm{x}^2-a)^2$ and $h_{ij}(x,y|b)\triangleq (\langle x,y\rangle - b)^2$ and $\mathcal{E}=[d]\times[d]$. Intuitively, such a model with larger $|\mathcal{E}|$ allows fewer eigenvalues of Hessian to go beyond $\nicefrac{2}{\eta}$. Conversely, if $\mathcal{E}=\emptyset$, it allows all eigenvalues beyond $\nicefrac{2}{\eta}$, which degenerates to Eq.(\ref{eq:ma_decomp})~\citep{ma2022multiscale}.

\section{Conclusions}
In this work, we investigate gradient descent with a large step size that crosses the threshold of local stability, via investigating convergence of two-step updates instead of convergence of one-step updates.
In the low dimensional setting, we provide conditions on high-order derivatives that guarantees the existence of fixed points of two-step updates.
For a two-layer single-neuron ReLU network, we prove its convergence to align with the teacher neuron under population loss. 
For matrix factorization, we prove that the necessary 1-D condition holds around any minima. 
We provide novel observations of its convergence to period-2 orbit with comprehensive theoretical intuition of the dynamics.
Finally, we extend previous works by proposing two models with observations in matrix factorization compatible for future analysis.



\section*{Acknowledgements}
We are grateful to Alex Damian, Zhengdao Chen, Zizhou Huang, Yifang Chen and Kaifeng Lyu for helpful conversations. 
This work was partially supported by the Alfred P. Sloan Foundation, NSF RI-1816753, NSF CAREER CIF 1845360, NSF CHS-1901091, Capital One and Samsung Electronics. This research also received support by the generosity of Eric and Wendy Schmidt by recommendation of the Schmidt Futures program.

\bibliography{ref}
\bibliographystyle{plain}

\setcounter{tocdepth}{2}

\newpage

\onecolumn
\appendix
\tableofcontents


\section{Additional Results}
\label{app:addres}

\subsection{On a 2-D function}\label{sec:2d}


Similar to $f(x)=\frac{1}{4}(x^2-\mu)^2$, consider a 2-D function $f(x,y)=\frac{1}{2}(xy-\mu)^2$. Apparently, if $x$ and $y$ initialize as the same, then $(x\t,y\t)$ would always align with the 1-D case from the same initialization. Therefore, it is significant to analyze this problem under different initialization for $x$ and $y$, which we would call ``in-balanced'' initialization. Meanwhile, another giant difference is that all the global minima in 2-D case form a manifold $\{(x,y)|xy=\mu\}$ while the 1-D case only has two points of global minima. It would be great if we could understand which points in the global minima manifold, or in the whole parameter space, are preferable by GD. 

Note that reweighting the two parameters would manipulate the curvature to infinity as in~\citep{elkabetz2021continuous}, 
so the inbalance strongly affects the local curvature.
Viewing $f(x)$ as a symmetric scalar factorization problem, we treat $f(x,y)$ as asymmetric scalar factorization. The update rule of GD is 
\begin{align}
    x\tt \coloneqq x\t - \eta (x\t y\t-\mu)y\t,~~~~
    y\tt \coloneqq y\t - \eta (x\t y\t-\mu)x\t.
\end{align}
Consider the Hessian as
\begin{align}
    H\triangleq 
    \begin{bmatrix}
    \partial_x^2 f &  \partial_y\partial_x f \\
    \partial_x\partial_y f &  \partial_y^2 f
    \end{bmatrix}
    =
    \begin{bmatrix}
    y^2 & 2xy-\mu \\
    2xy-\mu & x^2
    \end{bmatrix}.\label{eq:2dhessian}
\end{align}
When $xy=\mu$, the eigenvalues of $H$ are $\lambda_1 = x^2+y^2, \lambda_2 = 0$. Note that $\lambda_1 = (x-y)^2+2\mu$. Hence, in the global minima manifold, the local curvature of each point is larger if its two parameters are more inbalanced. Among all these points, the smallest curvature appears to be $\lambda_1 = 2\mu$ when $x=y=\sqrt{\mu}$. In other words, if the learning rate $\eta>2/2\mu$, all points in the manifold would be too sharp for GD to converge. We would like to investigate the behavior of GD in this case. It turns out the two parameters are driven to a perfect balance although they initialized differently, as follows.

\begin{theorem}[Restatement of Theorem~\ref{thm:xy_diff_decay}]
For $f(x,y)=\frac{1}{2}\left(xy-\mu\right)^2$, consider GD with learning rate $\eta=K\cdot\frac{1}{\mu}$. Assume both $x$ and $y$ are always positive during the whole process $\{x_i,y_i\}_{i\ge 0}$. In this process, denote a series of all points with $xy>\mu$ as $\mathcal{P}=\{(x_i,y_i)|x_i y_i>\mu\}$. Then $|x-y|$ \red{decays to 0} in $\mathcal{P}$, for any $1<K<1.5$.
\label{thm:xy_diff_decay_repeat}
\end{theorem}

\begin{sketch}
The details of proof are presented in the Appendix~\ref{app:xy_diff_decay}. Start from a point $(x\t, y\t)$ where $x\t y\t>\mu$. Because $y\tt-x\tt=(y\t-x\t)(1+\eta(x\t y\t-\mu))$, it suffices to show 
\begin{align}
 \left|\frac{y\T{t+2}-x\T{t+2}}{y\t-x\t}\right|=|(1+\eta(x\t y\t-\mu))(1+\eta(x\tt y\tt-\mu))| < 1. \label{eq:ratiotp2}
\end{align}
Since $1+\eta(x\t y\t-\mu)>1$, the analysis of $1+\eta(x\tt y\tt-\mu)$ is divided into three cases considering the coupling of $(x\t,y\t), (x\tt, y\tt)$. 
\end{sketch}

\begin{remark}
Actually, for a larger $K\ge 1.5$, it is possible for GD to converge to an inbalanced orbit. For instance, Figure 15 in \citep{wang2021large} shows inbalanced orbits for $f(x)=\frac{1}{2}(xy-1)^2$ with $K=1.9$.
\end{remark}

Combining with the fact that the probability of GD converging to a stationary point that has sharpness beyond the edge of stability is zero~\citep{ahn2022understanding}, Theorem~\ref{thm:xy_diff_decay} reveals $x$ and $y$ would converge to a perfect balance. Note that this balancing effect is different from that of gradient flow~\citep{du2018algorithmic}, where the latter states that gradient flow preserves the difference of norms of different layers along training. As a result, in gradient flow, inbalanced initialization induces inbalanced convergence, while in our case inbalanced-initialized weights converge to a perfect balance. Furthermore, Theorem~\ref{thm:xy_diff_decay} shows an effect that the two parameters are squeezed to a single variable, which re-directs to our 1-D analysis in Theorem~\ref{thm:1dglobal}. Therefore, actually both cases converge to the same orbit when $1<K< 1.121$, as stated in Prop~\ref{prop:xy}. Numerical results are presented in Figure~\ref{fig:2d}.

\begin{prop}[Restatement of Prop~\ref{prop:xy}]
    Following the setting in Theorem~\ref{thm:xy_diff_decay}. Further assume $1<K<\sqrt{4.5}-1\approx 1.121$. Then GD converges to an orbit of period 2. The orbit is formally written as $\{(x=y=\delta_i)|i=1,2\}$, with $\delta_1\in(0,\sqrt{\mu}), \delta_2\in(\sqrt{\mu},2\sqrt{\mu})$  as the solutions of solving $\delta$ in
\begin{align*}
    \eta  = \frac{1}{\delta^2\left( \sqrt{\frac{\mu}{\delta^2}-\frac{3}{4}} +\frac{1}{2}\right)}.
\end{align*}
\label{prop:xy_2}
\end{prop}

\begin{remark}
    Actually this convergence is close to the flattest minima because: if the learning rate decays to infinitesimal after sufficient oscillations, then the trajectory walks towards the flattest minima.
\end{remark}

However, one thing to notice is that the inbalance at initialization needs to be bounded in Theorem~\ref{thm:xy_diff_decay} because both $x$ and $y$ are assumed to stay positive along the training. More precisely, we have
\begin{align}
    x\tt y\tt = x\t y\t (1-\eta(x\t y\t-\mu))^2 - \eta(x\t y\t-\mu)(x\t-y\t)^2,
\end{align}
and then $x\tt y\tt<0$ when $|x\t-y\t|$ is large with $x\t y\t >\mu$ fixed. Therefore, we provide a condition to guarantee both $x,y$ positive as follows, with details presented in the Appendix~\ref{app:xy_positive}.
\begin{lemma}
In the setting of Theorem~\ref{thm:xy_diff_decay}, denote the initialization as $m=\frac{|y_0-x_0|}{\sqrt{\mu}}$ and $x_0 y_0>\mu$. Then, during the whole process, both $x$ and $y$ will always stay positive, denoting $p=\frac{4}{\left( m+\sqrt{m^2+4} \right)^2}$ and $q=(1+p)^2$, if
\begin{align*}
    \max\left\{
    \eta(x_0 y_0-\mu),
    \frac{4}{27}\left(1+K\right)^3+\left(\frac{2}{3}K^2-\frac{1}{3}K+\frac{q K^2}{2(K+1)}m^2\right)q m^2-K
    \right\}
    <
    p.
\end{align*}
\label{thm:xy_positive}
\end{lemma}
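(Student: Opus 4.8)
The plan is to reduce everything to a two–parameter recursion in the product and the squared imbalance, and then run a single simultaneous induction. First I would rescale $x\mapsto x/\sqrt\mu$, $y\mapsto y/\sqrt\mu$, so that the update reads $a\tt=a\t-K(a\t b\t-1)b\t$, $b\tt=b\t-K(a\t b\t-1)a\t$ with $a\T0 b\T0>1$, and (we take $x\T0,y\T0>0$, as permitted by $x\T0 y\T0>\mu$ and the sign symmetry of the dynamics; then use the $x\leftrightarrow y$ symmetry) assume $0<a\T0\le b\T0$, writing $m=b\T0-a\T0\ge0$; the goal is $a\t,b\t>0$ for all $t$. I would then track $s\t:=a\t b\t$ and $r\t:=(b\t-a\t)^2$, which obey the $(s,r)$-autonomous recursions already recorded in the excerpt,
\begin{align}
    s\tt=s\t\bigl(1-K(s\t-1)\bigr)^2-K(s\t-1)\,r\t,\qquad r\tt=r\t\bigl(1+K(s\t-1)\bigr)^2 .
\end{align}
Since $a\tt+b\tt=(a\t+b\t)\bigl(1-K(s\t-1)\bigr)$ and $a\tt b\tt=s\tt$, positivity of both coordinates is preserved precisely when, at every step, (i) $s\t<1+1/K$ (so the positive sum stays positive), and (ii) if $s\t>1$ then $s\t\bigl(1-K(s\t-1)\bigr)^2>K(s\t-1)\,r\t$ (so the product stays positive); a step with $0\le s\t\le1$ is automatically harmless, since there $1-K(s\t-1)=1+K(1-s\t)\ge1$.

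The heart of the proof would be an induction on $t$ carrying the three facts $a\t,b\t>0$, $K(s\t-1)<p$, and $r\t\le q\,m^2$. The base case $t=0$ is exactly the hypotheses: $a\T0,b\T0>0$, $K(s\T0-1)=\eta(x\T0 y\T0-\mu)<p$, and $r\T0=m^2\le qm^2$. For the imbalance bound I would observe that $r$ can grow over a single step only when $s\t>1$, and then only by the factor $\bigl(1+K(s\t-1)\bigr)^2<(1+p)^2=q$ (using $K(s\t-1)<p$), while it contracts over any step with $s\t\le1$ (there $|1+K(s\t-1)|<1$) and over any two consecutive steps with $s>1$ — this last being exactly the estimate $|(1+\eta(x\t y\t-\mu))(1+\eta(x\tt y\tt-\mu))|<1$ at (\ref{eq:ratiotp2}), i.e.\ Theorem~\ref{thm:xy_diff_decay}. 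Starting from $r\T0=m^2$, a short bookkeeping over consecutive pairs of steps then gives $r\t\le qm^2$ throughout, with the bound essentially saturated only at $t=1$.

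For the overshoot bound: if $s\t>1$ then $s\tt<s\t$, hence $K(s\tt-1)<K(s\t-1)<p$, so an overshoot can only be regenerated from a step with $s\t\le1$, where $s\tt=s\t\bigl(1+K(1-s\t)\bigr)^2+K(1-s\t)\,r\t$. Here $\max_{s\in[0,1]}s\bigl(1+K(1-s)\bigr)^2=\tfrac{4(1+K)^3}{27K}$, attained at $s^\star=\tfrac{1+K}{3K}$ with $K(1-s^\star)=\tfrac{2K-1}{3}$; carrying out the \emph{joint} maximisation of $s(1+K(1-s))^2+K(1-s)r$ over $s\in[0,1]$ subject to $r\le qm^2$ (rather than bounding the two summands separately) should yield
\begin{align}
    K(s\tt-1)\ \le\ \tfrac{4}{27}(1+K)^3-K+\Bigl(\tfrac23K^2-\tfrac13K+\tfrac{qK^2}{2(K+1)}m^2\Bigr)q\,m^2 ,
\end{align}
which is $<p$ exactly by the second clause of the hypothesis. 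Finally, with all three inductive facts in hand I would verify (i)–(ii): (i) is immediate from $K(s\t-1)<p\le1$; for (ii) I would plug $r\t\le qm^2$ into the criterion, treating $t=0$ via the closed form $\min(a\T0,b\T0)^2=\tfrac12\bigl(m^2+2s\T0-m\sqrt{m^2+4s\T0}\bigr)$ against $K(s\T0-1)<p$, while for $t\ge1$ the overshoot $K(s\t-1)$ is bounded by the fixed number $\tfrac4{27}(1+K)^3-K+\O(m^2)$, which for $K<1.5$ sits well below $1$, so that $s\t(1-K(s\t-1))^2/(K(s\t-1))$ stays above a positive constant while $r\t\le qm^2$ is small in the admissible range. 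This gives $a\t,b\t>0$, i.e.\ $x\t,y\t>0$, for all $t$.

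The main obstacle is that the two quantitative invariants are genuinely entangled: the overshoot bound on $s\tt$ needs $r\t\le qm^2$, but maintaining $r\t\le qm^2$ goes through the two-step contraction of Theorem~\ref{thm:xy_diff_decay}, which presupposes that $s$ stays below $1+1/K$ — the very thing being proved — so both facts must be propagated in one induction. Beyond that, the remaining work is arithmetic: performing the joint optimisation that produces the $\tfrac23K^2-\tfrac13K$ and $\O(m^4)$ corrections, and checking that the slack available in $\tfrac4{27}(1+K)^3<K+p$ throughout $1<K<1.5$ simultaneously absorbs those corrections and leaves condition (ii) with a strictly positive margin over the whole stated range of $m$.
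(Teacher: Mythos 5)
Your plan tracks the paper's proof closely: both reduce the question to two invariants propagated together — an overshoot bound $K(s_t-1)<p$ at every super-level step and an imbalance bound $r_t\le qm^2$ — with the imbalance controlled by the one-step growth factor $(1+p)^2=q$ together with the two-step contraction of Theorem~\ref{thm:xy_diff_decay}, and with the overshoot controlled by maximizing $g(z)=z(1+\eta(\mu-z))^2+\eta(\mu-z)\cdot qm^2\mu$ over $z<\mu$. Your sum/product reformulation of the positivity condition is equivalent to the paper's coordinate-wise criterion $\eta(x_ty_t-\mu)<\min\{x_t/y_t,\,y_t/x_t\}$ (given $a_t+b_t>0$, positivity of both coordinates is exactly positivity of the sum and the product), so the structure is the same. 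You also correctly flag something the paper leaves implicit: the two-step contraction presupposes positivity, so the overshoot bound and imbalance bound must be carried in one simultaneous induction rather than invoked independently.

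One small inaccuracy in the derivation of the constant: you say the explicit formula should come from "carrying out the joint maximisation" of $s(1+K(1-s))^2+K(1-s)r$. In fact the exact maximizer does not give that closed form; the paper instead sandwiches the argmax $\bar z$ between $z_s$ and $z_b=\tfrac13(\mu+1/\eta)$ and bounds the two summands separately — the increasing first term at $z_b$, the decreasing second term at $z_s$ — which is a slightly loose split evaluation, and it is precisely this split that produces the $\tfrac4{27}(1+K)^3$ and $\bigl(\tfrac23K^2-\tfrac13K+\tfrac{qK^2}{2(K+1)}m^2\bigr)qm^2$ terms. If you try to evaluate the exact max in closed form you will not recover the statement; you should settle for the paper's two-point bound. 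This does not change your overall plan, only the final arithmetic step.
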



\subsection{On Matrix Factorization}\label{app:add_res_mf}

In this section, we present two additional results of matrix factorization.

\subsubsection{Asymmetric Case: 1D function at the minima}

Before looking into the theorem, we would like to clarify the definition of the loss Hessian. Inherently, we squeeze $\mathbf{X},\mathbf{Y}$ into a vector $\theta=\text{vec}(\mathbf{X},\mathbf{Y})\in\R^{mp+pq}$, which vectorizes the concatnation. As a result, we are able to represent the loss Hessian w.r.t. $\theta$ as a matrix in $\R^{(mp+pq)\times(mp+pq)}$. Meanwhile, the support of the loss landscape is in $\R^{mp+pq}$. Similarly, we use $(\Delta\mathbf{X},\Delta\mathbf{Y})$ in the same shape of $(\mathbf{X},\mathbf{Y})$ to denote . In the following theorem, we are to show the leading eigenvector $\Delta\triangleq \text{vec}(\Delta\mathbf{X}, \Delta\mathbf{Y})\in\R^{mp+pq}$ of the loss Hessian. Since the cross section of the loss landscape and $\Delta$ forms a 1D function $f_{\Delta}$, we would also show the stable-oscillation condition on 1D function holds at the minima of $f_{\Delta}$.

\begin{theorem}
    For a matrix factorization problem, assume $\mathbf{X}\mathbf{Y}=\mathbf{C}$.
    Consider SVD of both matrices as $\mathbf{X} = \sum_{i=1}^{\min\{m,p\}} \sigma_{x,i} u_{x,i} v_{x,i}^\top$ and $\mathbf{Y} = \sum_{i=1}^{\min\{p,q\}} \sigma_{y,i} u_{y,i} v_{y,i}^\top$, where both groups of $\sigma_{\cdot,i}$'s are in descending order and both top singular values $\sigma_{x,1}$ and $\sigma_{y,1}$ are unique. Also assume $v_{x,1}^\top u_{y,1}\neq 0$. Then the leading eigenvector of the loss Hessian is $\Delta=\text{vec}(C_1 u_{x,1}u_{y,1}^\top, C_2 v_{x,1}v_{y,1}^\top)$ with $C_1=\frac{\sigma_{y,1}}{\sqrt{\sigma_{x,1}^2+\sigma_{y,1}^2}}, C_2=\frac{\sigma_{x,1}}{\sqrt{\sigma_{x,1}^2+\sigma_{y,1}^2}}$. Denote $f_\Delta$ as the 1D function at the cross section of the loss landscape and the line following the direction of $\Delta$ passing $\text{vec}(\Delta\mathbf{X}, \Delta\mathbf{Y})$. Then, at the minima of $f_\Delta$, it satisfies
    \begin{align}
        3[f_\Delta^{(3)}]^2 - f_\Delta^{(2)}f_\Delta^{(4)}>0.
    \end{align}
    \label{thm:mf_1d_cond}
\end{theorem}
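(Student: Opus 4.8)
The plan is to reduce the multi-dimensional matrix factorization problem, restricted to the line through $\mathrm{vec}(\Delta\mathbf{X},\Delta\mathbf{Y})$ in the direction $\Delta$, to a concrete univariate polynomial, and then apply the criterion from Theorem~\ref{thm:1dlocal}. First I would establish that $\Delta = \mathrm{vec}(C_1 u_{x,1}u_{y,1}^\top, C_2 v_{x,1} v_{y,1}^\top)$ is indeed the leading eigenvector of the loss Hessian at $(\mathbf{X},\mathbf{Y})$ with $\mathbf{X}\mathbf{Y}=\mathbf{C}$. At an interpolating point the Hessian acts on $(\Delta\mathbf{X},\Delta\mathbf{Y})$ by $(\Delta\mathbf{X}\mathbf{Y}^\top + \mathbf{X}\Delta\mathbf{Y} + \text{[terms vanishing since }\mathbf{X}\mathbf{Y}=\mathbf{C}\text{]})$; more precisely $\nabla^2 L[(\Delta\mathbf{X},\Delta\mathbf{Y}),(\Delta\mathbf{X},\Delta\mathbf{Y})] = \|\Delta\mathbf{X}\,\mathbf{Y} + \mathbf{X}\,\Delta\mathbf{Y}\|_F^2$. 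Maximising this Rayleigh quotient over unit-norm $(\Delta\mathbf{X},\Delta\mathbf{Y})$ is a bilinear-form optimisation whose optimum is governed by the largest singular values $\sigma_{x,1},\sigma_{y,1}$; plugging the rank-one ansatz $\Delta\mathbf{X} = a\, u_{x,1}u_{y,1}^\top$, $\Delta\mathbf{Y} = b\, v_{x,1}v_{y,1}^\top$ gives $\|a\sigma_{y,1} u_{x,1}v_{y,1}^\top + b\sigma_{x,1} u_{x,1}v_{y,1}^\top\|_F^2 = (a\sigma_{y,1}+b\sigma_{x,1})^2$ (using $v_{x,1}^\top u_{y,1}\neq 0$ only to check the alignment is nondegenerate — actually one wants the outer structure to collapse correctly, so I'd verify $\mathbf{X}(v_{x,1}v_{y,1}^\top) = \sigma_{x,1} u_{x,1} v_{y,1}^\top$ and $(u_{x,1}u_{y,1}^\top)\mathbf{Y} = \sigma_{y,1} u_{x,1} v_{y,1}^\top$), subject to $a^2+b^2=1$, maximised at $(a,b)\propto(\sigma_{y,1},\sigma_{x,1})$ with eigenvalue $\sigma_{x,1}^2+\sigma_{y,1}^2$. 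A short argument (e.g. via the structure of the bilinear map or a perturbation comparison) shows no other direction beats this.

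Next I would compute $f_\Delta(s) := L\big(\mathbf{X} + s C_1 u_{x,1}u_{y,1}^\top,\ \mathbf{Y} + s C_2 v_{x,1}v_{y,1}^\top\big)$ as an explicit quartic in $s$. Writing $P := u_{x,1}u_{y,1}^\top$, $Q := v_{x,1}v_{y,1}^\top$, we have $(\mathbf{X}+sC_1 P)(\mathbf{Y}+sC_2 Q) - \mathbf{C} = s(C_1 P\mathbf{Y} + C_2 \mathbf{X}Q) + s^2 C_1 C_2 PQ$. Using the identities above, $P\mathbf{Y} = \sigma_{y,1} u_{x,1}v_{y,1}^\top$ and $\mathbf{X}Q = \sigma_{x,1} u_{x,1}v_{y,1}^\top$, so the linear part is $s(C_1\sigma_{y,1} + C_2\sigma_{x,1}) u_{x,1}v_{y,1}^\top = s\sqrt{\sigma_{x,1}^2+\sigma_{y,1}^2}\, u_{x,1}v_{y,1}^\top$, a rank-one matrix of Frobenius norm $s\sqrt{\sigma_{x,1}^2+\sigma_{y,1}^2}$. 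The quadratic part is $s^2 C_1 C_2 (u_{x,1}u_{y,1}^\top)(v_{x,1}v_{y,1}^\top) = s^2 C_1 C_2 (u_{y,1}^\top v_{x,1}) u_{x,1}v_{y,1}^\top$, which is \emph{also} proportional to $u_{x,1}v_{y,1}^\top$. Hence the whole residual stays in the one-dimensional span of $u_{x,1}v_{y,1}^\top$, and $f_\Delta(s) = \tfrac12\big(a s + b s^2\big)^2$ for explicit constants $a = \sqrt{\sigma_{x,1}^2+\sigma_{y,1}^2}$ and $b = C_1 C_2 (u_{y,1}^\top v_{x,1}) = \tfrac{\sigma_{x,1}\sigma_{y,1}}{\sigma_{x,1}^2+\sigma_{y,1}^2}(u_{y,1}^\top v_{x,1})$ — this is exactly the shape $\tfrac12(g(s))^2$ with $g$ quadratic in $s$ and $g(0)=0$, i.e. a scalar-factorization-type loss, so Lemma~\ref{lem:l2loss} is directly applicable in spirit.

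Then I would verify the inequality $3[f_\Delta^{(3)}]^2 - f_\Delta^{(2)} f_\Delta^{(4)} > 0$ at the minimum $s=\bar s$ of $f_\Delta$. Since $f_\Delta(s) = \tfrac12(as+bs^2)^2$ is a degree-4 polynomial with a double root at $s=0$, its minima are at $s=0$ and $s = -a/b$; by the sign condition $\epsilon\cdot f^{(3)}(\bar x)>0$ in Theorem~\ref{thm:1dlocal} the relevant minimum is the nonzero one, say $\bar s = -a/b$ (the two are related by the reflection symmetry of the quartic). At $\bar s$, $g(\bar s)=0$, so $f_\Delta'(\bar s)=0$ automatically, and I can compute $f_\Delta''(\bar s) = (g'(\bar s))^2 = (a+2b\bar s)^2 = a^2$, $f_\Delta^{(3)}(\bar s) = 3 g'(\bar s) g''(\bar s) = 3(a+2b\bar s)(2b) = -6ab$ (using $a+2b\bar s = -a$), and $f_\Delta^{(4)}(\bar s) = 3(g''(\bar s))^2 + 4g'(\bar s)g^{(3)}(\bar s) = 3(2b)^2 = 12 b^2$. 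Thus $3[f_\Delta^{(3)}]^2 - f_\Delta^{(2)}f_\Delta^{(4)} = 3\cdot 36 a^2 b^2 - a^2\cdot 12 b^2 = 96 a^2 b^2 > 0$, since $a\neq 0$ and $b\neq 0$ (the latter requires $u_{y,1}^\top v_{x,1}\neq 0$, which is exactly hypothesis $v_{x,1}^\top u_{y,1}\neq 0$). This also confirms $f_\Delta^{(3)}(\bar s)\neq 0$ and $f_\Delta^{(3)}/f_\Delta''=\O(1)$, so all hypotheses of Theorem~\ref{thm:1dlocal} hold.

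The main obstacle I anticipate is the first step — rigorously identifying the leading eigenvector of the loss Hessian among all of $\R^{mp+pq}$, not just among rank-one perturbations. The clean collapse in steps 2 and 3 is essentially bookkeeping once one knows the right $\Delta$, but proving optimality requires arguing that the bilinear map $(\Delta\mathbf{X},\Delta\mathbf{Y})\mapsto \Delta\mathbf{X}\,\mathbf{Y}+\mathbf{X}\,\Delta\mathbf{Y}$ has operator norm $\sqrt{\sigma_{x,1}^2+\sigma_{y,1}^2}$ with the claimed maximiser, which calls for a careful singular-value / variational argument (e.g. decomposing $\Delta\mathbf{X},\Delta\mathbf{Y}$ in the respective SVD bases and bounding cross terms), and handling the uniqueness assumptions on $\sigma_{x,1},\sigma_{y,1}$.
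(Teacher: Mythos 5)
Your proposal is correct and follows essentially the same route as the paper's proof. Both identify the leading Hessian eigenvector by maximizing the Rayleigh quotient $\|\Delta\mathbf{X}\mathbf{Y}+\mathbf{X}\Delta\mathbf{Y}\|_F^2$, observe that along that direction the residual $(\mathbf{X}+s\Delta\mathbf{X})(\mathbf{Y}+s\Delta\mathbf{Y})-\mathbf{C}$ stays in the one-dimensional span of $u_{x,1}v_{y,1}^\top$, and hence reduce $f_\Delta$ to the scalar quartic $\tfrac12(as+bs^2)^2$. You make this collapse explicit and differentiate the scalar quartic; the paper instead reads off the Taylor coefficients as Frobenius norms and inner products of $\Delta\mathbf{X}\mathbf{Y}+\mathbf{X}\Delta\mathbf{Y}$ and $\Delta\mathbf{X}\Delta\mathbf{Y}$, then uses the parallelism to turn Cauchy--Schwarz into an equality, giving the same $96\|\cdot\|^2\|\cdot\|^2>0$ (equivalently your $96a^2b^2>0$). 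The only cosmetic difference is that you evaluate at the nonzero minimum $\bar s=-a/b$ while the paper evaluates at $s=0$; by the reflection symmetry of the quartic the two give identical values, so nothing hinges on this. You also correctly flag, as does the paper implicitly, that the weakest link is a fully rigorous optimality argument for the rank-one maximizer of the bilinear map $(\Delta\mathbf{X},\Delta\mathbf{Y})\mapsto \Delta\mathbf{X}\mathbf{Y}+\mathbf{X}\Delta\mathbf{Y}$; the paper handles this with a brief appeal to uniqueness of the top singular values.
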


The proof is provided in Appendix~\ref{sec:app_mf_1d}. This theorem aims to generalize our 1-D analysis into higher dimension, and it turns out the 1-D condition is sastisfied around any minima for two-layer matrix factorization.
In Theorem~\ref{thm:1dlocal} and Lemma~\ref{lem:1dhigher}, if such 1-D condition holds, there must exist a period-2 orbit around the minima for GD beyond EoS. However, this is not straightforward to generalize to high dimensions, because 1) directions of leading eigenvectors and (nearby) gradient are not necessarily aligned, and 2) it is more natural and practical to consider initialization \textit{in any direction} around the minima instead of strictly along leading eigenvectors. Therefore, below we present a convergence analysis with initialization near the minima, but in any direction instead.

\section{Additional Experiments}
\label{app:add_exps}



In Appendix~\ref{app:proven_exp}, we provide numerical experiments to verify our theorems. Then, we provide additional experiments on MLP and MNIST.

\subsection{Proven Settings}\label{app:proven_exp}

\paragraph{1-D functions.} As discussed in the Section~\ref{sec:1d}, we have $f(x)=\frac{1}{4}(x^2-1)^2$ satisfying the condition in Theorem~\ref{thm:1dlocal} and $g(x)=2\sin(x)$ satisfying Lemma~\ref{lem:1dhigher}, so we estimate that both $f$ and $g$ allow stable oscillation around the local minima. It turns out GD stably oscillates around the local minima on both functions, when $\eta>\frac{2}{f''(\Bar{x})}$ slightly, as shown in Figure~\ref{fig:1d}.

\begin{figure}[ht]
\begin{minipage}{\textwidth}
    \centering
    \includegraphics[width=0.45\textwidth]{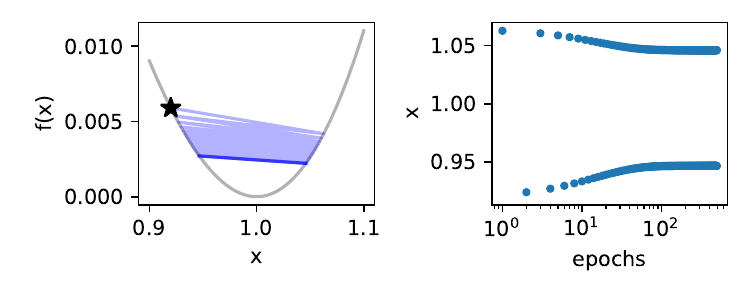}
    \includegraphics[width=0.45\textwidth]{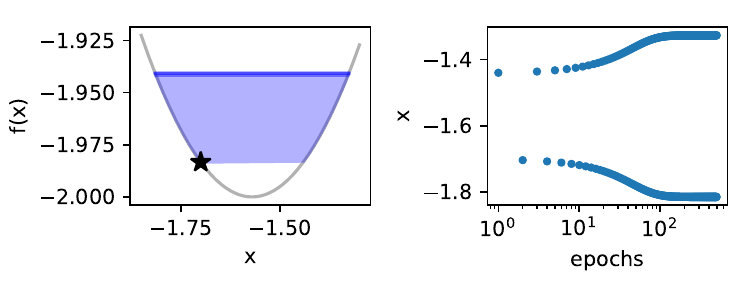}
\end{minipage}
    \caption{Running GD around the local minima of $f(x)=\frac{1}{4}(x^2-1)^2$ (left two) and $f(x)=2\sin(x)$ (right two) with learning rate $\eta=1.01>\frac{2}{f''(\Bar{x})}=1.$ Stars denote the start points. It turns out both functions allow stable oscillation around the local minima.}
    \label{fig:1d}
\end{figure}

\paragraph{Two-layer single-neuron model.} As discussed in the Section~\ref{sec:one_neuron}, with a learning rate $\eta\in(d,1.1d]$, a single-neuron network $f(x)=v\cdot\sigma(w^\top x)$ is able to align with the direction of the teacher neuron under population loss. We train such a model in empirical loss on 1000 data points uniformly sampled from a sphere $\mathcal{S}^1$, as shown in Figure~\ref{fig:one_neuron}. The student neuron is initialized orthogonal to the teacher neuron. In the end of training, $w_y$ decays to a small value before the inbalance $|v-w_x|$ decays sharply, which verifies our argument in Section~\ref{sec:one_neuron}. With a small $w_y$, this nonlinear problem degenerates to a 2-D problem on $v,w_x$. Then, the balanced property makes it align with the 1-D problem where $v$ and $w_x$ converge to a period-2 orbit. Note that the small residuals of $|v-w_x|$ and $w_y$ are due to the difference between population loss and empirical loss.

\begin{figure}[ht]
    \centering
    \includegraphics[width=0.9\linewidth]{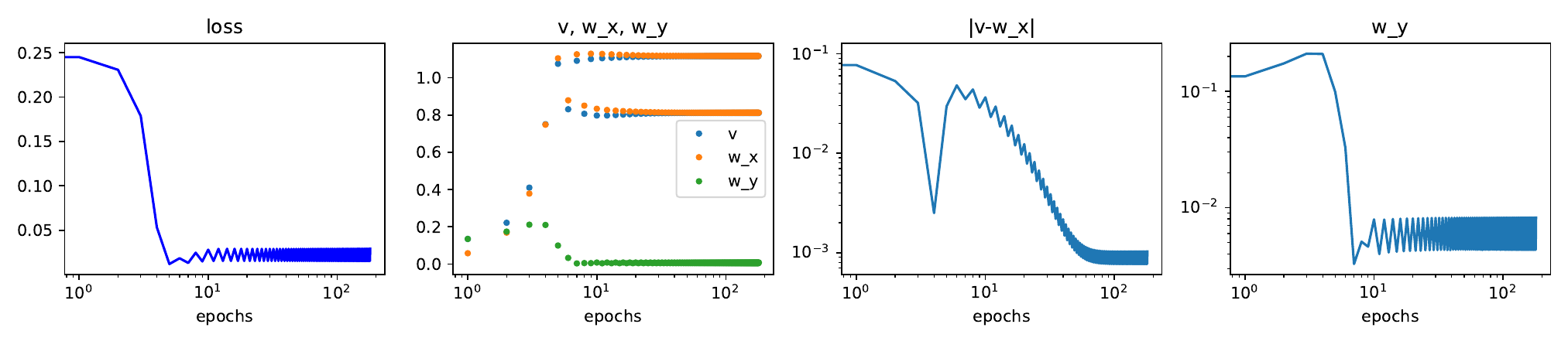}
    \caption{Running GD in the teacher-student setting with learning rate $\eta=2.2=1.1d$, trained on 1000 points uniformly sampled from sphere $\mathcal{S}^1$ of $\norm{x}=1$. The teacher neuron is $\tilde{w}=[1,0]$ and the student neuron is initialized as $w\T{0}=[0,0.1]$ with $v\T{0}=0.1$. }
    \label{fig:one_neuron}
\end{figure}

\paragraph{Quasi-symmetric matrix factorization.} As discussed in the Section~\ref{sec:new_mf}, with mild assumptions, the quasi-symmetric case stably wanders around the flattest minima. We train GD on a matrix factorization problem with $\mathbf{X}_0\mathbf{X}_0^\top=\mathbf{C}\in\R^{8\times 8}$. The learning rate is $1.02\times$ EoS threshold. Following the setting in Section~\ref{sec:new_mf}, for symmetric case, the training starts near $(\bX_0,\bX_0)$ and, for quasi-symmetric case, it starts near $(\alpha\bX_0,\nicefrac{1}{\alpha}\bX_0)$ with $\alpha=0.8$, as shown in Figure~\ref{fig:mf}. Although starting with a re-scaling, the quasi-symmetric case achieves the same top singular values in $\bY$ and $\bZ$, which verifies the balancing effect of 2-D functions in Theorem~\ref{thm:xy_diff_decay}. Then, the top singular values of both cases converge to the same period-2 orbit, which verifies Observation~\ref{obs:quasi_mf}.

\begin{figure}[ht]
\begin{minipage}{\textwidth}
    \centering
    \includegraphics[width=0.3\linewidth]{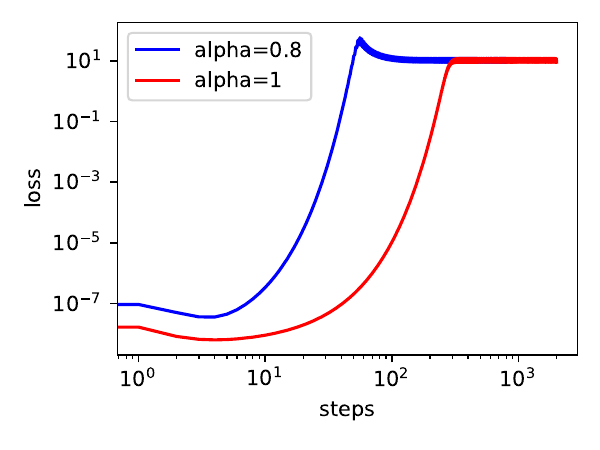}
    \includegraphics[width=0.3\linewidth]{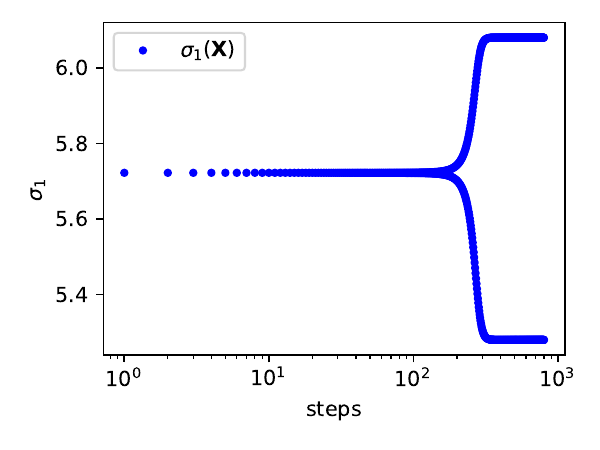}
    \includegraphics[width=0.3\linewidth]{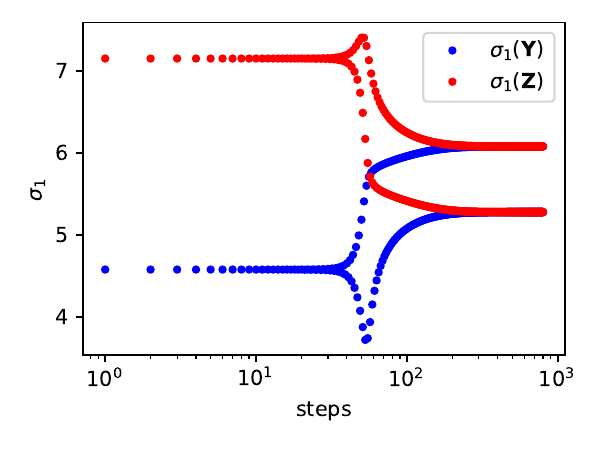}
\end{minipage}
    \caption{Symmetric and Quasi-symmetric Matrix factorization: running GD around flat ($\alpha=1$) and sharp ($\alpha=0.8$) minima. In both cases, their leading singular values converge to the same period-2 orbit (about 6.1 and 5.3). (Left: Training loss. Middle: Largest singular value of symmetric case. Right: Largest singular values of quasi-symmetric case.)}
    \label{fig:mf}
\end{figure}

\subsection{2-D function}

As discussed in the Appendix~\ref{sec:2d}, on the function $f(x,y)=\frac{1}{2}(xy-1)^2$, we estimate that $|x-y|$ decays to 0 when $\eta\in(1,1.5)$, as shown in Figure~\ref{fig:2d}. Since it achieves a perfect balance, the two parameters follows convergence of the corresponding 1-D function $f(x)=\frac{1}{4}(x^2-1)^2$. As shown in Figure~\ref{fig:2d},  $xy$ with $\eta=1.05$ converges to a period-2 orbit, as stated in the 1-D discussion of Theorem~\ref{thm:1dglobal} while $xy$ with $\eta=1.25$ converges to a period-4 orbit, which is out of our range in the theorem. But still it falls into the range for balance in Theorem~\ref{thm:xy_diff_decay}.

\begin{figure}[ht]
\begin{minipage}{\textwidth}
    \centering
    \includegraphics[width=0.75\linewidth]{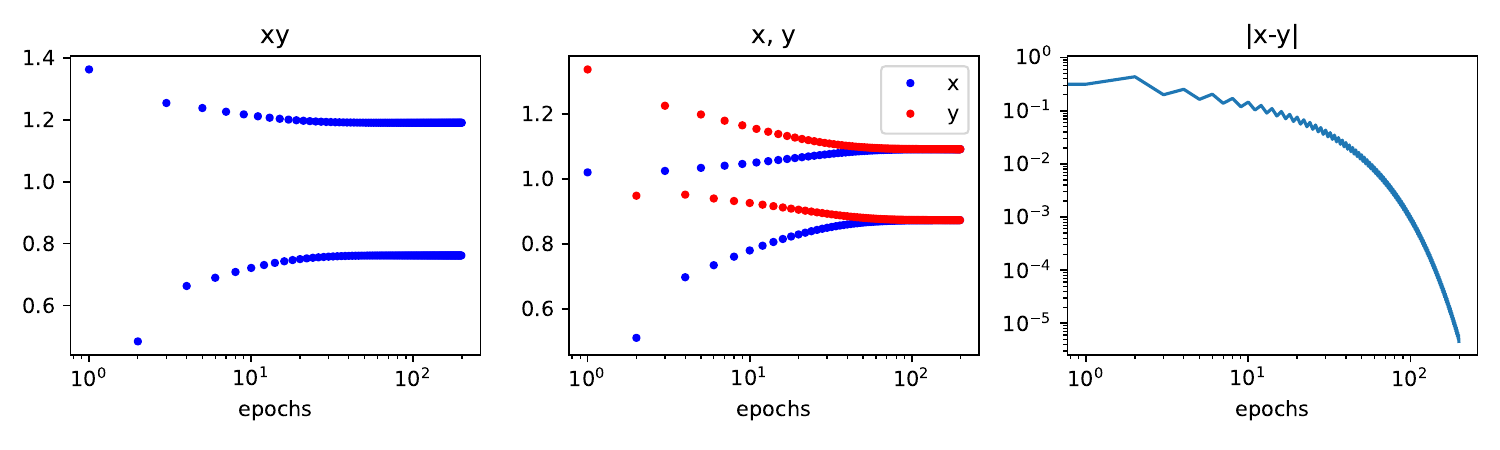}\\
    (a) $\eta = 1.05$
    \\
    \includegraphics[width=0.75\linewidth]{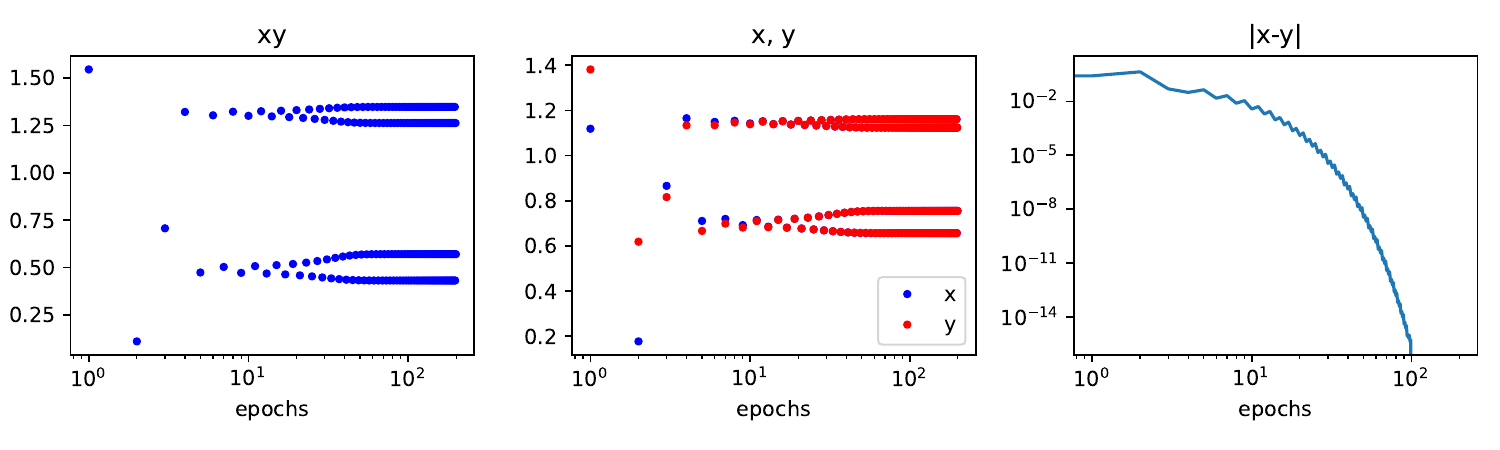}\\
    (b) $\eta = 1.25$
\end{minipage}
    \caption{Running GD on $f(x,y)=\frac{1}{2}(xy-1)^2$ with learning rate $\eta=1.05$ (top) and $\eta=1.25$ (bottom). When $\eta=1.05$, it converges to a period-2 orbit. When $\eta=1.25$, it converges to a period-4 orbit. In both cases, $|x-y|$ decays sharply.}
    \label{fig:2d}
\end{figure}

\subsection{High dimension and MNIST} \label{app:high_dim_exp}

We perform two experiments in relatively higher dimension settings. We are to show two observations that coincides with our discussions in the low dimension:
\vspace{10pt}

\centerline{\textbf{Observation 1}: GD beyond EoS drives to flatter minima.}

\vspace{5pt}
\centerline{\textbf{Observation 2}: GD beyond EoS is in a similar style with the low dimension.}

\subsubsection{2-layer high-dim homogeneous ReLU NNs with planted teacher neurons} \label{app:16-neuron}

We conduct a synthetic experiment in the high-dimension teacher-student framework. The teacher network is in the form of 
\begin{align}
    y|x \coloneqq f_{\text{teacher}}(x;\Tilde{\theta}) = \sum_{i=1}^{16} \textsf{ReLU}( \mathbf{e}_i^\top x),
\end{align}
where $x\in\R^{16}$ and $\mathbf{e_i}$ is the $i$-th vector in the standard basis of $\R^{16}$. The student and the loss are in forms of
\begin{gather}
     f(x;\theta) = \sum_{i=1}^{16} v_i\cdot \textsf{ReLU}( w_i^\top x),\\
     L(\theta;\Tilde{\theta})=\frac{1}{m}\sum_{i}^{16}
     \left( f(x;\theta)- y|x_i \right)^2.
\end{gather}
Apparently, the global minimum manifold contains the following set $\mathcal{M}$ as (w.l.o.g., ignoring any permutation)
\begin{align}
    \mathcal{M}=\{(v_i,w_i)_{i=1}^{16} ~|~ \forall i\in[16],  w_i=k_i\cdot \mathbf{e}_i, v_i=\frac{1}{k_i}, k_i>0\}.
\end{align}
However, different choices of $\{k_i\}_{i=1}^{16}$ induce different extents of sharpness around each minima. Our aim is to show that \textbf{GD with a large learning rate beyond the edge of stability drives to the flattest minima from sharper minima}.

\paragraph{Initialization.} We initialize all student neurons directionally aligned with the teachers as $w_i\parallel \mathbf{e}_i$ but choose various $k_i$, as $k_i= 1+0.0625(i-1)$. Obviously, such a choice of $\{k_i\}_{i=1}^{16}$ is not at the flattest minima, due to the isotropy of teacher neurons. Also we add small noise to $w_i$ to make the training start closely (but not exactly) from a sharp minima, as
\begin{align}
    w_i = k_i\cdot (\mathbf{e}_i+0.01\epsilon), ~~~\epsilon\sim \mathcal{N}(0,I).
\end{align}

\paragraph{Data.} We uniformly sample 10000 data points from the unit sphere $\mathcal{S}^{15}$.

\paragraph{Training.} We run gradient descent with two learning rates $\eta_1 = 0.5, \eta_2=2.6$. Later we will show with experiments that the EoS threshold of learning rate is around 2.5, so $\eta_2$ is beyond the edge of stability. GD with these two learning rates starts from the same initialization for 100 epochs. Then we extend another 20 epochs with learning rate decay to 0.5 from 2.6 for the learning-rate case.

\paragraph{Results.} All results are provided in Figure~\ref{fig:add_exp_high_ts}. Both Figure~\ref{fig:add_exp_high_ts} (a, b) present the gap between these two trajectories, where GD with a small learning rate stays around the sharp minima, while that with a larger one drives to flatter minima. Then GD stably oscillates around the flatter minima.

Meanwhile, from Figure~\ref{fig:add_exp_high_ts} (b), when we decrease the learning rate from 2.6 to 0.5 after 100 epochs, GD converges to a nearby minima which is significantly flatter, compared with that of lr=0.5.

Figure~\ref{fig:add_exp_high_ts} (c) provides a more detailed view of $\frac{\norm{w_i}}{v_i}$ for all 16 neurons. All neurons with lr=0.5 stay at the original ratio $k_i^2$. But those with lr=2.6 all converge to the same ratio around $k^2=\frac{\norm{w}}{v}=1.21$, as shown in Figure~\ref{fig:add_exp_high_ts} (d). We compute the relationship between the sharpness of global minima in $\mathcal{M}$ and different choices of $k$, as shown in Figure~\ref{fig:add_exp_high_ts} (e, f).
Actually, $k^2=1.21$ is the best choice of $\{k_i\}_{i=1}^{16}$ such that the minima is the flattest.

Therefore, we have shown that, \textbf{in such a setting of high-dimension teacher-student network, GD beyond the edge of stability drives to the flattest minima}.

\newpage 

\begin{figure}[t]
\begin{minipage}{\textwidth}
    \centering
    \includegraphics[width=0.45\linewidth]{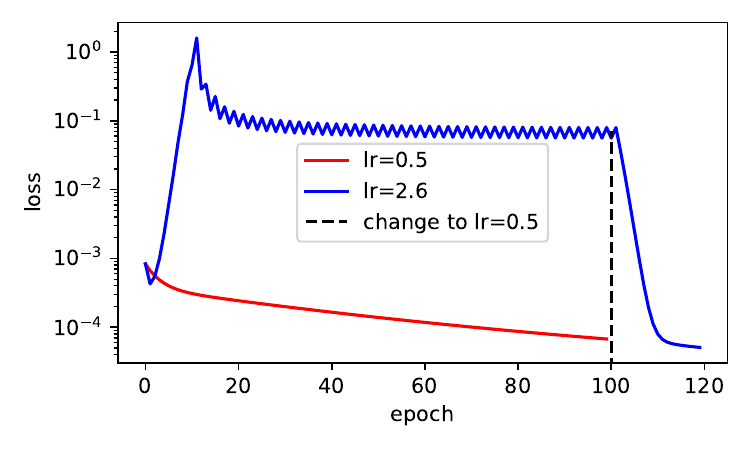}
    \includegraphics[width=0.45\linewidth]{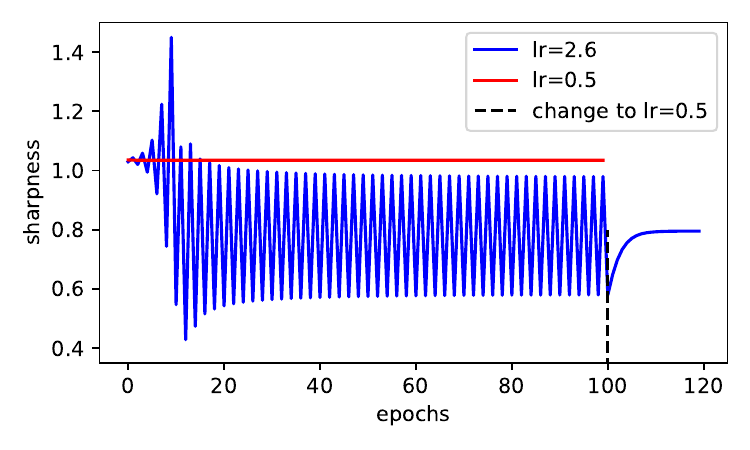}\\
    \hfill \hspace{-45pt} 
    (a) Training loss
    \hfill 
    (b) Sharpness
    \hspace{70pt} \hfill \\
    \vspace{10pt}\includegraphics[width=0.45\linewidth]{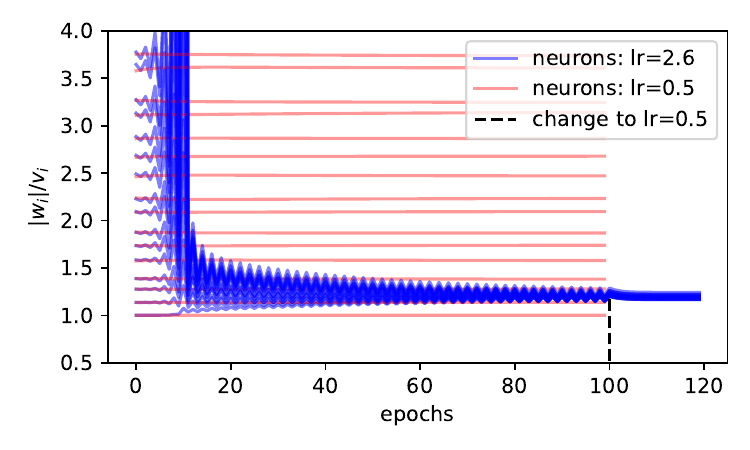}
    \includegraphics[width=0.45\linewidth]{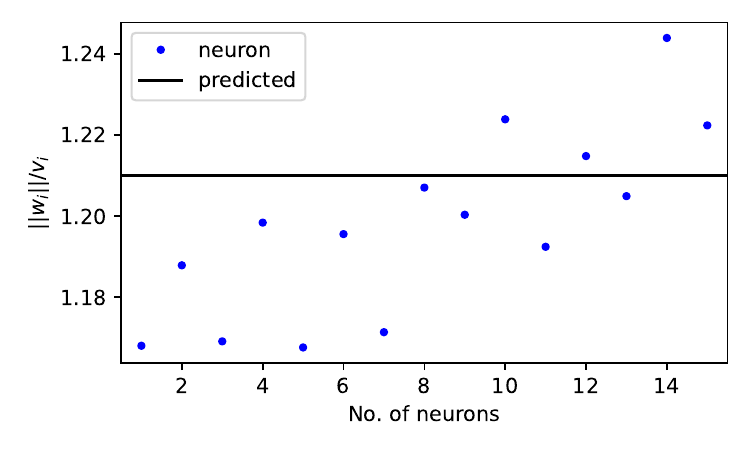}\\
    \hfill \hspace{10pt} 
    (c) ratio of $\| w_i\|$ and $v_i$ during training
    \hfill 
    (d) final ratio of $\| w_i\|$ and $v_i$ when lr=2.6
    \hspace{10pt} \hfill \\
    \vspace{10pt}\includegraphics[width=0.45\linewidth]{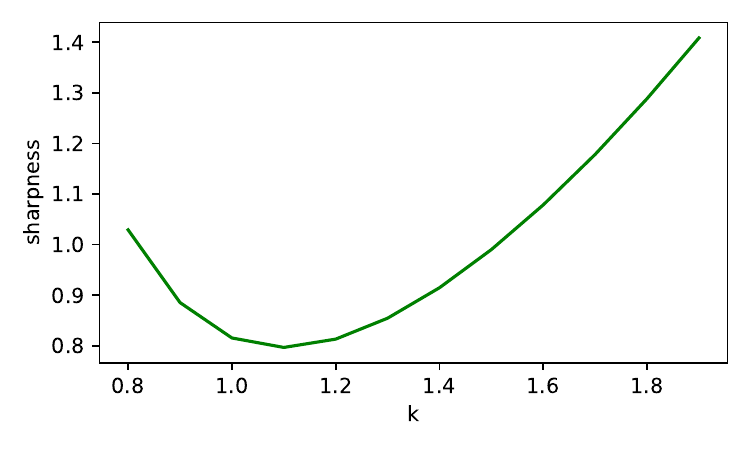}
    \includegraphics[width=0.45\linewidth]{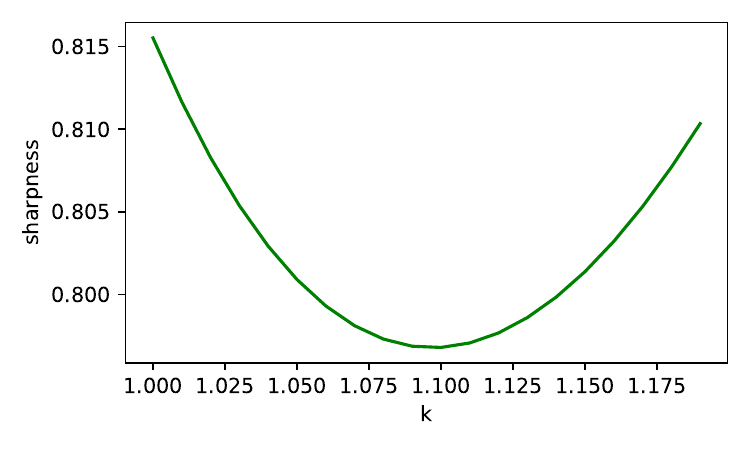}\\
    \hfill \hspace{30pt} 
    (e) sharpness for different ratio $\sqrt{\frac{\norm{w_i}}{v_i}}$
    \hfill 
    (f) sharpness for different ratio $\sqrt{\frac{\norm{w_i}}{v_i}}$ (zoom-in) 
    \hspace{30pt} \hfill \\
\end{minipage}
\caption{Result of 2-layer 16-neuron teacher-student experiment.}
\label{fig:add_exp_high_ts}
\end{figure}

\subsubsection{3, 4, 5-layer non-homogeneous MLPs on MNIST} \label{app:mnist}

We conduct an experiment on real data to show that \textbf{our finding in the low-dimension setting in Theorem~\ref{thm:1dlocal} is possible to generalize to high-dimensional setting}. More precisely, our goals are to show, when GD is beyond EoS,
\begin{enumerate}
    \item the oscillation direction (gradient) aligns with the top eigenvector of Hessian.
    \item the 1D function at the cross-section of oscillation direction and high-dim loss landscape satisfies the conditions in Theorem~\ref{thm:1dlocal}.
\end{enumerate}

\paragraph{Network, dataset and training.}
We run 3, 4, 5-layer ReLU MLPs on MNIST~\cite{lecun1998gradient}. The networks have 16 neurons in each layer. To make it easier to compute high-order derivatives, we simplify the dataset by 1) only using 2000 images from class 0 and 1, and 2) only using significant input channels where the standard deviation over the dataset is at least 110, which makes the network input dimension as 79. We train the networks using MSE loss subjected to GD with large learning rates $\eta=0.5, 0.4, 0.35$ and a small rate $\eta=0.1$ (for 3-layer). Note that the larger ones are beyond EoS.

\begin{defi}[line search minima]
Consider a function $f$, learning rate $\eta$ and a point $x\in \text{domain}(f)$. We call $\tilde{x}$ as the line search minima of $x$ if
\begin{align}
    \tilde{x}&=x-c^*\cdot \eta \nabla f(x),\\
    c^* &= \text{argmin}_{c\in[0,1]} ~f\left(x-c\cdot \eta\nabla f(x)\right).
\end{align}
\end{defi}

The line search minima $\tilde{x}$ can interpreted as the lowest point on the 1D function induced by the gradient at $x$. If GD is beyond EoS, $\tilde{x}$ stays in the valley below the oscillation of $x$.

\paragraph{Results.} All results are presented in Figure~\ref{fig:add_exp_mnist}, \ref{fig:add_exp_mnist_four_layer} and \ref{fig:add_exp_mnist_five_layer}. 

Take the 3-layer as an example. From Figure~\ref{fig:add_exp_mnist} (a, b), GD is beyond EoS during epochs 10-14 and 21-60. For these epochs, cosine similarity between the top Hessian eigenvector $v_1$ and the gradient is pretty close to 1, as shown in Figure~\ref{fig:add_exp_mnist} (c), which verifies our goal 1.

In Figure~\ref{fig:add_exp_mnist} (d), we compute $3[f^{(3)}]^2 - f^{(2)}f^{(4)}$ at line search minima along training, which is required to be positive in Theorem~\ref{thm:1dlocal} to allow stable oscillation. Then it turns out most points have $3[f^{(3)}]^2 - f^{(2)}f^{(4)}>0$ except a few points, all of which are not in the EoS regime, and these few exceptional points might be due to approximation error to compute the fourth-order derivative since their negativity is quite small. This verifies our goal 2.

Both the above arguments are the same in the cases of 4 and 5 layers as shown in Figure~\ref{fig:add_exp_mnist_four_layer} and \ref{fig:add_exp_mnist_five_layer}.

\begin{figure}[ht]
\begin{minipage}{\textwidth}
    \centering
    \includegraphics[width=0.45\linewidth]{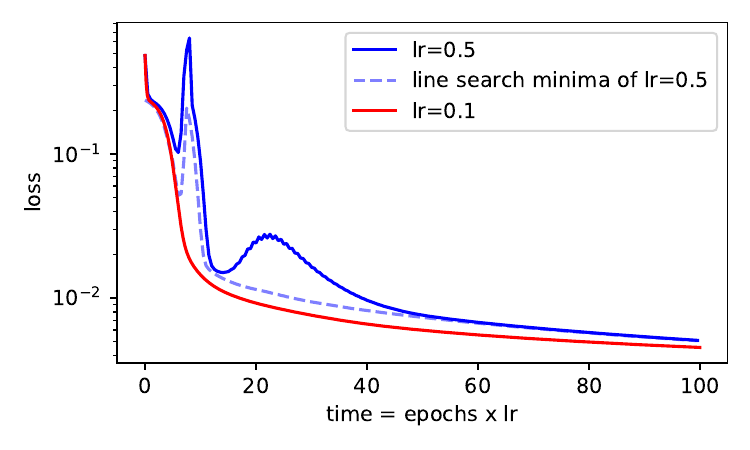}
    \includegraphics[width=0.45\linewidth]{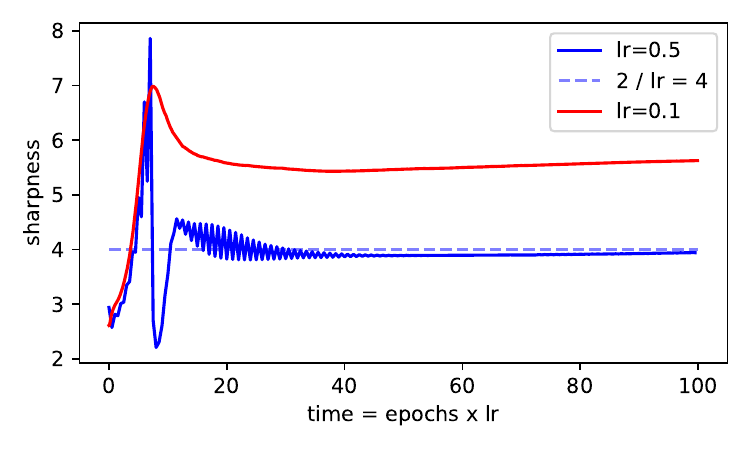}\\
    \hfill \hspace{-45pt} 
    (a) Training loss
    \hfill 
    (b) Sharpness
    \hspace{70pt} \hfill \\
    \vspace{10pt}\includegraphics[width=0.45\linewidth]{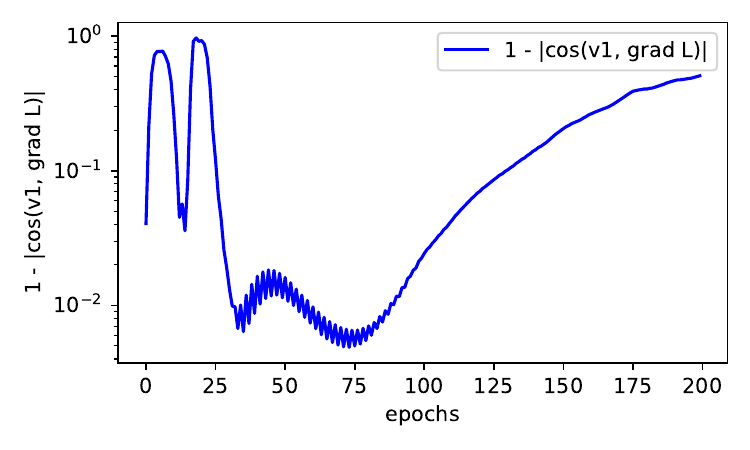}
    \includegraphics[width=0.45\linewidth]{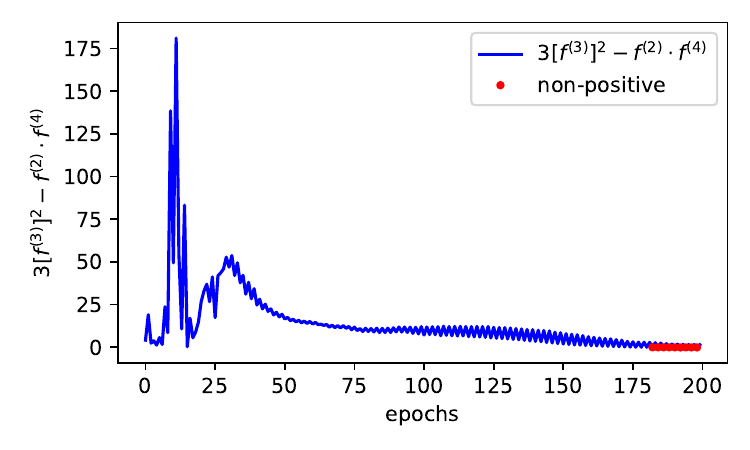}\\
    \hfill \hspace{-10pt} 
    (c) similarity of gradient and top eig-vector $v_1$
    \hfill 
    (d) $3[f^{(3)}]^2 - f^{(2)}f^{(4)}$ at line search minima
    \hspace{10pt} \hfill
\end{minipage}
\caption{Result of \textbf{3-layer} ReLU MLPs on MNIST. Both (c) and (d) are for learning rate as 0.5.}
\label{fig:add_exp_mnist}
\end{figure}

\begin{figure}[ht]
\begin{minipage}{\textwidth}
    \centering
    \includegraphics[width=0.45\linewidth]{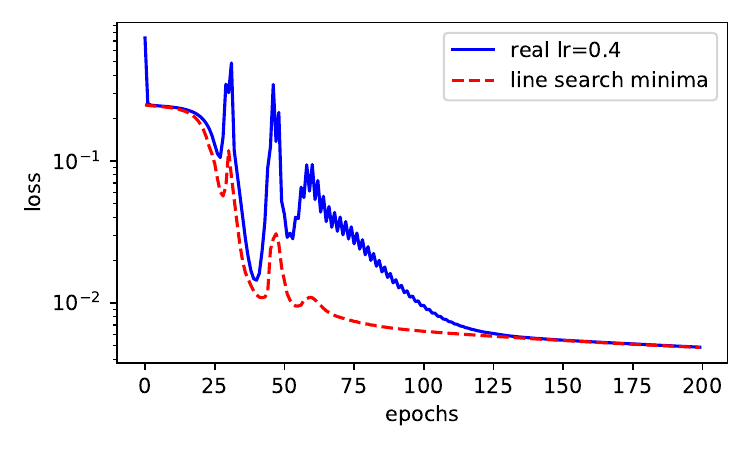}
    \includegraphics[width=0.45\linewidth]{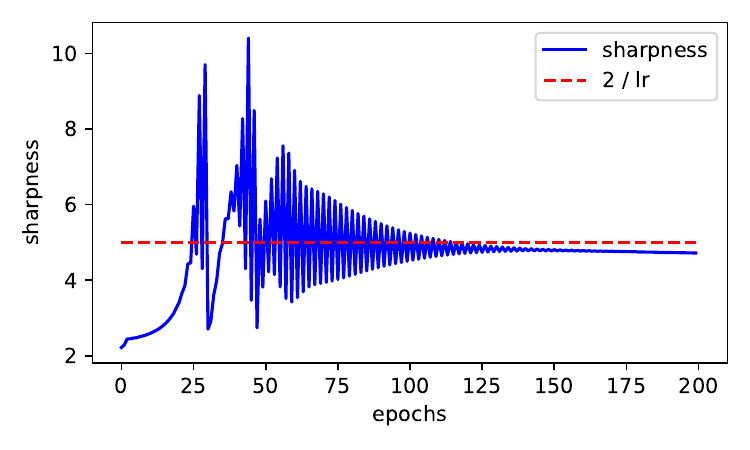}\\
    \hfill \hspace{-45pt} 
    (a) Training loss
    \hfill 
    (b) Sharpness
    \hspace{70pt} \hfill \\
    \vspace{10pt}\includegraphics[width=0.45\linewidth]{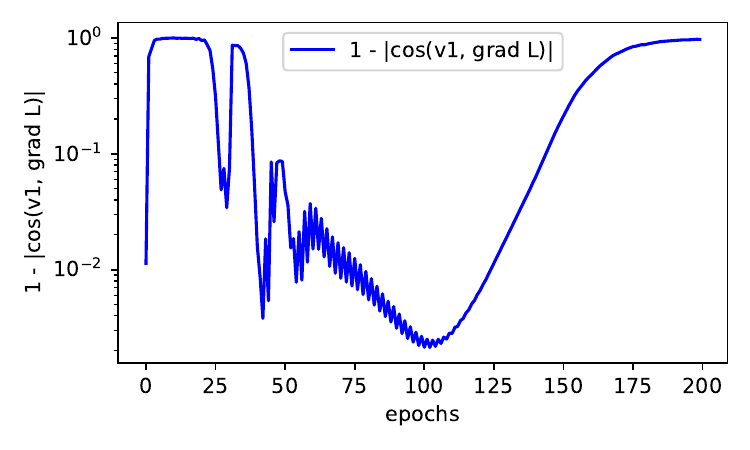}
    \includegraphics[width=0.45\linewidth]{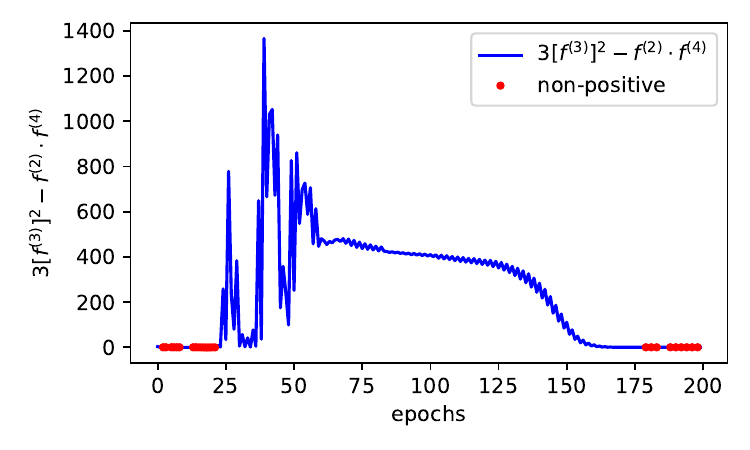}\\
    \hfill \hspace{-10pt} 
    (c) similarity of gradient and top eig-vector $v_1$
    \hfill 
    (d) $3[f^{(3)}]^2 - f^{(2)}f^{(4)}$ at line search minima
    \hspace{10pt} \hfill
\end{minipage}
\caption{Result of \textbf{4-layer} ReLU MLPs on MNIST.}
\label{fig:add_exp_mnist_four_layer}
\end{figure}

\begin{figure}[ht]
\begin{minipage}{\textwidth}
    \centering
    \includegraphics[width=0.45\linewidth]{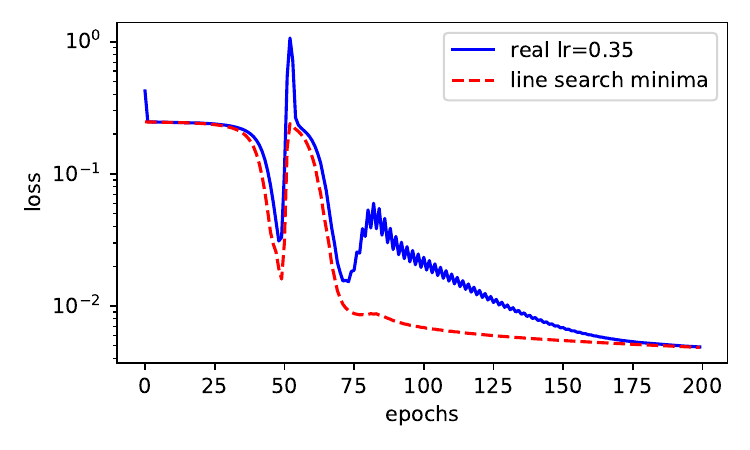}
    \includegraphics[width=0.45\linewidth]{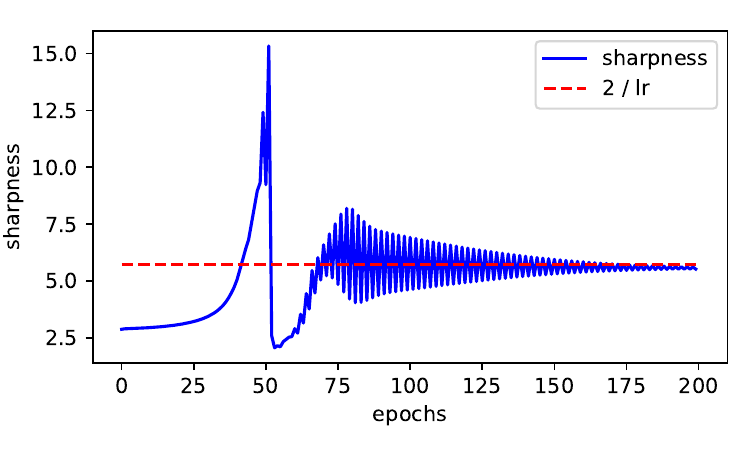}\\
    \hfill \hspace{-45pt} 
    (a) Training loss
    \hfill 
    (b) Sharpness
    \hspace{70pt} \hfill \\
    \vspace{10pt}\includegraphics[width=0.45\linewidth]{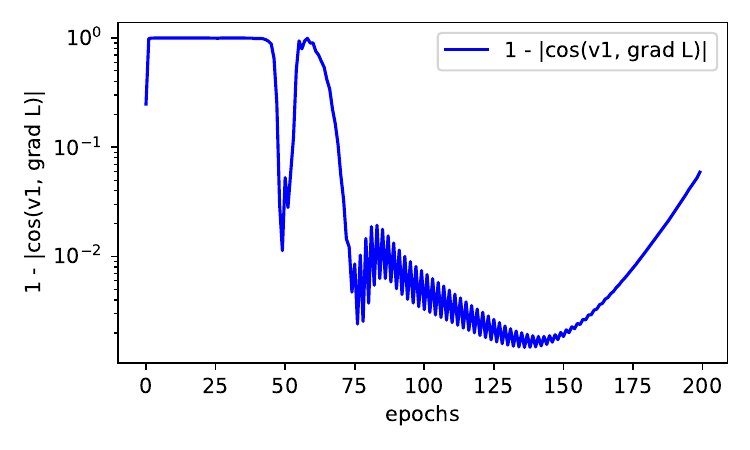}
    \includegraphics[width=0.45\linewidth]{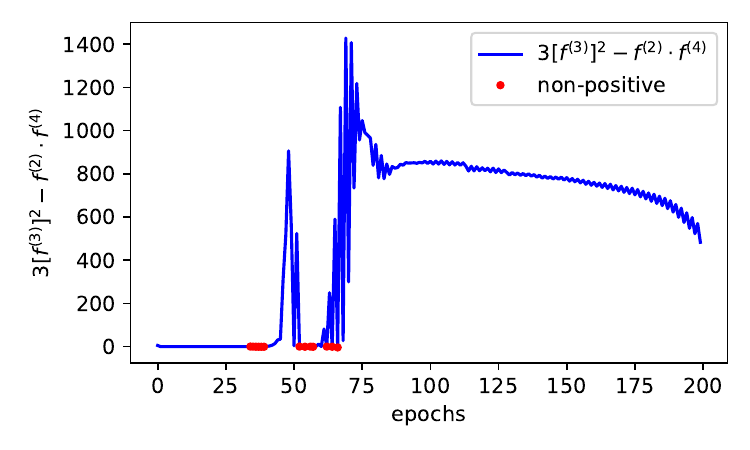}\\
    \hfill \hspace{-10pt} 
    (c) similarity of gradient and top eig-vector $v_1$
    \hfill 
    (d) $3[f^{(3)}]^2 - f^{(2)}f^{(4)}$ at line search minima
    \hspace{10pt} \hfill
\end{minipage}
\caption{Result of \textbf{5-layer} ReLU MLPs on MNIST.}
\label{fig:add_exp_mnist_five_layer}
\end{figure}
\clearpage

\section{Proof of Theorem~\ref{thm:1dlocal}}
\label{app:1dlocal}


\begin{theorem}[Restatement of Theorem~\ref{thm:1dlocal}]
Consider any 1-D differentiable function $f(x)$ around a local minima $\Bar{x}$, satisfying (i) $f^{(3)}(\Bar{x})\neq 0$, and (ii) $3[f\T{3}]^2-f'' f\T{4}>0$ at $\Bar{x}$. Then, there exists $\epsilon$ with sufficiently small $|\epsilon|$ and $\epsilon\cdot f^{(3)}>0$ such that: for any point $x_0$ between $\Bar{x}$ and $\bar{x} - \epsilon$, there exists a learning rate $\eta$ such that the update rule $F_\eta$ of GD satisfies $F_\eta(F_\eta(x_0))=x_0$, and
\[
\frac{2}{f''(\Bar{x})}<\eta<\frac{2}{f''(\Bar{x})-\epsilon\cdot f^{(3)}(\Bar{x})}.
\]
\end{theorem}

\begin{proof}
For simplicity, we assume $f\T{3}(\Bar{x})>0$. 
Imagine a starting point $x_0=\Bar{x}-\epsilon, \epsilon>0$. We omit $f'(\Bar{x}),f''(\Bar{x}),f\T{3}(\Bar{x}), f\T{4}(\Bar{x})$ as $f',f'',f\T{3},f\T{4}$. After running two steps of gradient descent, we have
\begin{align*}
    x_0 &= \Bar{x} - \epsilon,\\
    f'(x_0) &= f' - f''\epsilon + \frac{1}{2}f\T{3}\epsilon^2 - \frac{1}{6}f\T{4}\epsilon^3 +\O(\epsilon^4) \\
    &= - f''\epsilon + \frac{1}{2}f\T{3}\epsilon^2 - \frac{1}{6}f\T{4}\epsilon^3 +\O(\epsilon^4), \\
    x_1 &= x_0 - \eta f'(x_0) = \Bar{x} - \epsilon - \eta\big(-f''\epsilon + \frac{1}{2}f\T{3}\epsilon^2- \frac{1}{6}f\T{4}\epsilon^3\big)+\O(\epsilon^4),\\
    f'(x_1) &= f''\cdot(x_1-\Bar{x})+\frac{1}{2}f\T{3}\cdot(x_1-\Bar{x})^2+\frac{1}{6}f\T{4}\cdot(x_1-\Bar{x})^3+\O(\epsilon^4),\\
    x_2 &= x_1 - \eta f'(x_1),\\
    \frac{x_2-x_0}{\eta} &= - \left(-f''\epsilon + \frac{1}{2}f\T{3}\epsilon^2- \frac{1}{6}f\T{4}\epsilon^3\right)-f''\cdot\left(- \epsilon - \eta\big(-f''\epsilon + \frac{1}{2}f\T{3}\epsilon^2- \frac{1}{6}f\T{4}\epsilon^3\big)\right) \\
    & -\frac{1}{2} f\T{3}\left(- \epsilon - \eta\big(-f''\epsilon + \frac{1}{2}f\T{3}\epsilon^2- \frac{1}{6}f\T{4}\epsilon^3\big)\right)^2
    -\frac{1}{6} f\T{4}\cdot\left(-\epsilon-\eta(-f''\epsilon)\right)^3 +\O(\epsilon^4)\\
    &=\left(2 f'' -\eta f'' f''\right)\epsilon + \left(-\frac{1}{2}f\T{3}+\frac{1}{2}\eta f''f\T{3}-\frac{1}{2}f\T{3}(-1+\eta f'')^2\right)\epsilon^2 \\
    &+\left(\frac{1}{6}f\T{4}-\frac{1}{6}\eta f''f\T{4}+\frac{1}{2}(-1+\eta f'')\eta f\T{3}f\T{3}-\frac{1}{6}(-1+\eta f'')^3 f\T{4}\right)\epsilon^3+\O(\epsilon^4).
\end{align*}
When $\eta=\frac{2}{f''}$, it holds
\begin{align}
    \frac{x_2-x_0}{\eta}= \left(\frac{1}{2}\eta f\T{3}f\T{3}-\frac{1}{3}f\T{4}\right)\epsilon^3 + \O(\epsilon^4), \label{eq:temp3}
\end{align}
which would be positive if $\frac{1}{2}\eta f\T{3}f\T{3}-\frac{1}{3}f\T{4}=\frac{1}{3f''}(3[f\T{3}]^2-f'' f\T{4})>0$ and $|\epsilon|$ is sufficiently small.

When $\eta=\frac{2}{f''-\epsilon\cdot f\T{3}}$ then $\eta f'' = 2+2\frac{f\T{3}}{f''}\epsilon+\O(\epsilon^2)$, it holds
\begin{align}
    \frac{x_2-x_0}{\eta} = -2f\T{3}\epsilon^2+\left(-\frac{1}{2}f\T{3}+f\T{3}-\frac{1}{2}f\T{3}\right)\epsilon^2 + \O(\epsilon^3)
    =-2f\T{3}\epsilon^2 + \O(\epsilon^3),
\end{align}
which is negative  when $|\epsilon|$ is sufficiently small.

Therefore, there exists a learning rate $\eta\in(\frac{2}{f''}, \frac{2}{f''-\epsilon\cdot f\T{3}})$ such that $x_2=x_0$ due to the continuity of $(x_2-x_0)$ with respect to $\eta$.

The above proof can be generalized to the case of $x_0=\Bar{x}-\epsilon'$ with $\epsilon'\in(0,\epsilon]$ and the learning rate is still bounded as $\eta\in(\frac{2}{f''}, \frac{2}{f''-\epsilon\cdot f\T{3}})$.
\end{proof}

\section{Proof of Lemma~\ref{lem:1dhigher}}
\label{app:1dhigher}

\begin{lemma}[Restatement of Lemma~\ref{lem:1dhigher}]
Consider any 1-D differentiable function $f(x)$ around a local minima $\Bar{x}$, satisfying that the lowest order non-zero derivative (except the $f''$) at $\Bar{x}$ is $f\T{k}(\Bar{x})$ with $k\ge 4$. Then, there exists $\epsilon$ with sufficiently small $|\epsilon|$ such that: for any point $x_0$ between $\Bar{x}$ and $\Bar{x}-\epsilon$, and
\begin{enumerate}
    \item if $k$ is odd and $\epsilon\cdot f\T{k}(\Bar{x})>0, f\T{k+1}(\Bar{x})<0$, 
    then there exists $\eta\in(\frac{2}{f''}, \frac{2}{f''-f\T{k}\epsilon^{k-2}})$,
    \item if $k$ is even and $f\T{k}(\Bar{x})<0$, then there exists $\eta\in(\frac{2}{f''}, \frac{2}{f''+f\T{k}\epsilon^{k-2}})$,
\end{enumerate}
such that: the update rule $F_\eta$ of GD satisfies $F_\eta(F_\eta(x_0))=x_0$.
\end{lemma}

\begin{proof}
(1) 
If $k$ is odd, assuming $f\T{k}>0$ for simplicity, we have
\begin{align*}
    x_0 &= \Bar{x}-\epsilon, \\
    f'(x_0) &= -f''\epsilon + \frac{1}{(k-1)!}f\T{k}\epsilon^{k-1}-\frac{1}{k!}f\T{k+1}\epsilon^k + \O(\epsilon^{k+1}),\\
    x_1 &= x_0-\eta f'(x_0) = \Bar{x}-\epsilon +\eta f''\epsilon - \frac{1}{(k-1)!}\eta f\T{k}\epsilon^{k-1}+\frac{1}{k!}\eta f\T{k+1}\epsilon^k  + \O(\epsilon^{k+1}),\\
    f'(x_1) &= f''\cdot(x_1-\Bar{x})+\frac{1}{(k-1)!}f\T{k}\cdot(x_1-\Bar{x})^{k-1}+\frac{1}{k!}f\T{k+1}\cdot(x_1-\Bar{x})^{k}+\O(\epsilon^{k+1}), \\
    \frac{x_2-x_0}{\eta} &= \frac{x_1-\eta f'(x_1)-x_0}{\eta} =-f'(x_0)-f'(x_1) \\
    &= \left(2f''-\eta f''f''\right)\epsilon \\
    &~ +\left(-\frac{1}{(k-1)!}f\T{k}+\frac{1}{(k-1)!}\eta f''f\T{k}-\frac{1}{(k-1)!}f\T{k}\cdot(-1+\eta f'')^{k-1}\right)\epsilon^{k-1}\\
    &~ +\left(\frac{1}{k!}f^{k+1}-\frac{1}{k!}\eta f'' f\T{k+1}-\frac{1}{k!}f\T{k+1}\cdot(-1+\eta f'')^k\right)\epsilon^k + \O(\epsilon^{k+1})
\end{align*}
When $\eta=\frac{2}{f''}$, it holds
\begin{align}
    \frac{x_2-x_0}{\eta}=-\frac{2}{k!}f\T{k+1}\epsilon^k+ \O(\epsilon^{k+1}).\label{eq:temp1}
\end{align}

When $\eta=\frac{2}{f''-f\T{k}\epsilon^{k-2}}$ then $\eta f''=2+2\frac{f\T{k}}{f''}\epsilon^{k-2}+\O(\epsilon^{2k-4})$, then it holds
\begin{align}
    \frac{x_2-x_0}{\eta}=-2f\T{k}\epsilon^{k-1}+\O(\epsilon^k).\label{eq:temp2}
\end{align}

Since $k$ is odd and $\epsilon\cdot f\T{k}(\Bar{x})>0, f\T{k+1}(\Bar{x})<0$, the above two estimations of $\nicefrac{x_2-x_0}{\eta}$ have one positive and one negative exactly. Therefore, due to the continuity of $x_2-x_0$ wrt $\eta$, there exists a learning rate $\eta\in(\frac{2}{f''}, \frac{2}{f''-f\T{k}\epsilon^{k-2}})$ such that $x_2=x_0$.

The above proof can be generalized to any $x_0$ between $\Bar{x}$ and $\Bar{x}-\epsilon$ with the same bound for $\eta$.

(2)
If $k$ is even, we have
\begin{align*}
    x_0 &= \Bar{x}-\epsilon, \\
    f'(x_0) &= -f''\epsilon - \frac{1}{(k-1)!}f\T{k}\epsilon^{k-1} + \O(\epsilon^{k}),\\
    x_1 &= x_0-\eta f'(x_0) = \Bar{x}-\epsilon +\eta f''\epsilon + \frac{1}{(k-1)!}\eta f\T{k}\epsilon^{k-1}  + \O(\epsilon^{k}),\\
    f'(x_1) &= f''\cdot(x_1-\Bar{x})+\frac{1}{(k-1)!}f\T{k}\cdot(x_1-\Bar{x})^{k-1}+\O(\epsilon^{k}), \\
    \frac{x_2-x_0}{\eta} &= \frac{x_1-\eta f'(x_1)-x_0}{\eta} =-f'(x_0)-f'(x_1) \\
    &= \left(2f''-\eta f'' f''\right)\epsilon \\
    &~ + \left(\frac{1}{(k-1)!}f\T{k}-\frac{1}{(k-1)!}\eta f''f\T{k}-\frac{1}{(k-1)!}(-1+\eta f'')^{k-1}\right)\epsilon^{k-1} + \O(\epsilon^k).
\end{align*}
When $\eta=\frac{2}{f''}$, it holds
\begin{align*}
    \frac{x_2-x_0}{\eta}=-\frac{2}{(k-1)!}f\T{k}\epsilon^{k-1}+\O(\epsilon^k).
\end{align*}

When $\eta=\frac{2}{f''+c\cdot f\T{k}\epsilon^{k-2}}$ with $c>0$ as some constant implying $\eta f''=2(1-c\frac{f\T{k}}{f''}\epsilon^{k-2})+\O(\epsilon^{2k-4})$, then it holds
\begin{align*}
    \frac{x_2-x_0}{\eta}=2\left(c-\frac{1}{(k-1)!}\right)f\T{k}\epsilon^{k-1} + \O(\epsilon^k),
\end{align*}
where we then set $c=1$.

Hence, the above two estimations of $\nicefrac{x_2-x_0}{\eta}$ have one positive and one negative with sufficiently small $|\epsilon|$.
Therefore, due to the continuity of $x_2-x_0$, there exists a learning rate $\eta\in(\frac{2}{f''}, \frac{2}{f''+f\T{k}\epsilon^{k-2}})$ such that $x_2=x_0$. 

The above proof can be generalized to any $x_0$ between $\Bar{x}$ and $\Bar{x}-\epsilon$ with the same bound for $\eta$.
\end{proof}

\begin{coro}
$f(x)=\sin(x)$ allows stable oscillation around its local minima $\Bar{x}$.
\label{cor:sin}
\end{coro}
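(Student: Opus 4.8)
The plan is to apply Lemma~\ref{lem:1dhigher} directly to $f(x)=\sin(x)$ at one of its local minima, so the task reduces to computing a few derivatives and checking the hypotheses. Take $\bar{x}=-\pi/2$ (any $\bar{x} = -\pi/2 + 2\pi n$ works by periodicity). Then $f(\bar{x}) = -1$ is a local minimum, $f'(\bar{x}) = \cos(-\pi/2) = 0$, $f''(\bar{x}) = -\sin(-\pi/2) = 1 > 0$, confirming it is a genuine local minimum with positive curvature. Next, $f^{(3)}(\bar{x}) = -\cos(-\pi/2) = 0$, so the cubic term vanishes and Theorem~\ref{thm:1dlocal} does not apply; this is precisely why we need Lemma~\ref{lem:1dhigher}. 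Continuing, $f^{(4)}(\bar{x}) = \sin(-\pi/2) = -1 \neq 0$. Hence the lowest-order nonzero derivative beyond $f''$ is $f^{(4)}(\bar{x}) = -1$, so $k=4$.

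With $k=4$ identified, I would verify the hypotheses of Lemma~\ref{lem:1dhigher}: (i) the lowest nonzero derivative (excluding $f''$) is $f^{(4)}(\bar{x}) = -1$; (ii) the ratio $f^{(4)}(\bar{x})/f''(\bar{x}) = -1 = \O(1)$; (iii) all higher derivatives of $\sin$ are bounded by $1$ everywhere; and (iv) $k = 4 \ge 4$. Since $k=4$ is even, we are in case (2) of the lemma, which requires $f^{(k)}(\bar{x}) = f^{(4)}(\bar{x}) < 0$ — indeed $-1 < 0$. Therefore the lemma applies, and there exists a learning rate $\eta \in \left(\frac{2}{f''(\bar{x})}, \frac{2}{f''(\bar{x}) + f^{(4)}(\bar{x})\epsilon^{2}}\right) = \left(2, \frac{2}{1 - \epsilon^2}\right)$ for which GD starting at $x_0 = \bar{x} - \epsilon$ stably oscillates in a small neighborhood of $\bar{x}$, for $\epsilon>0$ sufficiently small.

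There is essentially no obstacle here — the corollary is a direct instantiation of Lemma~\ref{lem:1dhigher}, and the only content is the derivative bookkeeping for $\sin$. The one point worth a sentence of care is that the numerical example in the paper uses $g(x) = 2\sin(x)$ rather than $\sin(x)$; rescaling by a constant $c>0$ multiplies $f''$ and $f^{(4)}$ both by $c$, so the sign conditions and the $\O(1)$ ratio are unaffected (the admissible $\eta$-window simply rescales by $1/c$), confirming consistency with Figure~\ref{fig:1d}. I would conclude by noting that the same computation at $\bar{x} = -\pi/2 + 2\pi n$ gives stable oscillation around every local minimum of the sine function.

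\begin{proof}
We apply Lemma~\ref{lem:1dhigher} at the local minimum $\bar{x} = -\pi/2$. We compute $f'(\bar{x}) = \cos(-\pi/2) = 0$, $f''(\bar{x}) = -\sin(-\pi/2) = 1 > 0$, $f^{(3)}(\bar{x}) = -\cos(-\pi/2) = 0$, and $f^{(4)}(\bar{x}) = \sin(-\pi/2) = -1 \neq 0$. Thus the lowest-order nonzero derivative of $f$ beyond $f''$ is $f^{(k)}(\bar{x})$ with $k=4$. The hypotheses of Lemma~\ref{lem:1dhigher} hold: $\frac{f^{(4)}(\bar{x})}{f''(\bar{x})} = -1 = \O(1)$, all derivatives of $\sin$ are bounded by $1$, and $k = 4 \ge 4$. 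Since $k$ is even and $f^{(4)}(\bar{x}) = -1 < 0$, case (2) applies, so there exists $\eta \in \left(\frac{2}{f''(\bar{x})}, \frac{2}{f''(\bar{x}) + f^{(4)}(\bar{x})\epsilon^{2}}\right) = \left(2, \frac{2}{1 - \epsilon^2}\right)$ with which GD started at $x_0 = \bar{x}-\epsilon$, $\epsilon>0$ small, stably oscillates in a small neighborhood of $\bar{x}$. By periodicity the same holds at every local minimum $-\pi/2 + 2\pi n$ of $\sin$.
\end{proof}
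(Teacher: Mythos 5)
Your proof is correct and takes essentially the same route as the paper: identify $k=4$ as the lowest-order nonzero derivative beyond $f''$ at $\bar{x}=-\pi/2$, observe that $k$ is even and $f^{(4)}(\bar{x})=-1<0$, and invoke case (2) of Lemma~\ref{lem:1dhigher}. The explicit derivative bookkeeping and the remark about rescaling $\sin$ by a constant are welcome but add nothing beyond the paper's one-line argument.
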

\begin{proof}
Its lowest order nonzero derivative (expect $f''$) is $f\T{4}{\Bar{x}}=\sin(\Bar{x})=-1<0$ and the order $4$ is even. Then Lemma~\ref{lem:1dhigher} gives the result.
\end{proof}

\section{Proof of Prop~\ref{prop:l2loss}}
\label{app:l2loss}

\begin{prop}[Restatement of Prop~\ref{prop:l2loss}]
Consider a 1-D function $g(x)$ , and define the loss function $f$ as $f(x) = (g(x)-y)^2$. Assuming (i) $g'$ is not zero when $g(\Bar{x})=y$, (ii) $g'(\Bar{x})g\T{3}(\Bar{x}) < 6[g''(\Bar{x})]^2$, then it satisfies the condition in Theorem~\ref{thm:1dlocal} or Lemma~\ref{lem:1dhigher} to allow period-2 stable oscillation around $\Bar{x}$. 
\end{prop}

\begin{proof}
From the definition, we have
\begin{align}
    f''(x) &= 2[g(x)-y]g''(x) + 2[g'(x)]^2, \\
    f\T{3}(x) &= 2[g(x)-y]g\T{3}(x)+6 g''(x) g'(x),\\
    f\T{4}(x) &= 2[g(x)-y]g\T{4}(x)+6 g''(x)g''(x)+8g'(x)g\T{3}(x). \label{eq:temp6}
\end{align}
Then at the global minima where $g(x)=y$, we have $f''(x)=2[g'(x)]^2$ and $f\T{3}(x)=6g''(x)g'(x)$. If we assume $y$ is not a trivial value for $g(x)$, which means $g'(x)\neq 0$ at the minima, and $g$ is not linear around the minima (implies $g''\neq 0$), then $f$ satisfies $f\T{3}(\Bar{x})\neq 0$ in Theorem~\ref{thm:1dlocal}. Meanwhile, we need $3f\T{3}f\T{3}-f''f\T{4}>0$ as in Theorem~\ref{thm:1dlocal}, hence it requires
\begin{gather}
    \frac{1}{2g'(x)g'(x)}36g''(x)g''(x)g'(x)g'(x)-\frac{1}{3}\left(6g''(x)g''(x)+8g'(x)g\T{3}(x)\right)>0\\
    6g''(x)g''(x)>g'(x)g\T{3}(x). \label{eq:temp7}
\end{gather}

The remaining case is, if $g'(x)\neq 0$ and $g''=0$ at the minima, it satisfies the condition for Lemma~\ref{lem:1dhigher} with $k=4$, because $f\T{3}=0$ and $f\T{4}<0$ due to (\ref{eq:temp6}, \ref{eq:temp7})
\end{proof}

\begin{coro}
$f(x)=(x^2-1)^2$ allows stable oscillation around the local minima $\Bar{x}=1$.
\label{cor:x2}
\end{coro}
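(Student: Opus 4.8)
\textbf{Proof proposal for Corollary~\ref{cor:x2}.} The plan is to instantiate Lemma~\ref{lem:l2loss} with $g(x)=x^2$ and $y=1$, checking its three hypotheses at the local minimum $\Bar{x}=1$, and then invoke the conclusion of that lemma (which in turn routes through Theorem~\ref{thm:1dlocal}) to obtain stable oscillation. Concretely, I would first record the derivatives $g'(x)=2x$, $g''(x)=2$, $g\T{3}(x)\equiv 0$, and all higher derivatives identically zero, so hypothesis (iii) (all higher derivatives bounded as $\O(1)$) holds trivially.

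Next I would verify hypothesis (i): at $\Bar{x}=1$ we have $g(\Bar{x})=1=y$, and $g'(1)=2\neq 0$, so the target value $y=1$ is non-degenerate for $g$. Then I would verify hypothesis (ii): the inequality $g'(\Bar{x})g\T{3}(\Bar{x}) < 6[g''(\Bar{x})]^2$ becomes $2\cdot 0 < 6\cdot 4 = 24$, which clearly holds. Since $g''(1)=2\neq 0$ as well, we are in the ``$g''\neq 0$'' branch of the proof of Lemma~\ref{lem:l2loss}, so $f(x)=(g(x)-1)^2=(x^2-1)^2$ satisfies the hypotheses of Theorem~\ref{thm:1dlocal} at $\Bar{x}=1$ (indeed one computes directly $f''(1)=2[g'(1)]^2=8$, $f\T{3}(1)=6g''(1)g'(1)=24\neq 0$, and $3[f\T{3}(1)]^2-f''(1)f\T{4}(1)>0$ follows from the rearranged condition~\eqref{eq:temp7}). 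Hence Theorem~\ref{thm:1dlocal} applies and gives stable oscillation (a period-2 orbit returning to the start) for a suitable $\eta$ slightly above $2/f''(1)=1/4$.

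There is essentially no obstacle here: the corollary is a direct, almost mechanical specialization of the already-proved Lemma~\ref{lem:l2loss}, and every quantity is an explicit constant. The only care needed is bookkeeping of which branch of Lemma~\ref{lem:l2loss} one lands in (here the generic $g''\neq 0$ branch, so Theorem~\ref{thm:1dlocal} rather than Lemma~\ref{lem:1dhigher} is the relevant tool), and noting that the statement's normalization $f(x)=(x^2-1)^2$ differs from the $\tfrac14(x^2-\mu)^2$ form used elsewhere only by an overall constant factor, which does not affect whether a stabilizing $\eta$ exists (it merely rescales the admissible $\eta$-interval). One could optionally remark that the same computation reproduces the constants $f\T{3}(\sqrt\mu)=6\sqrt\mu$, $f''(\sqrt\mu)=2\mu$ quoted for the special 1-D function, confirming consistency with Theorem~\ref{thm:1dglobal}.
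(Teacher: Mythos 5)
Your proof is correct and follows exactly the same route as the paper: check that $g(x)=x^2$ with $y=1$ satisfies the hypotheses of Lemma~\ref{lem:l2loss} (namely $g'(1)=2\neq 0$, $g''(1)=2\neq 0$, $g^{(3)}\equiv 0$ so all conditions hold) and invoke that lemma. Your extra bookkeeping — explicit values $f''(1)=8$, $f^{(3)}(1)=24$, the inequality $0<24$, and the remark about the $\tfrac14$ normalization — is harmless and consistent with the paper, which states the same verification more tersely.
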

\begin{proof}
With $g(x)=x^2$, it has $g'(1)=2\neq0, g''(1)=2\neq0$. All higher order derivatives of $g$ are zero. Then Prop~\ref{prop:l2loss} gives the result.
\end{proof}

\begin{coro}
$f(x)=(\sin(x)-y)^2$ allows stable oscillation around the local minima $\Bar{x}=\arcsin(y)$ with $y\in(-1,1)$.
\label{cor:sin2}
\end{coro}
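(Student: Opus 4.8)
The plan is to apply Lemma~\ref{lem:l2loss} to the function $g(x)=\sin(x)$, which is exactly the hypothesis-checking task that the lemma was designed for. First I would note that at the local minima $\bar x = \arcsin(y)$ of $f(x)=(\sin(x)-y)^2$, we indeed have $\sin(\bar x)=y$, so this is a global minima of the squared loss with $f(\bar x)=0$. The three conditions of Lemma~\ref{lem:l2loss} must then be verified for $g=\sin$ at $\bar x$.

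For condition (i), I would compute $g'(\bar x) = \cos(\bar x) = \cos(\arcsin(y)) = \sqrt{1-y^2}$, which is nonzero precisely because $y\in(-1,1)$ — this is exactly where the hypothesis on $y$ is used. For condition (iii), all derivatives of $\sin$ are $\pm\sin$ or $\pm\cos$, hence bounded by $1$, so they are trivially $\O(1)$. The only nontrivial check is condition (ii), $g'(\bar x)\,g^{(3)}(\bar x) < 6[g''(\bar x)]^2$. Here $g''(\bar x) = -\sin(\bar x) = -y$ and $g^{(3)}(\bar x) = -\cos(\bar x) = -\sqrt{1-y^2}$, so the inequality becomes $\sqrt{1-y^2}\cdot(-\sqrt{1-y^2}) < 6 y^2$, i.e. $-(1-y^2) < 6y^2$, i.e. $-1 < 7y^2$, which holds for all real $y$. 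Hence condition (ii) is satisfied unconditionally (given $y\in(-1,1)$ for condition (i)).

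Having checked all three hypotheses, Lemma~\ref{lem:l2loss} immediately yields that $f(x)=(\sin(x)-y)^2$ satisfies the condition of Theorem~\ref{thm:1dlocal} or Lemma~\ref{lem:1dhigher}, and therefore allows stable oscillation around $\bar x$. One small subtlety worth a remark: when $y=0$ we have $g''(\bar x)=-\sin(0)=0$, so we land in the degenerate branch of Lemma~\ref{lem:l2loss} that invokes Lemma~\ref{lem:1dhigher} with $k=4$ rather than Theorem~\ref{thm:1dlocal}; the lemma already handles this case, so no extra work is needed, but I would mention it for completeness since it corresponds to $f(x)=\sin^2(x)$ whose third derivative vanishes at the minimum.

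I do not anticipate any genuine obstacle here — the entire argument is a substitution into an already-proven lemma. The only place requiring the slightest care is making sure the $\O(1)$-boundedness claims are interpreted correctly (the constants $g'(\bar x),g''(\bar x),g^{(3)}(\bar x)$ are all bounded uniformly in $y$, and in fact bounded by $1$), and making sure that the restriction $y\in(-1,1)$ is flagged as exactly the condition ensuring $g'(\bar x)\neq 0$, i.e. that $y$ is not a ``trivial value'' in the language of the discussion following Lemma~\ref{lem:l2loss}.
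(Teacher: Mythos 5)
Your proof is correct and follows essentially the same route as the paper: both verify the hypotheses of Lemma~\ref{lem:l2loss} for $g=\sin$ at $\bar x=\arcsin(y)$, with the key computation being $g'g^{(3)}-6[g'']^2=-\cos^2\bar x-6\sin^2\bar x<0$. (There is a trivial arithmetic slip in your rearrangement --- $-(1-y^2)<6y^2$ gives $-1<5y^2$, not $-1<7y^2$ --- but since both hold unconditionally the conclusion is unaffected; your note on the $y=0$ degeneracy routing through Lemma~\ref{lem:1dhigher} is a fair addition.)
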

\begin{proof}
With $g(x)=\sin(x)$, it has $g'(\Bar{x})=\cos(\Bar{x})\neq 0, g\T{3}(\Bar{x})=-\cos(\Bar{x})$. We have $g\T{3}(\Bar{x})$ is bounded as
$g'g\T{3}-6[g'']^2=-\cos^2(\Bar{x})-6\sin^2(\Bar{x})<0.$ Then Prop~\ref{prop:l2loss} gives the result.
\end{proof}

\begin{coro}
$f(x)=(\tanh(x)-y)^2$ allows stable oscillation around the local minima $\Bar{x}=\tanh^{-1}(y)$ with $y\in(-1,1)$.
\label{cor:tanh2}
\end{coro}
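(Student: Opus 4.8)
The plan is to apply Lemma~\ref{lem:l2loss} with $g(x)=\tanh(x)$ and target value $y$, so that $f(x)=(g(x)-y)^2$ is exactly the loss in the statement; it then suffices to verify the three hypotheses of that lemma at the minimiser $\bar x=\tanh^{-1}(y)$, where $\tanh(\bar x)=y$, just as was done for $g(x)=x^2$ and $g(x)=\sin(x)$ in Corollaries~\ref{cor:x2} and~\ref{cor:sin2}.

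First I would compute the low-order derivatives of $g$ using the identity $g'=1-g^2$, which gives
\begin{align*}
g'(x) &= 1-\tanh^2 x, \\
g''(x) &= -2\tanh x\,(1-\tanh^2 x), \\
g\T{3}(x) &= 2(1-\tanh^2 x)(3\tanh^2 x-1),
\end{align*}
so that $g'(\bar x)=1-y^2$, $g''(\bar x)=-2y(1-y^2)$, and $g\T{3}(\bar x)=2(1-y^2)(3y^2-1)$. Hypothesis (i), $g'(\bar x)\neq0$, then holds because $|y|<1$ forces $1-y^2>0$. Hypothesis (iii), boundedness of all higher derivatives, holds because an easy induction shows each derivative of $\tanh$ is a polynomial in $\tanh x$, hence uniformly bounded (using $|\tanh x|\le1$) and in particular $\O(1)$ near $\bar x$.

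The only hypothesis requiring a computation is (ii), $g'(\bar x)\,g\T{3}(\bar x)<6[g''(\bar x)]^2$. Substituting, this reads $2(1-y^2)^2(3y^2-1)<24\,y^2(1-y^2)^2$, and dividing by $2(1-y^2)^2>0$ it reduces to $3y^2-1<12y^2$, i.e.\ $-1<9y^2$, which holds for every $y$. Hence Lemma~\ref{lem:l2loss} applies and yields stable oscillation around $\bar x$. I do not expect a genuine obstacle; the only point to watch is the boundary case $y=0$, where $g''(\bar x)=0$ and the Theorem~\ref{thm:1dlocal} branch degenerates --- but this case is already subsumed by Lemma~\ref{lem:l2loss}, which there routes through Lemma~\ref{lem:1dhigher} with $k=4$ (one checks $f\T{3}(\bar x)=0$ and $f\T{4}(\bar x)=8g'(\bar x)g\T{3}(\bar x)=-16<0$), and condition (ii) continues to hold. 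So the argument is a routine verification mirroring the two preceding corollaries.
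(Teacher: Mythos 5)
Your proof is correct and follows essentially the same route as the paper: it verifies the hypotheses of Lemma~\ref{lem:l2loss} for $g=\tanh$ at $\bar x=\tanh^{-1}(y)$; your computation $g'g^{(3)}-6[g'']^2 = 2(1-y^2)^2(3y^2-1)-24y^2(1-y^2)^2$ agrees algebraically with the paper's $-2\text{sech}^6-20\text{sech}^4\tanh^2$, and both are manifestly negative. You additionally flag and dispose of the degenerate case $y=0$ (where $g''(\bar x)=0$, so the lemma routes through Lemma~\ref{lem:1dhigher} with $k=4$ and $f^{(4)}=8g'g^{(3)}=-16<0$), a detail the paper's terse proof leaves implicit; that extra care is welcome but does not change the argument.
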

\begin{proof}
With $g(x)=\tanh(x)$, it has $g'(\Bar{x})=\text{sech}^2(\Bar{x})\neq 0,
$ and $
g\T{3}(\Bar{x})=-2 \text{sech}^4(\Bar{x}) + 4 \text{sech}^2(\Bar{x}) \tanh^2(\Bar{x})$ is bounded as
\begin{align*}
    g'g\T{3}-6[g'']^2=-2\text{sech}^6+4\text{sech}^4\tanh^2-24\text{sech}^4\tanh^2=-2\text{sech}^6-20\text{sech}^4\tanh^2<0.
\end{align*}
Then Prop~\ref{prop:l2loss} gives the result.
\end{proof}
\begin{coro}
    $f(x)=(x^{\alpha}-y)^2$ (with $k\in\mathbb{Z}, k\ge 2$) allows stable oscillation around the local minima $\Bar{x}=y^{\nicefrac{1}{\alpha}}$ except $y=0$.
    \label{cor:mono}
\end{coro}
\begin{proof}
    With $g(x)=x^{\alpha}$, it has $g'(\Bar{x})=\alpha x^{\alpha-1}, g''(\Bar{x})=\alpha(\alpha-1)x^{\alpha-2}, g^{(3)}(\Bar{x})=\alpha(\alpha-1)(\alpha-2)x^{\alpha-3}$. Then we have $g'g\T{3}-6[g'']^2=\alpha^2(\alpha-1)(-5\alpha+4) x^{2\alpha-4}<0.$ Then Prop~\ref{prop:l2loss} gives the result.
\end{proof}
\begin{coro}
    $f(x)=(\exp(x)-y)^2$ allows stable oscillation around the local minima $\Bar{x}=\log{y}$ for $y>0$.
    \label{cor:exp}
\end{coro}
\begin{proof}
    With $g(x)=\exp{x}$, it has $g'(\Bar{x})=g''(\Bar{x})=g^{(3)}(\Bar{x})=\exp(\Bar{x})$. Then we have $g'g\T{3}-6[g'']^2<0.$ Then Prop~\ref{prop:l2loss} gives the result.
\end{proof}
\begin{coro}
    $f(x)=(\log(x)-y)^2$ allows stable oscillation around the local minima $\Bar{x}=\exp{y}$.
    \label{cor:log}
\end{coro}
\begin{proof}
    With $g(x)=log{x}$, it has $g'(\Bar{x})=\frac{1}{\Bar{x}}, g''(\Bar{x})=-\frac{1}{\Bar{x}^2}, g^{(3)}(\Bar{x})=-\frac{2}{\Bar{x}^3}$. Then we have $g'g\T{3}-6[g'']^2<0.$ Then Prop~\ref{prop:l2loss} gives the result.
\end{proof}
\begin{coro}
    $f(x)=(\frac{1}{1+\exp(-x)}-y)^2$ allows stable oscillation around the local minima $\Bar{x}=\text{sigmoid}^{-1}(y)$ for $y\in(0,1)$.
    \label{cor:sigmoid}
\end{coro}
\begin{proof}
    With $g(x)=\frac{1}{1+\exp(-x)}$, it has $g'(\Bar{x})=\frac{\exp(-x)}{(\exp(-x)+1)^2}, g''(\Bar{x})=-\frac{\exp(x)(\exp(x)-1)}{(\exp(x)+1)^3}, g^{(3)}(\Bar{x})=\frac{\exp(x)(-4\exp(x)+\exp(2x)+1)}{(\exp(x)+1)^4}$. Then we have $g'g\T{3}-6[g'']^2 \propto -4\exp(x)+\exp(2x)+1-6(\exp(x)-1)^2<0.$ Then Prop~\ref{prop:l2loss} gives the result.
\end{proof}
\begin{prop}[Restatement of Prop~\ref{prop:comp}]
    Consider two functions $f,g$. Assume both $f(x),g(y)$ at $x=\Bar{x}, y=f(\Bar{x})$ satisfies the conditions in Prop~\ref{prop:l2loss} to allow stable oscillations. Then $g(f(x))$ allows stable oscillation around $x=\Bar{x}$.
\end{prop}
\begin{proof}
    Denote $F(x)\triangleq g(f(x))$. Then we have
    \begin{align*}
        F'(x)&=g'(f(x))f'(x), \\
        F''(x)&= g''(f(x))[f'(x)]^2 + g'(f(x))f''(x), \\
        F^{(3)}(x)&= g^{(3)}(f(x))[f'(x)]^3+3g''(f(x))f'(x)f''(x)+g'(f(x))f^{(3)}(x).
    \end{align*}
    Thus, omitting all variables $\Bar{x}$ and $f(\Bar{x})$ in the derivatives, it holds
    \begin{align*}
        F'(\Bar{x})F^{(3)}(\Bar{x}) - 6[F''(\Bar{x})]^2 &= g'f'\left(g^{(3)}(f')^3 + 3g''f'f''+g'f^{(3)}\right)-6\left(g''(f')^2+g'f''\right)^2 \\
        &\le -9 g'g''(f')^2 f'',
    \end{align*}
    where the inequality is due to all conditions in Prop~\ref{prop:l2loss}. So the only problem is whether we can achieve $g'g'' f''>0$. The good news is that, even if it holds $g'g''f''<0$, we can still find functions to re-represent $g(f(x))$ as $\hat{g}(\hat{f}(x))$ such that $\hat{g}'\hat{g}''\hat{f}''<0$ and all other conditions in Prop~\ref{prop:l2loss} are satisfied by $\hat{g},\hat{f}$.

    For $g'g''f''<0$, construct $\hat{g}(y)\triangleq g(-y), \hat{f}(x)\triangleq -f(x)$. In this sense, it holds $\hat{g}(\hat{f}(\Bar{x}))=g(f(\Bar{x}))$. It is easy to verify that both $\hat{g}, \hat{f}$ at $y=-f(\Bar{x}),x=\Bar{x}$ satisfy the conditions in Prop~\ref{prop:l2loss}, because 
    \begin{gather*}
        \hat{g}'(y)=-g'(-y)=-g'(f(\Bar{x})), ~~ \hat{g}''(y)=g''(-y)=g''(f(\Bar{x})),~~ \hat{g}^{(3)}(y)=-g^{(3)}(-y)=-g^{(3)}(f(\Bar{x})),\\
        \hat{f}'(\Bar{x})=-f'(\Bar{x}),~~ \hat{f}''(\Bar{x})=-f''(\Bar{x}),~~ \hat{f}^{(3)}(y)=-f^{(3)}(\Bar{x}).
    \end{gather*}
    Then, it has $\hat{g}'(y)\hat{g}''(y)\hat{f}''(x)=-g'g''f''>0$ at $y=-f(\Bar{x}),x=\Bar{x}$. Therefore, we have $F'(\Bar{x})F^{(3)}(\Bar{x}) - 6[F''(\Bar{x})]^2<0$ and Prop~\ref{prop:l2loss} gives the result.
    \end{proof}


\section{Proof of Theorem~\ref{thm:1dglobal}}
\label{app:1dglobal}

\begin{theorem}[Restatement of Theorem~\ref{thm:1dglobal}]
For $f(x)=\frac{1}{4}(x^2-\mu)^2$, consider GD with $\eta=K\cdot\frac{1}{\mu}$ where $1<K< \sqrt{4.5}-1\approx 1.121$, and initialized on any point $0<x_0<\sqrt{\mu}$. Then it converges to an orbit of period 2, except for a measure-zero initialization where it converges to $\sqrt{\mu}$. More precisely, the period-2 orbit are the solutions $x=\delta_1\in(0,\sqrt{\mu}), x=\delta_2\in(\sqrt{\mu},2\sqrt{\mu})$ of solving $\delta$ in
\begin{align}
    \eta  = \frac{1}{\delta^2\left( \sqrt{\frac{\mu}{\delta^2}-\frac{3}{4}} +\frac{1}{2}\right)}.
\end{align}
\end{theorem}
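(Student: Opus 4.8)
The plan is to reduce to the normalized function and then run a one–dimensional dynamical–systems analysis of the resulting cubic map. Substituting $x=\sqrt{\mu}\,u$ turns GD on $f$ into iteration of
$\phi(u)=u\bigl(1+K(1-u^{2})\bigr)$
with the same constant $K=\eta\mu$, so it suffices to take $\mu=1$ and prove that the orbit of every $u_{0}\in(0,1)$ outside a countable set converges to the $2$-cycle $\{\delta_{1},\delta_{2}\}$, while the countable set --- namely $\bigcup_{t}\phi^{-t}(1)\cap(0,1)$, which is Lebesgue-null since $\phi$ is a cubic polynomial, so each $\phi^{-t}(1)$ is finite --- consists exactly of the orbits that reach and stay at the fixed point $u=1$.

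Next I would pin down the static picture. The fixed points of $\phi$ are $0,\pm1$ with $\phi'(0)=1+K>1$ and $\phi'(1)=1-2K$, so for $K>1$ both $0$ and $1$ are hyperbolic repellers. The $2$-cycles are found from $\phi(a)=b,\ \phi(b)=a,\ a\neq b$: eliminating yields, on the branch $a+b>0$, the symmetric relations $ab=1/K$ and $a^{2}+b^{2}=1+1/K$, equivalently $Ka^{2}+1/(Ka^{2})=1+K$; rearranging this is precisely the displayed equation for $\delta$, whose two positive roots are $\delta_{1}=\sqrt{w_{-}/K}\in(0,1)$ and $\delta_{2}=\sqrt{w_{+}/K}\in(1,2)$ with $w_{\pm}$ the roots of $w^{2}-(1+K)w+1=0$. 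Using these symmetric functions, the multiplier of the cycle is $(\phi^{2})'(\delta_{i})=\phi'(\delta_{1})\phi'(\delta_{2})=9-2(1+K)^{2}$, which lies in $(0,1)$ exactly when $1<K<\sqrt{9/2}-1\approx1.1213$; and the unique positive critical point $u^{*}=\sqrt{(1+K)/(3K)}$ of $\phi$ satisfies $u^{*}<\delta_{1}$ under the same bound $s^{2}:=(1+K)^{2}<9/2$ --- this is exactly the role of the hypothesis $K\le 1.121$. Finally, $\phi^{2}(u)-u$ factors as $Ku(1-u^{2})\,q(u)$ with $q$ a sextic whose six roots are $\pm\delta_{1},\pm\delta_{2},\pm\sqrt{1+2/K}$ (the third $2$-cycle being $\{\pm\sqrt{1+2/K}\}$), so inside any interval avoiding $0$ and $\sqrt{1+2/K}$ the only fixed points of $\phi^{2}$ are $\delta_{1},1,\delta_{2}$.

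Then I build a trapping interval and localize the dynamics. Let $M=\phi(u^{*})=\tfrac{2(K+1)}{3}u^{*}$ be the global maximum of $\phi$ on $(0,\infty)$; the inequalities $u^{*}<\delta_{1}$ and $4(K+1)^{3}<27(K+2)$ (both checked from the closed forms for $1<K\le1.121$) give $M\in(\delta_{2},\sqrt{1+2/K})$, hence $\phi(M)<\phi(\delta_{2})=\delta_{1}$, while the sign pattern of $\phi^{2}(u)-u$ on $(0,\delta_{1})$ (positive there, as $u^{*}\in(0,\delta_{1})$) gives $\phi(M)=\phi^{2}(u^{*})>u^{*}$. Thus $I:=[\phi(M),M]\subseteq(u^{*},\sqrt{1+2/K})$, so $\phi$ is strictly decreasing on $I$ and $\phi(I)\subseteq I$; therefore $\phi^{2}$ is increasing on $I$ with fixed points exactly $\delta_{1}<1<\delta_{2}$. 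The bound $(\phi^{2})'(\delta_{i})\in(0,1)$ forces $\phi^{2}(u)-u$ to keep the correct constant sign on each of $[\phi(M),\delta_{1}),(\delta_{1},1),(1,\delta_{2}),(\delta_{2},M]$, so the $\phi^{2}$-orbit of any point of $I\setminus\{1\}$ converges monotonically to $\delta_{1}$ or to $\delta_{2}$, and hence the full orbit converges to the $2$-cycle $\{\delta_{1},\delta_{2}\}$.

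It remains to get from $(0,1)$ into $I$. On $(0,1)$ one has $\phi(u)=u\bigl(1+K(1-u^{2})\bigr)>u$, so an orbit started there is strictly increasing while it stays in $(0,1)$; it cannot stay in $(0,1)$ forever, since its increasing limit would be the fixed point $1$, impossible by hyperbolicity of $1$ unless the orbit actually hits $1$. So if $u_{0}$ is not in the exceptional set, the orbit exits $(0,1)$ at some first time $t_{0}\ge1$ with $u_{t_{0}}=\phi(u_{t_{0}-1})\in\phi((0,1))\cap[1,\infty)=(1,M]\subseteq I$; combining with the previous step finishes the proof. The main obstacle is the bookkeeping of the several explicit inequalities that must hold uniformly on $1<K\le1.121$ --- above all $u^{*}<\delta_{1}$, which is tight at $K=\sqrt{9/2}-1$ and is exactly what fixes the constant --- together with verifying that the enumeration of the period-at-most-$2$ points is complete, so that $\phi^{2}$ has no spurious fixed point inside $I$ that could trap the monotone iteration.
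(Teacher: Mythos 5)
Your proposal is correct and follows essentially the same route as the paper's proof: identify the unique positive $2$-cycle algebraically (Vieta relations $ab=1/K$, $a^2+b^2=1+1/K$), compute the multiplier $9-2(1+K)^2$ to get the range $K<\sqrt{9/2}-1$, build a forward-invariant interval around the cycle on which the two-step map is increasing, use monotone convergence, and finally show every non-exceptional initialization in $(0,1)$ is driven into that interval. The only differences are presentational — you normalize $\mu=1$ up front, choose the trapping interval $[\phi(M),M]$ rather than the paper's $[x_s,\sqrt\mu)$, and make the period-$\le 2$ point enumeration (including the spurious cycle $\pm\sqrt{1+2/K}$) explicit via the factorization of $\phi^2(u)-u$, whereas the paper infers the sign pattern from uniqueness plus the local multiplier; these are cosmetic variations of the same argument, with your version somewhat more self-contained.
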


\begin{proof}
Assume the 2-period orbit is $(\Bar{x}_0, \Bar{x}_1)$, which means
\begin{align*}
    \Bar{x}_1 &= \Bar{x}_0 - \eta\cdot f'(\Bar{x}_0)
    =\Bar{x}_0 + \eta\cdot(\mu-\Bar{x}_0^2)\Bar{x}_0
    ,\\
    \Bar{x}_0 &= \Bar{x}_1 - \eta\cdot f'(\Bar{x}_1)
    =\Bar{x}_1 + \eta\cdot(\mu-\Bar{x}_1^2)\Bar{x}_1.
\end{align*}

First, we show the existence and uniqueness of such an orbit when $K\in(1,1.5]$ via solving a high-order equation, some roots of which can be eliminated. 
Then, we conduct an analysis of global convergence by defining a special interval $I$. GD starting from any point following our assumption will enter $I$ in some steps, and any point in $I$ will back to this interval after two steps of iteration. Finally, any point in $I$ will converge to the orbit $(\Bar{x}_0, \Bar{x}_1)$.

Before diving into the proof, we briefly show it always holds $x>0$ under our assumption. If $x_{t-1}>0$ and $x_{t}\le 0$, the GD rule reveals $\eta(\mu-x_{t-1}^2)\le -1$ which implies $x_{t-1}^2 \ge \mu+\frac{1}{\eta}$. However, the maximum of $x+\eta(\mu-x^2)x$ on $x\in(0, \sqrt{\mu+\frac{1}{\eta}})$ is achieved when $x^2=\frac{1}{3}(\mu+\frac{1}{\eta})$ so the maximum value is $\sqrt{\frac{1}{3}(\mu+\frac{1}{\eta})}(\frac{2}{3}+\frac{2}{3}\eta\mu)\le 1.4\sqrt{\frac{1}{3}(\mu+\frac{1}{\eta})}<\sqrt{\mu+\frac{1}{\eta}}$. As a result, it always holds $x>0$.


\textbf{Part I. Existence and uniqueness of $(\Bar{x}_0, \Bar{x}_1)$}.

In this part, we simply denote both $\Bar{x}_0, \Bar{x}_1$ as $x_0$. This means $x_0$ in all formulas in this part can be interpreted as $\Bar{x}_0$ and $\Bar{x}_1$. Then the GD update rule tells, for the orbit in two steps,
\begin{align*}
    x_0
    &\mapsto
    x_1 \coloneqq
    x_0+\eta(\mu-x_0^2) x_0,\\
    x_1
    &\mapsto
    x_0=
    x_1+\eta(\mu-x_1^2) x_1,
\end{align*}
which means
\begin{align*}
    0=\eta(\mu-x_0^2)x_0+\eta\left(\mu-\left(x_0+\eta(\mu-x_0^2)x_0\right)^2\right)\left(x_0+\eta(\mu-x_0^2)x_0\right),\\
    0=\mu-x_0^2+\left(\mu-\left(x_0+\eta(\mu-x_0^2)x_0\right)^2\right)\left(1+\eta(\mu-x_0^2)\right).
\end{align*}
Denote $z\coloneqq 1+\eta(\mu-x_0^2)$, it is equivalent to
\begin{align*}
    0 &= \mu-x_0^2+(\mu-z^2 x_0^2)z=(z+1)(-x_0^2 z^2+x_0^2z+\mu-x_0^2)\\
    &= (z+1)\left(-x_0^2(z-\frac{1}{2})^2+\mu-\frac{3}{4}x_0^2 \right).
\end{align*}
If $z+1=0$, it means $x_1 = -x_0$ which is however out of the range of our discussion on the $x>0$ domain. So we require $-x_0^2(z-\frac{1}{2})^2+\mu-\frac{3}{4}x_0^2=0$. To ensure the existence of solutions $z$, it is natural to require
\begin{align*}
    \mu-\frac{3}{4}x_0^2 \ge 0.
\end{align*}
Then, the solutions are
\begin{align*}
    z=\frac{1}{2}\pm\sqrt{\frac{\mu}{x_0^2}-\frac{3}{4}}.
\end{align*}
However, $z=\frac{1}{2}-\sqrt{\frac{\mu}{x_0^2}-\frac{3}{4}}$ can be ruled out. If it holds, $\eta(\mu-x_0^2)=z-1<-\frac{1}{2}$ which means $x_0^2>\mu+\frac{1}{2\eta}$. Since we restrict $\eta\mu\in(1,1.121],$ it tells $x_0^2>\mu(1+\frac{1}{1.242})$ contradicting with $\mu\ge\frac{3}{4}x_0^2$.

Hence, $z=\frac{1}{2}+\sqrt{\frac{\mu}{x_0^2}-\frac{3}{4}}$ is the only reasonable solution, which is saying
\begin{align*}
    \eta(\mu-x_0^2)=-\frac{1}{2}+\sqrt{\frac{\mu}{x_0^2}-\frac{3}{4}}.
\end{align*}

Given a certain $\eta$, the above expression is a third-order equation of $x_0^2$ to solve. Apparently $x_0^2=\mu$ is one trivial solution, since for any learning rate, the gradient descent stays at the global minimum. Then the two other solutions are exactly the orbit $(\Bar{x}_0, \Bar{x}_1)$, if the equation does have three different roots. This also guarantees the uniqueness of such an orbit.

Assuming $x_0^2\neq\mu$, the above expression can be reformulated as
\begin{align}
    \eta=\frac{1}{x_0^2\left( \sqrt{\frac{\mu}{x_0^2}-\frac{3}{4}}+\frac{1}{2} \right)}.
    \label{eq:eta_x0}
\end{align}

One necessary condition for existence is $\mu\ge \frac{3}{4}x_0^2$. Note that here $x_0$ can be both $\Bar{x}_0, \Bar{x}_1$, one of which is larger than $\sqrt{\mu}$. For simplicity, we assume $\Bar{x}_0 <\sqrt{\mu}< \Bar{x}_1$. Since $\eta$ from Eq(\ref{eq:eta_x0}) is increasing with $x_0^2$ when $\mu<x_0^2$, let $x_0^2=\frac{4}{3}\mu$ and achieve the upper bound as
\begin{align}
    \eta\mu \le \frac{3}{2}, \label{eq:eta_exist_bound}
\end{align}
which is satisfied by our assumption $1<\eta\mu< \sqrt{4.5}-1\approx 1.121.$

Therefore, we have shown the existence and uniqueness of a period-2 orbit.

\textbf{Part II. Global convergence to $(\Bar{x}_0, \Bar{x}_1)$.}

The proof structure is as follows:
\begin{enumerate}
    \item There exists a special interval $I\coloneqq[x_s, \sqrt{\mu})$ such that any point in $I$ will back to this interval surely after two steps of gradient descent. And $\Bar{x}_0\in I$.
    \item Initialized from any point in $I$, the gradient descent process will converge to $\Bar{x}_0$ (every two steps of GD).
    \item Initialized from any point between 0 and $\sqrt{\mu}$, the gradient descent process will fall into $I$ in some steps.
\end{enumerate}

\textbf{(II.1)} Consider a function $F_{\eta}(x)=x+\eta(\mu-x^2)x$ performing one step of gradient descent. Since $F_{\eta}'(x)=1+\eta\mu-3\eta x^2$, we have $F'_\eta(x)>0$ for $0<x^2<\frac{1}{3}\left(\mu+\frac{1}{\eta}\right)$ and $F'_\eta(x)<0$ otherwise. It is obvious that the threshold has $x_s^2\coloneqq\frac{1}{3}\left(\mu+\frac{1}{\eta}\right)<\mu$. In the other words, for any point on the right of $x_s$, GD returns a point in a decreasing manner.

To prove anything further, we would like to restrict $\Bar{x}_0 \ge x_s$, which is 
\begin{align*}
    \Bar{x}_0^2\ge \frac{1}{3}\left(\mu+\frac{1}{\eta}\right)
    =\frac{1}{3}
    \left(\mu 
    +
    \Bar{x}_0^2\left( \sqrt{\frac{\mu}{\Bar{x}_0^2}-\frac{3}{4}}+\frac{1}{2} \right)
    \right).
\end{align*}
Solving this inequality tells 
\begin{align}
    \Bar{x}_0^2\ge\frac{3+\sqrt{2}}{7}\mu.
\end{align}
Consequently, by applying Eq(\ref{eq:eta_x0}), we have
\begin{align}
    \eta\mu\le\sqrt{4.5}-1\approx 1.121.
\end{align}

With the above discussion of $x_s$, we are able to define the special internal $I\coloneqq[x_s,\sqrt{\mu})$. From the definition of $F_\eta$, consider a function representing two steps of gradient descent $F^2_\eta(x)\coloneqq F_\eta(F_\eta(x))$. From previous discussion, we know $F^2_\eta(\Bar{x}_0)=\Bar{x}_0$. What about $F_{\eta}^2(x_s)$?

It turns out $F_{\eta}^2(x_s)>x_s$: we have $F_{\eta}(x_s)=x_s(1+\eta\mu-\eta x_s^2) = x_s\cdot \frac{2}{3}(1+\eta\mu)$ and, furthermore, $F^2_\eta(x_s)=F_\eta(x_s\cdot \frac{2}{3}(1+\eta\mu))=x_s\cdot \frac{2}{3}(1+\eta\mu)\cdot\left(1+\eta\mu-\frac{4}{27}(1+\eta\mu)^3\right)$. Then we get $F^2_\eta(x_s)>x_s$ because
\begin{align}
    \frac{2}{3}(1+\eta\mu)\cdot\left(1+\eta\mu-\frac{4}{27}(1+\eta\mu)^3\right) > 1 \text{~~~if~~~~} \eta\mu\in(1,\sqrt{4.5}-1).
\end{align}
Combining the following facts, i) $F^2_\eta(x)-x$ is continous wrt $x$, ii) $F^2_\eta(x_s)-x_s>0$, and iii) $F^2_\eta(\Bar{x}_0)-\Bar{x}_0=0$ is the only zero point on $x\in[x_s, \Bar{x}_0]$, we can conclude that 
\begin{align}
    F^2_\eta(x)>x, ~~\forall x \in[x_s,\Bar{x}_0).
\end{align}

Meanwhile, we can prove $F^2_\eta(x)<x$ for any $x\in(\Bar{x}_0,\sqrt{\mu})$. Since $F^2_\eta(\mu)-\mu=0$ and $F^2_\eta(\Bar{x}_0)-\Bar{x}_0=0$ are the only two zero cases, we only need to show $\exists ~\hat{x}\in (\Bar{x}_0,\sqrt{\mu}),$ such that $F^2_\eta(\hat{x})<\hat{x}$. We compute the derivative of $F^2_\eta(x)-x$ at $x^2=\mu$, which is $\frac{d}{dx}F^2_\eta(x)-x|_{x^2=\mu}=-1+F'(F(x))F'(x)|_{x^2=\mu}=-1+[F'(\sqrt{\mu})]^2=-1+(1-2\eta\mu)^2>0$. Then combining it with $F^2_\eta(\Bar{x}_0)=\Bar{x}_0$, there exists a point $\hat{x}\in(\Bar{x}_0,\sqrt{\mu})$ that is very close to $\sqrt{\mu}$ such that $F^2_\eta(\hat{x})<\hat{x}$. Hence, we can conclude that 
\begin{align}
    F^2_\eta(x)<x, ~~\forall x \in(\Bar{x}_0,\sqrt{\mu}).
\end{align}




Since $F_\eta(\cdot)$ is decreasing on $[x_s,\infty)$ and $F_\eta(x)>x_s$ for $x\in[x_s,\sqrt{\mu}]$, it is fair to say $F^2_\eta(x)$ is increasing on $x\in[x_s,\sqrt{\mu}]$. Hence, we have $F^2_\eta(x)\le F^2_\eta(\Bar{x}_0)=\Bar{x}_0,~\forall x \in[x_s,\Bar{x}_0]$. And $F^2_\eta(x)\ge F^2_\eta(\Bar{x}_0)=\Bar{x}_0,~\forall x (\Bar{x}_0,\sqrt{\mu})$

Combining the above results, we have
\begin{align}
     F^2_\eta(x)&\in(x,\Bar{x}_0], ~~\forall x \in[x_s,\Bar{x}_0),
     \label{eq:f2_x0_left}
     \\
     F^2_\eta(x)&\in[\Bar{x}_0,x), ~~\forall x \in(\Bar{x}_0,\sqrt{\mu}).
     \label{eq:f2_x0_right}
\end{align}

\textbf{(II.2)} A consequence of Exp(\ref{eq:f2_x0_left}, \ref{eq:f2_x0_right}) is that any point in $I$ will converge to $\Bar{x}_0$ with even steps of gradient descent. For simplicity, we provide the proof for $x\in[x_s,\Bar{x}_0)$.

Denote $a_0\in[x_s,\Bar{x}_0)$ and $a_n\coloneqq F^2_\eta(a_{n-1}), n\ge 1$. The series $\{a_i\}_{i\ge0}$ satisfies
\begin{align}
    \Bar{x}_0\ge a_{n+1} >a_n>a_0.
\end{align}

Since the series is bounded and strictly increasing, it is converging. Assume it is converging to $a$. If $a<\Bar{x}_0$, then
\[
\Bar{x}_0\ge F^2_\eta(a)>a>F^2_\eta(a_n).
\]
Since $F^2_\eta(\cdot)$ is continuous, so $\exists~ \delta>0$, such that, when $|x-a|<\delta$, we have
\begin{align}
    |F^2_\eta(x)-F^2_\eta(a)|<F^2_\eta(a)-a.
    \label{eq:f2_contin}
\end{align}
Since $a$ is the limit, so $\exists~ N>0$, such that, when $n>N$, $0<a-F^2_\eta(a_n)<\delta$. So, combining with Exp(\ref{eq:f2_contin}), we have
\[
|F^2_\eta(F^2_\eta(a_n))-F^2_\eta(a)|<F^2_\eta(a)-a.
\]
But LHS $=F^2_\eta(a)-a_{n+2}>F^2_\eta(a)-a$, so we reach a contradiction.

Hence, we have $\{a_i\}$ converges to $\Bar{x}_0$.

\textbf{(II.3)} Obviously, any initialization in $(0,\sqrt{\mu})$ will have gradient descent run into (i) the interval $I$, or (ii) the interval on the right of $\sqrt{\mu}$, $\textit{i.e.}$, $(\sqrt{\mu},\infty)$. The first case is exactly our target.

Now consider the second case. From the definition of $x_s$ in part III.1,  we know $F_\eta(x_s)=\max_{x\in [0,\sqrt{\mu}]} F_\eta(x)$. So it is fair to say this case is $x_n\in(\sqrt{\mu}, F_\eta(x_s)]$. Then the next step will go into the interval $I$, because
\begin{align*}
    F_\eta(x_n)\ge F_\eta(F_\eta(x_s))=F^2_\eta(x_s)>x_s,
\end{align*}
where the first inequality is from the decreasing property of $F_\eta(\cdot)$ and the second inequality is due to $F^2_\eta(x)>x$ on $x\in[x_s,\Bar{x}_0)$.
\end{proof}

\section{Proof of Theorem~\ref{thm:xy_diff_decay}}
\label{app:xy_diff_decay}

\begin{theorem}[Restatement of Theorem~\ref{thm:xy_diff_decay}]
For $f(x,y)=\frac{1}{2}\left(xy-\mu\right)^2$, consider GD with learning rate $\eta=K\cdot\frac{1}{\mu}$. Assume both $x$ and $y$ are always positive during the whole process $\{x_i,y_i\}_{i\ge 0}$. In this process, denote a series of all points with $xy>\mu$ as $\mathcal{P}=\{(x_i,y_i)|x_i y_i>\mu\}$. Then $|x-y|$ \red{decays to 0} in $\mathcal{P}$, for any $1<K<1.5$.
\end{theorem}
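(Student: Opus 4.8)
The plan is to follow the skeleton of the sketch and track the discrepancy $d_t := y_t - x_t$ through the exact identity $d_{t+1} = r_t\, d_t$ with $r_t := 1 + \eta(x_t y_t - \mu)$, which is immediate from the two symmetric update formulas. Write $S_t := x_t y_t - \mu$, so $r_t = 1 + \eta S_t$. A first, purely algebraic observation is that the standing assumption that $x,y$ stay positive is very restrictive: if $\eta S_t > 1$ then $x_{t+1} = x_t - \eta S_t y_t > 0$ would force $x_t > \eta S_t y_t > y_t$ while $y_{t+1} > 0$ would force $y_t > \eta S_t x_t > x_t$, impossible. Hence $\eta S_t \le 1$ always, i.e. $r_t \in (1-K,\,2]$; moreover $t \in \mathcal{P}$ exactly when $r_t > 1$, and when $t \notin \mathcal{P}$ we have $|r_t| \le 1$, so $|d_t|$ is non-increasing off $\mathcal{P}$. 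Thus it suffices to dominate the increases occurring at times in $\mathcal{P}$.

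Next I would show that no two consecutive indices lie in $\mathcal{P}$. From the identity $x_{t+1}y_{t+1} = x_t y_t (1-\eta S_t)^2 - \eta S_t\, d_t^2$ and the substitutions $a := \eta S_t \in (0,1]$, $x_t y_t = \mu(1+a/K)$ for $t \in \mathcal{P}$, this reduces to the scalar inequality $(1+a/K)(1-a)^2 \le 1$ on $a \in [0,1]$ for $K \in (1,1.5)$, a one-variable monotonicity check; since the correction term $-\eta S_t d_t^2$ is nonpositive, it yields $x_{t+1}y_{t+1} < \mu$, so $t+1 \notin \mathcal{P}$ and in fact $r_{t+1} < 1$. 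Enumerating $\mathcal{P}$ as $t_1 < t_2 < \cdots$, between $t_k$ and $t_{k+1}$ the sequence $|d_\cdot|$ makes a single step up at $t_k$ and is non-increasing afterwards, so $|d_{t_{k+1}}| \le |r_{t_k}\,r_{t_k+1}|\cdot|d_{t_k}|$.

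The core of the argument is then the strict contraction $|r_t\,r_{t+1}| < 1$ for every $t \in \mathcal{P}$. I would express both factors in terms of $a := \eta S_t \in (0,1)$ and $w := \eta d_t^2 \ge 0$: one computes $r_t = 1+a$ and $r_{t+1} = 1 - K + (K+a)(1-a)^2 - aw$, subject to the constraint $aw < (K+a)(1-a)^2$ coming from $x_{t+1}y_{t+1} > 0$. I would then split on the sign of $r_{t+1}$ --- this is the role of the ``three cases'' in the sketch. When $r_{t+1} < 0$, the constraint forces $|r_{t+1}| < K-1$, hence $|r_t r_{t+1}| < (1+a)(K-1) < 2(K-1) < 1$ precisely because $K < 1.5$. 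When $r_{t+1} \ge 0$, the quantity $(1+a)r_{t+1}$ is decreasing in $w$, so its supremum is at $w = 0$ and equals $\psi(a) := (1+a)\bigl(1 - K + (K+a)(1-a)^2\bigr)$; since $\psi(0) = 1$, $\psi'(0) = 2(1-K) < 0$, $\psi(1) = 2(1-K) < 0$, and $\psi - 1$ is a quartic with no root in $(0,1)$, one gets $\psi(a) \le 1$ with equality only at $a = 0$. As $a > 0$ strictly on $\mathcal{P}$, this gives $|r_t r_{t+1}| < 1$, hence $|d_{t_k}|$ is strictly decreasing and converges to some $L \ge 0$.

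Finally, upgrading this to $L = 0$ is the delicate step. Assume $L > 0$. If $\eta S_{t_k}$ stays bounded away from $0$ along a subsequence, then so does the contraction deficit $1 - |r_{t_k} r_{t_k+1}|$ (a short expansion shows it is $\asymp \eta S_{t_k}$ in the relevant case, and bounded below by $1-2(K-1)$ in the other), which forces $L = 0$; hence $x_{t_k} y_{t_k} \to \mu$. Since the product is non-decreasing on each gap between $\mathcal{P}$-times ($S_t \le 0$ there makes $x_{t+1}y_{t+1} \ge x_t y_t$) and the update map is continuous, one propagates this to $x_t y_t \to \mu$ and $|d_t| \to L$ for all $t$. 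Hence the $\omega$-limit set of the trajectory lies in $\{xy = \mu,\ |x-y| = L\}$, which for $x,y>0$ consists of the two exact minima $(p,q)$ and $(q,p)$ with $pq = \mu$, $p - q = L$. Each is a fixed point of the update at which the Hessian has largest eigenvalue $p^2+q^2 = (p-q)^2 + 2\mu = L^2 + 2\mu$, so $\eta(p^2+q^2) = K(L^2/\mu + 2) > 2$: the minimum is strictly repelling transverse to the minimum manifold, and a trajectory can converge to it only by coinciding with it from some step on --- which makes $\mathcal{P}$ finite, contradicting the standing assumption that it is infinite (the only case in which the conclusion has content). Therefore $L = 0$. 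I expect this last paragraph, together with the quartic estimate $\psi(a)\le 1$, to carry the real weight; the rest is bookkeeping around the recursion $d_{t+1} = r_t d_t$.
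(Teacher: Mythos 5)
Your proof reaches the correct conclusion by a genuinely different and arguably tighter route than the paper's, and it also fills a real gap in the paper's final step.

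For the two-step contraction, the paper splits into three cases according to the value of $x_{t+1}y_{t+1}$ relative to the thresholds $\mu-\tfrac{1}{2\eta}$ and $x_s^2 = \tfrac13(\mu+\tfrac1\eta)$, producing the rates in (\ref{eq:xydiff_rate1})--(\ref{eq:xydiff_rate4}). You instead set $a=\eta(x_ty_t-\mu)\in(0,1]$ and $w=\eta(x_t-y_t)^2$ (the bound $a\le 1$ via the neat positivity argument, the constraint $aw<(K+a)(1-a)^2$ from $x_{t+1}y_{t+1}>0$), observe that no two consecutive indices lie in $\mathcal{P}$ since $(1+a/K)(1-a)^2$ is strictly decreasing on $[0,1]$, and reduce the contraction to a single quartic estimate $\psi(a)=(1+a)(1-K+(K+a)(1-a)^2)\le 1$ (indeed $\psi(a)-1=a\phi(a)$ with $\phi$ convex, $\phi(0)=2-2K<0$, $\phi(1)=1-2K<0$, hence $\phi<0$ on $[0,1]$) together with the short case $r_{t+1}<0$, where $|r_tr_{t+1}|<2(K-1)<1$. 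Both approaches establish $|r_tr_{t+1}|<1$; yours is more compact and makes the dependence on $K<1.5$ transparent.

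The more substantive difference is Part III. The paper concludes from strict monotone decrease that $|y-x|$ converges to $0$, but the argument given ("assuming it converges to $\epsilon>0$, the next point will have the difference as $\tilde\epsilon<\epsilon$") does not close the gap: the contraction factor in (\ref{eq:xydiff_rate3}) is $1-2(K-1)(x_ty_t-\mu)$, which degenerates to $1$ as $x_ty_t\to\mu^+$, so there is no uniform contraction and a strictly decreasing sequence could a priori stabilize at a positive limit. You identify exactly this and supply the missing argument: if $L>0$, then (by the quantified deficit $1-\psi(a)\ge (2K-2)a$ and $1-2(K-1)>0$) we must have $\eta S_{t_k}\to 0$, hence $x_ty_t\to\mu$ and $|d_t|\to L$ along the whole trajectory, so the $\omega$-limit set sits at the exact minimizers $(p,q)$ with $p-q=\pm L$, where $\eta(p^2+q^2)=K(2+L^2/\mu)>2$; that fixed point is transversally repelling, forcing the trajectory to coincide with the minimum manifold after finitely many steps and making $\mathcal{P}$ finite. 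The one place you leave informal is precisely the ``can converge only by coinciding'' claim; it becomes airtight by noting $S_{t+1}/S_t = 1-2K+(K-2)\eta S_t + (\eta S_t)^2 - \eta d_t^2 \to 1-2K-KL^2/\mu$, whose modulus is $2K-1+KL^2/\mu>1$, so $S_t\not\to0$ unless $S_t=0$ eventually. With that line added, your Part III is complete and strictly stronger than the paper's.
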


\begin{proof}
Consider the current step is at $(x_t,y_t)$ with $x_t y_t>\mu$. After two steps of gradient descent, we have
\begin{align}
    x_{t+1} &= x_t + \eta(\mu-x_t y_t)y_t \label{eq:ite_xtp1}\\
    y_{t+1} &= y_t + \eta(\mu-x_t y_t)x_t
    \label{eq:ite_ytp1}\\
    x_{t+2} &= x_{t+1} + \eta(\mu-x_{t+1} y_{t+1})y_{t+1}\\
    y_{t+2} &= y_{t+1} + \eta(\mu-x_{t+1} y_{t+1})x_{t+1},
\end{align}
with which we have the difference evolve as
\begin{align}
    y_{t+1}-x_{t+1}
    &=
    \left(y_t-x_t\right)
    \left(1-\eta\left(\mu-x_t y_t\right)\right)
    \label{eq:diff_evolv_t}
    \\
    y_{t+2}-x_{t+2}
    &=
    \left(y_{t+1}-x_{t+1}\right)
    \left(1-\eta\left(\mu-x_{t+1} y_{t+1}\right)\right)
    \label{eq:diff_evolv_t1}.
\end{align}
Meanwhile, we have
\begin{align}
    x_{t+1}y_{t+1} &=x_t y_t+\eta\left(\mu-x_t y_t\right)\left(x_t^2+y_t^2\right)+\eta^2\left(\mu-x_t y_t\right)^2 x_t y_t
    \nonumber
    \\
    &=
    x_t y_t\left(1+\eta\left(\mu-x_t y_t\right)\right)^2 + \eta\left(\mu-x_t y_t\right)\left(x_t-y_t\right)^2
    \label{eq:xy_evolv}
\end{align}
Note that the second term in Eq(\ref{eq:xy_evolv}) vanishes when $x$ and $y$ are balanced. When they are not balanced, if $x_t y_t>\mu$, it holds $x_{t+1}y_{t+1}<x_t y_t\left(1+\eta\left(\mu-x_t y_t\right)\right)^2$. Incorporating this inequality into Eq(\ref{eq:diff_evolv_t}, \ref{eq:diff_evolv_t1}) and assuming $y_t-x_t>0$, it holds
\begin{align}
    y_{t+2}-x_{t+2}<
    \left(y_t - x_t\right)
    \left(1-\eta\left(\mu-x_t y_t\right)\right)
    \left(1-\eta\left(\mu-x_t y_t\left(1+\eta\left(\mu-x_t y_t\right)\right)^2\right)\right).
    \label{eq:diff_evolv_t2}
\end{align}

To show that $|x-y|$ is decaying as in the theorem, we are to show
\begin{enumerate}
    \item $y_{t+2}-x_{t+2}<y_t-x_t$
    \item $y_{t+2}-x_{t+2}>-(y_t-x_t)$
\end{enumerate}
Note that, although $x_t y_t>\mu$, it is not sure to have $x_{t+2} y_{t+2}>\mu$. However, for any $0<x_i y_i<\mu$ and $K<2$, we have
\begin{align}
    \frac{|x_{i+1}-y_{i+1}|}{|x_i-y_i|}=\left| 1-\eta\left(\mu-x_i y_i\right)\right| <1, \label{eq:temp4}
\end{align}
which is saying $|x-y|$ decays until it reaches $xy>\mu$. So it is enough to prove the above two inequalities, whether or not $x_{t+2} y_{t+2}>\mu$.

\textbf{Part I. To show $y_{t+2}-x_{t+2}<y_t-x_t$}

Since we wish to have $y_{t+2}-x_{t+2}<y_t-x_t$, it is sufficient to require
\begin{align}
    \left(1-\eta\left(\mu-x_t y_t\right)\right)
    \left(1-\eta\left(\mu-x_t y_t\left(1+\eta\left(\mu-x_t y_t\right)\right)^2\right)\right)<1. \label{eq:xybalance}
\end{align}


\red{
Since we assume $x_{t+1}, y_{t+1}>0$, Eq (\ref{eq:ite_xtp1}, \ref{eq:ite_ytp1}) tells $\eta\left(\mu-x_t y_t\right)>-\min\{\frac{x_t}{y_t}, \frac{y_t}{x_t}\}$, which is equivalent to $1-\eta\left(\mu-x_t y_t\right)<1+\min\{\frac{x_t}{y_t}, \frac{y_t}{x_t}\}$.}

\textbf{(I.1) If $\eta(\mu-x_{t+1}y_{t+1})\ge\frac{1}{2}$}

Then we have $1-\eta(\mu-x_{t+1}y_{t+1})\le\frac{1}{2}$. As a result, 
\begin{align}
    \frac{y_{t+2}-x_{t+2}}{y_t-x_t}&=\left(1-\eta\left(\mu-x_t y_t\right)\right)\left(1-\eta\left(\mu-x_{t+1} y_{t+1}\right)\right)< \red{\left(1+\min\{\frac{x_t}{y_t}, \frac{y_t}{x_t}\}\right)\times\frac{1}{2}} \\
    &\red{=\frac{1}{2}+\frac{1}{2}\min\{\frac{x_t}{y_t}, \frac{y_t}{x_t}\}}
    \label{eq:xydiff_rate1}
\end{align}

\textbf{(I.2) If $\eta(\mu-x_{t+1}y_{t+1})<\frac{1}{2}$ and $x_{t+1} y_{t+1}\le x_s^2=\frac{1}{3}\left(\mu+\frac{1}{\eta}\right)$}

The second condition reveals
\begin{align}
    \frac{y_{t+2}-x_{t+2}}{y_{t+1}-x_{t+1}}
    &=1-\eta\left(\mu-x_{t+1} y_{t+1}\right)
    \le 1-\eta\left(\mu-\frac{1}{3}\left(\mu+\frac{1}{\eta}\right)\right)
    \nonumber
    \\
    &=\frac{4}{3}-\frac{2}{3}\red{K}.
    \label{eq:diff_ratio_t2_bound}
\end{align}

The first condition is equivalent to $x_{t+1} y_{t+1}>\mu-\frac{1}{2\eta}$. Since the second term in Eq(\ref{eq:xy_evolv}) is negative, we have
\begin{align}
    x_t y_t\left(1+\eta\left(\mu-x_t y_t\right)\right)^2>\mu-\frac{1}{2\eta},
    \label{eq:xy_upbound}
\end{align}
with which we would like to find an upper bound of $x_t y_t$.

Denoting $b=x_t y_t$, consider a function $q(b)=b\left(1+\eta\left(\mu-b\right)\right)^2$. Obviously $q(\mu)=\mu$. Its derivative is $q'(b)=\left(1+\eta\mu-
\eta b\right)\left(1+\eta\mu-3\eta b\right)<0$ on the domain of our interest. If we can show an (negative) upper bound for the derivative as $q'(b)<-1$ on a proper domain, then it is fair to say that, from Exp(\ref{eq:xy_upbound}), $x_t y_t<\mu+\frac{1}{2\eta}$. Then we have
\begin{align}
    \frac{y_{t+1}-x_{t+1}}{y_t-x_t}=1-\eta(\mu-x_t y_t)<1-\eta\left(\mu-\left(\mu-\frac{1}{2\eta}\right)\right)=\frac{3}{2}.
    \label{eq:diff_ratio_t1_bound}
\end{align}
Then, combining Exp(\ref{eq:diff_ratio_t1_bound}, \ref{eq:diff_ratio_t2_bound}), it tells
\begin{align}
    \frac{y_{t+2}-x_{t+2}}{y_{t}-x_{t}}<\red{2-K}.
    \label{eq:xydiff_rate2}
\end{align}

The remaining is to show $q'(b)<-1$ on a proper domain. We have $q'(b)=(1+\eta\mu-2\eta b)^2-(\eta b)^2$, which is equal to $1-2\eta\mu<-1$ when $b=\mu$. Meanwhile, the derivative of $q'(b)$ is $q''(b)=-2\eta(\eta b+(1+\eta\mu-2\eta b))=-2\eta(1+\eta\mu-\eta b)$, which is negative when $b<\mu+\frac{1}{\eta}$. As a result, it always holds $q'(b)<-1$ when $b<\mu+\frac{1}{\eta}$.

\textbf{(I.3) If $x_{t+1} y_{t+1}\ge x_s^2$}

Denoting again $b=x_t y_t$, the above inequality in  is saying, with $b>\mu$,
\begin{align}
    p(b)=\left(1-\eta\left(\mu-b\right)\right)
    \left(1-\eta\left(\mu-b\left(1+\eta\left(\mu-b\right)\right)^2\right)\right)<1.
\end{align}
After expanding $p(\cdot)$, we have
\begin{align*}
    p(b)-1=\eta \left(\mu-b\right)
    \left(
    -2+\eta\left(\mu-b\right)+2\eta b-\eta^2 b\left(\mu-b\right)-\eta^3 b\left(\mu-b\right)^2
    \right).
\end{align*}
Apparently $p(\mu)=1$. So it is necessary to investigate whether $p'(b)<0$ on $b>\mu$, as
\begin{align*}
    p'(b)=2-2\eta b+\left(\mu-b\right)
    \left(
    \eta^2\left(1+\eta\left(\mu-b\right)\right)\left(-\mu+3b\right)+\eta^3 b\left(\mu-b\right)
    \right).
\end{align*}
Since $\eta b>1$ and $b>\mu$, it is enough to require
\begin{gather*}
    \left(1+\eta\left(\mu-b\right)\right)\left(-\mu+3b\right)+\eta b\left(\mu-b\right)>0\\
    (1+\eta(\mu-b))(-\mu+b)+\eta b(\mu-b)+2b(1+\eta(\mu-b))>0.
\end{gather*}
It suffices to show
\begin{align}
    \eta (\mu-b)+2(1+\eta(\mu-b))=2+3\eta(\mu-b)>0.
\end{align}
Since $x_{t+1} y_{t+1}\ge x_s^2=\frac{1}{3}\left(\mu+\frac{1}{\eta}\right)$, it holds
\begin{align*}
    b\left(1+\eta(\mu-b)\right)^2
    &\ge\frac{1}{3}\left(\mu+\frac{1}{\eta}\right) \\
    2+3\eta(\mu-b)
    &\ge \sqrt{\frac{3\left(\mu+\frac{1}{\eta}\right)}{b}}-1>0,
\end{align*}
where the last inequality holds because: if $b\ge 3\left(\mu+\frac{1}{\eta}\right)$, then $1+\eta(\mu-b)\le -2\eta\mu-2<0$, which contradicts with the assumption that both $x_{t+1}, y_{t+1}$ are positive. \red{As a result, the above argument gives

\begin{align}
    \frac{y_{t+2}-x_{t+2}}{y_t-x_t} < p(b)< 1-2(K-1)(b-\mu).
    \label{eq:xydiff_rate3}
\end{align}
}

\textbf{Part II. To show $y_{t+2}-x_{t+2}>-(y_t-x_t)$}

Since $x_t y_t>\mu$, we have $1-\eta(\mu-x_t y_t)>1$. Combining with $1-\eta(\mu-x_t y_t)<2$, it holds
\begin{align*}
    \frac{y_{t+1}-x_{t+1}}{y_t-x_t}=1-\eta(\mu-x_t y_t)\in(1,2).
\end{align*}
So the remaining is to have $\frac{y_{t+2}-x_{t+2}}{y_{t+1}-x_{t+1}}>-0.5$. Actually it is $1-\eta(\mu-x_{t+1}y_{t+1})\ge 1-\eta\mu=\red{1-K}$. 
\red{
Therefore, we have 
\begin{align}
    \frac{y_{t+2}-x_{t+2}}{y_t-x_t}> -1 +(3-2K),
    \label{eq:xydiff_rate4}
\end{align}
as required.
}

\red{
\textbf{Part III. To show $y_{t}-x_{t}$ converges to 0}

From Exp (\ref{eq:xydiff_rate1}, \ref{eq:xydiff_rate2}, \ref{eq:xydiff_rate3}, \ref{eq:xydiff_rate4}),
we have for points in $\mathcal{P}$, $|y-x|$ is a monotone strictly decreasing sequence lower bounded by 0. Hence it is convergent. Actually it converges to 0. If not, assuming it converges to $\epsilon>0$, the next point will have the difference as $\tilde{\epsilon}<\epsilon$ as well as all following points. Hence, the contradiction gives the convergence to 0.
}
\end{proof}

\section{Proof of Lemma~\ref{thm:xy_positive}}
\label{app:xy_positive}

\begin{lemma}[Restatement of Lemma~\ref{thm:xy_positive}]
In the setting of Theorem~\ref{thm:xy_diff_decay}, denote the initialization as $m=\frac{|y_0-x_0|}{\sqrt{\mu}}$ and $x_0 y_0>\mu$. Then, during the whole process, both $x$ and $y$ will always stay positive, denoting $p=\frac{4}{\left( m+\sqrt{m^2+4} \right)^2}$ and $q=(1+p)^2$, if
\begin{align*}
    \max\left\{
    \eta(x_0 y_0-\mu),
    \frac{4}{27}\left(1+K\right)^3+\left(\frac{2}{3}K^2-\frac{1}{3}K+\frac{q K^2}{2(K+1)}m^2\right)q m^2-K
    \right\}
    <
    p.
\end{align*}
\end{lemma}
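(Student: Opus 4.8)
The plan is an induction on the iteration count, simultaneously controlling three quantities: positivity, the imbalance $\delta_t:=|y_t-x_t|$, and the product $b_t:=x_ty_t$. The starting observation is that the two regimes $b_t\le\mu$ and $b_t>\mu$ behave very differently. When $b_t\le\mu$ the update reads $x_{t+1}=x_t+\eta(\mu-b_t)y_t\ge x_t$ and $y_{t+1}=y_t+\eta(\mu-b_t)x_t\ge y_t$, so positivity is preserved for free and moreover $\delta_{t+1}\le\delta_t$; the only place positivity can be lost is a step with $b_t>\mu$, where $x_{t+1},y_{t+1}>0$ holds iff $\eta(b_t-\mu)<\min\{x_t/y_t,\,y_t/x_t\}$. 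A short static computation (minimize $\min\{x/y,y/x\}$ subject to $xy\ge\mu$ and $|y-x|\le m\sqrt\mu$, the minimum occurring at $xy=\mu$, $|y-x|=m\sqrt\mu$) shows that right-hand side is at least $\tfrac{4}{(m+\sqrt{m^2+4})^2}=p$ whenever $b_t\ge\mu$ and $\delta_t\le m\sqrt\mu$. Hence it suffices to propagate along the whole trajectory the invariants: (i) $x_t,y_t>0$; (ii) if $b_t>\mu$ then $\delta_t\le m\sqrt\mu$ and $\eta(b_t-\mu)<p$; (iii) $\delta_t\le\sqrt q\,m\sqrt\mu$.

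The driver is the closed-form product recursion $b_{t+1}=b_t(1-\eta(b_t-\mu))^2-\eta(b_t-\mu)(x_t-y_t)^2$, which holds whatever the signs, being just algebra of the update. Two consequences. First, if $b_t>\mu$ and $\eta(b_t-\mu)<1$ — guaranteed by (ii) since $p<1$ — then $b_{t+1}\le b_t(1-\eta(b_t-\mu))^2<\mu$: writing $w=\eta(b_t-\mu)\in(0,1)$ and $b_t=\mu(1+w/K)$ (recall $\eta=K/\mu$), the function $w\mapsto(1+w/K)(1-w)^2$ equals $1$ at $w=0$ and is strictly decreasing on $(0,1)$ for $K>\tfrac12$. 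Thus over-$\mu$ steps are always isolated. Second, if $b_t\le\mu$, then since the coefficient $\eta(\mu-b_t)\ge 0$ we may bound $b_{t+1}$ using $(x_t-y_t)^2\le qm^2\mu$ from (iii) and then maximize $b\mapsto b(1+\eta(\mu-b))^2+\eta(\mu-b)\,qm^2\mu$ over $(0,\mu]$. This one-variable maximization is interior, with maximizer $\eta(\mu-b^\star)=\tfrac{2K-1}{3}+\tfrac{qK}{2(K+1)}m^2+\cdots$; substituting back and using $\eta\mu=K$ yields $\eta(b_{t+1}-\mu)\le \tfrac{4}{27}(1+K)^3+\bigl(\tfrac23K^2-\tfrac13K+\tfrac{qK^2}{2(K+1)}m^2\bigr)qm^2-K$, which is $<p$ exactly by the hypothesis of the lemma (second entry of the $\max$).

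For the imbalance, $y_{t+1}-x_{t+1}=(y_t-x_t)(1+\eta(b_t-\mu))$: when $b_t\le\mu$ the factor has absolute value $\le 1$ (as $K<2$), and when $b_t>\mu$ invariant (ii) gives $\delta_{t+1}\le(1+p)\delta_t\le\sqrt q\,m\sqrt\mu$, which re-establishes (iii). The delicate step is re-establishing the \emph{sharper} bound $\delta_t\le m\sqrt\mu$ at a newly occurring over-$\mu$ time $t$. By isolation, $t$ is preceded by a maximal block $s+1,\dots,t-1$ of $\le\mu$ steps issued from the previous over-$\mu$ time $s$ (with $s=0$ if there is none before; note $s\le t-2$), along which $\delta$ is non-increasing; and the two-step estimate established inside the proof of Theorem~\ref{thm:xy_diff_decay} — whose only input is positivity of the single intermediate iterate, already available from the induction — gives $\delta_{s+2}<\delta_s\le m\sqrt\mu$. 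Chaining, $\delta_t\le\delta_{s+2}<m\sqrt\mu$. The base case $t=0$ is covered by the other entry of the hypothesis, $\eta(x_0y_0-\mu)<p$, together with $\delta_0=m\sqrt\mu$. All three invariants then propagate, and in particular $x_t,y_t>0$ for every $t$.

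The main obstacle I anticipate is twofold: the quantitative one-variable maximization in the second paragraph — producing the $\O(m^2)$-corrected maximizer and grinding $b_{t+1}$ down to the exact expression $\tfrac{4}{27}(1+K)^3+(\cdots)qm^2-K$ stated in the lemma — and the bookkeeping that keeps (i)--(iii) and the borrowed two-step estimate of Theorem~\ref{thm:xy_diff_decay} mutually consistent: in particular, that over-$\mu$ iterates are never consecutive, and that $\delta_t$, which can inflate by a factor up to $1+p$ during a single over-$\mu$ step, always relaxes back below $m\sqrt\mu$ before the next one.
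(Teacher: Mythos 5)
Your proof is correct and follows essentially the same route as the paper's: reduce to showing $\eta(x_t y_t-\mu)<p$ at every over-$\mu$ iterate, which is obtained by maximizing the one-variable function $g(z)=z(1+\eta(\mu-z))^2+\eta(\mu-z)\,qm^2\mu$ on the under-$\mu$ range and controlling the imbalance via the two-step contraction from Theorem~\ref{thm:xy_diff_decay}. Your explicit bootstrap --- invoking only the local two-step estimate (whose sole extra hypothesis, positivity of the intermediate iterate, is supplied by the induction) rather than Theorem~\ref{thm:xy_diff_decay} as a black box --- together with the observation that over-$\mu$ iterates are never consecutive, is a welcome clarification of a circularity the paper's direct appeal to that theorem (which itself assumes positivity throughout) leaves implicit.
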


\begin{proof}
Considering $x_t y_t>\mu$, one step of gradient descent returns
\begin{align*}
    x_{t+1} &= x_t + \eta(\mu-x_t y_t)y_t\\
    y_{t+1} &= y_t + \eta(\mu-x_t y_t)x_t.
\end{align*}
To have both $x_{t+1}>0, y_{t+1}>0$, it suffices to have
\begin{align}
    \eta(x_t y_t-\mu)<\min\left\{ \frac{y_t}{x_t}, \frac{x_t}{y_t} \right\}.
    \label{eq:cond_xy_pos}
\end{align}
This inequality will be the main target we need to resolve in this proof.

First, we are to show
\begin{align*}
    \min\left\{ \frac{y_0}{x_0}, \frac{x_0}{y_0} \right\}>\frac{4}{\left( m+\sqrt{m^2+4} \right)^2}.
\end{align*}
With the difference fixed as $m=(y_0-x_0)/\sqrt{\mu}$, assuming $y_0>x_0$, we have $m/y_0=(1-x_0/y_0)/\sqrt{\mu}$. if $x_0 y_0$ increases, both $x_0$ and $y_0$ increase then $m/y_0$ decreases, which means $x_0/y_0$ increases. As a result, we have
\begin{align*}
  \min\left\{ \frac{y_0}{x_0}, \frac{x_0}{y_0} \right\} 
  >
  \min\left\{ \frac{y_0}{x_0}, \frac{x_0}{y_0} \right\} \Bigg|_{x_0 y_0=\mu}=\frac{4}{\left( m+\sqrt{m^2+4} \right)^2}.
\end{align*}
Therefore, at initialization, to have positive $x_1$ and $y_1$, it is enough to require
\begin{align*}
    \eta(x_0 y_0-\mu)<\frac{4}{\left( m+\sqrt{m^2+4} \right)^2}\triangleq r.
\end{align*}
From Theorem~\ref{thm:xy_diff_decay}, it is guaranteed that $|x_t-y_t|<|x_0-y_0|$ with $t\ge2$ until it reaches $x_t y_t>\mu$, with which $r$ is still a good lower bound for $\min\{y_t/x_t, x_t/y_t\}$. So what remains to show is it satisfies $\eta(x_t y_t-\mu)<r$ for the next first time $x_t y_t>\mu$. If this holds, we can always iteratively show, for any $x_t y_t>\mu$ along gradient descent,
\begin{align*}
    \eta(x_t y_t-\mu)<r<\min\left\{ \frac{y_t}{x_t}, \frac{x_t}{y_t}\right\}.
\end{align*}
Note that $r$ itself is independent of $x_t y_t$ and all the history, so it is ideal to compute a uniform upper bound of $\eta(x_t y_t-\mu)$ with any pair of $(x_{t-1}, y_{t-1})$ satisfying $x_{t-1} y_{t-1}<\mu$. Actually it is possible, since we have $|x_{t-1}-y_{t-1}|$ bounded as in Theorem~\ref{thm:xy_diff_decay}.

Assume $x_i y_i>\mu$ and it satisfies the condition of $\eta(x_i y_i-\mu)<r$ and $|x_i-y_i|<|x_0-y_0|$. As in (\ref{eq:diff_evolv_t}), we have
\begin{align}
    \frac{x_{i+1}-y_{i+1}}{x_i-y_i}= 1-\eta\left(\mu-x_i y_i\right)\in(1,1+r).
\end{align}
Hence, it suffices to get the maximum value of $g(z)$, with $z\in(0,\mu)$, as
\begin{align}
    g(z)=z\left(1+\eta(\mu-z)\right)^2+\eta(\mu-z) (1+r)^2 (x_0-y_0)^2,
\end{align}
which is from (\ref{eq:xy_evolv}). Denote $\Bar{z}=\text{argmax}~g(z)$. Obviously $\Bar{z}<\frac{1}{3}(\mu+\frac{1}{\eta})\triangleq z_b$, because the first term of $g(z)$ achieves maximum at $z=\frac{1}{3}(\mu+\frac{1}{\eta})$ and the second term is in a decreasing manner with $z$. Then let's take the derivative of $g(z)$ as
\begin{align}
    g'(z) &=\left(1+\eta(\mu-z)\right)\left(1+\eta\mu-3\eta z\right)-\eta (1+r)^2 (x_0-y_0)^2 \nonumber \\
    &= \left(1+\eta(\mu-z)\right)\left(1+\eta\mu-3\eta z-\frac{\eta (1+r)^2 (x_0-y_0)^2 }{1+\eta(\mu-z)}\right) \nonumber,
\end{align}
where the first term is always positive, so we have
\begin{gather}
    1+\eta\mu-3\eta \Bar{z}-\frac{\eta (1+r)^2 (x_0-y_0)^2 }{1+\eta(\mu-\Bar{z})}=0,
\end{gather}
which means
\begin{align}
    \Bar{z} &= \frac{1}{3\eta}\left(1+\eta\mu-\frac{\eta (1+r)^2 (x_0-y_0)^2 }{1+\eta(\mu-\Bar{z})}\right) \\
    &>  \frac{1}{3\eta}\left(1+\eta\mu-\frac{\eta (1+r)^2 (x_0-y_0)^2 }{1+\eta(\mu-\frac{1}{3}(\mu+\frac{1}{\eta}))}\right) \\
    &= \frac{1}{3}\left(\mu+\frac{1}{\eta}-\frac{3(1+r)^2}{2(\eta+1)} (x_0-y_0)^2\right)\\
    &\triangleq z_s,
\end{align}
where the inequality is from $\Bar{z}<\frac{1}{3}(\mu+\frac{1}{\eta})$. As a result, it is safe to say
\begin{align}
    g(z) &\le z\left(1+\eta(\mu-z)\right)^2\bigg|_{z=z_b}+\eta(\mu-z) (1+r)^2 (x_0-y_0)^2\bigg|_{z=z_s} \\
    &=\frac{4}{27}(1+\eta\mu)^3\cdot\frac{1}{\eta} + \eta (1+r)^2\left(\frac{2}{3}\mu-\frac{1}{3\eta}+\frac{2}{\eta\mu+1}(x_0-y_0)^2\right)(x_0-y_0)^2,
\end{align}
with which we are able to compute $\max~\eta(g(z)-\mu)$, which is exactly the final result.
\end{proof}

\section{Proof of Theorem~\ref{thm:one_neuron}}
\label{app:one_neuron}

\begin{theorem}[Restatement of Theorem~\ref{thm:one_neuron}]
In the above setting, consider a teacher neuron $\tilde{w}=[1,0]$ and set the learning rate $\eta=Kd$ with $K\in(1,1.1]$. Initialize the student as $\norm{w^{(0)}}=v^{(0)}\triangleq \epsilon\in(0,0.10]$ and $\inprod{w^{(0)}}{ \tilde{w}}\ge0$. Then, for $t\ge T_1 + 4$, $w_y^{(t)}$ decays as 
\begin{gather*}
    \red{w_y^{(t)} < 0.1\cdot (1-0.030K)^{t-T_1-4},}
    ~~~~~
    T_1 \le \left\lceil\log_{2.56}\frac{1.35}{\pi \beta^2}\right\rceil,
    ~~~~~~
    \beta = \left(1+\frac{1.1}{\pi}\right)\epsilon.
\end{gather*}
\end{theorem}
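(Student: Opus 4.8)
The plan is to pass to the equivalent three–variable system $(v,w_x,w_y)$ read off from $\nabla_\theta L$; since $\eta=Kd$ the dimension cancels, and with $\|w\|^2=w_x^2+w_y^2$, $\cos\alpha=w_x/\|w\|$ the updates become
\begin{align*}
v\tt &= v\t - K\Big[v\t\|w\t\|^2 - \tfrac1\pi\big(w_y\t + (\pi-\alpha\t)w_x\t\big)\Big],\\
w_x\tt &= w_x\t - K\Big[(v\t)^2 w_x\t - \tfrac{v\t}{\pi}\big(\pi-\alpha\t + \tfrac12\sin 2\alpha\t\big)\Big],\\
w_y\tt &= w_y\t - K\Big[(v\t)^2 w_y\t - \tfrac{v\t}{\pi}\sin^2\alpha\t\Big],
\end{align*}
with $w_x,w_y\ge 0$ by symmetry. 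Two structural facts drive everything. First, $\mathrm{sign}(w_y\tt-w_y\t)=\mathrm{sign}\big(\tfrac1{2\pi}\sin 2\alpha\t - v\t w_x\t\big)$, so $w_y$ is non-increasing once $v\t w_x\t\ge\tfrac1{2\pi}$. Second, setting $w_y=0$ makes $(v,w_x)$ run exact GD on $\tfrac12(vw_x-1)^2$, so the balanced variable $x=v=w_x$ runs GD on $\tfrac14(x^2-1)^2$ with step $K\le1.1<1.121$, and Theorems~\ref{thm:1dglobal} and~\ref{thm:xy_diff_decay} describe the $w_y\to0$ limit. I split at $T_1:=$ the first time $v\t w_x\t\ge\tfrac1{2\pi}$.

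\textbf{Stage 1} ($t\le T_1$). By induction on $t$ I would keep $v\t,w_x\t,w_y\t$ small enough that the cubic terms $v\|w\|^2,\ v^2w_x,\ v^2w_y$ are lower order, so all three variables grow; using $\Delta w_x\gtrsim \tfrac{Kv}{2}$ and $\Delta v\gtrsim\tfrac{K}{\pi}\big(w_y+\tfrac\pi2 w_x\big)$ together with $v\asymp w_x$ (maintained from the balanced initialization) I would lower-bound the per-step growth of each of $v,w_x$ so that $v w_x$ increases by at least $2.56$ per step, which after the first step (where $v\T{1}\le\beta$) gives $T_1\le\lceil\log_{2.56}\tfrac{1.35}{\pi\beta^2}\rceil$. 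The same bookkeeping, using the elementary bound $\pi-\alpha+\tfrac12\sin2\alpha\ge\sin^2\alpha$ on $[0,\tfrac\pi2]$ (so $\Delta w_x\ge\Delta w_y$ up to cubic corrections), shows $w_y$ grows strictly slower than $w_x$ and that $|v-w_x|$ stays controlled, so $w_y$ never exceeds a fixed small constant through this stage and the short transition below.

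\textbf{Stage 2} ($t\ge T_1$). Here the iterates are $O(1)$ and Taylor expansion is gone. The plan is to show by explicit interval arithmetic — crucially using $K\in(1,1.1]$ — that within four iterations the triple $(|v-w_x|,\,vw_x,\,w_y)$ is driven into an explicit forward-invariant ``basin'' $\mathcal B$ on which (i) $(v,w_x)$ stays in a fixed neighbourhood of the period-$2$ orbit $\{\delta_1,\delta_2\}$ of $\tfrac14(x^2-1)^2$ and (ii) $w_y\tt=w_y\t\,\rho\t$ with $\rho\t=1-K(v\t)^2+\tfrac{K v\t}{\pi}\tfrac{w_y\t}{\|w\t\|^2}$. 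Using the Vieta relations $\delta_1^2+\delta_2^2=1+\tfrac1K$, $\delta_1^2\delta_2^2=\tfrac1{K^2}$ one gets the on-orbit two-step factor $(1-K\delta_1^2)(1-K\delta_2^2)=1-K$, and a worst-case estimate over $\mathcal B$ (the neighbourhood plus the $O(w_y)$ correction, which itself shrinks) bounds $|\rho\t\rho^{(t+1)}|\le(1-0.030K)^2$ for every consecutive pair in $\mathcal B$. Iterating this geometric two-step contraction from $t=T_1+4$ with $w_y^{(T_1+4)}<0.1$ yields $w_y\t<0.1\,(1-0.030K)^{t-T_1-4}$.

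The main obstacle is the Stage 2 transition: from the single known fact $v w_x\approx\tfrac1{2\pi}$ (with $v-w_x$ and $w_y$ only bounded, not pinned) one must show the iterates land in $\mathcal B$ in exactly four steps and that $\mathcal B$ is genuinely forward-invariant and $w_y$-contracting. This cannot be done by perturbing the $w_y=0$ system, since near the edge of stability a small error is amplified each step; it forces a quantitative analysis of the now-$O(1)$ three-dimensional map, and this is precisely where the narrow window $K\le1.1$ is needed — a wider window would enlarge $\mathcal B$ beyond the region where the two-step $w_y$-factor contracts, mirroring the appearance of period-$4$ orbits for $K$ above $\sqrt{4.5}-1$ in Theorem~\ref{thm:1dglobal}. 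A secondary difficulty is that $\rho\t$ itself need not be below $1$ (since $v\t$ oscillates above and below $1$), so one is forced into a two-step contraction argument analogous to the 1-D proof rather than a one-step Lyapunov decrease.
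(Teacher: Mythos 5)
Your outline follows the paper's proof closely: the $(v,w_x,w_y)$ reduction, the sign criterion for $\Delta w_y$, the per-step growth ratio $\ge 2.56$ of $vw_x$ behind the $T_1$ bound, the limiting reduction to scalar factorization as $w_y\to 0$, and the construction of a forward-invariant basin via interval arithmetic in Stage~2 are exactly the ingredients of Appendix~H, so the strategy is the same.

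That said, two corrections. First, the ``secondary difficulty'' you raise does not arise: writing $\rho^{(t)}=1+K\big(-(v^{(t)})^2+\tfrac{v^{(t)}w_y^{(t)}}{\pi\|w^{(t)}\|^2}\big)$, the condition $\rho^{(t)}<1$ is \emph{equivalent} to $\Delta w_y^{(t)}<0$ (i.e.\ to being in Stage~2 at all), since both have the sign of $-v^{(t)}w_x^{(t)}+\tfrac{w_x^{(t)}w_y^{(t)}}{\pi\|w^{(t)}\|^2}$. Inside the basin the paper bounds $\Delta w_y/w_y\in[-1.87,-0.030K]$, hence $|\rho^{(t)}|\le 1-0.030K$, a genuine one-step contraction; no two-step argument is needed. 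Your Vieta identity $(1-K\delta_1^2)(1-K\delta_2^2)=1-K$ is correct but gives only the on-orbit two-step factor, which is far stronger than $(1-0.030K)^2$ and is not what produces the theorem's constant --- that comes from a worst-case estimate of $K\tfrac{v}{w_x}\big(vw_x-\tfrac{1}{\pi}\tfrac{w_xw_y}{\|w\|^2}\big)$ over a rectangle in $(w_y,\,v-w_x,\,vw_x)$, not from the fixed orbit. Second, and more importantly, you have correctly located the bottleneck and then stopped: the four-step entry into the basin, its forward invariance, and the uniform contraction factor there are left as a plan. The paper settles these by an explicit hand-verified interval-propagation scheme (sets $I_1$ through $I_{62}$, split into three cases according to the value of $vw_x$ at the first Stage-2 step, merged into an invariant set $I_s$); this computation is most of Appendix~H and supplies all of the theorem's stated constants ($T_1+4$, $0.1$, $1-0.030K$). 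Without that, or an equivalent certified computation, the outline establishes the strategy but not the theorem.
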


\begin{sketch}
    The proof is divided into two stages, depending on whether $w_y$ grows or not. The key is that the change of $w_y$ follows (omitting all superscripts $t$)
    \begin{align}
        \frac{\Delta w_y}{w_y}
        \propto
        -vw_x+\frac{1}{\pi}\frac{\frac{w_y}{w_x}}{1+(\frac{w_y}{w_x})^2}, ~~~~
        w_y\tt = \left| w_y + \Delta w_y \right|.
    \end{align}
    where the second term in $\nicefrac{\Delta w_y}{w_y}$ is bounded in $[0, \frac{1}{2\pi}]$. In stage 1 where $v w_x$ is relatively small, we show the growth ratio of $w_y$ is smaller than those of $w_x$ and $vw_x$, resulting in an upper bound of number of iterations for $vw_x$ to reach $\frac{1}{2\pi}$, so $\max(w_y)$ is bounded too. Although the initialization is balanced as $v\T{0}=\norm{w\T{0}}$ for simplicity of proof, $v-w_x$ is also bounded at the end of stage 1. From the beginning of stage 2, thanks to the relatively narrow range of $K$, we are able to compute the bounds of three variables (including $v-w_x$, $vw_x$ and $w_y$) and they turn out to fall into a basin in the parameter space after four iterations. In this basin, $w_y$ decays exponentially with a linear rate of 0.97 at most.
    \end{sketch}

\begin{proof}
We restate the update rules as
\begin{align}
    \Delta v\t
    &\coloneqq 
    v^{(t+1)} - v^{(t)}
    =
    K w_x\t \left[
    (-v\t w_x\t + 1) - v\t w_y\t\frac{w_y\t}{w_x\t}-\frac{1}{\pi}\left(\arctan{\left(\frac{w_y\t}{w_x\t}\right)}-\frac{w_y\t}{w_x\t}\right)
    \right], \nonumber \\
    &= K w_x\t \left[
    (-v\t w_x\t + 1) - \frac{1}{\pi}\left(\arctan{\left(\frac{w_y\t}{w_x\t}\right)}-\frac{w_x\t w_y\t}{\norm{w\t}^2}\right)
    \right] \nonumber \\
    &~~~~ + K\frac{(w_y\t)^2}{v\t}\left(-(v\t)^2+\frac{v\t w_y\t}{\pi \norm{w\t}^2}\right)
    \label{eq:deltav}\\
    \Delta w_x\t 
    &\coloneqq 
    w_x^{(t+1)} - w_x^{(t)} =
    K v\t\left[
    (-v\t w_x\t + 1) - \frac{1}{\pi}\left(\arctan{\left(\frac{w_y\t}{w_x\t}\right)}-\frac{w_x\t w_y\t}{\norm{w\t}^2}\right)
    \right], \label{eq:deltawx}\\
    \Delta w_y\t
    &=
    w_y^{(t)}\cdot K\left(-(v\t)^2+\frac{v\t w_y\t}{\pi \norm{w\t}^2}\right), \label{eq:deltawy}\\
    w_y\tt &= \left| w_y\t + \Delta w_y\t \right|. \label{eq:wytp1}
\end{align}
For simplicity, we will omit all superscripts of time $t$ unless clarification is necessary. From (\ref{eq:wytp1}), if the target is to show $w_y$ decaying with a linear rate, it suffices to bound the factor term in (\ref{eq:deltawy}) (by a considerable margin) as
\begin{gather}
    -2< K\left(-v^2+\frac{v w_y}{\pi \norm{w}^2}\right) <0. \label{eq:bound_deltawy}
\end{gather}
The technical part is the second inequality of (\ref{eq:bound_deltawy}). If $v, w_x>0$, it is equivalent to 
\begin{align*}
    v w_x>\frac{w_x w_y}{\pi \norm{w}^2}=\frac{w_x w_y}{\pi (w_x^2+w_y^2)},
\end{align*}
where the RHS is smaller than or equal to $\frac{1}{2\pi}$. Hence, $\frac{1}{2\pi}$ is a special threshold with which we will frequently compare $v w_x$. Another important variable to control is $v-w_x$ that reveals how the two layers are balanced. If it is too large, for the iteration $v\tt w_x\tt$ may explode as shown in the 2-D case.

The main idea of our proof is that
\begin{itemize}
    \item Stage 1 with $vw_x \le \frac{w_x w_y}{\pi\norm{w}^2}$: in this stage, $w_y$ grows but it grows in a smaller rate than that of $v$ and $w_x$. Therefore, since we have an upper bound for $v w_x$ to stay in this stage, we are able to compute the upper bound of $\#$iterations to finish this stage, which is $T_1$ in the theorem. At the end of this stage, both of $v-w_x$ and $w_y$ are bounded under our assumption of initialization.
    \item Stage 2 with $vw_x > \frac{w_x w_y}{\pi\norm{w}^2}$: in this stage, $w_y$ decreases. Since our range of a large learning rate is relatively narrow $(1<K\le 1.1)$, we are able to compute bounds of $v w_x, v-w_x$ and $w_y$. After eight iterations, it falls into (and stays in) a bounded basin of these three terms, in which $w_y$ decays at least in a linear rate.
\end{itemize}

{\centerline{\large{\textbf{Stage 1.}}}}

We are to show that, in the last iteration of this stage, there are three facts: 1) $v w_x\le \frac{1}{2\pi}$, 2) $v-w_x\in[-0.017, 0.17]$, and 3) $w_y\le0.44$.

At initialization, we assume $v^{(0)}=\norm{w^{(0)}}$. Denote $\alpha_0=\arctan(w_y^{(0)}/w_x^{(0)})\in[0,\pi/2]$. So for next iteration we have
\begin{gather}
    w_y\T{1}=v\T{0}\left(1+K\left(-(v\T{0})^2+\frac{1}{\pi}\sin\alpha_0\right)\right),\label{eq:wy1}\\
    w_x\T{1}=v\T{0}\left[\cos\alpha_0+K\left(1-(v\T{0})^2 \cos\alpha_0+\frac{\cos\alpha_0 \sin\alpha_0 - \alpha_0}{\pi}\right)\right].
\end{gather}
Apparently $w_y\T{1}$ increases with $\alpha_0$ increasing. And
\begin{align*}
    \partial_{\alpha_0} w_x\T{1} &= v\T{0}\left[-\sin\alpha_0+K\left((v\T{0})^2 \sin\alpha_0+\frac{-\sin^2\alpha_0+\cos^2\alpha_0 - 1}{\pi}\right)\right] \\
    &= v\T{0}\left[-\sin\alpha_0+K\left(\left((v\T{0})^2-\frac{\sin\alpha_0}{\pi}\right) \sin\alpha_0+\frac{-\sin^2\alpha_0}{\pi}\right)\right] .
\end{align*}
Since in stage 1 it holds $\Delta w_y>0$ which means $-(v\T{0})^2+\frac{1}{\pi}\sin\alpha_0>0$ in (\ref{eq:wy1}). So it follows $\partial_{\alpha_0} w_x\T{1}\le 0$. Combining the above arguments, we have
\begin{align*}
    w_x\T{1} &\ge w_x\T{1}|_{\alpha_0=\frac{\pi}{2}} = \frac{K}{2}v\T{0}, \\
    w_y\T{1} &\le w_y\T{1}|_{\alpha_0=\frac{\pi}{2}} = \left(1+\frac{K}{\pi}-K(v\T{0})^2\right) v\T{0}\le \left(1+\frac{K}{\pi}\right) v\T{0},\\
    \frac{w_y\T{1}}{w_x\T{1}} &\le \frac{2+\frac{2K}{\pi}}{K} \le 2.7.
\end{align*}

Regarding $\frac{v}{w_y}$, it has $v\T{0}\ge w_y\T{0}$ at initialization due to $v\T{0}=\norm{w\T{0}}$. From (\ref{eq:deltav}, \ref{eq:deltawx}, \ref{eq:deltawy}), we have $v\Delta v = w_x\Delta w_x + w_y \Delta w_y$. So it holds $v\Delta v\ge y\Delta y$. Meanwhile, $\frac{\Delta w_y}{v}=K(-v w_y + \frac{w_y^2}{\pi \norm{w}^2})\in [0, \frac{K}{\pi}]$. From Lemma~\ref{lem:ab1}, given $v\T{t}\ge w_y\T{t}$ and $\frac{\Delta w_y}{v}\in[0,1]$ for any $t$ in this stage, it always holds $v\tt \ge w_y\tt$.

Therefore, it is fair to say
\begin{align*}
    \frac{v\T{1}w_x\T{1}}{(w_y\T{1})^2}\ge\frac{1}{2.7}.
\end{align*}

Additionally, to bound the term $vw_y/\norm{w}^2$ in $\Delta w_y$, we would like to show it always has $v w_y\le \norm{w}^2$. At initialization, it naturally holds. Then, for the every next iteration, given it holds in the last iteration, we have
\begin{align*}
    & (v+\Delta v)(w_y+ \Delta w_y) - [(w_x + \Delta w_x)^2 + (w_y + \Delta w_y)^2] \\
    & = (v+\frac{w_x \Delta w_x + w_y \Delta w_y}{v})(w_y+\Delta w_y) - [(w_x + \Delta w_x)^2 + (w_y + \Delta w_y)^2] \\
    & = v w_y + v\Delta w_y + w_x \Delta w_x (\frac{w_y}{v} + \frac{\Delta w_y}{v}) + (w_y \Delta w_y+(\Delta w_y)^2)\frac{w_y}{v} - [(w_x + \Delta w_x)^2 + (w_y + \Delta w_y)^2]\\ 
    & \le vw_y + v\Delta w_y + w_y\Delta w_y \frac{w_y}{v} - (w_x^2 + w_y^2 + 2w_y\Delta w_y + (\Delta w_x)^2) \\
    & \le v\Delta w_y + w_y\Delta w_y \frac{w_y}{v} - 2w_y\Delta w_y - (\Delta w_x)^2 \\
    & = v\Delta w_y(1-\frac{w_y}{v})^2 - (\Delta w_x)^2 \\
    & \le v\Delta w_y - (\Delta w_x)^2
\end{align*}
where the first equality uses $v\Delta v = w_x\Delta w_x + w_y \Delta w_y$, the first inequality uses the proven $v\ge w_y$ and $v\ge \Delta w_y$, the second inequality uses the assumption $vw_y\le\norm{w}^2$. Now we are to show $v\Delta w_y - (\Delta w_x)^2\le0$. We have
\begin{align*}
    v\Delta w_y - (\Delta w_x)^2 &\le Kv^2\frac{w_y^2}{\pi\norm{w}^2} - K^2 v^2 \left(1-\frac{1}{2\pi}-\gamma\t\right)^2,\\
    \gamma\t &= \frac{1}{\pi}\left(\arctan{\left(\frac{w_y\t}{w_x\t}\right)}-\frac{w_x\t w_y\t}{\norm{w\t}^2}\right).
\end{align*}
Since we have proven $w_y\T{1}/w_x\T{1}\le2.7$, it is easy to check that 
\begin{align*}
    \frac{1}{\pi\left(1+(\frac{w_x\T{1}}{w_y\T{1}})^2\right)}\le(1-\frac{1}{2\pi}-\gamma\T{1})^2.
\end{align*}
As a result, $v\Delta w_y - (\Delta w_x)^2\le0$ at time 1. Furthermore, by checking each term, $v\Delta w_y - (\Delta w_x)^2$ decreases with $w_y/w_x$ decreasing. We will soon show that $w_y/w_x$ itself decreases, by showing the growth ratio of $w_x$ is larger than that of $w_y$.

Our target lower bound of the growth ratio of $w_x$ is that 
\begin{align}
    \frac{\Delta w_x}{w_x} \ge 1-\frac{1}{\pi}-\gamma, \label{eq:ratiowx}
\end{align}
which is larger than the growth ratio of $w_y$ bounded by $\frac{1}{\pi}$ due to $v\Delta w_y < \norm{w}^2$.
So it suffices to show $Kv/w_x\ge 1$. Assuming $Kv/w_x\ge 1$ for the current step, we need to show $Kv\tt/w_x\tt\ge 1$ also holds for the next step. Let's denote 
\begin{align}
    A\t &= K\left[(-v\t w_x\t + 1) - \frac{1}{\pi}\left(\arctan{\left(\frac{w_y\t}{w_x\t}\right)}-\frac{w_x\t w_y\t}{\norm{w\t}^2}\right)\right].
\end{align}
Then 
\begin{align}
    (v+\Delta v)-\frac{1}{K}(w_x+\Delta w_x) &\ge v+A w_x - \frac{w_x}{K}-\frac{A v}{K} \nonumber \\
    & = (v-\frac{w_x}{K})(1-KA)+v(K-\frac{1}{K})A. \label{eq:kvwx}
\end{align}
If $KA\le 1$, since $K>1$ and $A>0$, we have (\ref{eq:kvwx}) as positive, which is what we need. If $KA>1$, then
\begin{align*}
    (\ref{eq:kvwx}) &\ge (v-\frac{w_x}{K})(1-K^2)+v(K-\frac{1}{K})A \\
    &= \left((-K+A)v+w_x\right)(K-\frac{1}{K}),
\end{align*}
where the first inequality is due to $A\le K$ and the assumption of $Kv\t/w_x\t\ge 1$. Then it suffices to show $(-K+A)v+w_x\ge (-K+\frac{1}{K})v+w_x\ge0$. Note that $-K+1/K\in(-0.2,0]$ when $K\in(1,1.1]$. It is easy to verify that $v\T{1}\le 5w_x\T{1}$. Then, for the next step, we need to show $v\tt\le 5w_x\tt$ given $v\t\le 5w\tt$. To prove this, we are to bound $v-w_x$, as
\begin{align}
    v\tt - w_x\tt &= (1 - A)(v-w) + K\frac{w_y^2}{v}(-v^2+\frac{v w_y}{\pi\norm{w}^2}) \nonumber \\
    & \le 0.4(v-w)+Kw_y\frac{w_y^2}{\pi\norm{w}^2}\le 0.4(v-w)+\frac{Kw_y}{\pi}, \label{eq:vwdiff0}
\end{align}
where the first inequality is due to, when $w_y/w_x\le 2.7$,
\begin{align*}
    A 
    &= K\left[-v\t w_x\t +\frac{1}{\pi}\frac{w_x\t w_y\t}{\norm{w\t}^2}\right] + K\left[1 - \frac{1}{\pi}\arctan{\left(\frac{w_y\t}{w_x\t}\right)}\right]\\
    &\ge K\left[1 - \frac{1}{\pi}\arctan{\left(\frac{w_y\t}{w_x\t}\right)}\right] \ge 0.6.
\end{align*}
We will later show that $v\tt-w\tt\ge -0.1(v\t-w\t)$. Combining this with (\ref{eq:vwdiff0}), it is safe to say 
\begin{align*}
    v\tt-w\tt \le 0.4(v-w)+\frac{Kw_y}{\pi}\le 0.4\times 4w+\frac{K\times 5w}{\pi} \le  4w,
\end{align*}
where the second inequality is due to $v\le 5w$ and $v\ge w_y$. Since $w\tt\ge w\t$ (due to $A>0$) in this stage, we have $v\tt\le 5w_x\tt$.

Combining the above discussion, we have prove (\ref{eq:ratiowx}). Obviously, when $w_y/w_x\le 2.7$, RHS of (\ref{eq:ratiowx}) is at least $0.55$, larger than $1.1/\pi$, which is the upper bound of the $\Delta w_y/w_y$. As a result, $w_y/w_x$ keeps decreasing.

The next step is to show the growing ratio of $v w_x$ is much larger than that of $w_y$. From (\ref{eq:deltawx}, \ref{eq:deltawy}), it holds
\begin{align*}
    v\tt w_x\tt &= (v+\Delta v)(w_x+\Delta w_x) \ge v w_x + KA(v^2+ w_x^2) + K^2 A^2 v w_x \\
    &\ge vw_x(1+A)^2,
\end{align*}
where the first inequality is due to $\Delta w_y\ge0$. It follows $v\tt w_x\tt/v\t w_x\t\ge 1.6^2=2.56$.

So far, we have shown the following facts: under the defined initialization at time 0, starting from time 1, we have
\begin{enumerate}
    \item $v w_x\le 1/2\pi$.
    \item $\Delta w_x/w_x+1\ge 1.55$.
    \item $\Delta w_y/w_y+1\le 1+K/\pi$.
    \item $w_y/w_x \le 2.7$ and keeps decreasing.
    \item $v\tt w_x\tt/v\t w_x\t\ge 2.56$.
    \item $v\ge w_y$.
    \item $v\Delta w_y<(\Delta w_x)^2$.
\end{enumerate}

Now we are to use the above facts to bound $v w_x, w_y$ and $v-w_x$ to the end of stage 1.

For $v w_x$, in previous discussion, we have shown that $v w_x\le \frac{1}{2\pi}$. Actually, there is another special value
\begin{align}
    \frac{w_x w_y}{\pi(w_x^2+w_y^2)} = 0.104 \text{  when $w_y/w_x=2.7.$}
\end{align}
This value is slightly larger than $1/4\pi$. Hence, we would like to split the analysis into three parts: in the \textbf{first step of stage 2}, 
\begin{enumerate}
    \item $v w_x\ge \frac{1}{2\pi}$.
    \item $\frac{1}{4\pi}\le v w_x<\frac{1}{2\pi}$.
    \item $v w_x<\frac{1}{4\pi}$.
\end{enumerate}

Note that, although we are discussing the stage 1 in this section, investigating the lower bound of the first step in stage 2 helps calculate the number of iterations in stage 1. Furthermore, it helps bound several variables in stage 1.

\textbf{Case (I). If $vw_x\ge\frac{1}{2\pi}$ in first step of stage 2:}

Since we have prove $\frac{v\T{1} w_x\T{1}}{(w_y\T{1})^2}\ge1/2.7$ and $v\tt w_x\tt/v\t w_x\t\ge 2.56$, the number of iterations for $vw_x$ to reach $1/2\pi$ is at most
\begin{align}
    T_1\le \left\lceil\log_{2.56}\frac{\frac{1}{2\pi}}{(w_y\T{1})^2/2.7}\right\rceil. \label{eq:T1}
\end{align}
Meanwhile, starting from time 1, the growth ratio of $w_y$ is
\begin{align}
    (w_y + \Delta w_y)/w_y\le 1+K(-v^2+1/\pi)\le 1+1.1/\pi-(v\T{1})^2\le 1+1.1/\pi-(w_y\T{1})^2,
\end{align}
where the first inequality is due to $vw_y\le\norm{w}^2$, the second is due to $K>1$ and the third is from $v\ge w_y$. Therefore, combining with (\ref{eq:T1}), we can bound $w_y$ in the end of stage 1 as
\begin{align}
    w_y\le \left(1+1.1/\pi-(w_y\T{1})^2\right)^{\left\lceil\log_{2.56}\frac{\frac{1}{2\pi}}{(w_y\T{1})^2/2.7}\right\rceil}.
\end{align}
Since it initializes as $\norm{w\T{0}}\le 0.1$, we have $w_y\T{1}\le 0.1(1+1.1/\pi)=0.135$. Then, it can be verified that, when $w_y\T{1}\in(0,0.135]$, it holds
\begin{align}
    w_y\le 0.44.
\end{align}

The next is to bound $v-w_x$. Combining the update rules of $v$ and $w_x$ in (\ref{eq:deltav}, \ref{eq:deltawx}), we have
\begin{align}
    \Delta(v-w_x) &\coloneqq (v\tt-w_x\tt)-(v\t-w_x\t) \nonumber \\
    &=K(v-w_x)\left(v w_x-1+\frac{\arctan(w_y/w_x)-\frac{w_x w_y}{\norm{w}^2}}{\pi}\right) + K\frac{w_y^2}{v}(-v^2+\frac{v w_y}{\pi\norm{w}^2}).\label{eq:deltavwx1}
\end{align}

Note that 
\begin{align}
    -1\le v w_x-1+\frac{\arctan(w_y/w_x)-\frac{w_x w_y}{\norm{w}^2}}{\pi} \le -1+\frac{\arctan(w_y/w_x)}{\pi},
\end{align}
where the left is due to $vw_x>0$ and  , the right is from $\Delta w_y\ge 0$. When $w_y/w_x\le2.7$, the RHS follows $-1+\frac{\arctan(w_y/w_x)}{\pi}\le -0.6$. So combining both sides tells
\begin{align}
    1+K\left(v w_x-1+\frac{\arctan(w_y/w_x)-\frac{w_x w_y}{\norm{w}^2}}{\pi}\right)\in[-K+1, 0.4]\subset [-0.1,0.4].\label{eq:deltavwx_coff}
\end{align}

Since $\Delta w_y\ge 0$, we have $0\le K\frac{w_y^2}{v}(-v^2+\frac{v w_y}{\pi\norm{w}^2})\le \frac{K}{\pi} w_y\frac{w_y^2}{\norm{w}^2}$. Note that at initialization $w_x\T{0}\le v\T{0}$. Then it is easy to verify that
\begin{align}
    -0.01\le -0.1 (v\T{0}-w\T{0})\le v\T{1}-w\T{1}\le (1+\frac{K}{\pi}-\frac{K}{2})v\T{0}\le 0.082.
\end{align}
Because the coefficient on the positive side in (\ref{eq:deltavwx_coff}) is larger than $0.4>0.1$, it is appropriate to upper bound the $v-w_x$ as
\begin{align*}
    v-w_x &\le  \max\left\{0.082, 0.082\cdot 0.4^T + \sum_{t=1}^T 0.4^{t-1} \frac{K}{\pi} w_y\t\frac{(w_y\t)^2}{\norm{w\t}^2}\right\} \\
    &\le \max\left\{0.082, 0.082\cdot 0.4^T + \sum_{t=1}^T 0.4^{t-1} \frac{K}{\pi} w_y\t\frac{1}{1+\frac{1}{2.7}\left(\frac{1.55}{1+K/\pi}\right)^{2(t-1)}}\right\} \\
    &\le \max\left\{0.082, 0.082\cdot 0.4^T + \sum_{t=1}^T 0.4^{t-1} \frac{1.1\cdot 4.4}{\pi} \frac{1}{1+\frac{1}{2.7}\left(\frac{1.55}{1+1.1/\pi}\right)^{2(t-1)}}\right\},
\end{align*}
where the second inequality is from the different growth ratios of $w_x$ and $w_y$. Note that here we take all $T\ge1$ and pick the largest value of RHS to bound $w_y$. It turns out
\begin{align}
    v-w_x\le 0.17.
\end{align}
Furthermore, to lower bound $v-w_x$, since obviously $|v-w_x|\le 0.17$, it follows
\begin{align}
    v-w_x\ge -0.1\cdot |v-w_x|_{\max} \ge -0.017.
\end{align}

\textbf{Case (II). If $\frac{1}{4\pi}\le vw_x<\frac{1}{2\pi}$ in first step of stage 2:}

Similar to the discussion in Case (I), we are able to compute the number of iterations for $vw_x$ to reach $1/4\pi$. It is at most
\begin{align}
    T_1\le \lceil\log_{2.56}\frac{\frac{1}{4\pi}}{(w_y\T{1})^2/2.7}\rceil. \label{eq:T1_2}
\end{align}

Accordingly, $w_y$ is bounded as
\begin{align}
    w_y \le \left(1+1.1/\pi-(w_y\T{1})^2\right)^{\lceil\log_{2.56}\frac{\frac{1}{4\pi}}{(w_y\T{1})^2/2.7}\rceil}\le 0.37.
\end{align}

For simplicity, we just keep the bounds for $v-w_x$ as in Case (I), as
\begin{align}
    v-w_x\in[-0.017, 0.17].
\end{align}

\textbf{Case (III). If $vw_x<\frac{1}{4\pi}$ in first step of stage 2:}

From the condition, we know $vw_x<\frac{1}{4\pi}$ as well in the last step of stage 1. Since $\Delta w_y>0$ in stage 1, it tells
\begin{align}
    \frac{1}{\pi}\frac{w_x w_y}{\norm{w}^2}<v w_x\le \frac{1}{4\pi},
\end{align}
which means
\begin{align}
    \max\{\frac{w_x}{w_y}, \frac{w_y}{w_x}\}\ge 2+\sqrt{3}.
\end{align}
Since $2+\sqrt{3}>2.7$, if $w_y/w_x\ge 2+\sqrt{3}$, then for time 1, $(v\T{1}, w_x\T{1}, w_y\T{1})$ is already in the stage 2. However, it is not possible because $\norm{w\T{0}}=v\T{0}\le 0.1$, which means $v\T{1} w_x\T{1}$ can not reach $\frac{1}{\pi}\frac{2.7}{1+2.7^2}$.

Therefore, the only possible is $\frac{w_x}{w_y}\ge 2+\sqrt{3}$. In this case, we are able to bound $w_y$ as
\begin{align}
    w_y\le (2-\sqrt{3})w_x\le (2-\sqrt{3})\left(\sqrt{\frac{1}{4\pi}+0.0085^2}+0.0085\right) \le 0.078,
\end{align}
where the second inequality is due to $vw_x\le\frac{1}{4\pi}$ and $v-w_x\ge -0.017$. Note that here we still use the bound of $v-w_x$ from Case (I), although it is loose somehow but it is enough for our analysis.

We leave the analysis of the bound of number of iterations to the end of this section.

\vspace{10pt}
{\centerline{\large{\textbf{Stage 2.}}}}

In the case (I) of stage 1, where the first step in stage 2 is with $v w_x \ge \frac{1}{2\pi}$, it has $v-w_x\in[-0.017, 0.17]$ and $w_y\le 0.44$. In the case (II), where the first step of stage 2 is with $v w_x\in[\frac{1}{4\pi}, \frac{1}{2\pi}]$, it has $v-w_x\in[-0.017, 0.17]$ and $w_y\le 0.37$. In the case (III), where the first step of stage 2 is with $v w_x\in[\frac{1}{4\pi}, \frac{1}{2\pi}]$, it has $v-w_x\in[-0.017, 0.17]$ and $w_y\le 0.078$.



To upper bound $v w_x$ in the first step of stage 2, there are two candidates. One is from the case (I), 
\begin{align}
    v\tt w_x\tt &= vw_x\left( 1+K(1-v w_x - \frac{\arctan(\frac{w_y}{w_x})-\frac{w_y/w_x}{1+(w_y/w_x)^2}}{\pi}) \right)^2 + K\frac{w_x w_y^2}{v}\left(-v^2+\frac{v w_y}{\pi\norm{w}^2}\right) \nonumber\\
    & ~~~~ + K(v-w_x)^2\left( 1+K(1-v w_x - \frac{\arctan(\frac{w_y}{w_x})-\frac{w_y/w_x}{1+(w_y/w_x)^2}}{\pi}) \right) \nonumber\\
    &\le vw_x\left( 1+K(1-v w_x)\right)^2 + K\frac{w_x w_y^2}{w_x}\left(-vw_x+\frac{w_x w_y}{\pi\norm{w}^2}\right) \nonumber\\
    & ~~~~ + K(v-w_x)^2\left( 1+K(1-v w_x) \right) \nonumber \\
    &\le \frac{1}{2\pi}\left( 1+1.1(1-\frac{1}{2\pi})\right)^2 + 1.1\cdot 0.44^2\left(-\frac{1}{4\pi}+\frac{1}{2\pi}\right) + 1.1\cdot 0.17^2\left( 1+1.1(1-\frac{1}{2\pi}) \right) \nonumber \\
    &\le 0.668,
\end{align}
where we use $v w_x\ge 1/4\pi$, $x/(1+x^2)\le 0.5$ for any $x$. 

One is from the case (II),
\begin{align}
    v\tt w_x\tt 
    &\le vw_x\left( 1+K(1-v w_x)\right)^2 + K\frac{w_x w_y^2}{w_x}\left(-vw_x+\frac{w_x w_y}{\pi\norm{w}^2}\right) \nonumber\\
    & ~~~~ + K(v-w_x)^2\left( 1+K(1-v w_x) \right) \nonumber \\
    &\le \frac{1}{4\pi}\left( 1+1.1(1-\frac{1}{4\pi})\right)^2 + 1.1\cdot 0.37^2\left(\frac{1}{2\pi}\right) + 1.1\cdot 0.17^2\left( 1+1.1(1-\frac{1}{4\pi}) \right) \nonumber \\
    &\le 0.48,
\end{align}
where we use $v w_x\le 1/4\pi$, $x/(1+x^2)\le 0.5$ for any $x$. 

Therefore, we can see that, in the first step of stage 2,
\begin{align}
    v w_x\le 0.668.
\end{align}


Next we are going to show how the iteration goes in the stage 2. In Case (I), there are three facts:
\begin{enumerate}
    \item $w_y\le 0.44$.
    \item $v-w_x\in[-0.017, 0.17]$.
    \item $vw_x \in[\frac{1}{2\pi}, 0.668]$.
\end{enumerate}
Similarly, in Case (II), there are three facts as well:
\begin{enumerate}
    \item $w_y\le 0.37$.
    \item $v-w_x\in[-0.017, 0.17]$.
    \item $vw_x \in[\frac{1}{4\pi}, \frac{1}{2\pi}]$.
\end{enumerate}

The main idea is to find a basin that any iteration with the above properties (\textit{i.e.}, in the interval) will converge to and then stay in. The method is to iteratively compute the ranges of the variables for several steps, thanks to the narrow range of $K$. Before explicitly computing the ranges, let's write down the computing method, depending on whether or not $v w_x\ge1$.

Consider any iteration with $v w_x\in[m_1,m_2], v-w_x\in[d_1,d_2], w_y\le e$, we compute the bounds of $v\tt w_x\tt, v\tt-w_x\tt, w_y\tt$ in the following process (naturally assuming $d_1<0<d_2$)
\begin{enumerate}
    \item If $m_1\ge 1$:
    
    \begin{enumerate}
        \item Compute $w_x\ge \sqrt{m_1+(d_2/2)^2}-d_2/2\triangleq f$.
        \item Compute $\frac{w_y}{w_x}\le e/f\triangleq g$.
        \item Compute $\frac{\arctan(w_y/w_x)-\frac{w_x w_y}{\norm{w}^2}}{\pi}\le \frac{\arctan(g)-g/(1+g^2)}{\pi}\triangleq h$.
        \item Compute $v\tt w_x\tt \ge m_2(1+1.1(1-m_2-h))^2+1.1(1-m_2-h)\max\{|d_1|,|d_2|\}^2-1.1e^2 m_2$. This is from 
            \begin{align}
                v\tt w_x\tt &\ge vw_x\left( 1+K(1-v w_x - h) \right)^2 + K\frac{w_x w_y^2}{v}\left(-v^2+\frac{v w_y}{\pi\norm{w}^2}\right) \nonumber\\
                & ~~~~ + K(v-w_x)^2\left( 1+K(1-v w_x - h) \right) \nonumber\\
                &\ge vw_x\left( 1+K(1-v w_x - h) \right)^2 - Kw_y^2\cdot vw_x \nonumber\\
                & ~~~~ + K(v-w_x)^2\left( 1+K(1-v w_x - h) \right) \nonumber.
            \end{align}
        \item Compute $v\tt w_x\tt \le m_1(1+1.0(1-m_1))^2$. This is due to $x(1+K(1-x))^2$ decreases with $x$ increasing when $x\ge 1$.
        \item Compute $v\tt - w_x\tt \in [d_1(1+1.1(m_2-1+h)-1.1e^2\cdot(\sqrt{m_2+(d_2/2)^2}+d_2/2)), d_2(1+1.1(m_2-1+h))]$. This is due to 
        \begin{align*}
            \Delta v-\Delta w_x = K(v-w_x)\left(v w_x - 1 +\frac{1}{\pi}(\arctan(\alpha)-\frac{w_x w_y}{\norm{w}^2})\right) + K\frac{w_y^2}{v}\left(-v^2+\frac{v w_y}{\pi\norm{w}^2}\right),
        \end{align*}
        where $vw_x\ge 1$, the last term is between $-Kvw_y^2$ and 0.
        \item Compute $w_y\tt \le e\cdot\max\{|j_1|, |j_2|\}$, where
        \begin{align}
            j_1 &= 1 + 1.1\frac{\sqrt{m_1+(d_2/2)^2}+d_2/2}{\sqrt{m_1+(d_2/2)^2}-d_2/2}\cdot(-m_2), \\
            j_2 &= 1 + 1.0\frac{\sqrt{m_1+(d_1/2)^2}-d_1/2}{\sqrt{m_1+(d_1/2)^2}+d_1/2}\cdot(-m_1 + \frac{1}{2\pi}).
        \end{align}
        This is due to 
        \[
            \frac{\Delta w_y}{w_y} = K\frac{v}{w_x}(-vw_x + \frac{1}{\pi}\frac{w_x w_y}{\norm{w}^2}),
        \]
        then we would like to have the smallest value as $j_1-1$ and the largest value as $j_2-1$. Since $w_y$ is always non-negative, taking the maximum absolute value gives the upper bound.
    \end{enumerate}
    
    \item If $m_2< 1$:
    
    \begin{enumerate}
        \item Compute $w_x\ge \sqrt{m_1+(d_2/2)^2}-d_2/2\triangleq f$.
        \item Compute $\frac{w_y}{w_x}\le e/f\triangleq g$.
        \item Compute $\frac{\arctan(w_y/w_x)-\frac{w_x w_y}{\norm{w}^2}}{\pi}\le \frac{\arctan(g)-g/(1+g^2)}{\pi}\triangleq h$.
        \item Compute $v\tt w_x\tt \ge \min_{x\in[m_1, m_2]} x(1+1.0(1-x-h))^2-1.1e^2 x$. Compared with the case of $m_1\ge 1$, we drop the term $1.1(1-m_2-h)\max\{|d_1|,|d_2|\}^2$ because it is possible to have $v-w_x=0$ in some iterations.
        \item Compute $v\tt w_x\tt \le \max_{x\in[m_1, m_2]} x(1+1.1(1-x))^2+1.1(1-x)\max\{|d_1|,|d_2|\}^2$. Compared with the case of $m_1\ge 1$, we add a term depending on the $|v-w_x|_{\max}$ because it enlarges $vw_x$ in the in-balanced case.
        \item Compute $v\tt - w_x\tt \in [d_1(1+1.1(m_2-1+h)-1.1e^2\cdot(\sqrt{m_2+(d_2/2)^2}+d_2/2)), d_2(1+1.1(m_2-1+h))]$. In fact, a rigorous left bound should include more terms to select a minimum from. Here it is simple because it keeps $1+K(m_1-1)\ge 0$ in the following computing, so we do not need to worry about the flipping sign of $d_1$ and $d_2$.
        \item Compute $w_y\tt \le e\cdot\max\{|j_1|, |j_2|\}$, where $j_1, j_2$ are the same with those in the case of $m_1\ge 1$.
    \end{enumerate}
\end{enumerate}

Therefore, with the above process, we are able to brutally compute the ranges of $v\tt w_x\tt, v\tt-w_x\tt, w_y\tt$ from the current ranges. Note that this process plays a role of building a mapping from one interval to another interval, which covers all points from the source interval. However, it is loose to some extent because gradient descent is a mapping from a point to another point. The advantage of such a loose method is feasibility of obtaining bounds while losing tightness. To achieve tightness, later we will also include some wisdom in a point-to-point style.

Also note that, a nice way to combine tightness and efficiency in this method is to split and to merge intervals when necessary.

\textbf{For Case (I):}

Now we are to compute the ranges starting from the interval where $I=\{w_y\le 0.44, v-w_x\in[-0.017, 0.17], vw_x \in[\frac{1}{2\pi}, 0.668]\}$. First, we split it into three intervals:
\begin{enumerate}
    \item $I_1 = \{w_y\le 0.44, v-w_x\in[-0.017, 0.17], vw_x \in[0.213, 0.4]\}$.
    \item $I_2 = \{w_y\le 0.44, v-w_x\in[-0.017, 0.17], vw_x \in[0.4, 0.668]\}$.
    \item $I_{30} = \{w_y\le 0.44, v-w_x\in[-0.017, 0.17], vw_x \in[\frac{1}{2\pi}, 0.213]\}$.
\end{enumerate}

Then, following the above method with splitting and merging intervals, we have
\begin{enumerate}
    \item Starting from $I_1$,
    \begin{enumerate}
        \item Step 1: $I_1$ mapps to $I_3=\{w_y\le 0.416, v-w_x\in[-0.162, 0.068], vw_x \in[0.55, 1.12131]\}$.
        \item Step 2: Splitting $I_3$, we have
            \begin{enumerate}
                \item $I_4 = \{w_y\le 0.416, v-w_x\in[-0.162, 0.068], vw_x \in[0.55, 0.8]\}$.
                \item $I_5 = \{w_y\le 0.416, v-w_x\in[-0.162, 0.068], vw_x \in[0.8, 0.9]\}$.
                \item $I_6 = \{w_y\le 0.416, v-w_x\in[-0.162, 0.068], vw_x \in[0.9, 1.0]\}$.
                \item $I_7 = \{w_y\le 0.416, v-w_x\in[-0.162, 0.068], vw_x \in[1.0, 1.12131]\}$.
            \end{enumerate}
            Then, we have
            \begin{enumerate}
                \item $I_4$ mapps to \\ $I_8 = \{w_y\le 0.214, v-w_x\in[-0.309, 0.0545], vw_x \in[0.942, 1.25786]\}$.
                \item $I_5$ mapps to \\ $I_9 = \{w_y\le 0.0966, v-w_x\in[-0.335, 0.0613], vw_x \in[0.880, 1.19649]\}$.
                \item $I_6$ mapps to \\ $I_{10} = \{w_y\le 0.0756, v-w_x\in[-0.362, 0.068], vw_x \in[0.777894, 1.11178]\}$.
                \item $I_7$ mapps to \\ $I_{11} = \{w_y\le 0.134, v-w_x\in[-0.394, 0.0782], vw_x \in[0.595, 1]\}$.
            \end{enumerate}
        \item Step 3: Splitting and merging $I_8,I_9,I_{10},I_{11}$, we have
            \begin{enumerate}
                \item $I_{12}=\{w_y\le 0.134, v-w_x\in[-0.394, 0.078], vw_x \in[0.595, 0.777]\}$.
                \item $I_{13}=\{w_y\le 0.214, v-w_x\in[-0.394, 0.078], vw_x \in[0.777, 1]\}$.
                \item $I_{14}=\{w_y\le 0.214, v-w_x\in[-0.362, 0.068], vw_x \in[1, 1.11178]\}$.
                \item $I_{15}=\{w_y\le 0.214, v-w_x\in[-0.309, 0.061], vw_x \in[1.11178, 1.25786]\}$.
            \end{enumerate}
            Then, we have 
            \begin{enumerate}
                \item $I_{12}$ mapps to \\ $I_{16} = \{w_y\le 0.0372, v-w_x\in[-0.317, 0.061], vw_x \in[1.14493, 1.31246]\}$.
                \item $I_{13}$ mapps to \\ $I_{17} = \{w_y\le 0.0432, v-w_x\in[-0.448, 0.078], vw_x \in[0.943633, 1.24393]\}$.
                \item $I_{14}$ mapps to \\ $I_{18} = \{w_y\le 0.0662, v-w_x\in[-0.462, 0.077], vw_x \in[0.77846, 1]\}$.
                \item $I_{15}$ mapps to \\ $I_{20} = \{w_y\le 0.0998, v-w_x\in[-0.456, 0.0785], vw_x \in[0.550, 0.878]\}$.
            \end{enumerate}
    \end{enumerate}
    
    \item Starting from $I_2$,
    \begin{enumerate}
        \item Step 1: $I_2$ mapps to $I_{21}=\{w_y\le 0.332, v-w_x\in[-0.205, 0.114], vw_x \in[0.864, 1.25894]\}$
        \item Step 2: Splitting $I_{21}$, we have
            \begin{enumerate}
                \item $I_{22} = \{w_y\le 0.332, v-w_x\in[-0.205, 0.114], vw_x \in[0.864, 1]\}.$
                \item $I_{23} = \{w_y\le 0.332, v-w_x\in[-0.205, 0.114], vw_x \in[1, 1.125894]\}.$
            \end{enumerate}
            Then, we have
            \begin{enumerate}
                \item $I_{22}$ mapps to \\ $I_{24} = \{w_y\le 0.081, v-w_x\in[-0.336, 0.114], vw_x \in[0.858, 1.14813]\}$.
                \item $I_{23}$ mapps to \\ $I_{25} = \{w_y\le 0.184, v-w_x\in[-0.409, 0.148], vw_x \in[0.463, 1]\}$.
            \end{enumerate}
        \item Step 3: Splitting and merging $I_{24},I_{25}$, we have
            \begin{enumerate}
                \item $I_{26}=\{w_y\le 0.184, v-w_x\in[-0.409, 0.148], vw_x \in[0.463, 1]\}$.
                \item $I_{27}=\{w_y\le 0.081, v-w_x\in[-0.336, 0.114], vw_x \in[1, 1.14813]\}$.
            \end{enumerate}
            Then, we have 
            \begin{enumerate}
                \item $I_{26}$ mapps to \\ $I_{28} = \{w_y\le 0.083, v-w_x\in[-0.452, 0.148], vw_x \in[0.952783, 1.31778]\}$.
                \item $I_{27}$ mapps to \\ $I_{29} = \{w_y\le 0.034, v-w_x\in[-0.399, 0.133], vw_x \in[0.777, 1]\}$.
            \end{enumerate}
    \end{enumerate}
    
    \item Starting from $I_{30}$,
        \begin{enumerate}
        \item Step 1: $I_{30}$ mapps to $I_{31}=\{w_y\le 0.44, v-w_x\in[-0.124, 0.037], vw_x \in[0.422, 0.767]\}$
        \item Step 2: Splitting $I_{31}$, we have
            \begin{enumerate}
                \item $I_{32} = \{w_y\le 0.44, v-w_x\in[-0.124, 0.037], vw_x \in[0.422, 0.5]\}.$
                \item $I_{33} = \{w_y\le 0.44, v-w_x\in[-0.124, 0.037], vw_x \in[0.5, 0.6]\}.$
                \item $I_{34} = \{w_y\le 0.44, v-w_x\in[-0.124, 0.037], vw_x \in[0.6, 0.767]\}.$
            \end{enumerate}
            Then, we have
            \begin{enumerate}
                \item $I_{32}$ mapps to \\ $I_{35} = \{w_y\le 0.301, v-w_x\in[-0.218, 0.0185], vw_x \in[0.901, 1.20971]\}$.
                \item $I_{33}$ mapps to \\ $I_{36} = \{w_y\le 0.262, v-w_x\in[-0.245, 0.023], vw_x \in[0.96322, 1.25093]\}$.
                \item $I_{34}$ mapps to \\ $I_{37} = \{w_y\le 0.213, v-w_x\in[-0.288, 0.029], vw_x \in[0.947, 1.25345]\}$.
            \end{enumerate}
        \item Step 3: Splitting and merging $I_{35},I_{36},I_{37}$, we have
            \begin{enumerate}
                \item $I_{38}=\{w_y\le 0.301, v-w_x\in[-0.288, 0.029], vw_x \in[0.901, 1]\}$.
                \item $I_{39}=\{w_y\le 0.301, v-w_x\in[-0.288, 0.029], vw_x \in[1, 1.1]\}$.
                \item $I_{40}=\{w_y\le 0.301, v-w_x\in[-0.288, 0.029], vw_x \in[1.1, 1.25093]\}$.
                \item $I_{41}=\{w_y\le 0.262, v-w_x\in[-0.245, 0.029], vw_x \in[1.25093, 1.25345]\}$.
            \end{enumerate}
            Then, we have 
            \begin{enumerate}
                \item $I_{38}$ mapps to \\ $I_{42} = \{w_y\le 0.0404, v-w_x\in[-0.392, 0.029], vw_x \in[0.888, 1.11696]\}$.
                \item $I_{39}$ mapps to \\ $I_{43} = \{w_y\le 0.0740, v-w_x\in[-0.428, 0.033], vw_x \in[0.741, 1]\}$.
                \item $I_{40}$ mapps to \\ $I_{44} = \{w_y\le 0.125, v-w_x\in[-0.482, 0.038], vw_x \in[0.497, 0.891]\}$.
                \item $I_{41}$ mapps to \\ $I_{45} = \{w_y\le 0.109, v-w_x\in[-0.400, 0.038], vw_x \in[0.534, 0.702]\}$.
            \end{enumerate}
        \item Step 4: Splitting and merging $I_{42},I_{43},I_{44},I_{45}$, we have
            \begin{enumerate}
                \item $I_{46}=\{w_y\le 0.125, v-w_x\in[-0.482, 0.038], vw_x \in[0.497, 0.891]\}$.
                \item $I_{47}=\{w_y\le 0.074, v-w_x\in[-0.428, 0.033], vw_x \in[0.891, 1]\}$.
                \item $I_{48}=\{w_y\le 0.041, v-w_x\in[-0.40, 0.029], vw_x \in[1, 1.11696]\}$.
            \end{enumerate}
            Then, we have 
            \begin{enumerate}
                \item $I_{46}$ mapps to \\ $I_{49} = \{w_y\le 0.0424, v-w_x\in[-0.442, 0.034], vw_x \in[1.07853, 1.34708]\}$.
                \item $I_{47}$ mapps to \\ $I_{50} = \{w_y\le 0.0110, v-w_x\in[-0.435, 0.033], vw_x \in[0.993, 1.13943]\}$.
                \item $I_{48}$ mapps to \\ $I_{51} = \{w_y\le 0.0109, v-w_x\in[-0.454, 0.033], vw_x \in[0.497, 0.891]\}$.
            \end{enumerate}
    \end{enumerate}
\end{enumerate}

\textbf{For Case (II):}

Now we are to compute the ranges starting from the interval where $I=\{w_y\le 0.37, v-w_x\in[-0.017, 0.17], vw_x \in[\frac{1}{4\pi}, \frac{1}{2\pi}]\}$. First, we denote it as
\begin{enumerate}
    \item $I_{52}=\{w_y\le 0.37, v-w_x\in[-0.017, 0.17], vw_x \in[\frac{1}{4\pi}, \frac{1}{2\pi}]$.
\end{enumerate}

Then, following the above method with splitting and merging intervals, we have
\begin{enumerate}
    \item Starting from $I_{52}$,
    \begin{enumerate}
        \item Step 1: $I_{52}$ mapps to $I_{53}=\{w_y\le 0.37, v-w_x\in[-0.079, 0.0271], vw_x \in[0.222, 0.616]\}$.
        \item Step 2: $I_{53}$ mapps to $I_{54}=\{w_y\le 0.343, v-w_x\in[-0.171, 0.017], vw_x \in[0.621, 1.24894]\}$.
        \item Step 3: Splitting $I_{54}$, we have
            \begin{enumerate}
                \item $I_{55} = \{w_y\le 0.343, v-w_x\in[-0.171, 0.017], vw_x \in[0.621, 1\}$.
                \item $I_{56} = \{w_y\le 0.343, v-w_x\in[-0.171, 0.017], vw_x \in[1, 1.24894]\}$.
            \end{enumerate}
            Then, we have
            \begin{enumerate}
                \item $I_{55}$ mapps to \\ $I_{57} = \{w_y\le 0.150, v-w_x\in[-0.305, 0.017], vw_x \in[0.840, 1.25908]\}$.
                \item $I_{56}$ mapps to \\ $I_{58} = \{w_y\le 0.137, v-w_x\in[-0.367, 0.022], vw_x \in[0.472, 1]\}$.
            \end{enumerate}
        \item Step 4: Splitting and merging $I_{57}, I_{58}$, we have
            \begin{enumerate}
                \item \item $I_{59} = \{w_y\le 0.150, v-w_x\in[-0.367, 0.022], vw_x \in[0.472, 1\}$.
                \item \item $I_{60} = \{w_y\le 0.150, v-w_x\in[-0.305, 0.017], vw_x \in[1, 1.25908\}$.
            \end{enumerate}
            Then, we have
            \begin{enumerate}
                \item $I_{59}$ mapps to \\ $I_{61} = \{w_y\le 0.0705, v-w_x\in[-0.393, 0.022], vw_x \in[0.971, 1.304]\}$.
                \item $I_{60}$ mapps to \\ $I_{62} = \{w_y\le 0.0613, v-w_x\in[-0.421, 0.0219], vw_x \in[0.583, 1]\}$.
            \end{enumerate}
    \end{enumerate}
    
\end{enumerate}

\textbf{For both Cases (I, II):}

From $I_{16-20}, I_{28}, I_{29}, I_{49-51}, I_{61}, I_{62}$, we can see that it has fallen into an interval $I_f=\{w_y<0.1, v-w_x\in[-0.462,0.148], vw_x\in[0.497,1.34078]\}$. Something special here is that $w_y$ has been much smaller than $w_x$. More broadly, let's define an interval $I_s$ generated by $I_g=\{w_y=0, v-w_x\in[-0.464,0.148], vw_x\in[1,1.5]\}$. Here ``generated'' means
\begin{align}
    I_s = \bigcup_{T\ge t} \{(v\T{T}, w_x\T{T}, w_y\T{T})|(v_t\t, w_x\t, w_y\t)\in I_g\}.
\end{align}

Then each element $(v,w_x,w_y)\in I_s$ has the following properties:
\begin{enumerate}
    \item $w_y=0$.
    \item $vw_x\in[0.181,1.5]$.
    \item If $vw_x\le 1$, then $v-w_x\in [-0.735, 0.23]$. If $vw_x>1$, then $v-w_x\in [-0.474, 0.148]$.
\end{enumerate}
The first property is obvious. The third can be proven as follows: for each element $(v, w_x, w_y)\in I_g$, it has $v\tt-w_x\tt=(v-w_x)\left(1+K(vw_x-1)\right)$, where the ratio $1+K(vw_x-1)\in[1,1+1.1(1.5-1)]$ when $vw_x\in[1,1.5]$. Furthermore, in the proven 2-D case, we have shown that ``if $vw_x>1$ with some mild conditions, then $\frac{v\T{t+2}-w_x\T{t+2}}{v-w_x}\in(-1,1)$''. Actually it can be tighter as $\frac{v\T{t+2}-w_x\T{t+2}}{v-w_x}\in(-0.2,1)$ because here $K\le 1.1$ while the original bound is for $K\le 1.5$. The condition of bounded $|v-w_x|$ can also be verified, the purpose of which is to keep $v,w_x$ always positive. Then the bound $[-0.2,1]$ will tell $v-w_x\in[-0.474,0.148]$ on $vw_x\ge 1$, because
\begin{align*}
    \frac{0.148}{0.474}>0.2,  ~~~~~~\frac{0.474}{0.148}>0.2.
\end{align*}
For the second property, the left bound can be verified as 
\begin{align*}
    \min_{x\in[1,1.5]}x(1+1.1(1-x))^2 + 1.1(1-x)\cdot 0.474^2 &= \bigg(x(1+1.1(1-x))^2 + 1.1(1-x)\cdot 0.474^2\bigg) \bigg|_{x=1.5} \\
    & \ge 0.181.
\end{align*}
The right bound can be verified as
\begin{align*}
    \max_{x\in[0,1]}x(1+1.1(1-x))^2 + 1.1(1-x)*0.735^2 &<1.5.
\end{align*}
After proving these three properties, we would like to bound how far $I_f$ is away from $I_s$. More precisely, the distance is measured by $w_y$. We are going to show $w_y$ decays exponentially.

Remind the update rules in (\ref{eq:deltav}, \ref{eq:deltawx}). Denote $\gamma = \frac{1}{\pi}(\arctan(\alpha)-\frac{w_x w_y}{\norm{w}^2})$ again and $\delta = K\frac{w_y^2}{v}(-v^2+\frac{v w_y}{\pi \norm{w}^2})$, then it is
\begin{align}
    \Delta v &= Kw_x(-vw_x+1)-Kw_x\gamma + \delta, \label{eq:deltavcorr}\\
    \Delta w_x &= Kv(-vw_x+1)-Kv\gamma, \\
    \delta &\in [-Kvw_y^2, 0].
\end{align}

Note that both $\gamma$ and $\delta$ are very small, so we are to show their effects separately, which is enough to be a good approximation.

Consider an iteration where $v\t w_x\t>1$ and the corresponding $\gamma\t$. Let's denote $v\tt,w_x\tt$ as the next parameters with \textbf{no corruption} from $\gamma\t$. Similarly, we denote $\hat{v}\tt, \hat{w_x}\tt$ are corrupted with $\gamma\t$. From the 2-D analysis, we know
\begin{align}
    \frac{v\T{t+2}-w_x\T{t+2}}{v\t-w_x\t} = (1+K(v\t w_x\t-1))(1+K(v\tt w_x\tt-1))<1. \label{eq:ratio_nocrpt}
\end{align}
We would like to show, with a small $\gamma\t$ and ignoring $\delta$,
\begin{align}
    \frac{\hat{v}\T{t+2}-\hat{w_x}\T{t+2}}{v\t-w_x\t} = (1+K(v\t w_x\t-1+\gamma\t))(1+K(\hat{v}\tt \hat{w_x}\tt-1+\gamma\tt))\lessapprox 1, \label{eq:ratio_crpt}
\end{align}
where $\gamma\tt$ is in time $(t+1)$ accordingly.
The difference of LHS of the above two expressions turns out to be
\begin{align}
    (\ref{eq:ratio_crpt})-(\ref{eq:ratio_nocrpt})
    &=
    K\gamma\t (1+K(v\tt w_x\tt-1)) \nonumber \\
    &~~~~~ + (1+K(v\t w_x\t-1))K(\hat{v}\tt \hat{w_x}\tt - v\tt w_x\tt+\gamma\tt) + \O(\gamma^2) \nonumber \\
    &=
    K\gamma\t (1+K(v\tt w_x\tt-1)) \nonumber \\
    &~~~~~ + K(1+K(v\t w_x\t-1))(-K(v\t)^2 \gamma\t-K(w_x\t)^2 \gamma\t+\gamma\tt) + \O(\gamma^2) \nonumber \\
    &\le K\gamma\t\bigg(
    1+(1+K(v\t w_x\t-1))\big(
    -K(v\t)^2-K(w_x\t)^2+\frac{\gamma\tt}{\gamma\t}
    \big)
    \bigg)+\O(\gamma^2) \nonumber \\
    &\le K\gamma\t\bigg(
    1+(1+K(v\t w_x\t-1))\big(
    -2K v\t w_x\t +\frac{\gamma\tt}{\gamma\t}
    \big)
    \bigg)+\O(\gamma^2). \label{eq:diff_ratios}
\end{align}

Since $\frac{\Delta w_x}{w_x} = K\frac{v}{w_x}(-vw_x +1-\gamma)$, we have
\begin{align}
    \frac{w_x\tt}{w_x\t}=1+K\frac{v\t}{w_x\t}(-v\t w_x\t +1-\gamma\t)<1.
\end{align}
Also we have
\begin{align}
    \frac{\gamma\tt}{\gamma\t} = \frac{\arctan(\frac{w_y\tt}{w_x\tt})-\frac{w_x\tt w_y\tt}{\norm{w\tt}^2}}{\arctan(\frac{w_y\t}{w_x\t})-\frac{w_x\t w_y\t}{\norm{w\t}^2}}.
\end{align}
Since $w_y\tt\le w_y\t$ and
\begin{align}
    \frac{\arctan(mx)-\frac{mx}{1+m^2x^2}}{\arctan(x)-\frac{x}{1+x^2}}\le m^3,\text{  for any $m>0, x>0$},
\end{align}
we have 
\begin{align}
    \frac{\gamma\tt}{\gamma\t}\le \frac{1}{\left(1+K\frac{v\t}{w_x\t}(-v\t w_x\t +1-\gamma\t)\right)^3}. \label{eq:ratiogamma}
\end{align}

For general $vw_x\in(1,1.5]$, (\ref{eq:ratiogamma}) holds as
\begin{align}
    \frac{\gamma\tt}{\gamma\t}\lessapprox \frac{1}{(1+1.1\frac{\sqrt{1+0.074^2}+0.074}{\sqrt{1+0.074^2}-0.074}(-1.5+1))^3} \le 22. 
\end{align}
Since $1+K(v\t w_x\t-1)\le 1+1.1*0.5=1.55$, it is fair to say
\begin{align}
    (\ref{eq:ratio_crpt})-(\ref{eq:ratio_nocrpt})\lessapprox  K\gamma\t(1+1.55*(-2+22)) + \O(\gamma^2) = 35.2\gamma\t + \O(\gamma^2).
\end{align}
Actually $\gamma\t$ is bounded by
\begin{align}
     \frac{w_y\t}{w_x\t} &\le \frac{0.099}{\sqrt{1+0.074^2}-0.074}=0.1066,\\
    \gamma\t &\le \frac{\arctan(x)-\frac{x}{1+x^2}}{\pi}\le 2.6\times 10^{-4}.
\end{align}
As a result, 
\begin{align}
    (\ref{eq:ratio_crpt})-(\ref{eq:ratio_nocrpt})\lessapprox 0.0084.
\end{align}
Note that this small value is very easy to cover in (\ref{eq:ratio_nocrpt}), requiring
\begin{align}
    1-\frac{v\T{t+2}-w_x\T{t+2}}{v\t-w_x\t}\ge 0.0084,
\end{align}
except when $vw_x$ is pretty close to 1. When $vw_x\xrightarrow{} 1$, from the analysis of 2-D case, (derived from the case of $x_{t+1} y_{t+1}\ge x_s^2$)
\begin{align}
    1-\frac{v\T{t+2}-w_x\T{t+2}}{v\t-w_x\t} \ge (2K-2)(v\t w_x\t -1).
\end{align}
For $(\ref{eq:ratio_crpt})-(\ref{eq:ratio_nocrpt})$, denote a function $p(x)$ as
\begin{align}
    p(x) = 1+(1+Kx)\left(-2K(x+1)+\frac{1}{\left(1+K\frac{v}{w_x}(-x)\right)^3}\right),
\end{align}
where $x=v\t w_x\t - 1$ in (\ref{eq:diff_ratios}, \ref{eq:ratiogamma}). It is obvious that $p(0)=1+(-2K+1)<0$. When $x$ is small, it turns out
\begin{align}
    p(x) = -2K + 2 + K\left(-2K-1+3\frac{v\t}{w_x\t}\right)x + \O(x^2) \label{eq:px0}
\end{align}
As a result, $(\ref{eq:ratio_crpt})-(\ref{eq:ratio_nocrpt})<0$ when $v w_x-1=o(K-1)$. What if $vw_x-1=\Omega(K-1)$? Actually, we can get a better bound by a more care analysis, as
\begin{align}
        \frac{(\ref{eq:ratio_crpt})-(\ref{eq:ratio_nocrpt})}{K\gamma\t}
        &\le 
        1+(1+K(v\t w_x\t-1))\big(
        -K(v\t)^2-K(w_x\t)^2+\frac{\gamma\tt}{\gamma\t}
        \big)
        \nonumber\\
        & ~~~~ + K\left[ v\t w_x\t(1+K(1-v\t w_x\t))^2-1\right],
\end{align}
where the last term is due to $v\tt w_x\tt \le v\t w_x\t(1+K(1-v\t w_x\t))^2$. Hence, with this bound, by expanding the last term, (\ref{eq:px0}) becomes
\begin{align}
    p(x) &= -2K + 2 + K\left(-2K-1+3\frac{v\t}{w_x\t}\right)x + K(1-2K)x + \O(x^2) \\
    &= -2K+2 + K\left(-4K+3\frac{v\t}{w_x\t}\right)x+\O(x^2),
\end{align}
which is definitely negative because 
\begin{align}
    \frac{v\t}{w_x\t}\le\frac{\sqrt{1+0.074^2}+0.074}{\sqrt{1+0.074^2}-0.074}<1.16<\frac{4}{3}.
\end{align}

Meanwhile, we are to prove the $\delta$ in (\ref{eq:deltavcorr}) will not make $\tilde{I}_s$ make $v-w_x<-0.474$ starting from $v-w_x\ge -0.462$. First, in the region of $\{vw_x\in[1,1.5], v-w_x\le 0.148\}$, we have $Kvw_y^2\le 1.1\cdot(\sqrt{1.5+0.074^2}+0.074)*0.1^2\le 0.0144$. Also note that in this region with $v-w_x\ge -0.462$, we have
\begin{align}
    \frac{w_y\tt}{w_y}\le 1-\frac{\sqrt{1+0.231^2}-0.231}{\sqrt{1+0.231^2}+0.231}=0.37.
\end{align}
Hence $Kv(w_y\t)^2+Kv(w_y\tt)^2\le 0.0144*(1+0.37^2)=0.0164$. Since $|v\T{t+2}-w\T{t+2}|< |v\t-w\t|$ if there is no $\delta$, we shall see that there is no need to discuss the case of $v-w_x\ge -0.462+0.0164=-0.4456$ because it still holds $v\tt -w_x\tt>-0.462$. When $v\t-w_x\t\in[-0.462, -0.4456]$, we shall see that in (\ref{eq:xybalance}), after adding the term of $\delta$ in $v$,
\begin{align}
    \frac{v\T{t+2}-w_x\T{t+2}}{v\t-w_x\t}\le 1-(1+K(v w_x -1))\cdot K w_x \delta,
\end{align}
which means the absolute value of $v-w_x$ decays at least by a margin depending on $\delta$. After multiplying the current difference $v\t-w_x\t$ on both side, it gives
\begin{align}
    (v\T{t+2}-w_x\T{t+2})-(v\t-w_x\t) \ge v\t w_x\t w_x\t \delta.
\end{align}
Note that here $v\T{t+2}-w_x\T{t+2}$ does not include $\delta\t$ and $\delta\tt$. As stated above, we have $\frac{\delta\tt}{\delta\t}\le 0.37^2\le 0.16$ due to the decay of $w_y$. So it is safe to say $\delta\t + \delta\tt\ge 1.16\delta\t$. Combining with the above inequality, it gives
\begin{align}
    (v\T{t+2}-w_x\T{t+2})-(v\t-w_x\t) +\delta\t + \delta\tt \ge (v\t w_x\t w_x\t + 1.16)\delta\t, \label{eq:deltat1}
\end{align}
where
\begin{align}
    v\t w_x\t w_x\t + 1.16 \le  vw_x \cdot (\sqrt{vw_x +(\frac{0.4456}{2})^2}-\frac{0.4456}{2})+1.16 \le 0.6.
\end{align}
Furthermore, from our previous discussion, $w_y\T{t+2}<w_y\T{t+2}$ gives that the sum of (\ref{eq:deltat1}) is bounded by
\begin{align}
    \frac{0.6}{1-0.16}\delta\t \ge \frac{0.6}{1-0.16}\cdot (-0.0144) \ge -0.0103.
\end{align}
Since $-0.474 - (-0.462) < -0.0103$, we shall see that the term of $\delta$ cannot drive $v-w_x<-0.472$. Note that (\ref{eq:deltat1}) shall include a factor ($<1)$ in front of $\delta\t$, but we have ignored it to show a more aggressive bound.

Therefore, we are able to say an Interval $\hat{I}_s$ generated by $I_f$ also has the following properties: for each element $(v,w_x,w_y)\in \hat{I}_s$,
\begin{enumerate}
    \item $vw_x\in[0.181,1.5]$.
    \item If $vw_x\le 1$, then $v-w_x\in [-0.735, 0.23]$. If $vw_x>1$, then $v-w_x\in [-0.472, 0.148]$.
\end{enumerate}

Then the decreasing ratio of $\Delta w_y/w_y$ is bounded by
\begin{align}
    \frac{\Delta w_y}{w_y} &= K\frac{v}{w_x}\left(-vw_x + \frac{1}{\pi}\frac{w_x w_y}{\norm{w}^2}\right)\\
    &\in
    \left[
        -1.1(\sqrt{1.5+0.074^2}+0.074)^2, -0.030 \red{K}
    \right]\\
    &=[-1.87, -0.030\red{K}].
\end{align}
Hence, $w_y$ decays with a linear ratio of $0.97$ \red{(or $1-0.030K$)} at most for Cases (I, II) in stage 2.

For Case (III), in the first step of stage 2, it already has $w_y\le 0.078$ and $v-w_x\in[-0.017, 0.17]$. So surely it will also converge to $I_s$.

Here we present the time analysis for Case (III) of both stages. The number of iterations in the first stage is apparently similar to that of case (I, II), as
\begin{align}
    T_1 \le \log_{2.56} \left\lceil\frac{2.7 \psi}{\beta^2} \right\rceil,
\end{align}
where $\psi<\frac{1}{4\pi}$ is the value of $vw_x$ in the first step of stage 2. In stage 2, since our target is to find how many steps are necessary to get $vw_x\ge 0.181$, so it is
\begin{align}
    v\tt w_x\tt
    &\ge v\t w_x\t\left(
    1-0.181+1-\frac{\arctan(2-\sqrt{3})-\frac{2-\sqrt{3}}{1+(2-\sqrt{3})^2}}{\pi} - 1.1 w_y^2
    \right)\\
    &\ge 3.28 v\t w_x\t.
\end{align}
where obviously it still holds $\frac{w_y}{w_x}\le 2-\sqrt{3}$ and $w_y^2<0.1^2$ in stage 2. Since $3.28>2.56$, we have the total number of steps to have $vw_x>0.181$ bounded as
\begin{align}
    \left\lceil\log_{2.56} \frac{2.7 \psi}{\beta^2}\right\rceil + \left\lceil\log_{3.28} \frac{0.181}{\psi}\right\rceil 
    &\le
    \left\lceil\log_{2.56} \frac{0.675}{\pi\beta^2}\right\rceil + \left\lceil\log_{3.28} \frac{0.181}{\frac{1}{4\pi}}\right\rceil + 2\nonumber \\
    &\le \left\lceil\log_{2.56} \frac{0.675}{\pi\beta^2}\right\rceil + 3 \nonumber \\
    &< \left\lceil\log_{2.56} \frac{1.35}{\pi\beta^2}\right\rceil + 4, \nonumber
\end{align} 
which is not beyond the bound for Cases (I, II).
\end{proof}

\section{Proof of Matrix Factorization}

Consider a two-layer matrix factorization problem, parameterized by learnable weights $\mathbf{X}\in\R^{m\times p}$, $\mathbf{Y}\in\R^{p\times q}$, and the target matrix is $\mathbf{C}\in\R^{m\times q}$. The loss $L$ is defined as
\begin{align}
    L(\mathbf{X},\mathbf{Y})=\frac{1}{2}\norm{\mathbf{X}\mathbf{Y} - \mathbf{C}}_F^2.
\end{align}
Obviously $\{\mathbf{X},\mathbf{Y}:\mathbf{X}\mathbf{Y}=\mathbf{C}\}$ forms a minimum manifold. Focusing on this manifold, our targets are: 1) to prove our condition for stable oscillation on 1D functions holds at the minimum of $L$ for any setting of dimensions, and 2) 
to provide an observation of walking towards flattest minima with theoretical intuition.

\subsection{Asymmetric Case: 1D function at the minima} \label{sec:app_mf_1d}

Before looking into the theorem, we would like to clarify the definition of the loss Hessian. Inherently, we squeeze $\mathbf{X},\mathbf{Y}$ into a vector $\theta=\text{vec}(\mathbf{X},\mathbf{Y})\in\R^{mp+pq}$, which vectorizes the concatnation. As a result, we are able to represent the loss Hessian w.r.t. $\theta$ as a matrix in $\R^{(mp+pq)\times(mp+pq)}$. Meanwhile, the support of the loss landscape is in $\R^{mp+pq}$. In the following theorem, we are to show the leading eigenvector $\Delta\triangleq \text{vec}(\Delta\mathbf{X}, \Delta\mathbf{Y})\in\R^{mp+pq}$ of the loss Hessian. Since the cross section of the loss landscape and $\Delta$ forms a 1D function $f_{\Delta}$, we would also show the stable-oscillation condition on 1D function holds at the minima of $f_{\Delta}$.

\begin{theorem}
    For a matrix factorization problem, assume $\mathbf{X}\mathbf{Y}=\mathbf{C}$.
    Consider SVD of both matrices as $\mathbf{X} = \sum_{i=1}^{\min\{m,p\}} \sigma_{x,i} u_{x,i} v_{x,i}^\top$ and $\mathbf{Y} = \sum_{i=1}^{\min\{p,q\}} \sigma_{y,i} u_{y,i} v_{y,i}^\top$, where both groups of $\sigma_{\cdot,i}$'s are in descending order and both top singular values $\sigma_{x,1}$ and $\sigma_{y,1}$ are unique. Also assume $v_{x,1}^\top u_{y,1}\neq 0$. Then the leading eigenvector of the loss Hessian is $\Delta=\text{vec}(C_1 u_{x,1}u_{y,1}^\top, C_2 v_{x,1}v_{y,1}^\top)$ with $C_1=\frac{\sigma_{y,1}}{\sqrt{\sigma_{x,1}^2+\sigma_{y,1}^2}}, C_2=\frac{\sigma_{x,1}}{\sqrt{\sigma_{x,1}^2+\sigma_{y,1}^2}}$. Denote $f_\Delta$ as the 1D function at the cross section of the loss landscape and the line following the direction of $\Delta$ passing $\text{vec}(\Delta\mathbf{X}, \Delta\mathbf{Y})$. Then, at the minima of $f_\Delta$, it satisfies
    \begin{align}
        3[f_\Delta^{(3)}]^2 - f_\Delta^{(2)}f_\Delta^{(4)}>0.
    \end{align}
\end{theorem}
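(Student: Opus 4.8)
The plan is to reduce the whole statement to elementary computations with the two fixed matrices $A:=\Delta\mathbf{X}\mathbf{Y}+\mathbf{X}\Delta\mathbf{Y}$ and $B:=\Delta\mathbf{X}\Delta\mathbf{Y}$. Since $\mathbf{X}\mathbf{Y}=\mathbf{C}$, expanding the loss along the line $t\mapsto(\mathbf{X}+t\Delta\mathbf{X},\,\mathbf{Y}+t\Delta\mathbf{Y})$ through the minimum gives $f_\Delta(t)=\tfrac12\norm{tA+t^2B}_F^2=\tfrac12\norm{A}_F^2\,t^2+\inprod{A}{B}\,t^3+\tfrac12\norm{B}_F^2\,t^4$, where $\inprod{\cdot}{\cdot}$ denotes the Frobenius inner product. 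Hence $t=0$ is a global minimum of $f_\Delta$ with $f_\Delta''(0)=\norm{A}_F^2$, $f_\Delta^{(3)}(0)=6\inprod{A}{B}$, and $f_\Delta^{(4)}(0)=12\norm{B}_F^2$, so the target inequality $3[f_\Delta^{(3)}]^2-f_\Delta''f_\Delta^{(4)}>0$ is equivalent to $9\inprod{A}{B}^2>\norm{A}_F^2\norm{B}_F^2$; that is, $A$ and $B$ must be far from Frobenius-orthogonal.

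Next I would identify the leading eigenvector of the Hessian. At the minimum the residual vanishes, so the Hessian quadratic form in a direction $(\Delta\mathbf{X},\Delta\mathbf{Y})$ equals exactly $\norm{\Delta\mathbf{X}\mathbf{Y}+\mathbf{X}\Delta\mathbf{Y}}_F^2$, and one maximizes this over $\norm{\Delta\mathbf{X}}_F^2+\norm{\Delta\mathbf{Y}}_F^2=1$. Chaining the triangle inequality, the bounds $\norm{\Delta\mathbf{X}\mathbf{Y}}_F\le\sigma_{y,1}\norm{\Delta\mathbf{X}}_F$ and $\norm{\mathbf{X}\Delta\mathbf{Y}}_F\le\sigma_{x,1}\norm{\Delta\mathbf{Y}}_F$, and Cauchy--Schwarz applied to $(\sigma_{y,1},\sigma_{x,1})$ against $(\norm{\Delta\mathbf{X}}_F,\norm{\Delta\mathbf{Y}}_F)$, yields the upper bound $\sigma_{x,1}^2+\sigma_{y,1}^2$. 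The hypotheses enter in the equality analysis: uniqueness of $\sigma_{x,1}$ and $\sigma_{y,1}$ forces $\Delta\mathbf{X}=a\,u_{y,1}^\top$ and $\Delta\mathbf{Y}=v_{x,1}\,b^\top$ for some vectors $a,b$; positive proportionality in the triangle step then forces $a\parallel u_{x,1}$ and $b\parallel v_{y,1}$; and the Cauchy--Schwarz step fixes $\norm{\Delta\mathbf{X}}_F/\norm{\Delta\mathbf{Y}}_F=\sigma_{y,1}/\sigma_{x,1}$. Together these pin the maximizer down, up to scale, to $\Delta=\text{vec}(C_1u_{x,1}u_{y,1}^\top,\,C_2v_{x,1}v_{y,1}^\top)$ with the stated constants, and eigenvalue $\sigma_{x,1}^2+\sigma_{y,1}^2$.

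Finally I would substitute this $\Delta$ into $A$ and $B$. Using $u_{y,1}^\top\mathbf{Y}=\sigma_{y,1}v_{y,1}^\top$ and $\mathbf{X}v_{x,1}=\sigma_{x,1}u_{x,1}$ one gets $A=\bigl(C_1\sigma_{y,1}+C_2\sigma_{x,1}\bigr)u_{x,1}v_{y,1}^\top=\sqrt{\sigma_{x,1}^2+\sigma_{y,1}^2}\;u_{x,1}v_{y,1}^\top$, while directly $B=\Delta\mathbf{X}\Delta\mathbf{Y}=C_1C_2\,(v_{x,1}^\top u_{y,1})\,u_{x,1}v_{y,1}^\top$. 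Thus $A$ and $B$ are \emph{parallel} --- both scalar multiples of the same rank-one matrix $u_{x,1}v_{y,1}^\top$ --- so Cauchy--Schwarz is tight, $\inprod{A}{B}^2=\norm{A}_F^2\norm{B}_F^2$, and therefore $3[f_\Delta^{(3)}]^2-f_\Delta''f_\Delta^{(4)}=(108-12)\,\norm{A}_F^2\norm{B}_F^2=96\,\norm{A}_F^2\norm{B}_F^2$. This is strictly positive: $A\neq0$ because the eigenvalue is nonzero, and $B\neq0$ precisely because $v_{x,1}^\top u_{y,1}\neq0$ --- which is exactly the role of that hypothesis.

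The step I expect to be most delicate is the equality analysis in the middle: one must use the uniqueness of the top singular values carefully (extending $\{u_{y,j}\}$ to an orthonormal basis, and tracking the positive-proportionality condition in the triangle inequality) to conclude the maximizer is unique up to scale. By contrast, once that eigenvector is in hand the positivity is essentially automatic; the conceptual point is simply that the first- and second-order displacements $A$ and $B$ happen to point in the same direction, so the Cauchy--Schwarz in the reduced $1$-D criterion is saturated with room to spare relative to the factor $1/9$ actually needed.
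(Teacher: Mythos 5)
Your proposal is correct and follows essentially the same route as the paper: expand the loss along the line through the minimum, read off $f_\Delta^{(2)},f_\Delta^{(3)},f_\Delta^{(4)}$ from the cubic polynomial in $t$, identify the top Hessian eigenvector as the maximizer of $\norm{\Delta\mathbf{X}\mathbf{Y}+\mathbf{X}\Delta\mathbf{Y}}_F^2$, and observe that $A=\Delta\mathbf{X}\mathbf{Y}+\mathbf{X}\Delta\mathbf{Y}$ and $B=\Delta\mathbf{X}\Delta\mathbf{Y}$ are both proportional to $u_{x,1}v_{y,1}^\top$, so the $3[f_\Delta^{(3)}]^2 - f_\Delta^{(2)}f_\Delta^{(4)}$ expression collapses to $96\norm{A}_F^2\norm{B}_F^2>0$ with $B\neq 0$ precisely because $v_{x,1}^\top u_{y,1}\neq 0$. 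The one place you add genuine content over the paper is the eigenvector identification: the paper simply asserts that uniqueness of the top singular values forces $\Delta\mathbf{X},\Delta\mathbf{Y}$ to be rank one and writes down the answer, whereas you spell out the triangle-inequality/Cauchy--Schwarz chain and the accompanying equality analysis (support on $u_{y,1}$ and $v_{x,1}$, positive proportionality, the ratio of norms), which is the step that actually requires care; this is a welcome tightening of a step the paper leaves implicit.
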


\begin{proof}
    To obtain the direction of the leading Hessian eigenvector at parameters $(\mathbf{X}, \mathbf{Y})$, consider a small deviation of the parameters as $(\mathbf{X}+\Delta \mathbf{X},\mathbf{Y}+\Delta \mathbf{Y})$. With $\mathbf{X}\mathbf{Y}=\mathbf{C}$, evaluate the loss function as
    \begin{align}
        L(\mathbf{X}+\Delta \mathbf{X},\mathbf{Y}+\Delta \mathbf{Y}) &= 
        \frac{1}{2}\norm{\Delta\mathbf{X}\mathbf{Y} + \mathbf{X}\Delta\mathbf{Y}+\Delta\mathbf{X}\Delta\mathbf{Y}}_F^2.
    \end{align}
    Expand these terms and split them by orders of $\Delta\mathbf{X},\Delta\mathbf{Y}$ as follows:
    \begin{align}
        \Theta(\norm{\Delta\mathbf{X}}^2+\norm{\Delta\mathbf{Y}}^2):& 
        ~~~~~\frac{1}{2}\norm{\Delta\mathbf{X}\mathbf{Y} + \mathbf{X}\Delta\mathbf{Y}}_F^2, \label{eq:mfloss_2nd}\\
        \Theta(\norm{\Delta\mathbf{X}}^3+\norm{\Delta\mathbf{Y}}^3):& 
        ~~~~~\langle \Delta\mathbf{X}\mathbf{Y} + \mathbf{X}\Delta\mathbf{Y}, \Delta\mathbf{X}\Delta\mathbf{Y} \rangle, \label{eq:mfloss_3rd}\\
        \Theta(\norm{\Delta\mathbf{X}}^4+\norm{\Delta\mathbf{Y}}^4):& 
        ~~~~~\frac{1}{2}\norm{\Delta\mathbf{X}\Delta\mathbf{Y}}_F^2. \label{eq:mfloss_4th}
    \end{align}
    From the second-order terms, the leading eigenvector of $\nabla^2 L$ is the solution of 
    \begin{align}
        \text{vec}(\Delta\mathbf{X}, \Delta\mathbf{Y}) = \mathop{\mathrm{arg\,max}}_{\norm{\Delta\mathbf{X}}_F^2+\norm{\Delta\mathbf{Y}}_F^2 = 1}\norm{\Delta\mathbf{X}\mathbf{Y} + \mathbf{X}\Delta\mathbf{Y}}_F^2.
    \end{align}
    Since both the top singular values of $\mathbf{X},\mathbf{Y}$ are unique, the solution shall have both $\Delta\mathbf{X}, \Delta\mathbf{Y}$ of rank 1. Actually the solution is (here for simplicity we eliminate the sign of both)
    \begin{align}
        \Delta\mathbf{X} = \frac{\sigma_{y,1}}{\sqrt{\sigma_{x,1}^2+\sigma_{y,1}^2}} u_{x,1}u_{y,1}^\top, ~~~~\Delta\mathbf{Y}=\frac{\sigma_{x,1}}{\sqrt{\sigma_{x,1}^2+\sigma_{y,1}^2}} v_{x,1}v_{y,1}^\top.
    \end{align}
    Equipped with the top eigenvector of Hessian, $\text{vec}(\Delta\mathbf{X}, \Delta\mathbf{Y})$, we consider the 1-D function $f_{\Delta}$ generated by the cross-section of the loss landscape and the eigenvector, passing the minima $(\mathbf{X},\mathbf{Y})$. 
    Define the function as 
    \begin{align}
        f_{\Delta}(\mu) = L(\mathbf{X}+\mu\Delta\mathbf{X}, \mathbf{Y}+\mu\Delta\mathbf{Y}), ~~~~\mu\in\R.
    \end{align}
    Then, around $\mu=0$, we have
    \begin{align}
        f_{\Delta}(\mu) = \frac{1}{2}\norm{\Delta\mathbf{X}\mathbf{Y} + \mathbf{X}\Delta\mathbf{Y}}_F^2\cdot \mu^2 + \langle \Delta\mathbf{X}\mathbf{Y} + \mathbf{X}\Delta\mathbf{Y}, \Delta\mathbf{X}\Delta\mathbf{Y} \rangle \cdot \mu^3 + 
        \frac{1}{2}\norm{\Delta\mathbf{X}\Delta\mathbf{Y}}_F^2 \cdot \mu^4.
    \end{align}
    Therefore, the several order derivatives of $f_{\Delta}(\mu)$ at $\mu=0$ can be obtained from Taylor expansion as
    \begin{align}
        f^{(2)}_\Delta (0) &= \norm{\Delta\mathbf{X}\mathbf{Y} + \mathbf{X}\Delta\mathbf{Y}}_F^2, \\
        f^{(3)}_\Delta (0) &= 6\langle \Delta\mathbf{X}\mathbf{Y} + \mathbf{X}\Delta\mathbf{Y}, \Delta\mathbf{X}\Delta\mathbf{Y} \rangle, \\
        f^{(4)}_\Delta (0) &= 12\norm{\Delta\mathbf{X}\Delta\mathbf{Y}}_F^2.
    \end{align}
    Then we compute the condition of stable oscillation of 1-D function as
    \begin{align}
        \big[3[f_\Delta^{(3)}]^2 - f_\Delta^{(2)}f_\Delta^{(4)}\big] (0)
        &=
        108 \langle \Delta\mathbf{X}\mathbf{Y} + \mathbf{X}\Delta\mathbf{Y}, \Delta\mathbf{X}\Delta\mathbf{Y} \rangle^2 - 12 \norm{\Delta\mathbf{X}\mathbf{Y} + \mathbf{X}\Delta\mathbf{Y}}_F^2 \norm{\Delta\mathbf{X}\Delta\mathbf{Y}}_F^2 \\
        &= 
        96 \norm{\Delta\mathbf{X}\mathbf{Y} + \mathbf{X}\Delta\mathbf{Y}}_F^2 \norm{\Delta\mathbf{X}\Delta\mathbf{Y}}_F^2 >0,
    \end{align}
    because all of $\Delta\mathbf{X}\mathbf{Y}, \mathbf{X}\Delta\mathbf{Y}, \Delta\mathbf{X}\Delta\mathbf{Y}$ are parallel to $u_{x,1} v_{y,1}^\top$ and $v_{x,1}^\top u_{y,1}\neq 0$.
    
\end{proof}

\subsection{Quasi-symmetric case: walk towards flattest minima} \label{sec:app_mf_quasisym}
\begin{obs}[Restatement of Observation~\ref{obs:quasi_mf}]
    Consider the quasi-symmetric matrix factorization with learning rate $\eta = \frac{1}{\sigma_1^2}+\beta$. Assume $0<\beta\sigma_1^2< \sqrt{4.5}-1\approx 1.121$. {Consider a minimum $(\bY_0=\alpha\bX_0, \bZ_0=\nicefrac{1}{\alpha}\mathbf{X}_0), \alpha>0$}. The initialization is around the minimum, as ${\mathbf{Y}_1 = \bY_0 +\Delta \mathbf{Y}_1, \mathbf{Z}_1 = \bZ_0 + \Delta \mathbf{Z}_1}$, with the deviations satisfying $u_1^\top {\Delta\mathbf{Y}_{1}} v_1 \neq 0, u_1^\top {\Delta\mathbf{Z}_1} v_1 \neq 0$ and $\norm{{\Delta\mathbf{Y}_1}}, \norm{{\Delta\mathbf{Z}_1}}\le \epsilon$. The second largest singular value of $\bX_0$ needs to satisfy
    \begin{align}
        \eta\cdot\max\left\{
                (\frac{\sigma_1^2}{\alpha^2}+\sigma_2^2\alpha^2, \frac{\sigma_2^2}{\alpha^2}+\sigma_1^2\alpha^2)
            \right\}
        \le 2.
    \end{align}
    Then GD would converge to a period-2 orbit $\gamma_{\eta}$ approximately with error in $\O(\epsilon)$, formally written as
    \begin{gather}
        (\mathbf{Y}_t, \mathbf{Z}_t) \rightarrow \gamma_{\eta} + (\Delta\mathbf{Y}, \Delta\mathbf{Z}), 
        ~~~~~~\norm{\Delta\mathbf{Y}}, \norm{\Delta\mathbf{Z}}=\O(\epsilon), \\\
        \gamma_{\eta} = \bigg\{\left(\mathbf{Y}_0 + \left(\rho_i-\alpha\right) \sigma_1 u_1 v_1^\top , \mathbf{Z}_0 + \left(\rho_i-\nicefrac{1}{\alpha}\right) \sigma_1 u_1 v_1^\top\right)\bigg\}, ~~~~~~(i=1,2)
    \end{gather}
    where $\rho_1\in (1,2),\rho_2\in(0,1)$ are the two solutions of solving $\rho$ in
    \begin{align}
        1+\beta \sigma_1^2 = \frac{1}{\rho^2 \left(\sqrt{\frac{1}{\rho^2}-\frac{3}{4}}+\frac{1}{2} \right)}.
    \end{align}
\end{obs}

\begin{remark}
    What is missing for a rigorous proof?
    \begin{enumerate}
        \item Control of error terms in non-asymptotic analysis.
        \item Resolving assumptions of spectrum $\mathbf{Q}_{\alpha,\eta,p}(y_t, z_t)$ in early stages.
    \end{enumerate}
\end{remark}

\begin{proof}
    Without loss of generality, we assume $\bX_0=\text{diag}([\sigma_1,\sigma_2,\dots,\sigma_d])\in\R^{d\times d}$, where $(\bX_0)_{i,i}=\sigma_i$ or $0$ in all other entries. This can be easily achieved by rotating singular vectors of $\bX_0$. Accordingly, we have $\bY_0=\text{diag}([\sigma_1\alpha,\sigma_2\alpha,\dots,\sigma_d\alpha])\in\R^{d\times d}$ and $\bZ_0=\text{diag}([\sigma_1/\alpha,\sigma_2/\alpha,\dots,\sigma_d/\alpha])\in\R^{d\times d}$.

    Starting from time $t=1$, we denote the learnable parameter matrices as $\bY_t,\bZ_t$, and their deviation as $\Delta\bY_t\triangleq \bY_t-\bY_0, \Delta\bZ_t\triangleq \bZ_t-\bZ_0$. By assumptions, we have $\norm{\Delta\bY_1}<\epsilon, \norm{\Delta\bZ_1}<\epsilon$. Furthermore, we split $\Delta\bY_t, \Delta\bZ_t$ as follows,
    \begin{gather}
        \Delta\bY_t=
            \left[ \begin{array}{c|c}
               \cc{1}_t & \cc{3}_t \\
               \midrule
               \cc{2}_t & \cc{4}_t \\
            \end{array}
            \right],
        \Delta\bZ_t=
            \left[ \begin{array}{c|c}
               \cc{5}_t & \cc{7}_t \\
               \midrule
               \cc{6}_t & \cc{8}_t \\
            \end{array}
            \right],\\
        \cc{1}_t, \cc{5}_t\in\R,~~~~ \cc{2}_t, \cc{6}_t\in\R^{(d-1)\times 1},~~~~ \cc{3}_t, \cc{7}_t\in\R^{1\times (d-1)},~~~~ \cc{4}_t, \cc{8}_t\in\R^{(d-1) \times (d-1)}.
    \end{gather}
    Since the update rules of $\bY_t, \bZ_t$ are
    \begin{align}
        \bY_{t+1} &= \bY_{t} - \eta\left(\Delta \bY_t \bZ_0^\top + \bY_0 \Delta\bZ_t^\top + \Delta\bY_t\Delta\bZ_t^\top\right)\left(\bZ_0+\Delta\bZ_t\right) \\
        \bZ_{t+1} &= \bZ_t - \eta\left(\Delta\bZ_t\bY_0^\top + \bZ_0\Delta\bY_t^\top + \Delta\bZ_t\Delta\bY_t^\top\right)\left(\bY_0+\Delta\bY_t\right)
    \end{align}
    The update rules of $\cc{1}-\cc{8}$ are
    \begin{align}
        \cc{1}_{t+1} &= \cc{1}_t-\eta\mathbb{I}_1^\top\left(\Delta \bY_t \bZ_0^\top + \bY_0 \Delta\bZ_t^\top + \Delta\bY_t\Delta\bZ_t^\top\right)(\bZ_0+\Delta\bZ_t)\mathbb{I}_1 \\
        \cc{2}_{t+1} &= \cc{2}_t-\eta\mathbb{I}_{\ge 2}^\top\left(\Delta \bY_t \bZ_0^\top + \bY_0 \Delta\bZ_t^\top + \Delta\bY_t\Delta\bZ_t^\top\right)(\bZ_0+\Delta\bZ_t)\mathbb{I}_1 \\
        \cc{3}_{t+1} &= \cc{3}_t-\eta\mathbb{I}_{1}^\top\left(\Delta \bY_t \bZ_0^\top + \bY_0 \Delta\bZ_t^\top + \Delta\bY_t\Delta\bZ_t^\top\right)(\bZ_0+\Delta\bZ_t)\mathbb{I}_{\ge 2} \\
        \cc{4}_{t+1} &= \cc{4}_t-\eta\mathbb{I}_{\ge 2}^\top\left(\Delta \bY_t \bZ_0^\top + \bY_0 \Delta\bZ_t^\top + \Delta\bY_t\Delta\bZ_t^\top\right)(\bZ_0+\Delta\bZ_t)\mathbb{I}_{\ge 2} \\
        \cc{5}_{t+1} &= \cc{5}_t-\eta\mathbb{I}_1^\top\left(\Delta\bZ_t\bY_0^\top + \bZ_0\Delta\bY_t^\top + \Delta\bZ_t\Delta\bY_t^\top\right)\left(\bY_0+\Delta\bY_t\right)\mathbb{I}_1 \\
        \cc{6}_{t+1} &= \cc{6}_t-\eta\mathbb{I}_{\ge 2}^\top\left(\Delta\bZ_t\bY_0^\top + \bZ_0\Delta\bY_t^\top + \Delta\bZ_t\Delta\bY_t^\top\right)\left(\bY_0+\Delta\bY_t\right)\mathbb{I}_1 \\
        \cc{7}_{t+1} &= \cc{7}_t-\eta\mathbb{I}_{1}^\top\left(\Delta\bZ_t\bY_0^\top + \bZ_0\Delta\bY_t^\top + \Delta\bZ_t\Delta\bY_t^\top\right)\left(\bY_0+\Delta\bY_t\right)\mathbb{I}_{\ge 2} \\
        \cc{8}_{t+1} &= \cc{8}_t-\eta\mathbb{I}_{\ge 2}^\top\left(\Delta\bZ_t\bY_0^\top + \bZ_0\Delta\bY_t^\top + \Delta\bZ_t\Delta\bY_t^\top\right)\left(\bY_0+\Delta\bY_t\right)\mathbb{I}_{\ge 2},
    \end{align}
    where $\mathbb{I}_1=(\mathbb{I}_{d})_{:,1}\in\R^{d\times 1}, \mathbb{I}_{\ge 2}=(\mathbb{I}_d)_{:,2:d}\in\R^{d\times(d-1)}$ are the dimension-reduction matrix, defined from blocks of the $d\times d$ identity matrix $\mathbb{I}$. In other words, $\mathbb{I}_1$ (respectively $\mathbb{I}_{\ge2}$) is to pick the first row/column (respectively all remaining rows/columns) from a matrix, which is extracting $\cc{1}_t - \cc{8}_t$ from $\Delta\bY_t,\Delta\bZ_t$.

    Denote $\mathbf{M}_t\triangleq \left(\Delta \bY_{t} \bZ_0^\top + \bY_0 \Delta\bZ_{t}^\top + \Delta\bY_{t}\Delta\bZ_{t}^\top\right)=\bY_t\bZ_t^\top-\bX_0\bX_0^\top$.

    At initialization, we assume all of $\cc{1}_1, \cc{2}_1, \cc{3}_1, \cc{4}_1, \cc{5}_1, \cc{6}_1, \cc{7}_1, \cc{8}_1$ are in $\Theta(\epsilon)$, which means all $\norm{\mathbb{I}_{1}\mathbf{M}_1\mathbb{I}_{1}}, \norm{\mathbb{I}_{\ge2}\mathbf{M}_1\mathbb{I}_{1}}, \norm{\mathbb{I}_{1}\mathbf{M}_1\mathbb{I}_{\ge2}}, \norm{\mathbb{I}_{\ge2}\mathbf{M}_1\mathbb{I}_{\ge2}}$ are in $\Theta(\epsilon)$ as well. Our goal is to show that, as $t\rightarrow \infty$,
    \begin{enumerate}
        \item $\cc{1}_\infty$, $\cc{5}_\infty$ are in a period-2 orbit,
        \item $\cc{2}_\infty, \cc{3}_\infty, \cc{4}_\infty, \cc{6}_\infty, \cc{7}_\infty, \cc{8}_\infty$ are in $\Theta(\epsilon)$,
        \item $\norm{\mathbb{I}_{\ge2}\mathbf{M}_\infty\mathbb{I}_{1}}, \norm{\mathbb{I}_{1}\mathbf{M}_\infty\mathbb{I}_{\ge2}}, \norm{\mathbb{I}_{\ge2}\mathbf{M}_\infty\mathbb{I}_{\ge2}}$, $\norm{\mathbb{I}_{1}^\top\bZ_\infty\bZ_{\infty}^\top\mathbb{I}_{\ge2}}$, $\norm{\mathbb{I}_{\ge2}^\top\bY_\infty\bY_{\infty}^\top\mathbb{I}_{\ge2}}$ decay to zero.
    \end{enumerate}

    Then, following the above definitions, we have another representation of $\left(\Delta \bY_t \bZ_0^\top + \bY_0 \Delta\bZ_t^\top + \Delta\bY_t\Delta\bZ_t^\top\right)$, or equivalently its transpose $\left(\Delta\bZ_t\bY_0^\top + \bZ_0\Delta\bY_t^\top + \Delta\bZ_t\Delta\bY_t^\top\right)$, as 
    \begin{align}
        \mathbb{I}_1^\top\left(\Delta \bY_t \bZ_0^\top + \bY_0 \Delta\bZ_t^\top + \Delta\bY_t\Delta\bZ_t^\top\right)\mathbb{I}_1 
        &=
        \cc{1}_t\mathbb{I}_1^\top \bZ_0^\top\mathbb{I}_{1}+\mathbb{I}_1^\top \bY_0 \mathbb{I}_1 \cc{5}_t + \cc{1}_t\cc{5}_t + \cc{3}_t\cc{7}_t^\top \\
        \mathbb{I}_{\ge2}^\top\left(\Delta \bY_t \bZ_0^\top + \bY_0 \Delta\bZ_t^\top + \Delta\bY_t\Delta\bZ_t^\top\right)\mathbb{I}_1 
        &=
        \cc{2}_t\mathbb{I}_1^\top \bZ_0^\top\mathbb{I}_{1}+\mathbb{I}_{\ge2}^\top \bY_0 \mathbb{I}_{\ge2} \cc{7}_t^\top + \cc{2}_t\cc{5}_t + \cc{4}_t\cc{7}_t^\top \\
        \mathbb{I}_{1}^\top\left(\Delta \bY_t \bZ_0^\top + \bY_0 \Delta\bZ_t^\top + \Delta\bY_t\Delta\bZ_t^\top\right)\mathbb{I}_{\ge2} 
        &=
        \cc{3}_t\mathbb{I}_{\ge2}^\top \bZ_0^\top\mathbb{I}_{\ge2}+\mathbb{I}_{1}^\top \bY_0 \mathbb{I}_{1} \cc{6}_t^\top + \cc{1}_t\cc{6}_t^\top + \cc{3}_t\cc{8}_t^\top \\
        \mathbb{I}_{\ge2}^\top\left(\Delta \bY_t \bZ_0^\top + \bY_0 \Delta\bZ_t^\top + \Delta\bY_t\Delta\bZ_t^\top\right)\mathbb{I}_{\ge2}
        &=
        \cc{4}_t\mathbb{I}_{\ge2}^\top \bZ_0^\top\mathbb{I}_{\ge2}+\mathbb{I}_{\ge2}^\top \bY_0 \mathbb{I}_{\ge2} \cc{8}_t^\top + \cc{2}_t\cc{6}_t^\top + \cc{4}_t\cc{8}_t^\top.
    \end{align}
     After substituting with $\cc{1}_{t+1}-\cc{8}_{t+1}$, we have
    \begin{align*}
        \mathbb{I}_{1}^\top\mathbf{M}_{t+1} \mathbb{I}_{1}
        &=
            \mathbb{I}_{1}^\top\mathbf{M}_{t} \mathbb{I}_{1}
            - \eta\mathbb{I}_{1}^\top \mathbf{M}_{t} (\bZ_0+\Delta\bZ_t)\mathbb{I}_{1}\mathbb{I}_{1}^\top \bZ_0^\top\mathbb{I}_{1} 
            - \eta \mathbb{I}_{1}^\top \bY_0 \mathbb{I}_{1}\mathbb{I}_{1}^\top \left(\bY_0+\Delta\bY_t\right)^\top \mathbf{M}_{t} \mathbb{I}_{1}  \\
        &~~~~ 
            -\eta \cc{1}_t \mathbb{I}_{1}^\top\left(\bY_0+\Delta\bY_t\right)^\top\mathbf{M}_{t}\mathbb{I}_{1} 
            - \eta \mathbb{I}_{1}^\top\mathbf{M}_{t}(\bZ_0+\Delta\bZ_t)\mathbb{I}_1\cc{5}_t 
            - \eta\mathbb{I}_{1}^\top\mathbf{M}_{t}(\bZ_0+\Delta\bZ_t)\mathbb{I}_{\ge 2}\cc{7}_t^\top \\
        &~~~~ 
            -\eta \cc{3}_t \mathbb{I}_{\ge2}^\top\left(\bY_0+\Delta\bY_t\right)^\top \mathbf{M}_{t}\mathbb{I}_{1} 
            + \eta^2 \mathbb{I}_{1}^\top\mathbf{M}_{t}(\bZ_0+\Delta\bZ_t)\mathbb{I}_{\ge2}\mathbb{I}_{\ge2}^\top\left(\bY_0+\Delta\bY_t\right)^\top \mathbf{M}_{t}\mathbb{I}_{1}
        \\
        &~~~~
            + \eta^2\mathbb{I}_{1}^\top\mathbf{M}_{t}(\bZ_0+\Delta\bZ_t)\mathbb{I}_1\mathbb{I}_{1}^\top\left(\bY_0+\Delta\bY_t\right)^\top\mathbf{M}_{t}\mathbb{I}_{1} 
        \\
        &=
            \mathbb{I}_{1}^\top\mathbf{M}_{t} \mathbb{I}_{1}
            - \eta\mathbb{I}_{1}^\top \mathbf{M}_{t}(\mathbb{I}_{1}\mathbb{I}_{1}^\top+\mathbb{I}_{\ge 2}\mathbb{I}_{\ge 2}^\top) (\bZ_0+\Delta\bZ_t)\mathbb{I}_{1}\mathbb{I}_{1}^\top \bZ_0^\top\mathbb{I}_{1} 
            \\
            &~~~~ - \eta \mathbb{I}_{1}^\top \bY_0 \mathbb{I}_{1}\mathbb{I}_{1}^\top \left(\bY_0+\Delta\bY_t\right)^\top (\mathbb{I}_{1}\mathbb{I}_{1}^\top+\mathbb{I}_{\ge 2}\mathbb{I}_{\ge 2}^\top)\mathbf{M}_{t}^\top \mathbb{I}_{1}  
        \\
        &~~~~ 
            -\eta \cc{1}_t \mathbb{I}_{1}^\top\left(\bY_0+\Delta\bY_t\right)^\top(\mathbb{I}_{1}\mathbb{I}_{1}^\top+\mathbb{I}_{\ge 2}\mathbb{I}_{\ge 2}^\top)\mathbf{M}_{t}\mathbb{I}_{1} 
            - \eta \mathbb{I}_{1}^\top\mathbf{M}_{t}(\mathbb{I}_{1}\mathbb{I}_{1}^\top+\mathbb{I}_{\ge 2}\mathbb{I}_{\ge 2}^\top)(\bZ_0+\Delta\bZ_t)\mathbb{I}_1\cc{5}_t 
        \\
        &~~~~ 
            - \eta\mathbb{I}_{1}^\top\mathbf{M}_{t}(\mathbb{I}_{1}\mathbb{I}_{1}^\top+\mathbb{I}_{\ge 2}\mathbb{I}_{\ge 2}^\top)(\bZ_0+\Delta\bZ_t)\mathbb{I}_{\ge 2}\cc{7}_t^\top 
            -\eta \cc{3}_t \mathbb{I}_{\ge2}^\top\left(\bY_0+\Delta\bY_t\right)^\top (\mathbb{I}_{1}\mathbb{I}_{1}^\top+\mathbb{I}_{\ge 2}\mathbb{I}_{\ge 2}^\top)\mathbf{M}_{t}\mathbb{I}_{1} 
        \\
        &~~~~
            + \eta^2 \mathbb{I}_{1}^\top\mathbf{M}_{t}(\mathbb{I}_{1}\mathbb{I}_{1}^\top+\mathbb{I}_{\ge 2}\mathbb{I}_{\ge 2}^\top)(\bZ_0+\Delta\bZ_t)\mathbb{I}_{\ge2}\mathbb{I}_{\ge2}^\top\left(\bY_0+\Delta\bY_t\right)^\top (\mathbb{I}_{1}\mathbb{I}_{1}^\top+\mathbb{I}_{\ge 2}\mathbb{I}_{\ge 2}^\top)\mathbf{M}_{t}\mathbb{I}_{1}
        \\
        &~~~~ 
            + \eta^2 \mathbb{I}_{1}^\top\mathbf{M}_{t}(\mathbb{I}_{1}\mathbb{I}_{1}^\top+\mathbb{I}_{\ge 2}\mathbb{I}_{\ge 2}^\top)(\bZ_0+\Delta\bZ_t)\mathbb{I}_{1}\mathbb{I}_{1}^\top\left(\bY_0+\Delta\bY_t\right)^\top (\mathbb{I}_{1}\mathbb{I}_{1}^\top+\mathbb{I}_{\ge 2}\mathbb{I}_{\ge 2}^\top)\mathbf{M}_{t}\mathbb{I}_{1}
        \\
        &=
            \mathbb{I}_{1}^\top\mathbf{M}_{t} \mathbb{I}_{1}
            - \eta\mathbb{I}_{1}^\top \mathbf{M}_{t}\mathbb{I}_{1}\mathbb{I}_{1}^\top\bZ_0\mathbb{I}_{1}\mathbb{I}_{1}^\top \bZ_0^\top\mathbb{I}_{1}
            - \eta\mathbb{I}_{1}^\top \mathbf{M}_{t}\mathbb{I}_{1}\cc{5}_t\mathbb{I}_{1}^\top \bZ_0^\top\mathbb{I}_{1}
            - \eta\mathbb{I}_{1}^\top \mathbf{M}_{t}\mathbb{I}_{\ge2}\cc{6}_t\mathbb{I}_{1}^\top \bZ_0^\top\mathbb{I}_{1}
        \\
        &~~~~ 
            - \eta \mathbb{I}_{1}^\top \bY_0 \mathbb{I}_{1}\mathbb{I}_{1}^\top \bY_0\mathbb{I}_{1}\mathbb{I}_{1}^\top\mathbf{M}_{t} \mathbb{I}_{1} 
            - \eta \mathbb{I}_{1}^\top \bY_0 \mathbb{I}_{1}\cc{1}_t\mathbb{I}_{1}^\top\mathbf{M}_{t} \mathbb{I}_{1}
            - \eta \mathbb{I}_{1}^\top \bY_0 \mathbb{I}_{1}\cc{2}_t^\top\mathbb{I}_{\ge2}^\top\mathbf{M}_{t} \mathbb{I}_{1}
        \\
        &~~~~ 
            - \eta \cc{1}_t\mathbb{I}_{1}^\top \bY_0\mathbb{I}_{1}\mathbb{I}_{1}^\top\mathbf{M}_{t} \mathbb{I}_{1} 
            - \eta \cc{1}_t\cc{1}_t\mathbb{I}_{1}^\top\mathbf{M}_{t} \mathbb{I}_{1}
            - \eta \cc{1}_t\cc{2}_t^\top\mathbb{I}_{\ge2}^\top\mathbf{M}_{t} \mathbb{I}_{1}
        \\
        &~~~~ 
            - \eta\mathbb{I}_{1}^\top \mathbf{M}_{t}\mathbb{I}_{1}\mathbb{I}_{1}^\top\bZ_0\mathbb{I}_{1}\cc{5}_t 
            - \eta\mathbb{I}_{1}^\top \mathbf{M}_{t}\mathbb{I}_{1}\cc{5}_t\cc{5}_t 
            - \eta\mathbb{I}_{1}^\top \mathbf{M}_{t}\mathbb{I}_{\ge2}\cc{6}_t\cc{5}_t 
        \\
        &~~~~ 
            - \eta\mathbb{I}_{1}^\top\mathbf{M}_{t}\mathbb{I}_{1}\mathbb{I}_{1}^\top\bZ_0\mathbb{I}_{\ge 2}\cc{7}_t^\top 
            - \eta\mathbb{I}_{1}^\top\mathbf{M}_{t}\mathbb{I}_{1}\cc{7}_t\cc{7}_t^\top 
            - \eta\mathbb{I}_{1}^\top\mathbf{M}_{t}\mathbb{I}_{1}\cc{8}_t\cc{7}_t^\top 
        \\
        &~~~~ 
            - \eta \cc{3}_t\mathbb{I}_{\ge 2}^\top \bY_0\mathbb{I}_{\ge2}\mathbb{I}_{\ge2}^\top\mathbf{M}_{t} \mathbb{I}_{1} 
            - \eta \cc{3}_t\cc{4}_t^\top\mathbb{I}_{1}^\top\mathbf{M}_{t} \mathbb{I}_{1}
            - \eta \cc{3}_t\cc{3}_t^\top\mathbb{I}_{\ge2}^\top\mathbf{M}_{t} \mathbb{I}_{1}
        \\
        &~~~~ 
            + \eta^2 \mathbb{I}_{1}^\top\mathbf{M}_{t}
            (\mathbb{I}_{\ge2}\mathbb{I}_{\ge2}^\top\bZ_0\mathbb{I}_{\ge2}+\mathbb{I}_{1}\cc{7}_t+\mathbb{I}_{\ge2}\cc{8}_t)
            (\mathbb{I}_{\ge2}^\top\bY_0\mathbb{I}_{\ge2}\mathbb{I}_{\ge2}^\top+\cc{3}_t^\top\mathbb{I}_{1}^\top + \cc{4}_t^\top\mathbb{I}_{\ge2}^\top)
            \mathbf{M}_{t}\mathbb{I}_{1}
        \\
        &~~~~ 
            + \eta^2 \mathbb{I}_{1}^\top\mathbf{M}_{t}
            (\mathbb{I}_{1}\mathbb{I}_{1}^\top\bZ_0\mathbb{I}_{1}+\mathbb{I}_{1}\cc{5}_t+\mathbb{I}_{\ge2}\cc{6}_t)
            (\mathbb{I}_{1}^\top\bY_0\mathbb{I}_{1}\mathbb{I}_{1}^\top+\cc{1}_t\mathbb{I}_{1}^\top + \cc{2}_t^\top\mathbb{I}_{\ge2}^\top)
            \mathbf{M}_{t}\mathbb{I}_{1}
    \end{align*}
    \begin{align*}
        \mathbb{I}_{\ge2}^\top\mathbf{M}_{t+1} \mathbb{I}_{1}
        &=
        \mathbb{I}_{\ge2}^\top\mathbf{M}_{t}\mathbb{I}_{1}
        - \eta\mathbb{I}_{\ge2}^\top \mathbf{M}_{t} (\bZ_0+\Delta\bZ_t)\mathbb{I}_{1}\mathbb{I}_{1}^\top \bZ_0^\top\mathbb{I}_{1} - \eta \mathbb{I}_{\ge2}^\top \bY_0 \mathbb{I}_{\ge2}\mathbb{I}_{\ge2}^\top \left(\bY_0+\Delta\bY_t\right)^\top \mathbf{M}_{t} \mathbb{I}_{1}  \\
        &~~~~ -\eta \cc{2}_t \mathbb{I}_{1}^\top\left(\bY_0+\Delta\bY_t\right)^\top\mathbf{M}_{t}\mathbb{I}_{1} - \eta \mathbb{I}_{\ge2}^\top\mathbf{M}_{t}(\bZ_0+\Delta\bZ_t)\mathbb{I}_1\cc{5}_t - \eta\mathbb{I}_{\ge2}^\top\mathbf{M}_{t}(\bZ_0+\Delta\bZ_t)\mathbb{I}_{\ge 2}\cc{7}_t^\top \\
        &~~~~ -\eta \cc{4}_t \mathbb{I}_{\ge 2}^\top\left(\bY_0+\Delta\bY_t\right)^\top \mathbf{M}_{t}\mathbb{I}_{1} + \eta^2 \mathbb{I}_{\ge2}^\top\mathbf{M}_{t}(\bZ_0+\Delta\bZ_t)\mathbb{I}_{\ge 2}\mathbb{I}_{\ge 2}^\top\left(\bY_0+\Delta\bY_t\right)^\top \mathbf{M}_{t}\mathbb{I}_{1} \\
        &~~~~ + \eta^2 \mathbb{I}_{\ge2}^\top\mathbf{M}_{t}(\bZ_0+\Delta\bZ_t)\mathbb{I}_{1}\mathbb{I}_{1}^\top\left(\bY_0+\Delta\bY_t\right)^\top \mathbf{M}_{t}\mathbb{I}_{1}
        \\
        &=
            \mathbb{I}_{\ge2}^\top\mathbf{M}_{t}\mathbb{I}_{1}
            - \eta\mathbb{I}_{\ge2}^\top \mathbf{M}_{t}(\mathbb{I}_{1}\mathbb{I}_{1}^\top+\mathbb{I}_{\ge 2}\mathbb{I}_{\ge 2}^\top) 
            (\bZ_0+\Delta\bZ_t)\mathbb{I}_{1}\mathbb{I}_{1}^\top \bZ_0^\top\mathbb{I}_{1} 
            \\
            &~~~~ - \eta \mathbb{I}_{\ge2}^\top \bY_0 \mathbb{I}_{\ge2}\mathbb{I}_{\ge2}^\top \left(\bY_0+\Delta\bY_t\right)^\top (\mathbb{I}_{1}\mathbb{I}_{1}^\top+\mathbb{I}_{\ge 2}\mathbb{I}_{\ge 2}^\top)\mathbf{M}_{t} \mathbb{I}_{1} 
        \\
        &~~~~ 
            -\eta \cc{2}_t \mathbb{I}_{1}^\top\left(\bY_0+\Delta\bY_t\right)^\top(\mathbb{I}_{1}\mathbb{I}_{1}^\top+\mathbb{I}_{\ge 2}\mathbb{I}_{\ge 2}^\top)\mathbf{M}_{t}\mathbb{I}_{1} 
            - \eta \mathbb{I}_{\ge2}^\top\mathbf{M}_{t}(\mathbb{I}_{1}\mathbb{I}_{1}^\top+\mathbb{I}_{\ge 2}\mathbb{I}_{\ge 2}^\top)(\bZ_0+\Delta\bZ_t)\mathbb{I}_1\cc{5}_t 
        \\
        &~~~~ 
            - \eta\mathbb{I}_{\ge2}^\top\mathbf{M}_{t}(\mathbb{I}_{1}\mathbb{I}_{1}^\top+\mathbb{I}_{\ge 2}\mathbb{I}_{\ge 2}^\top)(\bZ_0+\Delta\bZ_t)\mathbb{I}_{\ge 2}\cc{7}_t^\top 
            -\eta \cc{4}_t \mathbb{I}_{\ge 2}^\top\left(\bY_0+\Delta\bY_t\right)^\top (\mathbb{I}_{1}\mathbb{I}_{1}^\top+\mathbb{I}_{\ge 2}\mathbb{I}_{\ge 2}^\top)\mathbf{M}_{t}\mathbb{I}_{1} 
        \\
        &~~~~
            + \eta^2 \mathbb{I}_{\ge2}^\top\mathbf{M}_{t}(\mathbb{I}_{1}\mathbb{I}_{1}^\top+\mathbb{I}_{\ge 2}\mathbb{I}_{\ge 2}^\top)(\bZ_0+\Delta\bZ_t)\mathbb{I}_{\ge 2}\mathbb{I}_{\ge 2}^\top\left(\bY_0+\Delta\bY_t\right)^\top (\mathbb{I}_{1}\mathbb{I}_{1}^\top+\mathbb{I}_{\ge 2}\mathbb{I}_{\ge 2}^\top)\mathbf{M}_{t}\mathbb{I}_{1}
        \\
        &~~~~
            + \eta^2 \mathbb{I}_{\ge2}^\top\mathbf{M}_{t}(\mathbb{I}_{1}\mathbb{I}_{1}^\top+\mathbb{I}_{\ge 2}\mathbb{I}_{\ge 2}^\top)(\bZ_0+\Delta\bZ_t)\mathbb{I}_{1}\mathbb{I}_{1}^\top\left(\bY_0+\Delta\bY_t\right)^\top (\mathbb{I}_{1}\mathbb{I}_{1}^\top+\mathbb{I}_{\ge 2}\mathbb{I}_{\ge 2}^\top)\mathbf{M}_{t}\mathbb{I}_{1}
        \\
        &=
            \mathbb{I}_{\ge2}^\top\mathbf{M}_{t}\mathbb{I}_{1}
            - \eta\mathbb{I}_{\ge2}^\top \mathbf{M}_{t}\mathbb{I}_{1}\mathbb{I}_{1}^\top\bZ_0\mathbb{I}_{1}\mathbb{I}_{1}^\top \bZ_0^\top\mathbb{I}_{1} 
            - \eta\mathbb{I}_{\ge2}^\top \mathbf{M}_{t}\mathbb{I}_{1}\cc{5}_t\mathbb{I}_{1}^\top \bZ_0^\top\mathbb{I}_{1} 
            - \eta\mathbb{I}_{\ge2}^\top \mathbf{M}_{t}\mathbb{I}_{\ge2}\cc{6}_t\mathbb{I}_{1}^\top \bZ_0^\top\mathbb{I}_{1} 
        \\
        &~~~~ 
            - \eta \mathbb{I}_{\ge2}^\top \bY_0 \mathbb{I}_{\ge2}\mathbb{I}_{\ge2}^\top \bY_0\mathbb{I}_{\ge 2}\mathbb{I}_{\ge 2}^\top\mathbf{M}_{t} \mathbb{I}_{1}
            - \eta \mathbb{I}_{\ge2}^\top \bY_0 \mathbb{I}_{\ge2}\cc{3}_t^\top\mathbb{I}_{1}^\top\mathbf{M}_{t} \mathbb{I}_{1}
            - \eta \mathbb{I}_{\ge2}^\top \bY_0 \mathbb{I}_{\ge2}\cc{4}_t^\top\mathbb{I}_{\ge2}^\top\mathbf{M}_{t} \mathbb{I}_{1}
        \\
        &~~~~ 
            -\eta \cc{2}_t \mathbb{I}_{1}^\top\bY_0\mathbb{I}_{1}\mathbb{I}_{1}^\top\mathbf{M}_{t}\mathbb{I}_{1} 
            -\eta \cc{2}_t \cc{1}_t\mathbb{I}_{1}^\top\mathbf{M}_{t}\mathbb{I}_{1} 
            -\eta \cc{2}_t \cc{2}_t^\top\mathbb{I}_{\ge2}^\top\mathbf{M}_{t}\mathbb{I}_{1} 
        \\
        &~~~~ 
            - \eta\mathbb{I}_{\ge2}^\top \mathbf{M}_{t}\mathbb{I}_{1}\mathbb{I}_{1}^\top\bZ_0\mathbb{I}_{1}\cc{5}_t 
            - \eta\mathbb{I}_{\ge2}^\top \mathbf{M}_{t}\mathbb{I}_{1}\cc{5}_t\cc{5}_t 
            - \eta\mathbb{I}_{\ge2}^\top \mathbf{M}_{t}\mathbb{I}_{\ge2}\cc{6}_t\cc{5}_t 
        \\
        &~~~~ 
            - \eta\mathbb{I}_{\ge2}^\top\mathbf{M}_{t}\mathbb{I}_{\ge 2}\mathbb{I}_{\ge 2}^\top\bZ_0\mathbb{I}_{\ge 2}\cc{7}_t^\top 
            - \eta\mathbb{I}_{\ge2}^\top\mathbf{M}_{t}\mathbb{I}_{1}\cc{7}_t\cc{7}_t^\top 
            - \eta\mathbb{I}_{\ge2}^\top\mathbf{M}_{t}\mathbb{I}_{\ge2}\cc{8}_t\cc{7}_t^\top 
        \\
        &~~~~
            - \eta \cc{4}_t\mathbb{I}_{\ge2}^\top \bY_0\mathbb{I}_{\ge 2}\mathbb{I}_{\ge 2}^\top\mathbf{M}_{t}^\top \mathbb{I}_{1}
            - \eta \cc{4}_t\cc{3}_t^\top\mathbb{I}_{1}^\top\mathbf{M}_{t} \mathbb{I}_{1}
            - \eta \cc{4}_t\cc{4}_t^\top\mathbb{I}_{\ge2}^\top\mathbf{M}_{t} \mathbb{I}_{1}
        \\
        &~~~~
            + \eta^2 \mathbb{I}_{\ge2}^\top\mathbf{M}_{t}
            (\mathbb{I}_{\ge 2}\mathbb{I}_{\ge 2}^\top\bZ_0\mathbb{I}_{\ge 2}+\mathbb{I}_1\cc{7}_t+\mathbb{I}_{\ge2}\cc{8}_t)
            (\mathbb{I}_{\ge 2}^\top\bY_0\mathbb{I}_{\ge 2}\mathbb{I}_{\ge 2}^\top + \cc{3}_t^\top \mathbb{I}_1^\top + \cc{4}_t^\top \mathbb{I}_{\ge2}^\top)
            \mathbf{M}_{t}\mathbb{I}_{1}
        \\
        &~~~~
            + \eta^2 \mathbb{I}_{\ge2}^\top\mathbf{M}_{t}
            (\mathbb{I}_{1}\mathbb{I}_{1}^\top\bZ_0\mathbb{I}_1^\top+\mathbb{I}_{1}\cc{5}_t+\mathbb{I}_{\ge2}\cc{6}_t)
            (\mathbb{I}_{1}^\top\bY_0\mathbb{I}_{1}\mathbb{I}_1^\top+\cc{1}_t\mathbb{I}_{1}^\top+\cc{2}_t^\top\mathbb{I}_{\ge2}^\top)\mathbf{M}_{t}\mathbb{I}_{1}
        \\
        \end{align*}
    \begin{align*}
        \mathbb{I}_{1}^\top\mathbf{M}_{t+1} \mathbb{I}_{\ge2}
        &=
        \mathbb{I}_{1}^\top\mathbf{M}_{t}\mathbb{I}_{\ge2}
        - \eta\mathbb{I}_{1}^\top \mathbf{M}_{t} (\bZ_0+\Delta\bZ_t)\mathbb{I}_{\ge 2}\mathbb{I}_{\ge2}^\top \bZ_0^\top\mathbb{I}_{\ge2} - \eta \mathbb{I}_1^\top \bY_0 \mathbb{I}_1\mathbb{I}_1^\top \left(\bY_0+\Delta\bY_t\right)^\top \mathbf{M}_{t} \mathbb{I}_{\ge 2}  \\
        &~~~~ -\eta \cc{1}_t \mathbb{I}_{1}^\top\left(\bY_0+\Delta\bY_t\right)^\top\mathbf{M}_{t}\mathbb{I}_{\ge 2} - \eta \mathbb{I}_{1}^\top\mathbf{M}_{t}(\bZ_0+\Delta\bZ_t)\mathbb{I}_1\cc{6}_t^\top - \eta\mathbb{I}_{1}^\top\mathbf{M}_{t}(\bZ_0+\Delta\bZ_t)\mathbb{I}_{\ge 2}\cc{8}_t^\top \\
        &~~~~ -\eta \cc{3}_t \mathbb{I}_{\ge 2}^\top\left(\bY_0+\Delta\bY_t\right)^\top \mathbf{M}_{t}\mathbb{I}_{\ge 2} + \eta^2 \mathbb{I}_{1}^\top\mathbf{M}_{t}(\bZ_0+\Delta\bZ_t)\mathbb{I}_{\ge 2}\mathbb{I}_{\ge 2}^\top\left(\bY_0+\Delta\bY_t\right)^\top \mathbf{M}_{t}\mathbb{I}_{\ge 2} \\
        &~~~~ + \eta^2 \mathbb{I}_{1}^\top\mathbf{M}_{t}(\bZ_0+\Delta\bZ_t)\mathbb{I}_{1}\mathbb{I}_{1}^\top\left(\bY_0+\Delta\bY_t\right)^\top \mathbf{M}_{t}\mathbb{I}_{\ge 2}
        \\
        &=
            \mathbb{I}_{1}^\top\mathbf{M}_{t}\mathbb{I}_{\ge2}
            - \eta\mathbb{I}_{1}^\top \mathbf{M}_{t} (\mathbb{I}_{1}\mathbb{I}_{1}^\top+\mathbb{I}_{\ge 2}\mathbb{I}_{\ge 2}^\top)(\bZ_0+\Delta\bZ_t)\mathbb{I}_{\ge 2}\mathbb{I}_{\ge2}^\top \bZ_0^\top\mathbb{I}_{\ge2} 
            \\
            &~~~~ - \eta \mathbb{I}_1^\top \bY_0 \mathbb{I}_1\mathbb{I}_1^\top \left(\bY_0+\Delta\bY_t\right)^\top (\mathbb{I}_{1}\mathbb{I}_{1}^\top+\mathbb{I}_{\ge 2}\mathbb{I}_{\ge 2}^\top)\mathbf{M}_{t} \mathbb{I}_{\ge 2}  
        \\
        &~~~~ 
            -\eta \cc{1}_t \mathbb{I}_{1}^\top\left(\bY_0+\Delta\bY_t\right)^\top(\mathbb{I}_{1}\mathbb{I}_{1}^\top+\mathbb{I}_{\ge 2}\mathbb{I}_{\ge 2}^\top)\mathbf{M}_{t}\mathbb{I}_{\ge 2} 
            - \eta \mathbb{I}_{1}^\top\mathbf{M}_{t}(\mathbb{I}_{1}\mathbb{I}_{1}^\top+\mathbb{I}_{\ge 2}\mathbb{I}_{\ge 2}^\top)(\bZ_0+\Delta\bZ_t)\mathbb{I}_1\cc{6}_t^\top 
        \\
        &~~~~
            - \eta\mathbb{I}_{1}^\top\mathbf{M}_{t}(\mathbb{I}_{1}\mathbb{I}_{1}^\top+\mathbb{I}_{\ge 2}\mathbb{I}_{\ge 2}^\top)(\bZ_0+\Delta\bZ_t)\mathbb{I}_{\ge 2}\cc{8}_t^\top 
            -\eta \cc{3}_t \mathbb{I}_{\ge 2}^\top\left(\bY_0+\Delta\bY_t\right)^\top (\mathbb{I}_{1}\mathbb{I}_{1}^\top+\mathbb{I}_{\ge 2}\mathbb{I}_{\ge 2}^\top)\mathbf{M}_{t}\mathbb{I}_{\ge 2} 
        \\
        &~~~~
            + \eta^2 \mathbb{I}_{1}^\top\mathbf{M}_{t}(\mathbb{I}_{1}\mathbb{I}_{1}^\top+\mathbb{I}_{\ge 2}\mathbb{I}_{\ge 2}^\top)(\bZ_0+\Delta\bZ_t)\mathbb{I}_{\ge 2}\mathbb{I}_{\ge 2}^\top\left(\bY_0+\Delta\bY_t\right)^\top (\mathbb{I}_{1}\mathbb{I}_{1}^\top+\mathbb{I}_{\ge 2}\mathbb{I}_{\ge 2}^\top)\mathbf{M}_{t}\mathbb{I}_{\ge 2}
        \\
        &~~~~
            + \eta^2 \mathbb{I}_{1}^\top\mathbf{M}_{t}(\mathbb{I}_{1}\mathbb{I}_{1}^\top+\mathbb{I}_{\ge 2}\mathbb{I}_{\ge 2}^\top)(\bZ_0+\Delta\bZ_t)\mathbb{I}_{1}\mathbb{I}_{1}^\top\left(\bY_0+\Delta\bY_t\right)^\top (\mathbb{I}_{1}\mathbb{I}_{1}^\top+\mathbb{I}_{\ge 2}\mathbb{I}_{\ge 2}^\top)\mathbf{M}_{t}\mathbb{I}_{\ge 2}
        \\
        &=
            \mathbb{I}_{1}^\top\mathbf{M}_{t}\mathbb{I}_{\ge2}
            - \eta\mathbb{I}_{1}^\top \mathbf{M}_{t} \mathbb{I}_{\ge 2}\mathbb{I}_{\ge 2}^\top \bZ_0 \mathbb{I}_{\ge 2}\mathbb{I}_{\ge2}^\top \bZ_0^\top\mathbb{I}_{\ge2} 
            - \eta\mathbb{I}_{1}^\top \mathbf{M}_{t} \mathbb{I}_{1}\cc{7}_t\mathbb{I}_{\ge2}^\top \bZ_0^\top\mathbb{I}_{\ge2} 
            - \eta\mathbb{I}_{1}^\top \mathbf{M}_{t} \mathbb{I}_{\ge2}\cc{8}_t\mathbb{I}_{\ge2}^\top \bZ_0^\top\mathbb{I}_{\ge2} 
        \\
        &~~~~
            - \eta \mathbb{I}_1^\top \bY_0 \mathbb{I}_1\mathbb{I}_1^\top \bY_0\mathbb{I}_{1}\mathbb{I}_{1}^\top\mathbf{M}_{t} \mathbb{I}_{\ge 2}  
            - \eta \mathbb{I}_1^\top \bY_0 \mathbb{I}_1\cc{1}_t\mathbb{I}_{1}^\top\mathbf{M}_{t} \mathbb{I}_{\ge 2}
            - \eta \mathbb{I}_1^\top \bY_0 \mathbb{I}_1\cc{2}_t^\top\mathbb{I}_{\ge2}^\top\mathbf{M}_{t} \mathbb{I}_{\ge 2}
        \\
        &~~~~ 
            - \eta \cc{1}_t \mathbb{I}_1\mathbb{I}_1^\top \bY_0\mathbb{I}_{1}\mathbb{I}_{1}^\top\mathbf{M}_{t} \mathbb{I}_{\ge 2}  
            - \eta \cc{1}_t\cc{1}_t\mathbb{I}_{1}^\top\mathbf{M}_{t}\mathbb{I}_{\ge 2}
            - \eta \cc{1}_t\cc{2}_t^\top\mathbb{I}_{\ge2}^\top\mathbf{M}_{t} \mathbb{I}_{\ge 2}
        \\
        &~~~~ 
            - \eta \mathbb{I}_{1}^\top\mathbf{M}_{t}\mathbb{I}_{1}\mathbb{I}_{1}^\top\bZ_0\mathbb{I}_1\cc{6}_t^\top 
            - \eta \mathbb{I}_{1}^\top\mathbf{M}_{t}\mathbb{I}_{1}\cc{5}_t\cc{6}_t^\top 
            - \eta \mathbb{I}_{1}^\top\mathbf{M}_{t}\mathbb{I}_{\ge2}\cc{6}_t\cc{6}_t^\top 
        \\
        &~~~~
            - \eta\mathbb{I}_{1}^\top \mathbf{M}_{t} \mathbb{I}_{\ge 2}\mathbb{I}_{\ge 2}^\top \bZ_0 \mathbb{I}_{\ge 2} \cc{8}_t^\top 
            - \eta\mathbb{I}_{1}^\top \mathbf{M}_{t} \mathbb{I}_{1}\cc{7}_t\cc{8}_t^\top 
            - \eta\mathbb{I}_{1}^\top \mathbf{M}_{t} \mathbb{I}_{\ge2}\cc{8}_t \cc{8}_t^\top 
        \\
        &~~~~
            -\eta \cc{3}_t \mathbb{I}_{\ge 2}^\top\bY_0\mathbb{I}_{\ge 2}\mathbb{I}_{\ge 2}^\top\mathbf{M}_{t}\mathbb{I}_{\ge 2} 
            -\eta \cc{3}_t \cc{3}_t^\top \mathbb{I}_{1}^\top\mathbf{M}_{t}\mathbb{I}_{\ge 2} 
            -\eta \cc{3}_t \cc{4}_t^\top \mathbb{I}_{\ge2}^\top\mathbf{M}_{t}\mathbb{I}_{\ge 2} 
        \\
        &~~~~
            + \eta^2 \mathbb{I}_{1}^\top\mathbf{M}_{t}
            (\mathbb{I}_{\ge 2}\mathbb{I}_{\ge 2}^\top\bZ_0\mathbb{I}_{\ge 2}+\mathbb{I}_1\cc{7}_t+\mathbb{I}_{\ge2}\cc{8}_t)
            (\mathbb{I}_{\ge 2}^\top\bY_0\mathbb{I}_{\ge 2}\mathbb{I}_{\ge 2}^\top + \cc{3}_t^\top \mathbb{I}_1^\top + \cc{4}_t^\top \mathbb{I}_{\ge2}^\top)
            \mathbf{M}_{t}\mathbb{I}_{\ge 2}
        \\
        &~~~~
            + \eta^2 \mathbb{I}_{1}^\top\mathbf{M}_{t}
            (\mathbb{I}_{1}\mathbb{I}_{1}^\top\bZ_0\mathbb{I}_1^\top+\mathbb{I}_{1}\cc{5}_t+\mathbb{I}_{\ge2}\cc{6}_t)
            (\mathbb{I}_{1}^\top\bY_0\mathbb{I}_{1}\mathbb{I}_1^\top+\cc{1}_t\mathbb{I}_{1}^\top+\cc{2}_t^\top\mathbb{I}_{\ge2}^\top)\mathbf{M}_{t}\mathbb{I}_{\ge 2}
        \\
        \end{align*}
        \begin{align*}
        \mathbb{I}_{\ge2}^\top\mathbf{M}_{t+1} \mathbb{I}_{\ge2}
        &=
        \mathbb{I}_{\ge2}^\top\mathbf{M}_{t}\mathbb{I}_{\ge2} - \eta\mathbb{I}_{\ge 2}^\top \mathbf{M}_{t} (\bZ_0+\Delta\bZ_t)\mathbb{I}_{\ge 2}\mathbb{I}_{\ge2}^\top \bZ_0^\top\mathbb{I}_{\ge2} - \eta \mathbb{I}_{\ge2}^\top \bY_0 \mathbb{I}_{\ge2}\mathbb{I}_{\ge 2}^\top \left(\bY_0+\Delta\bY_t\right)^\top \mathbf{M}_{t} \mathbb{I}_{\ge 2}  \\
        &~~~~ -\eta \cc{2}_t \mathbb{I}_{1}^\top\left(\bY_0+\Delta\bY_t\right)^\top\mathbf{M}_{t}\mathbb{I}_{\ge 2} - \eta \mathbb{I}_{\ge 2}^\top\mathbf{M}_{t}(\bZ_0+\Delta\bZ_t)\mathbb{I}_1\cc{6}_t^\top - \eta\mathbb{I}_{\ge 2}^\top\mathbf{M}_{t}(\bZ_0+\Delta\bZ_t)\mathbb{I}_{\ge 2}\cc{8}_t^\top \\
        &~~~~ -\eta \cc{4}_t \mathbb{I}_{\ge 2}^\top\left(\bY_0+\Delta\bY_t\right)^\top \mathbf{M}_{t}\mathbb{I}_{\ge 2} + \eta^2 \mathbb{I}_{\ge 2}^\top\mathbf{M}_{t}(\bZ_0+\Delta\bZ_t)\mathbb{I}_{\ge 2}\mathbb{I}_{\ge 2}^\top\left(\bY_0+\Delta\bY_t\right)^\top \mathbf{M}_{t}\mathbb{I}_{\ge 2} \\
        &~~~~ + \eta^2 \mathbb{I}_{\ge2}^\top\mathbf{M}_{t}(\bZ_0+\Delta\bZ_t)\mathbb{I}_{1}\mathbb{I}_{1}^\top\left(\bY_0+\Delta\bY_t\right)^\top \mathbf{M}_{t}\mathbb{I}_{\ge 2}\\ 
        &= 
            \mathbb{I}_{\ge2}^\top\mathbf{M}_{t}\mathbb{I}_{\ge2} 
            - \eta\mathbb{I}_{\ge 2}^\top \mathbf{M}_{t}(\mathbb{I}_{1}\mathbb{I}_{1}^\top+\mathbb{I}_{\ge 2}\mathbb{I}_{\ge 2}^\top) (\bZ_0+\Delta\bZ_t)\mathbb{I}_{\ge 2}\mathbb{I}_{\ge2}^\top \bZ_0^\top\mathbb{I}_{\ge2}
        \\
        &~~~~ 
            - \eta \mathbb{I}_{\ge2}^\top \bY_0 \mathbb{I}_{\ge2}\mathbb{I}_{\ge 2}^\top \left(\bY_0+\Delta\bY_t\right)^\top (\mathbb{I}_{1}\mathbb{I}_{1}^\top+\mathbb{I}_{\ge 2}\mathbb{I}_{\ge 2}^\top)\mathbf{M}_{t} \mathbb{I}_{\ge 2} 
         \\
        &~~~~ 
            -\eta \cc{2}_t \mathbb{I}_{1}^\top\left(\bY_0+\Delta\bY_t\right)^\top(\mathbb{I}_{1}\mathbb{I}_{1}^\top+\mathbb{I}_{\ge 2}\mathbb{I}_{\ge 2}^\top)\mathbf{M}_{t}\mathbb{I}_{\ge 2} 
            - \eta \mathbb{I}_{\ge 2}^\top\mathbf{M}_{t}(\mathbb{I}_{1}\mathbb{I}_{1}^\top+\mathbb{I}_{\ge 2}\mathbb{I}_{\ge 2}^\top)(\bZ_0+\Delta\bZ_t)\mathbb{I}_1\cc{6}_t^\top 
        \\
        &~~~~ 
            -\eta\mathbb{I}_{\ge 2}^\top\mathbf{M}_{t}(\mathbb{I}_{1}\mathbb{I}_{1}^\top+\mathbb{I}_{\ge 2}\mathbb{I}_{\ge 2}^\top)(\bZ_0+\Delta\bZ_t)\mathbb{I}_{\ge 2}\cc{8}_t^\top
            -\eta \cc{4}_t \mathbb{I}_{\ge 2}^\top\left(\bY_0+\Delta\bY_t\right)^\top (\mathbb{I}_{1}\mathbb{I}_{1}^\top+\mathbb{I}_{\ge 2}\mathbb{I}_{\ge 2}^\top)\mathbf{M}_{t}\mathbb{I}_{\ge 2} 
        \\
        &~~~~ 
            + \eta^2 \mathbb{I}_{\ge 2}^\top\mathbf{M}_{t}(\mathbb{I}_{1}\mathbb{I}_{1}^\top+\mathbb{I}_{\ge 2}\mathbb{I}_{\ge 2}^\top)(\bZ_0+\Delta\bZ_t)\mathbb{I}_{\ge 2}\mathbb{I}_{\ge 2}^\top\left(\bY_0+\Delta\bY_t\right)^\top (\mathbb{I}_{1}\mathbb{I}_{1}^\top+\mathbb{I}_{\ge 2}\mathbb{I}_{\ge 2}^\top)\mathbf{M}_{t}\mathbb{I}_{\ge 2}
        \\
        &~~~~
            + \eta^2 \mathbb{I}_{\ge2}^\top\mathbf{M}_{t}(\mathbb{I}_{1}\mathbb{I}_{1}^\top+\mathbb{I}_{\ge 2}\mathbb{I}_{\ge 2}^\top)(\bZ_0+\Delta\bZ_t)\mathbb{I}_{1}\mathbb{I}_{1}^\top\left(\bY_0+\Delta\bY_t\right)^\top (\mathbb{I}_{1}\mathbb{I}_{1}^\top+\mathbb{I}_{\ge 2}\mathbb{I}_{\ge 2}^\top)\mathbf{M}_{t}\mathbb{I}_{\ge 2}
        \\
        &= 
            \mathbb{I}_{\ge2}^\top\mathbf{M}_{t}\mathbb{I}_{\ge2} 
            - \eta\mathbb{I}_{\ge 2}^\top \mathbf{M}_{t}\mathbb{I}_{\ge 2}\mathbb{I}_{\ge 2}^\top\bZ_0\mathbb{I}_{\ge 2}\mathbb{I}_{\ge2}^\top \bZ_0^\top\mathbb{I}_{\ge2}
            - \eta\mathbb{I}_{\ge 2}^\top \mathbf{M}_{t}\mathbb{I}_{\ge 2}\cc{8}_t\mathbb{I}_{\ge2}^\top \bZ_0^\top\mathbb{I}_{\ge2}
            - \eta\mathbb{I}_{\ge 2}^\top \mathbf{M}_{t}\mathbb{I}_{1}\cc{7}_t\mathbb{I}_{\ge2}^\top \bZ_0^\top\mathbb{I}_{\ge2}
            \\
        &~~~~ 
            - \eta \mathbb{I}_{\ge2}^\top \bY_0 \mathbb{I}_{\ge2}\mathbb{I}_{\ge 2}^\top \bY_0\mathbb{I}_{\ge 2}\mathbb{I}_{\ge 2}^\top\mathbf{M}_{t} \mathbb{I}_{\ge 2}
            - \eta \mathbb{I}_{\ge2}^\top \bY_0 \mathbb{I}_{\ge2}\cc{4}_t^\top\mathbb{I}_{\ge 2}^\top\mathbf{M}_{t} \mathbb{I}_{\ge 2} 
            - \eta \mathbb{I}_{\ge2}^\top \bY_0 \mathbb{I}_{\ge2}\cc{3}_t^\top\mathbb{I}_{1}^\top\mathbf{M}_{t} \mathbb{I}_{\ge 2}
         \\
        &~~~~ 
            -\eta \cc{2}_t \mathbb{I}_{1}^\top\bY_0\mathbb{I}_{1}\mathbb{I}_{1}^\top\mathbf{M}_{t}\mathbb{I}_{\ge 2} 
            -\eta \cc{2}_t \cc{1}_t\mathbb{I}_{1}^\top\mathbf{M}_{t}\mathbb{I}_{\ge 2} 
            -\eta \cc{2}_t \cc{3}_t^\top\mathbb{I}_{1}^\top\mathbf{M}_{t}\mathbb{I}_{\ge 2} 
        \\
        &~~~~ 
            - \eta \mathbb{I}_{\ge 2}^\top\mathbf{M}_{t}\mathbb{I}_{1}\mathbb{I}_{1}^\top\bZ_0\mathbb{I}_1\cc{6}_t^\top 
            - \eta \mathbb{I}_{\ge 2}^\top\mathbf{M}_{t}\mathbb{I}_{1}\cc{5}_t\cc{6}_t^\top 
            - \eta \mathbb{I}_{\ge 2}^\top\mathbf{M}_{t}\mathbb{I}_{\ge2}\cc{6}_t\cc{6}_t^\top 
        \\
        &~~~~ 
            -\eta\mathbb{I}_{\ge 2}^\top\mathbf{M}_{t}\mathbb{I}_{\ge 2}\mathbb{I}_{\ge 2}^\top\bZ_0\mathbb{I}_{\ge 2}\cc{8}_t^\top
            -\eta\mathbb{I}_{\ge 2}^\top\mathbf{M}_{t}\mathbb{I}_{1}\cc{7}\cc{8}_t^\top
            -\eta\mathbb{I}_{\ge 2}^\top\mathbf{M}_{t}\mathbb{I}_{\ge2}\cc{8}\cc{8}_t^\top
        \\
        &~~~~ 
            -\eta \cc{4}_t \mathbb{I}_{\ge 2}^\top\bY_0\mathbb{I}_{\ge 2}\mathbb{I}_{\ge 2}^\top\mathbf{M}_{t}\mathbb{I}_{\ge 2} 
            -\eta \cc{4}_t \cc{3}_t^\top\mathbb{I}_{1}^\top\mathbf{M}_{t}\mathbb{I}_{\ge 2} 
            -\eta \cc{4}_t \cc{4}^\top\mathbb{I}_{\ge 2}^\top\mathbf{M}_{t}\mathbb{I}_{\ge 2} 
        \\
        &~~~~
            + \eta^2 \mathbb{I}_{\ge 2}^\top\mathbf{M}_{t}
            (\mathbb{I}_{\ge 2}\mathbb{I}_{\ge 2}^\top\bZ_0\mathbb{I}_{\ge 2}+\mathbb{I}_1\cc{7}_t+\mathbb{I}_{\ge2}\cc{8}_t)
            (\mathbb{I}_{\ge 2}^\top\bY_0\mathbb{I}_{\ge 2}\mathbb{I}_{\ge 2}^\top + \cc{3}_t^\top \mathbb{I}_1^\top + \cc{4}_t^\top \mathbb{I}_{\ge2}^\top)            
            \mathbf{M}_{t}\mathbb{I}_{\ge 2}
        \\
        &~~~~
            + \eta^2 \mathbb{I}_{\ge2}^\top\mathbf{M}_{t}
            (\mathbb{I}_{1}\mathbb{I}_{1}^\top\bZ_0\mathbb{I}_1^\top+\mathbb{I}_{1}\cc{5}_t+\mathbb{I}_{\ge2}\cc{6}_t)
            (\mathbb{I}_{1}^\top\bY_0\mathbb{I}_{1}\mathbb{I}_1^\top+\cc{1}_t\mathbb{I}_{1}^\top+\cc{2}_t^\top\mathbb{I}_{\ge2}^\top)\mathbf{M}_{t}\mathbb{I}_{\ge 2}
        \\
    \end{align*}

    In the following equations, {\color{red}red} terms are expected to be $O(1)$ while {\color{blue}blue} terms are expected to be $O(\epsilon)$.

    \begin{align}
        \cc{1}_{t+1} &= \cc{1}_t-\eta\mathbb{I}_1^\top\mathbf{M}_t(\bZ_0+\Delta\bZ_t)\mathbb{I}_1 \\
        &= \cc{1}_t-\eta\mathbb{I}_1^\top\mathbf{M}_t\mathbb{I}_1\mathbb{I}_1^\top(\bZ_0+\Delta\bZ_t)\mathbb{I}_1-\eta\mathbb{I}_1^\top\mathbf{M}_t\mathbb{I}_{\ge2}\mathbb{I}_{\ge2}^\top(\bZ_0+\Delta\bZ_t)\mathbb{I}_1 \\
        &= \cc{1}_t-\eta\crr{\mathbb{I}_1^\top\mathbf{M}_t\mathbb{I}_1}\mathbb{I}_1^\top \bZ_0\mathbb{I}_1 - \eta\crr{\mathbb{I}_1^\top\mathbf{M}_t\mathbb{I}_1}\crr{\cc{5}_t}- \eta\cbb{\mathbb{I}_1^\top\mathbf{M}_t\mathbb{I}_{\ge2}}\cc{6}_t,
        \\
        \cc{2}_{t+1} &= \cc{2}_t-\eta\mathbb{I}_{\ge 2}^\top\mathbf{M}_t\mathbb{I}_1\mathbb{I}_1^\top(\bZ_0+\Delta\bZ_t)\mathbb{I}_1 -\eta\mathbb{I}_{\ge 2}^\top\mathbf{M}_t\mathbb{I}_{\ge2}\mathbb{I}_{\ge2}^\top(\bZ_0+\Delta\bZ_t)\mathbb{I}_1 \\
        &= \cc{2}_t-\eta\cbb{\mathbb{I}_{\ge 2}^\top\mathbf{M}_t\mathbb{I}_1}\mathbb{I}_1^\top\bZ_0\mathbb{I}_1 -\eta\cbb{\mathbb{I}_{\ge 2}^\top\mathbf{M}_t\mathbb{I}_1}\crr{\cc{5}_t} -\eta\crr{\mathbb{I}_{\ge 2}^\top\mathbf{M}_t\mathbb{I}_{\ge2}}\cc{6}_t,
        \\
        \cc{3}_{t+1} &= \cc{3}_t-\eta\mathbb{I}_{1}^\top\mathbf{M}_t\mathbb{I}_1\mathbb{I}_1^\top(\bZ_0+\Delta\bZ_t)\mathbb{I}_{\ge 2}-\eta\mathbb{I}_{1}^\top\mathbf{M}_t\mathbb{I}_{\ge2}\mathbb{I}_{\ge2}^\top(\bZ_0+\Delta\bZ_t)\mathbb{I}_{\ge 2} \\
        &= \cc{3}_t-\eta\crr{\mathbb{I}_{1}^\top\mathbf{M}_t\mathbb{I}_1}\cc{7}_t-\eta\cbb{\mathbb{I}_{1}^\top\mathbf{M}_t\mathbb{I}_{\ge2}}\mathbb{I}_{\ge2}^\top\bZ_0\mathbb{I}_{\ge2}-\eta\cbb{\mathbb{I}_{1}^\top\mathbf{M}_t\mathbb{I}_{\ge2}}\cc{8}_t,
        \\
        \cc{4}_{t+1} &= \cc{4}_t-\eta\mathbb{I}_{\ge 2}^\top\mathbf{M}_t\mathbb{I}_1\mathbb{I}_1^\top(\bZ_0+\Delta\bZ_t)\mathbb{I}_{\ge 2}-\eta\mathbb{I}_{\ge 2}^\top\mathbf{M}_t\mathbb{I}_{\ge2}\mathbb{I}_{\ge2}^\top(\bZ_0+\Delta\bZ_t)\mathbb{I}_{\ge 2} \\
        &= \cc{4}_t-\eta\cbb{\mathbb{I}_{\ge 2}^\top\mathbf{M}_t\mathbb{I}_1}\cc{7}_t-\eta\cbb{\mathbb{I}_{\ge 2}^\top\mathbf{M}_t\mathbb{I}_{\ge2}}\mathbb{I}_{\ge2}^\top\bZ_0\mathbb{I}_{\ge 2} - \eta\cbb{\mathbb{I}_{\ge 2}^\top\mathbf{M}_t\mathbb{I}_{\ge2}}\cc{8}_t,
        \\
        \cc{5}_{t+1} &= \cc{5}_t-\eta\mathbb{I}_1^\top\mathbf{M}_t^\top\mathbb{I}_1\mathbb{I}_1^\top\left(\bY_0+\Delta\bY_t\right)\mathbb{I}_1-\eta\mathbb{I}_1^\top\mathbf{M}_t^\top\mathbb{I}_{\ge2}\mathbb{I}_{\ge2}^\top\left(\bY_0+\Delta\bY_t\right)\mathbb{I}_1 \\
        &= \cc{5}_t-\eta\crr{\mathbb{I}_1^\top\mathbf{M}_t^\top\mathbb{I}_1}\mathbb{I}_1^\top\bY_0\mathbb{I}_1-\eta\crr{\mathbb{I}_1^\top\mathbf{M}_t^\top\mathbb{I}_1}\crr{\cc{1}_t}-\eta\cbb{\mathbb{I}_1^\top\mathbf{M}_t^\top\mathbb{I}_{\ge2}}\cc{2}_t,
        \\
        \cc{6}_{t+1} &= \cc{6}_t-\eta\mathbb{I}_{\ge 2}^\top\mathbf{M}_t^\top\mathbb{I}_{1}\mathbb{I}_{1}^\top\left(\bY_0+\Delta\bY_t\right)\mathbb{I}_1-\eta\mathbb{I}_{\ge 2}^\top\mathbf{M}_t^\top\mathbb{I}_{\ge2}\mathbb{I}_{\ge2}^\top\left(\bY_0+\Delta\bY_t\right)\mathbb{I}_1 \\
        &= \cc{6}_t-\eta\cbb{\mathbb{I}_{\ge 2}^\top\mathbf{M}_t^\top\mathbb{I}_{1}}\mathbb{I}_{1}^\top\bY_0\mathbb{I}_1 - \eta\cbb{\mathbb{I}_{\ge 2}^\top\mathbf{M}_t^\top\mathbb{I}_{1}}\crr{\cc{1}_t} -\eta\cbb{\mathbb{I}_{\ge 2}^\top\mathbf{M}_t^\top\mathbb{I}_{\ge2}}\cc{2}_t,
        \\
        \cc{7}_{t+1} &= \cc{7}_t-\eta\mathbb{I}_{1}^\top\mathbf{M}_t^\top\mathbb{I}_{1}\mathbb{I}_{1}^\top\left(\bY_0+\Delta\bY_t\right)\mathbb{I}_{\ge 2} - \eta\mathbb{I}_{1}^\top\mathbf{M}_t^\top\mathbb{I}_{\ge2}\mathbb{I}_{\ge2}^\top\left(\bY_0+\Delta\bY_t\right)\mathbb{I}_{\ge 2} \\
        &= \cc{7}_t-\eta\crr{\mathbb{I}_{1}^\top\mathbf{M}_t^\top\mathbb{I}_{1}}\cc{3}_t - \eta\cbb{\mathbb{I}_{1}^\top\mathbf{M}_t^\top\mathbb{I}_{\ge2}}\mathbb{I}_{\ge2}^\top\bY_0\mathbb{I}_{\ge 2} - \eta\cbb{\mathbb{I}_{1}^\top\mathbf{M}_t^\top\mathbb{I}_{\ge2}}\cc{4}_t,
        \\
        \cc{8}_{t+1} &= \cc{8}_t - \eta\mathbb{I}_{\ge 2}^\top\mathbf{M}_t^\top\mathbb{I}_{1}\mathbb{I}_{1}^\top\left(\bY_0+\Delta\bY_t\right)\mathbb{I}_{\ge 2} - \eta\mathbb{I}_{\ge 2}^\top\mathbf{M}_t^\top\mathbb{I}_{\ge2}\mathbb{I}_{\ge2}^\top\left(\bY_0+\Delta\bY_t\right)\mathbb{I}_{\ge 2} \\
        &= \cc{8}_t - \eta\cbb{\mathbb{I}_{\ge 2}^\top\mathbf{M}_t^\top\mathbb{I}_{1}}\cc{3}_t - \eta\cbb{\mathbb{I}_{\ge 2}^\top\mathbf{M}_t^\top\mathbb{I}_{\ge2}}\mathbb{I}_{\ge2}^\top\bY_0\mathbb{I}_{\ge 2} - \eta\cbb{\mathbb{I}_{\ge 2}^\top\mathbf{M}_t^\top\mathbb{I}_{\ge2}}\cc{4}_t,
    \end{align}

    By expanding the definition of $\mathbb{I}_1\mathbf{M}_t\mathbb{I}_1$, the update rules of $\cc{1}_t$ and $\cc{5}_t$ are
    \begin{align}
        \cc{1}_{t+1} &= \cc{1}_t - \eta(\cc{1}_t\frac{\sigma_1}{\alpha}+\sigma_1\alpha\cc{5}_t+\cc{1}_t\cc{5}_t+\cc{3}_t\cc{7}_t^\top)(\frac{\sigma_1}{\alpha}+\cc{5}_t) - \eta\cbb{\mathbb{I}_1^\top\mathbf{M}_t\mathbb{I}_{\ge2}}\cc{6}_t \\
        &= \cc{1}_t - \eta(\cc{1}_t\frac{\sigma_1}{\alpha}+\sigma_1\alpha\cc{5}_t+\cc{1}_t\cc{5}_t)(\frac{\sigma_1}{\alpha}+\cc{5}_t) - \eta\cbb{\mathbb{I}_1^\top\mathbf{M}_t\mathbb{I}_{\ge2}}\cc{6}_t - \eta\cc{3}_t\cc{7}_t^\top(\frac{\sigma_1}{\alpha}+\cc{5}_t), \\
        \cc{5}_{t+1} &= \cc{5}_t - \eta(\cc{1}_t\frac{\sigma_1}{\alpha}+\sigma_1\alpha\cc{5}_t+\cc{1}_t\cc{5}_t+\cc{3}_t\cc{7}_t^\top)(\sigma_1\alpha+\cc{1}_t) - \eta\cbb{\mathbb{I}_1^\top\mathbf{M}_t^\top\mathbb{I}_{\ge2}}\cc{2}_t \\
        &= \cc{5}_t - \eta(\cc{1}_t\frac{\sigma_1}{\alpha}+\sigma_1\alpha\cc{5}_t+\cc{1}_t\cc{5}_t)(\sigma_1\alpha+\cc{1}_t) - \eta\cbb{\mathbb{I}_1^\top\mathbf{M}_t^\top\mathbb{I}_{\ge2}}\cc{2}_t - \eta\cc{3}_t\cc{7}_t^\top(\sigma_1\alpha+\cc{1}_t) 
    \end{align}
    At initialization $t=1$, all of $\mathbb{I}_1^\top\mathbf{M}_t\mathbb{I}_{\ge2}, \cc{2}_t, \cc{3}_t, \cc{6}_t, \cc{7}_t$ are in $\O(\epsilon)$

    Since we have assumed 
    \begin{align*}
        \mathbb{I}_{\ge2}^\top\mathbf{M}_{t+1} \mathbb{I}_{\ge2}
        &\approx
            \mathbb{I}_{\ge2}^\top\mathbf{M}_{t}\mathbb{I}_{\ge2} 
            - \eta\mathbb{I}_{\ge 2}^\top \mathbf{M}_{t}\mathbb{I}_{\ge 2}\mathbb{I}_{\ge 2}^\top\bZ_0\mathbb{I}_{\ge 2}\mathbb{I}_{\ge2}^\top \bZ_0^\top\mathbb{I}_{\ge2}
            \\
        &~~~~ 
            - \eta \mathbb{I}_{\ge2}^\top \bY_0 \mathbb{I}_{\ge2}\mathbb{I}_{\ge 2}^\top \bY_0\mathbb{I}_{\ge 2}\mathbb{I}_{\ge 2}^\top\mathbf{M}_{t} \mathbb{I}_{\ge 2}
        \\
        &~~~~ 
        +\O(\epsilon\cdot\epsilon_t)
    \end{align*}

    \begin{align*}
        \mathbb{I}_{1}^\top\mathbf{M}_{t+1} \mathbb{I}_{\ge2}
        &\approx
            \mathbb{I}_{1}^\top\mathbf{M}_{t}\mathbb{I}_{\ge2}
            - \eta\mathbb{I}_{1}^\top \mathbf{M}_{t} \mathbb{I}_{\ge 2}\mathbb{I}_{\ge 2}^\top \bZ_0 \mathbb{I}_{\ge 2}\mathbb{I}_{\ge2}^\top \bZ_0^\top\mathbb{I}_{\ge2} 
            - \eta\mathbb{I}_{1}^\top \mathbf{M}_{t} \mathbb{I}_{1}\cc{7}_t\mathbb{I}_{\ge2}^\top \bZ_0^\top\mathbb{I}_{\ge2} 
        \\
        &~~~~
            - \eta \mathbb{I}_1^\top \bY_0 \mathbb{I}_1\mathbb{I}_1^\top \bY_0\mathbb{I}_{1}\mathbb{I}_{1}^\top\mathbf{M}_{t} \mathbb{I}_{\ge 2}  
            - \eta \mathbb{I}_1^\top \bY_0 \mathbb{I}_1\cc{1}_t\mathbb{I}_{1}^\top\mathbf{M}_{t} \mathbb{I}_{\ge 2}
        \\
        &~~~~ 
            - \eta \cc{1}_t \mathbb{I}_1\mathbb{I}_1^\top \bY_0\mathbb{I}_{1}\mathbb{I}_{1}^\top\mathbf{M}_{t} \mathbb{I}_{\ge 2}  
            - \eta \cc{1}_t\cc{1}_t\mathbb{I}_{1}^\top\mathbf{M}_{t}\mathbb{I}_{\ge 2}
        \\
        &~~~~ 
            - \eta \mathbb{I}_{1}^\top\mathbf{M}_{t}\mathbb{I}_{1}\mathbb{I}_{1}^\top\bZ_0\mathbb{I}_1\cc{6}_t^\top 
            - \eta \mathbb{I}_{1}^\top\mathbf{M}_{t}\mathbb{I}_{1}\cc{5}_t\cc{6}_t^\top 
        \\
        &~~~~
            - \eta\mathbb{I}_{1}^\top \mathbf{M}_{t} \mathbb{I}_{1}\cc{7}_t\cc{8}_t^\top 
        \\
        &~~~~
            + \eta^2 \mathbb{I}_{1}^\top\mathbf{M}_{t}\mathbb{I}_{1}
            (\mathbb{I}_{1}^\top\bZ_0\mathbb{I}_1^\top+\cc{5}_t)
            (\mathbb{I}_{1}^\top\bY_0\mathbb{I}_{1}+\cc{1}_t)\mathbb{I}_1^\top\mathbf{M}_{t}\mathbb{I}_{\ge 2}
        \\
        &~~~~
            +\O(\epsilon\cdot\epsilon_t)
        \\
        &=
            \mathbb{I}_{1}^\top\mathbf{M}_{t}\mathbb{I}_{\ge2}
            - \eta \mathbb{I}_{1}^\top\mathbf{M}_{t} \mathbb{I}_{\ge 2}  \left(\mathbb{I}_1^\top \bY_0 \mathbb{I}_1 + \cc{1}_t\right)^2
            - \eta\mathbb{I}_{1}^\top \mathbf{M}_{t} \mathbb{I}_{\ge 2}\mathbb{I}_{\ge 2}^\top \bZ_0 \mathbb{I}_{\ge 2}\mathbb{I}_{\ge2}^\top \bZ_0^\top\mathbb{I}_{\ge2} 
        \\
        &~~~~ 
            - \eta \mathbb{I}_{1}^\top\mathbf{M}_{t}\mathbb{I}_{1}\underbrace{\left(\mathbb{I}_{1}^\top\bZ_0\mathbb{I}_1\cc{6}_t^\top + \cc{5}_t\cc{6}_t^\top + \cc{7}_t\cc{8}_t^\top   +\cc{7}_t\mathbb{I}_{\ge2}^\top \bZ_0^\top\mathbb{I}_{\ge2} \right)}_{=\mathbb{I}_{1}^\top\left(\mathbf{Z}_0+\Delta\mathbf{Z}_{t}\right)\left(\mathbf{Z}_0+\Delta\mathbf{Z}_{t}\right)^\top\mathbb{I}_{\ge2}}
        \\
        &~~~~
            + \eta^2 \mathbb{I}_{1}^\top\mathbf{M}_{t}\mathbb{I}_{1}
            (\mathbb{I}_{1}^\top\bZ_0\mathbb{I}_1^\top+\cc{5}_t)
            (\mathbb{I}_{1}^\top\bY_0\mathbb{I}_{1}+\cc{1}_t)\mathbb{I}_1^\top\mathbf{M}_{t}\mathbb{I}_{\ge 2}
        \\
        &~~~~
        +\O(\epsilon\cdot\epsilon_t)
    \end{align*}

    \begin{align*}
        \mathbb{I}_{\ge2}^\top\mathbf{M}_{t+1} \mathbb{I}_{1}
        &\approx
            \mathbb{I}_{\ge2}^\top\mathbf{M}_{t}\mathbb{I}_{1}
            - \eta\mathbb{I}_{\ge2}^\top \mathbf{M}_{t}\mathbb{I}_{1}\mathbb{I}_{1}^\top\bZ_0\mathbb{I}_{1}\mathbb{I}_{1}^\top \bZ_0^\top\mathbb{I}_{1} 
            - \eta\mathbb{I}_{\ge2}^\top \mathbf{M}_{t}\mathbb{I}_{1}\cc{5}_t\mathbb{I}_{1}^\top \bZ_0^\top\mathbb{I}_{1} 
        \\
        &~~~~ 
            - \eta \mathbb{I}_{\ge2}^\top \bY_0 \mathbb{I}_{\ge2}\mathbb{I}_{\ge2}^\top \bY_0\mathbb{I}_{\ge 2}\mathbb{I}_{\ge 2}^\top\mathbf{M}_{t} \mathbb{I}_{1}
            - \eta \mathbb{I}_{\ge2}^\top \bY_0 \mathbb{I}_{\ge2}\cc{3}_t^\top\mathbb{I}_{1}^\top\mathbf{M}_{t} \mathbb{I}_{1}
        \\
        &~~~~ 
            -\eta \cc{2}_t \mathbb{I}_{1}^\top\bY_0\mathbb{I}_{1}\mathbb{I}_{1}^\top\mathbf{M}_{t}\mathbb{I}_{1} 
            -\eta \cc{2}_t \cc{1}_t\mathbb{I}_{1}^\top\mathbf{M}_{t}\mathbb{I}_{1} 
        \\
        &~~~~ 
            - \eta\mathbb{I}_{\ge2}^\top \mathbf{M}_{t}\mathbb{I}_{1}\mathbb{I}_{1}^\top\bZ_0\mathbb{I}_{1}\cc{5}_t 
            - \eta\mathbb{I}_{\ge2}^\top \mathbf{M}_{t}\mathbb{I}_{1}\cc{5}_t\cc{5}_t 
        \\
        &~~~~
            - \eta \cc{4}_t\cc{3}_t^\top\mathbb{I}_{1}^\top\mathbf{M}_{t} \mathbb{I}_{1}
        \\
        &~~~~
            + \eta^2 \mathbb{I}_{\ge2}^\top\mathbf{M}_{t}\mathbb{I}_{1}
            (\mathbb{I}_{1}^\top\bZ_0\mathbb{I}_1^\top+\cc{5}_t)
            (\mathbb{I}_{1}^\top\bY_0\mathbb{I}_{1}+\cc{1}_t)\mathbb{I}_{1}^\top\mathbf{M}_{t}\mathbb{I}_{1}
        \\
        &=
            \mathbb{I}_{\ge2}^\top\mathbf{M}_{t}\mathbb{I}_{1}
            - \eta\mathbb{I}_{\ge2}^\top \mathbf{M}_{t}\mathbb{I}_{1}\left(\mathbb{I}_{1}^\top\bZ_0\mathbb{I}_{1}+\cc{5}_t\right)^2
            - \eta \mathbb{I}_{\ge2}^\top \bY_0 \mathbb{I}_{\ge2}\mathbb{I}_{\ge2}^\top \bY_0\mathbb{I}_{\ge 2}\mathbb{I}_{\ge 2}^\top\mathbf{M}_{t} \mathbb{I}_{1}
            \\
            &~~~~
            - \eta \mathbb{I}_{1}^\top\mathbf{M}_{t} \mathbb{I}_{1}\underbrace{\left(\mathbb{I}_{\ge2}^\top \bY_0 \mathbb{I}_{\ge2}\cc{3}_t^\top+\cc{2}_t \mathbb{I}_{1}^\top\bY_0\mathbb{I}_{1}+\cc{2}_t \cc{1}_t+ \cc{4}_t\cc{3}_t^\top\right)}_{=\mathbb{I}_{\ge2}^\top\left(\bY_0+\Delta\bY_t\right)\left(\bY_0+\Delta\bY_t\right)^\top\mathbb{I}_{1}}
            \\
            &~~~~
            + \eta^2 \mathbb{I}_{\ge2}^\top\mathbf{M}_{t}\mathbb{I}_{1}
            (\mathbb{I}_{1}^\top\bZ_0\mathbb{I}_1^\top+\cc{5}_t)
            (\mathbb{I}_{1}^\top\bY_0\mathbb{I}_{1}+\cc{1}_t)\mathbb{I}_{1}^\top\mathbf{M}_{t}\mathbb{I}_{1}
    \end{align*}

    \begin{align*}
        \mathbb{I}_{1}^\top\mathbf{M}_{t+1} \mathbb{I}_{1}
        &\approx
            \mathbb{I}_{1}^\top\mathbf{M}_{t} \mathbb{I}_{1}
            - \eta\mathbb{I}_{1}^\top \mathbf{M}_{t}\mathbb{I}_{1}\mathbb{I}_{1}^\top\bZ_0\mathbb{I}_{1}\mathbb{I}_{1}^\top \bZ_0^\top\mathbb{I}_{1}
            - \eta\mathbb{I}_{1}^\top \mathbf{M}_{t}\mathbb{I}_{1}\cc{5}_t\mathbb{I}_{1}^\top \bZ_0^\top\mathbb{I}_{1}
        \\
        &~~~~ 
            - \eta \mathbb{I}_{1}^\top \bY_0 \mathbb{I}_{1}\mathbb{I}_{1}^\top \bY_0\mathbb{I}_{1}\mathbb{I}_{1}^\top\mathbf{M}_{t} \mathbb{I}_{1} 
            - \eta \mathbb{I}_{1}^\top \bY_0 \mathbb{I}_{1}\cc{1}_t\mathbb{I}_{1}^\top\mathbf{M}_{t} \mathbb{I}_{1}
        \\
        &~~~~ 
            - \eta \cc{1}_t\mathbb{I}_{1}^\top \bY_0\mathbb{I}_{1}\mathbb{I}_{1}^\top\mathbf{M}_{t} \mathbb{I}_{1} 
            - \eta \cc{1}_t\cc{1}_t\mathbb{I}_{1}^\top\mathbf{M}_{t} \mathbb{I}_{1}
        \\
        &~~~~ 
            - \eta\mathbb{I}_{1}^\top \mathbf{M}_{t}\mathbb{I}_{1}\mathbb{I}_{1}^\top\bZ_0\mathbb{I}_{1}\cc{5}_t 
            - \eta\mathbb{I}_{1}^\top \mathbf{M}_{t}\mathbb{I}_{1}\cc{5}_t\cc{5}_t 
        \\
        &~~~~ 
            + \eta^2 \mathbb{I}_{1}^\top\mathbf{M}_{t}\mathbb{I}_{1}
            (\mathbb{I}_{1}^\top\bZ_0\mathbb{I}_{1}+\cc{5}_t)
            (\mathbb{I}_{1}^\top\bY_0\mathbb{I}_{1}+\cc{1}_t)
            \mathbb{I}_{1}^\top\mathbf{M}_{t}\mathbb{I}_{1}
    \end{align*}

    \begin{align*}
        \mathbb{I}_{\ge2}^\top\left(\bY_0+\Delta\bY_{t+1}\right)\left(\bY_0+\Delta\bY_{t+1}\right)^\top\mathbb{I}_{1}
        &=
            \mathbb{I}_{\ge2}^\top\left(\bY_0+\Delta\bY_t\right)\left(\bY_0+\Delta\bY_t\right)^\top\mathbb{I}_{1}
        \\
        &~~~~
            -\eta\mathbb{I}_{\ge2}^\top\mathbf{M}_t(\bZ_0+\Delta\bZ_t)\mathbb{I}_{\ge2}\cc{3}_t^\top
            -\eta(\mathbb{I}_{\ge2}^\top\bY_0\mathbb{I}_{\ge2}+\cc{4}_t)\mathbb{I}_{\ge2}^\top(\bZ_0+\Delta\bZ_t)^\top\mathbf{M}_t^\top\mathbb{I}_{1}
        \\
        &~~~~
            -\eta\mathbb{I}_{\ge2}^\top\mathbf{M}_t(\bZ_0+\Delta\bZ_t)\mathbb{I}_{1}(\cc{1}_t+\sigma_1\alpha) 
            -\eta\cc{2}\mathbb{I}_{1}^\top(\bZ_0+\Delta\bZ_t)^\top\mathbf{M}_t^\top\mathbb{I}_{1}
        \\
        &~~~~
            +\eta^2\mathbb{I}_{\ge2}^\top\mathbf{M}_t(\bZ_0+\Delta\bZ_t)\mathbb{I}_{\ge2}\mathbb{I}_{\ge2}^\top(\bZ_0+\Delta\bZ_t)^\top\mathbf{M}_t^\top\mathbb{I}_{1}
        \\
        &~~~~
            +\eta^2\mathbb{I}_{\ge2}^\top\mathbf{M}_t(\bZ_0+\Delta\bZ_t)\mathbb{I}_{1}\mathbb{I}_{1}^\top(\bZ_0+\Delta\bZ_t)^\top\mathbf{M}_t^\top\mathbb{I}_{1}
        \\
        &=
            \mathbb{I}_{\ge2}^\top\left(\bY_0+\Delta\bY_t\right)\left(\bY_0+\Delta\bY_t\right)^\top\mathbb{I}_{1}
        \\
        &~~~~
            -\eta\mathbb{I}_{\ge2}^\top\mathbf{M}_t\mathbb{I}_{\ge2}\mathbb{I}_{\ge2}^\top\bZ_0\mathbb{I}_{\ge2}\cc{3}_t^\top
            -\eta\mathbb{I}_{\ge2}^\top\mathbf{M}_t\mathbb{I}_{1}^\top\cc{7}_t\cc{3}_t^\top
            -\eta\mathbb{I}_{\ge2}^\top\mathbf{M}_t\mathbb{I}_{\ge2}^\top\cc{8}_t\cc{3}_t^\top
        \\
        &~~~~
            -\eta(\mathbb{I}_{\ge2}^\top\bY_0\mathbb{I}_{\ge2}+\cc{4}_t)\mathbb{I}_{\ge2}^\top\bZ_0\mathbb{I}_{\ge2}\mathbb{I}_{\ge2}^\top\mathbf{M}_t^\top\mathbb{I}_{1}
            -\eta(\mathbb{I}_{\ge2}^\top\bY_0\mathbb{I}_{\ge2}+\cc{4}_t)\cc{7}_t^\top \mathbb{I}_{1}^\top \mathbf{M}_t^\top\mathbb{I}_{1}
        \\
        &~~~~
            -\eta(\mathbb{I}_{\ge2}^\top\bY_0\mathbb{I}_{\ge2}+\cc{4}_t)\cc{8}_t^\top \mathbb{I}_{\ge2}^\top \mathbf{M}_t^\top\mathbb{I}_{1}
        \\
        &~~~~
            -\eta\mathbb{I}_{\ge2}^\top\mathbf{M}_t\mathbb{I}_{1}\mathbb{I}_{1}^\top\bZ_0\mathbb{I}_{1}(\cc{1}_t+\sigma_1\alpha) 
            -\eta\mathbb{I}_{\ge2}^\top\mathbf{M}_t\mathbb{I}_{1}\cc{5}_t(\cc{1}_t+\sigma_1\alpha) 
            \\&~~~~
            -\eta\mathbb{I}_{\ge2}^\top\mathbf{M}_t\mathbb{I}_{\ge2}\cc{6}_t(\cc{1}_t+\sigma_1\alpha) 
        \\
        &~~~~
            -\eta\cc{2}\mathbb{I}_{1}^\top\bZ_0\mathbb{I}_{1}\mathbb{I}_{1}^\top\mathbf{M}_t^\top\mathbb{I}_{1}
            -\eta\cc{2}\cc{5}_t\mathbb{I}_{1}^\top\mathbf{M}_t^\top\mathbb{I}_{1}
            -\eta\cc{2}\cc{6}_t^\top\mathbb{I}_{\ge2}^\top\mathbf{M}_t^\top\mathbb{I}_{1}
        \\
        &~~~~
            +\eta^2\mathbb{I}_{\ge2}^\top\mathbf{M}_t(\bZ_0+\Delta\bZ_t)\mathbb{I}_{\ge2}\mathbb{I}_{\ge2}^\top(\bZ_0+\Delta\bZ_t)^\top\mathbf{M}_t^\top\mathbb{I}_{1}
        \\
        &~~~~
            +\eta^2\mathbb{I}_{\ge2}^\top\mathbf{M}_t(\bZ_0+\Delta\bZ_t)\mathbb{I}_{1}\mathbb{I}_{1}^\top(\bZ_0+\Delta\bZ_t)^\top\mathbf{M}_t^\top\mathbb{I}_{1}
        \\
        &\approx 
        \mathbb{I}_{\ge2}^\top\left(\bY_0+\Delta\bY_t\right)\left(\bY_0+\Delta\bY_t\right)^\top\mathbb{I}_{1}
        \\
        &~~~~
            -\eta\mathbb{I}_{\ge2}^\top\bY_0\mathbb{I}_{\ge2}\mathbb{I}_{\ge2}^\top\bZ_0\mathbb{I}_{\ge2}\mathbb{I}_{\ge2}^\top\mathbf{M}_t^\top\mathbb{I}_{1}
        \\
        &~~~~
            -\eta(\cc{1}_t+\sigma_1\alpha)(\cc{5}_t+\frac{\sigma_1}{\alpha})\mathbb{I}_{\ge2}^\top\mathbf{M}_t\mathbb{I}_{1}
        \\
        &~~~~
            -\eta\mathbb{I}_{1}^\top\mathbf{M}_t\mathbb{I}_{1}\underbrace{\left( \cc{2}\mathbb{I}_{1}^\top\bZ_0\mathbb{I}_{1} + \cc{2}\cc{5}_t\mathbb{I}_{1}^\top + (\mathbb{I}_{\ge2}^\top\bY_0\mathbb{I}_{\ge2}+\cc{4}_t)\cc{7}_t^\top \right)}_{=\mathbb{I}_{\ge2}^\top\mathbf{M}_t\mathbb{I}_{1}}
        \\
        &~~~~
            +\eta^2\mathbb{I}_{\ge2}^\top\mathbf{M}_t\mathbb{I}_{1}
            ( \mathbb{I}_{1}^\top\bZ_0\mathbb{I}_{1} + \cc{5}_t)
            ( \mathbb{I}_{1}^\top\bZ_0\mathbb{I}_{1} + \cc{5}_t)
            \mathbb{I}_{1}^\top\mathbf{M}_t^\top\mathbb{I}_{1}
        \\
        &=
            \mathbb{I}_{\ge2}^\top\left(\bY_0+\Delta\bY_t\right)\left(\bY_0+\Delta\bY_t\right)^\top\mathbb{I}_{1}
            \\
            &~~~~
                -\eta\left((\cc{1}_t+\sigma_1\alpha)(\cc{5}_t+\frac{\sigma_1}{\alpha})+\mathbb{I}_{1}^\top\mathbf{M}_t\mathbb{I}_{1}(1-\eta(\cc{5}_t+\frac{\sigma_1}{\alpha})^2)\right)\mathbb{I}_{\ge2}^\top\mathbf{M}_t\mathbb{I}_{1}
            \\
            &~~~~
                -\eta\mathbb{I}_{\ge2}^\top\bY_0\mathbb{I}_{\ge2}\mathbb{I}_{\ge2}^\top\bZ_0\mathbb{I}_{\ge2}\mathbb{I}_{\ge2}^\top\mathbf{M}_t^\top\mathbb{I}_{1}
    \end{align*}

    \begin{align*}
        \mathbb{I}_{1}^\top\left(\bZ_0+\Delta\bZ_{t+1}\right)\left(\bZ_0+\Delta\bZ_{t+1}\right)^\top\mathbb{I}_{\ge2}
        &\approx
            \mathbb{I}_{1}^\top\left(\bZ_0+\Delta\bZ_t\right)\left(\bZ_0+\Delta\bZ_t\right)^\top\mathbb{I}_{\ge2}
            \\
            &~~~~
                -\eta\left((\cc{1}_t+\sigma_1\alpha)(\cc{5}_t+\frac{\sigma_1}{\alpha})+\mathbb{I}_{1}^\top\mathbf{M}_t\mathbb{I}_{1}(1-\eta(\cc{1}_t+\sigma_1\alpha)^2)\right)\mathbb{I}_{1}^\top\mathbf{M}_t\mathbb{I}_{\ge2}
            \\
            &~~~~
                -\eta\mathbb{I}_{\ge2}^\top\bY_0\mathbb{I}_{\ge2}\mathbb{I}_{\ge2}^\top\bZ_0\mathbb{I}_{\ge2}\mathbb{I}_{1}^\top\mathbf{M}_t^\top\mathbb{I}_{\ge2}
    \end{align*}

    Therefore, we have built a $4\times4$ matrix to characterize the dynamics of $\mathbb{I}_{\ge2}^\top\mathbf{M}_t\mathbb{I}_1$, $\mathbb{I}_{1}^\top\mathbf{M}_t\mathbb{I}_{\ge2}$, $\mathbb{I}_{\ge2}^\top\left(\bY_0+\Delta\bY_{t+1}\right)\left(\bY_0+\Delta\bY_{t+1}\right)^\top\mathbb{I}_{1}$, $\mathbb{I}_{1}^\top\left(\bZ_0+\Delta\bZ_{t+1}\right)\left(\bZ_0+\Delta\bZ_{t+1}\right)^\top\mathbb{I}_{\ge2}$ as, for $\forall p\in\{2,3,\dots,d\}$
    \begin{gather*}
        \begin{bmatrix}
            [\mathbb{I}_{\ge2}^\top\left(\bY_0+\Delta\bY_{t+1}\right)\left(\bY_0+\Delta\bY_{t+1}\right)^\top\mathbb{I}_{1}]_p\\
            [\mathbb{I}_{1}^\top\left(\bZ_0+\Delta\bZ_{t+1}\right)\left(\bZ_0+\Delta\bZ_{t+1}\right)^\top\mathbb{I}_{\ge2}]_p\\
            [\mathbb{I}_{\ge2}^\top\mathbf{M}_{t+1}\mathbb{I}_1]_p \\
            [\mathbb{I}_{1}^\top\mathbf{M}_{t+1}\mathbb{I}_{\ge2}]_p
        \end{bmatrix}
        \leftarrow 
        \mathbf{Q}_{\alpha,\eta,p}(y_t, z_t)
        \begin{bmatrix}
            [\mathbb{I}_{\ge2}^\top\left(\bY_0+\Delta\bY_{t}\right)\left(\bY_0+\Delta\bY_{t}\right)^\top\mathbb{I}_{1}]_p\\
            [\mathbb{I}_{1}^\top\left(\bZ_0+\Delta\bZ_{t}\right)\left(\bZ_0+\Delta\bZ_{t}\right)^\top\mathbb{I}_{\ge2}]_p\\
            [\mathbb{I}_{\ge2}^\top\mathbf{M}_{t}\mathbb{I}_1]_p \\
            [\mathbb{I}_{1}^\top\mathbf{M}_{t}\mathbb{I}_{\ge2}]_p
        \end{bmatrix},\\
        \mathbf{Q}_{\alpha,\eta,p}(y_t, z_t)
        \triangleq
        \begin{bmatrix}
            1 & 0 & u_{1,t} & -\eta\sigma_p^2 \\
            0 & 1 & -\eta\sigma_p^2 & u_{2,t} \\
            -\eta(y_t z_t-\sigma_1^2) & 0 & w_{1,t} & 0 \\
            0 & -\eta(y_t z_t-\sigma_1^2) & 0 & w_{2,t}
        \end{bmatrix},
        \\
        y_t \triangleq \cc{1}_t+\sigma_1\alpha, ~~~~ z_t\triangleq \cc{5}_t+\nicefrac{\sigma_1}{\alpha_p},\\
        u_{1,t}\triangleq -\eta\left( y_t z_t + (y_t z_t-\sigma_1^2)(1-\eta z_t^2) \right),\\
        u_{2,t}\triangleq -\eta\left( y_t z_t + (y_t z_t-\sigma_1^2)(1-\eta y_t^2) \right),\\
        w_{1,t}\triangleq 1-\eta z_t^2-\eta\sigma_p^2\alpha^2+\eta^2 y_t z_t (y_t z_t - \sigma_1^2), \\
        w_{2,t}\triangleq 1-\eta y_t^2-\eta\nicefrac{\sigma_p^2}{\alpha^2}+\eta^2 y_t z_t (y_t z_t - \sigma_1^2),
    \end{gather*}
    where $[\cdot]_p$ means the $p$-th value in a vector.

    Recall we have $y_t, z_t$ following the training dynamics of minimizing $\frac{1}{2}(\sigma_1^2-y z)^2$ with learning rate $\eta>\frac{1}{\sigma_1^2}$, where leads to $y=z=\gamma_i$, with $\gamma_i$ ($i=1,2$) are the two roots of solving the 1-D function~(\ref{eq:1d_orbit}) as $\delta$. We denote their corresponding $\mathbf{Q}$ as
    $\mathbf{Q}_{\alpha,\eta,p}(\gamma_1, \gamma_1)$ and $\mathbf{Q}_{\alpha,\eta,p}(\gamma_2, \gamma_2)$. We assume that their product $\mathbf{Q}_{\alpha,\eta,p}(\gamma_2, \gamma_2)\mathbf{Q}_{\alpha,\eta,p}(\gamma_1, \gamma_1)$ is diagonalizable with all eigenvalues falling into $(-1,1)$, which means its infinite power $\lim_{k\rightarrow \infty}[\mathbf{Q}_{\alpha,\eta,p}(\gamma_2, \gamma_2)\mathbf{Q}_{\alpha,\eta,p}(\gamma_1, \gamma_1)]^{k} = 0$. Meanwhile, due to the 2-D analysis of dynamics of GD on $\frac{1}{2}(\sigma_1^2-y z)^2$, we know $(y_t, z_t)\rightarrow \{(\gamma_1,\gamma_1), (\gamma_2, \gamma_2)\}$ exponentially after finite steps. This is equivalent to say, there exists finite $t_0$, for any $t>t_0$, there exists $i\in\{1,2\}$, constant $C_0$ and $\mathbf{R}_t\in\R^{4\times4}$, such that
    \begin{gather*}
        \mathbf{Q}_{\alpha,\eta,p}(y_{t+1}, z_{t+1})\mathbf{Q}_{\alpha,\eta,p}(y_t, z_t) = \mathbf{Q}_{\alpha,\eta,p}(\gamma_{3-i}, \gamma_{3-i})\mathbf{Q}_{\alpha,\eta,p}(\gamma_i, \gamma_i) + \mathbf{R}_{t}, ~~~~ \norm{\mathbf{R}_{t}}\le C_0 r^t, ~~~0<r<1.
    \end{gather*}
    The decay rate $r$ can be estimated via local analysis around the convergence orbit.
    As a result, it is safe to say $\lim_{t\rightarrow\infty}\mathbf{Q}_{\alpha,\eta,p}(y_{2t+1}, z_{2t+1})\mathbf{Q}_{\alpha,\eta,p}(y_{2t}, z_{2t})=0$, which means all of $\mathbb{I}_{\ge2}^\top\mathbf{M}_t\mathbb{I}_1$, $\mathbb{I}_{1}^\top\mathbf{M}_t\mathbb{I}_{\ge2}$, $\mathbb{I}_{\ge2}^\top\left(\bY_0+\Delta\bY_{t+1}\right)\left(\bY_0+\Delta\bY_{t+1}\right)^\top\mathbb{I}_{1}$, $\mathbb{I}_{1}^\top\left(\bZ_0+\Delta\bZ_{t+1}\right)\left(\bZ_0+\Delta\bZ_{t+1}\right)^\top\mathbb{I}_{\ge2}$ exponentially go to zero.

    There is one concern here: what happens before $t_0$? More concretely, $t_0$ is dependent of $\nicefrac{1}{\epsilon}$ because it requires more steps (intuitively proportional to $\log\nicefrac{1}{\epsilon}$) to increase to a certain value from a small $\epsilon$. Assuming $t_0\sim \log\nicefrac{1}{\epsilon}$ holds, the product $\{\mathbf{Q}_{\alpha,\eta,p}(y_{2t+1}, z_{2t+1})\mathbf{Q}_{\alpha,\eta,p}(y_{2t}, z_{2t})\}_{t\ge 1}$ gives a (loose) upper bound with the norm of products grows exponentially with time $\log\nicefrac{1}{\epsilon}$, which introduces $\nicefrac{1}{\epsilon}$ to the upper bound of $\norm{\mathbb{I}_{\ge2}^\top\mathbf{M}_{t}\mathbb{I}_1}$ and $\norm{\mathbb{I}_{1}^\top\mathbf{M}_{t}\mathbb{I}_{\ge2}}$, breaking the assumption of the norms staying in $\O(\epsilon)$. Fortunately, there are two aspects to resolve this. Firstly, with initialization $\epsilon$ small enough, for a relative long time, $\mathbf{Q}_{\alpha,\eta,p}(y_{2t+1}, z_{2t+1})\mathbf{Q}_{\alpha,\eta,p}(y_{2t}, z_{2t})$ is approximately having eigenvalues bounded by 1. More precisely, $\mathbf{Q}$ and the product are
    \begin{gather}
        \mathbf{Q}_{\alpha,\eta,p}(\cdot, \cdot)
        \approx 
        \begin{bmatrix}
            1 & 0 & -\eta\sigma_1^2 & -\eta\sigma_p^2 \\
            0 & 1 & -\eta\sigma_p^2 & -\eta\sigma_1^2 \\
            0 & 0 & 1-\eta \nicefrac{\sigma_1^2}{\alpha^2}-\eta\sigma_p^2\alpha^2 & 0 \\
            0 & 0 & 0 & 1-\eta \sigma_1^2\alpha^2-\eta\nicefrac{\sigma_p^2}{\alpha^2}
        \end{bmatrix},
        \\
        \Lambda(\mathbf{Q}_{\alpha,\eta,p}(\cdot, \cdot)\mathbf{Q}_{\alpha,\eta,p}(\cdot, \cdot))
        =
        \{
            1, 1, (1-\eta \nicefrac{\sigma_1^2}{\alpha^2}-\eta\sigma_p^2\alpha^2)^2, (1-\eta \sigma_1^2\alpha^2-\eta\nicefrac{\sigma_p^2}{\alpha^2})^2,
        \}
        \label{eq:quasi_q_lambda}
    \end{gather}
    where the eigenvalues in $\Lambda$ are upper bounded by 1, if assuming $\eta(\nicefrac{\sigma_1^2}{\alpha^2}+\sigma_p^2\alpha^2)<2$ and $1-\eta \sigma_1^2\alpha^2-\eta\nicefrac{\sigma_p^2}{\alpha^2}<2$. As a result, in these steps, $\norm{\mathbb{I}_{\ge2}^\top\mathbf{M}_{t}\mathbb{I}_1}$ and $\norm{\mathbb{I}_{1}^\top\mathbf{M}_{t}\mathbb{I}_{\ge2}}$ stay in $\O(\epsilon)$ due to $\mathbf{Q}_{\alpha,\eta,p}(\cdot, \cdot)\mathbf{Q}_{\alpha,\eta,p}(\cdot, \cdot)$ is a semi-convergent matrix. Secondly, the eigenvectors of $\mathbf{Q}_{\alpha,\eta,p}(\cdot, \cdot)\mathbf{Q}_{\alpha,\eta,p}(\cdot, \cdot)$ corresponding to eigenvalue 1 are $[1,0,0,0]^\top$ and $[0,1,0,0]^\top$, which means $\norm{\mathbb{I}_{\ge2}^\top\mathbf{M}_{t}\mathbb{I}_1}$ and $\norm{\mathbb{I}_{1}^\top\mathbf{M}_{t}\mathbb{I}_{\ge2}}$ are decaying exponentially. Therefore, smaller $\epsilon$ strengthens the assumption of $\norm{\mathbb{I}_{\ge2}^\top\mathbf{M}_{t}\mathbb{I}_1}$ and $\norm{\mathbb{I}_{1}^\top\mathbf{M}_{t}\mathbb{I}_{\ge2}}$ staying in $\O(\epsilon)$ instead of breaking it.

    Also note that $\norm{\mathbb{I}_{\ge2}^\top\mathbf{M}_{t+1} \mathbb{I}_{\ge2}}\lessapprox\norm{\mathbb{I}_{\ge2}^\top\mathbf{M}_{t} \mathbb{I}_{\ge2}}\cdot\max\{ |1-\eta\sigma_2\left( \alpha^2 + \nicefrac{1}{\alpha^2} \right)|, |1-\eta\sigma_{d-1}\left( \alpha^2 + \nicefrac{1}{\alpha^2} \right)| \}$, so $\norm{\mathbb{I}_{\ge2}^\top\mathbf{M}_{t+1} \mathbb{I}_{\ge2}}$ decays exponentially.

    Since all of $\norm{\mathbb{I}_{\ge2}^\top\mathbf{M}_{t}\mathbb{I}_1}$, $\norm{\mathbb{I}_{1}^\top\mathbf{M}_{t}\mathbb{I}_{\ge2}}$ and $\norm{\mathbb{I}_{\ge2}^\top\mathbf{M}_{t+1} \mathbb{I}_{\ge2}}$ decay exponentially after some steps, all of them are have the sum upper-bounded, which means $\norm{\cc{2}_t}, \norm{\cc{3}_t}, \norm{\cc{4}_t}, \norm{\cc{6}_t}, \norm{\cc{7}_t}, \norm{\cc{8}_t}$ stay in $\O(\epsilon)$.

    To summarize, it holds
    \begin{enumerate}
        \item $\norm{\cc{2}_t}, \norm{\cc{3}_t}, \norm{\cc{4}_t}, \norm{\cc{6}_t}, \norm{\cc{7}_t}, \norm{\cc{8}_t}$ stay in $\O(\epsilon)$.
        \item $\norm{\mathbb{I}_{1}^\top\mathbf{M}_{t}\mathbb{I}_{\ge2}}$ and $\norm{\mathbb{I}_{\ge2}^\top\mathbf{M}_{t+1} \mathbb{I}_{\ge2}}$ decays to zero.
        \item $\norm{\mathbb{I}_{1}^\top\mathbf{M}_{t}\mathbb{I}_{1}}$ stays in a period-2 orbit.
    \end{enumerate}

\end{proof}

\section{Useful lemmas}
\begin{lemma}
Assume $a\cdot\Delta a\ge b\cdot\Delta b$ and $a\ge b$. All of $a,b,\Delta a, \Delta b$ are positive. If $\Delta b\le a$, then $a+\Delta a\ge b+\Delta b$.
\label{lem:ab1}
\end{lemma}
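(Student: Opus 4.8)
The plan is to prove $a + \Delta a \ge b + \Delta b$ by a short case split on the sign of $\Delta a - \Delta b$, reducing everything to elementary manipulations of the hypothesis $a\Delta a \ge b\Delta b$. The easy case is $\Delta a \ge \Delta b$: then $a + \Delta a \ge b + \Delta b$ follows at once by adding the inequalities $a \ge b$ and $\Delta a \ge \Delta b$, and the extra hypothesis $\Delta b \le a$ is not even used.

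The substance is in the complementary case $\Delta a < \Delta b$. First I would note that this forces $a > b$ strictly: if $a = b$, then $a\Delta a \ge b\Delta b = a\Delta b$ would give $\Delta a \ge \Delta b$ after dividing by $a > 0$, a contradiction. Then, starting from $a\Delta a \ge b\Delta b$, I subtract $a\Delta b$ from both sides to get $a(\Delta a - \Delta b) \ge (b-a)\Delta b$, that is, $a(\Delta b - \Delta a) \le (a-b)\Delta b$. Dividing by $a > 0$ and writing the right-hand side as $(a-b)\cdot \frac{\Delta b}{a}$, the hypothesis $\Delta b \le a$ (so $\frac{\Delta b}{a}\le 1$) together with $a - b > 0$ yields $\Delta b - \Delta a \le (a-b)\frac{\Delta b}{a} \le a-b$, and rearranging $\Delta b - \Delta a \le a-b$ is exactly $a + \Delta a \ge b + \Delta b$.

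I do not expect a genuine obstacle here; the only point requiring a little care is that each multiplication or division is by a genuinely positive quantity so that no inequality direction flips, and in particular one must use precisely $\Delta b \le a$ — not some other (false) bound such as $\Delta b \le b$ — in order to cap the factor $\Delta b / a$ by $1$. The lemma is pure bookkeeping: it is invoked in the proof of Theorem~\ref{thm:one_neuron} to propagate the invariant $v^{(t)} \ge w_y^{(t)}$ across one GD step, taking $a = v$, $b = w_y$, for which the product inequality $v\,\Delta v \ge w_y\,\Delta w_y$ has already been derived there from $v\Delta v = w_x\Delta w_x + w_y\Delta w_y$.
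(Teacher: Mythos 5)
Your proof is correct and uses essentially the same key manipulation as the paper: from $a\Delta a \ge b\Delta b$ you bound $\Delta a$ (or equivalently $\Delta b - \Delta a$), factor out $(a-b)$ against $1 - \Delta b/a$, and invoke $\Delta b \le a$ and $a \ge b$ for the sign. The paper's proof does this in a single line without the case split on $\Delta a \gtrless \Delta b$ — it writes $(a+\Delta a)-(b+\Delta b) \ge a + b\,\tfrac{\Delta b}{a} - b - \Delta b = \bigl(\tfrac{\Delta b}{a}-1\bigr)(b-a) \ge 0$ — so your first case (and the strictness argument $a>b$ in the second) is harmless extra bookkeeping rather than a genuinely different route.
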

\begin{proof}
$(a+\Delta a)-(b+\Delta b)\ge a+b\frac{\Delta b}{a}-b-\Delta b=(\frac{\Delta b}{a}-1)(b-a)\ge 0$.
\end{proof}

\section{Illustration of period-2 and period-4 orbits}
\label{app:add_figs}


In the setting of $f(x)=\frac{1}{4}(x^2-1)^2$, local convergence is guaranteed if $\eta<\sqrt{5}-1\approx 1.236$ by taylor expansion of $F^2_\eta$ around the orbit. Conversely, if the learning rate is larger than it, although the period-2 orbit still exists, GD starting from a point infinitesimally close to the orbit still escapes from it. This is when GD converges to a higher-order orbit.

Figure~\ref{fig:xy_three_lr} precisely shows the effectiveness of such a bound where GD converges to the period-2 orbit when $\eta=1.235<\sqrt{5}-1$ and a period-4 orbit when $\eta=1.237>\sqrt{5}-1$.

\begin{figure}[h]
    \centering
    \includegraphics[width=0.95\textwidth]{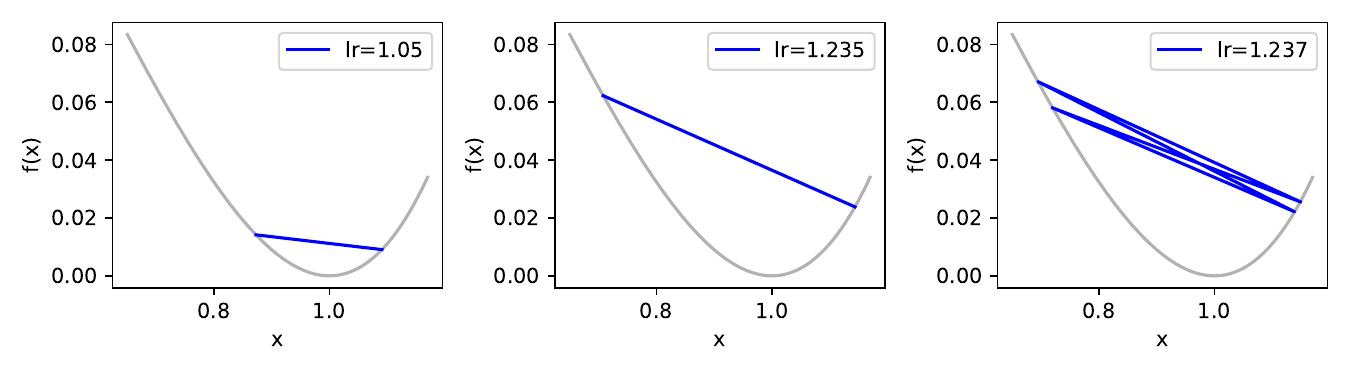}
    \caption{The convergent orbits of GD on $f(x)=\frac{1}{4}(x^2-1)^2$ with learning rate=1.05, 1.235 and 1.237. The first two smaller learning rates drive to period-2 orbits while the last one goes to an period-4 orbit. The significant bound between period-2 and period-4 is predictable by Taylor expansion around the period-2 orbit, as $\eta=\sqrt{5}-1\approx 1.236$.}
    \label{fig:xy_three_lr}
\end{figure}

\section{Discussions}
\label{app:discuss}
First, we provide a general roadmap of our theoretical results in Appendix~\ref{app:connections}, as illustrated in Figure~\ref{fig:connections}. Then, in Appendix~\ref{app:implications} we discuss three implications from our current low-dimensional settings to more complicated models for future understanding of EoS in pratical NNs, where low-dimension theorems are enhanced with high-dim experiments.

\subsection{Connections between theoretical results}
\label{app:connections}

In this section, we discuss the connections between our presented theoretical results, as illustrated in Figure~\ref{fig:connections}.

\begin{figure}[t]
    \centering
\begin{tikzcd}
                                                         &  & \text{Local Geometry (Thm.~\ref{thm:1dlocal})} \arrow[d] \arrow[rrd] \arrow[lld] \arrow[lldd] &  &                                                                       \\
\text{High-order LG (Lem.~\ref{lem:1dhigher})} \arrow[d] &  & \text{1-D case (Thm.~\ref{thm:1dglobal})} \arrow[d]                                           &  & \text{LG for MF (Thm.~\ref{thm:mf_1d_cond})} \arrow[dd, bend left=49] \\
(g(x)-y)^2 \text{~(Prop.~\ref{prop:l2loss})} \arrow[d]     &  & \text{2-D case (Prop.~\ref{prop:xy})} \arrow[d] \arrow[rrd]                                   &  & \text{Balancing effect (Thm.~\ref{thm:xy_diff_decay})} \arrow[ll]     \\
\text{Composition rule of $g$ (Prop~\ref{prop:comp})}                                       &  & \text{Single-neuron (Prop.~\ref{prop:single_neuron})}                                         &  & \text{Quasi-sym MF (Obs.~\ref{obs:quasi_mf})}                 \\
                                                         &  & w_y\rightarrow 0 \text{~(Thm.~\ref{thm:one_neuron})} \arrow[u]                                &  &                                                                      
\end{tikzcd}
    \caption{Connections between our presented theoretical results. The arrows stand for ``implies''. LG stands for Local Geometry. MF stands for Matrix Factorization.}
    \label{fig:connections}
\end{figure}
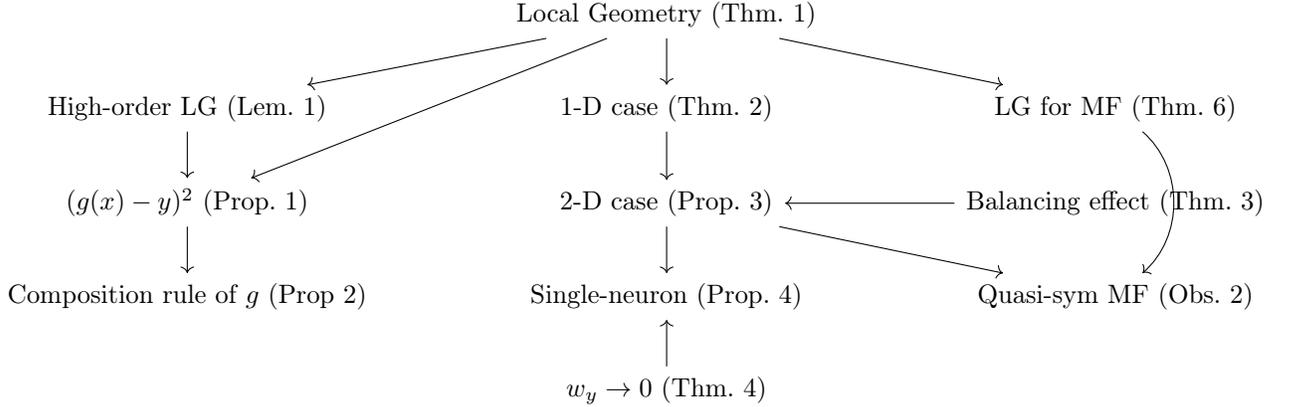

Theorem~\ref{thm:1dlocal} and Lemma~\ref{lem:1dhigher} present (local) intrinsic geometric properties for a 1-D function to allow stable oscillations. Such properties provide us the 1-D function $f(x)=(\mu-x^2)^2$ and, furthermore, we generalize the local property to a global convergence result in Theorem 2. Then we are to generalize the 1-D analysis to cases of i) multi-parameter, ii) nonlinear and iii) high-dimension.

\begin{enumerate}[label=(\alph*)]
    \item \textbf{Multi-parameter}. Compared with 1-D $f(x)=(\mu-x^2)^2$, the 2-D function $f(x,y)=(\mu-xy)^2$ can be viewed as the simplest setting of two-layer models. We prove that the 2-D case converges to the region of $x=y$ in Theorem~\ref{thm:xy_diff_decay} in Section~\ref{sec:convergence}, which means it shares the same convergence as the 1-D model. Also, $x=y$ means its sharpness is the flattest.

    \item \textbf{Nonlinear}. We extend the 2-D model to a two-layer single-neuron ReLU model in Section~\ref{sec:one_neuron}. Although the student neuron can be initialized far from the direction of the teacher neuron, we prove the student neuron converges to the correct direction (as $w_y\rightarrow 0$) in Theorem~\ref{thm:one_neuron}. Then the problem degenerates to the above 2-D analysis, which means it shares the same convergence with the 2-D, where $(v,w_x)$ corresponds to $(x,y)$ in 2-D.

    \item \textbf{High-dimension}. We extend the 2-D model to quasi-symmetric matrix factorization in Section~\ref{sec:new_mf}. Although the parameters are initialized near a sharp minima, GD still walks towards the flattest minima, as shown in Observation~\ref{obs:quasi_mf}. At convergence, the top singular values of $\mathbf{Y}, \mathbf{Z}$ are the same, following the 2-D analysis. So the singular values are in the same period-2 orbit as the 1-D case.
\end{enumerate}

Meanwhile, from Theorem~\ref{thm:1dlocal} and Lemma~\ref{lem:1dhigher}, we prove a condition for base models $g$ in regression tasks to allow stable oscillation in Prop~\ref{prop:l2loss}. Furthermore, we provide a composition rule of two base models to find a more complicated model that allows stable oscillation in Prop~\ref{prop:comp}.

\subsection{Implications from low-dimension to high-dimension} \label{app:implications}

We would like to emphasize that, although our current simple settings are a little far from practical NNs, it still helps understand the ability of GD at large LRs to discover flat minima in three steps as follows. We include more experiments in Appendix~\ref{app:high_dim_exp} to present the following hopes for complicated networks:

\begin{enumerate}[label=(\alph*)]
    \item By Theorem~\ref{thm:1dlocal}, especially its second condition, we wish to discover an intrinsic geometric property around local minima of more complicated models. The key is to investigate the 1-D function at the cross-section of the leading eigenvector and the loss landscape.

    \begin{itemize}
        \item[\ding{117}] Theoretical: we prove the 1-D condition holds at any minima for non-trivial matrix factorization, shown as Theorem~\ref{thm:mf_1d_cond} in Appendix~\ref{app:add_res_mf}.
        
        \item[\ding{93}] Empirical: we show the 1-D condition holds around minima of 3,4,5-layer ReLU MLPs on MNIST, shown in Figure~\ref{fig:add_exp_mnist}(d), \ref{fig:add_exp_mnist_four_layer}(d), \ref{fig:add_exp_mnist_five_layer}(d) in Appendix~\ref{app:mnist}.
    \end{itemize}

    \item With the above intrinsic geometric property, the next question is whether the training trajectory utilizes this property.

    \begin{itemize}
        \item[\ding{117}] Theoretical: in the case of quasi-symmetric matrix factorization, we observe and provide theoretical intuition that the training trajectory follows the leading eigenvector of the Hessian (i.e. the leading component of $\mathbf{X}_0$) in Observation~\ref{obs:quasi_mf}, where the only top components of weights are changing in $\omega(\epsilon)$.

        \item[\ding{93}] Empirical: for MLPs on MNIST, we show the almost perfect alignment of the gradient and the top Hessian eigenvector in Figure~\ref{fig:add_exp_mnist}(c), \ref{fig:add_exp_mnist_four_layer}(c), \ref{fig:add_exp_mnist_five_layer}(c).
    \end{itemize}

    \item The final implication is the implicit bias of EoS after such oscillation. It turns out GD is driven to flatter minima from sharper minima. In the 1-D case, obviously there is nothing about implicit bias since the only thing GD is doing is to approximate the target value. However, an implicit bias from the oscillation appears starting from the 2-D case.

    \begin{itemize}
        \item[\ding{117}] Theoretical 1: in the 2-D case in Theorem~\ref{thm:xy_diff_decay}, we prove the two learnable parameters $x,y$ will converge to the same values after oscillations of their product $xy$. Actually in the minimum manifold, smaller $|x-y|$ means a flatter minimizer.

        \item[\ding{117}] Theoretical 2: in the single-neuron ReLU network in Theorem~\ref{thm:one_neuron} and Prop~\ref{prop:single_neuron}, we show the model degenerates to the 2-D case since $w_y\rightarrow 0$. The 2-D argument tells that this nonlinear model also walks towards the balanced situation, verified with experiments in Figure~\ref{fig:one_neuron}.

        \item[\ding{117}] Theoretical 3: in the quasi-symmetric MF in Obs~\ref{obs:quasi_mf}, although the initialization is around a sharp minima, GD is still driven towards the flattest minima where $\sigma_{\max}(\bY)=\sigma_{\max}(\bZ)$.

        \item[\ding{93}] Empirical 1: for 2-layer 16-neuron ReLU network in a student-teacher setting, it turns out learning rate decay after beyond-EoS oscillations drives the model very close to the flattest minima, as shown in Figure~\ref{fig:add_exp_high_ts} and in Appendix~\ref{app:16-neuron}.

        \item[\ding{93}] Empirical 2: for 3,4,5-layer MLPs on MNIST, larger learning rate drives to a flatter minima, as shown in Figure~\ref{fig:add_exp_mnist}(b).
    \end{itemize}
\end{enumerate}

\end{document}